\newcommand{\cmark}{\ding{51}}  
\newcommand{\xmark}{\ding{55}}  
\newcommand{\E}{\mathbb{E}}
\newcommand{\norm}[1]{\left\lVert#1\right\rVert}
\newcommand{\abs}[1]{\left|#1\right|}
\newcommand{\indicator}[1]{\mathbbm{1}_{#1}}
\newcommand{\setX}{\mathcal{X}}
\newcommand{\setY}{\mathcal{Y}}
\newcommand{\setD}{\mathcal{D}}
\newcommand{\setZ}{\mathcal{Z}}
\newcommand{\setN}{\mathcal{N}} 
\newcommand{\setP}{\mathcal{P}} 
\newcommand{\setF}{\mathcal{F}} 
\newcommand{\tbeta}{{\tilde{\beta}}}
\def \np {n_+}
\def \nn {n_-}
\def \npa {n_+^\alpha}
\def \nnb {n_-^\beta}
\def \tp {\mathrm{TPAUC}}
\def \op {\mathrm{OPAUC}}
\def \ehat {\hat{\E}}
\def \cmins {(a,b) \in [0,1]^2}
\def \cmin {f, \cmins}
\def \cminl {\cmin, s' \in \Omega_{s'}}
\def \cmax {\gamma \in \Omega_\gamma}
\def \efb {\eta_\beta(f)}
\def \efa {\eta_\alpha(f)}
\def \hefb {\hat{\eta}_\beta(f)}
\def \hefa {\hat{\eta}_\alpha(f)}
\def \ezdz {\underset{\bm{z}\sim \setD_{\setZ}}{\E}}
\def \hezds {\underset{\bm{z}\sim S}{\ehat}}
\def \exdp {\underset{\bm{x}\sim \setD_{\setP}}{\E}}
\def \exdn {\underset{\bm{x}'\sim \setD_{\setN}}{\E}}
\newcommand{\colblue}[1]{{\color{NavyBlue}#1}}
\newcommand{\colbit}[1]{{\color{Bittersweet}#1}}
\newcommand{\ie}{\textit{i.e.}}
\newcommand{\eg}{\textit{e.g.}}
\newcommand{\stt}{\textit{s.t.}}
\newcommand{\wrt}{\textit{w.r.t.}}
\newtcolorbox{modbox}{
    colframe=cyan!5!white,
    colback =cyan!5!white,
    top=0mm, bottom=0mm, left=1mm, right=1mm,
    arc=1mm,
    fontupper=\color{blue!70!black},
    fonttitle=\bfseries\color{blue!70!black},
    notitle,
    }
\theoremstyle{plain}
\newtheorem{thm}{Theorem}
\newtheorem{lem}{Lemma}
\newtheorem{prop}{Proposition}
\newtheorem{coro}{Corollary}
\newtheorem{rem}{Remark}
\theoremstyle{definition}
\newtheorem{defn}{Definition}
\newcommand{\Eqref}[1]{Eq.~\eqref{#1}}
\newcommand{\thmref}[1]{Thm.~\ref{#1}}
\definecolor{ballblue}{HTML}{338EA7}
\definecolor{darkblue}{HTML}{183D5E}
\definecolor{lightseagreen}{HTML}{759D39}
\definecolor{lightred}{HTML}{DD7769}
\definecolor{softred}{HTML}{FE8A71}
\definecolor{softblue}{HTML}{63ACE5}
\definecolor{orgin}{RGB}{254, 138, 113}
\definecolor{second}{RGB}{99, 172, 229}
\definecolor{org}{HTML}{F8A145}
\definecolor{blu}{HTML}{63ACE5}
\definecolor{ivy}{HTML}{FFFFF0}
\newcommand{\best}[1]{\textcolor{orgin}{\textbf{#1}}}
\newcommand{\secbest}[1]{\textcolor{second}{\underline{#1}}}
\begin{document}
%
\title{Closing the Approximation Gap of\\ Partial AUC Optimization: \\A Tale of Two Formulations}

%
%
%

\author{Yangbangyan~Jiang,
	Qianqian~Xu*,~\IEEEmembership{Senior Member,~IEEE,}
	Huiyang~Shao, Zhiyong~Yang,\\
	Shilong~Bao,
	Xiaochun~Cao,~\IEEEmembership{Senior Member,~IEEE,}
	and~Qingming~Huang*,~\IEEEmembership{Fellow,~IEEE}
\IEEEcompsocitemizethanks{
\IEEEcompsocthanksitem Yangbangyan Jiang, Zhiyong Yang and Shilong Bao are with the School of Computer Science and Technology, University of Chinese Academy of Sciences, Beijing 101408, China (E-mail: \{jiangyangbangyan,yangzhiyong21,baoshilong\}@ucas.ac.cn).\protect\\
\IEEEcompsocthanksitem Qianqian Xu is with the Key Laboratory of Intelligent Information Processing, Institute of Computing Technology, Chinese Academy of Sciences, Beijing 100190, China (E-mail: xuqianqian@ict.ac.cn).\protect\\
\IEEEcompsocthanksitem Huiyang Shao is with the ByteDance Inc., Beijing 100086, China (E-mail: dlshaocr@163.com).\protect\\
\IEEEcompsocthanksitem Xiaochun Cao is with School of Cyber Science and Technology, Shenzhen Campus of Sun Yat-sen University, Shenzhen 518107, China (E-mail: caoxiaochun@mail.sysu.edu.cn).\protect\\
\IEEEcompsocthanksitem Qingming Huang is with the School of Computer Science and Technology, University of Chinese Academy of Sciences, Beijing 101408, China, and also with the Key Laboratory of Intelligent Information Processing, Institute of Computing Technology, Chinese Academy of Sciences, Beijing 100190, China (E-mail: qmhuang@ucas.ac.cn).
}
\thanks{Manuscript received September 13, 2024; revised August 27, 2025; accepted November 23, 2025.}
\thanks{This work was supported in part by National Natural Science Foundation of China: 62525212, 62236008, 62441232, 62025604, 62406305, U21B2038, U23B2051, 62476068, and 62471013, in part by Youth Innovation Promotion Association CAS, in part by the Strategic Priority Research Program of the Chinese Academy of Sciences, Grant No. XDB0680201, in part by the Fundamental Research Funds for the Central Universities (E4EQ1101), in part by the China Postdoctoral Science Foundation (CPSF) under Grant No. 2023M743441 and 2025M771492, and in part by the Postdoctoral Fellowship Program of CPSF under Grant No. GZB20240729.}
\thanks{(Corresponding authors: Qianqian Xu and Qingming Huang.)}
}

%
%

\markboth{IEEE Transactions on Pattern Analysis and Machine Intelligence}%
{Jiang \MakeLowercase{\textit{et al.}}: Bare Demo of IEEEtran.cls for Computer Society Journals}
%



\IEEEtitleabstractindextext{%
\begin{abstract}
  \justifying
  As a variant of the Area Under the ROC Curve (AUC), the partial AUC (PAUC) focuses on a specific range of false positive rate (FPR) and/or true positive rate (TPR) in the ROC curve. It is a pivotal evaluation metric in real-world scenarios with both class imbalance and decision constraints. However, selecting instances within these constrained intervals during its calculation is NP-hard, and thus typically requires approximation techniques for practical resolution. Despite the progress made in PAUC optimization over the last few years, most existing methods still suffer from uncontrollable approximation errors or a limited scalability when optimizing the approximate PAUC objectives. In this paper, we close the approximation gap of PAUC optimization by presenting two simple instance-wise minimax reformulations: one with an asymptotically vanishing gap, the other with the unbiasedness at the cost of more variables. Our key idea is to first establish an equivalent instance-wise problem to lower the time complexity, simplify the complicated sample selection procedure by threshold learning, and then apply different smoothing techniques. Equipped with an efficient solver, the resulting algorithms enjoy a linear per-iteration computational complexity w.r.t. the sample size and a convergence rate of $O(\epsilon^{-1/3})$ for typical one-way and two-way PAUCs. Moreover, we provide a tight generalization bound of our minimax reformulations. The result explicitly demonstrates the impact of the TPR/FPR constraints $\alpha$/$\beta$ on the generalization and exhibits a sharp order of $\tilde{O}(\alpha^{-1}\np^{-1} + \beta^{-1}\nn^{-1})$. Finally, extensive experiments on several benchmark datasets validate the strength of our proposed methods.
\end{abstract}

\begin{IEEEkeywords}
AUC Optimization, Partial AUC, Binary Classification, Class Imbalance
\end{IEEEkeywords}}

\maketitle

\IEEEdisplaynontitleabstractindextext

%
\IEEEpeerreviewmaketitle

\IEEEraisesectionheading{\section{Introduction}\label{sec:intro}}

\IEEEPARstart{T}{he} Area Under the Receiver Operating Characteristic (ROC) curve, denoted as AUC, is a pivotal metric summarizing the classifier's performance across various thresholds in terms of True Positive Rate (TPR) and False Positive Rate (FPR) \cite{J1982The}. The insensitivity towards the class imbalance has positioned it as a widely used performance measure over imbalance data \cite{J1982The, yang2022auc, shi2025llmformer,Qiu_2020_CVPR,DBLP:journals/pami/TangSQLWYJ17,DBLP:journals/pami/LiTM19,Qiu_2021_ICCV,wang2025enhanced}. Accordingly, the AUC optimization problem has garnered significant interest within the machine learning community to improve the AUC performance \cite{graepel2000large, cortes2003auc, yan2003optimizing, joachims2005support, ying2016stochastic, yang2021deep, yuan2021compositional}, and been applied to a spectrum of real-world applications such as financial fraud detection \cite{huang2022auc}, spam detection \cite{narasimhan2013structural}, and medical diagnosis \cite{yang2022auc, yuan2021large}.

\begin{figure}[!t]
  \centering
  \includegraphics[width=\columnwidth]{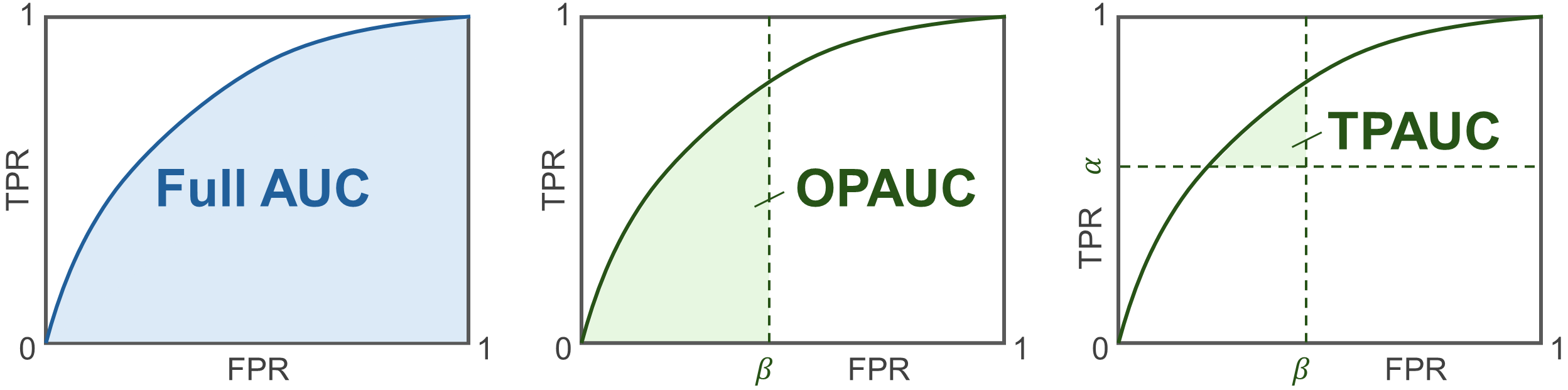}
  \caption{Illustration of AUC and its two typical variants (OPAUC \& TPAUC).}
  \label{fig:aucs}
\end{figure}

\begin{table*}[t]
    \small
    \centering
    \caption{
    Comparison with existing PAUC optimization algorithms. The convergence rate represents the number of iterations after which an algorithm can find an $\epsilon$-stationary point (denoted as `$\epsilon$-sp') or a nearly $\epsilon$-critical point (denoted as `nearly $\epsilon$-cp'). $\triangle$ implies a natural result of non-convex SGD. $n_+^B$ ($n_-^B$ resp.) is the number of positive (negative resp.) instances in each mini-batch $B$.
    }
    \setlength{\tabcolsep}{3pt}
    \begin{tabular}{c|cccc|cc}
        \toprule
        ~ & SOPA \cite{zhu2022auc}&SOPA-S \cite{zhu2022auc}&AGD-SBCD\cite{yao2022large}&TPAUC \cite{yang2021all}& \cellcolor{orange!10}Ours \cite{shao2022asymptotically} & \cellcolor{yellow!15}Ours (journal) \\
        \midrule
        Convergence Rate ($\op$) & $O(\epsilon^{-4})$&$O(\epsilon^{-4})$ & $O(\epsilon^{-6})$ &$O(\epsilon^{-4})^{\triangle}$& \multicolumn{2}{c}{$O(\epsilon^{-3})$} \\
        Convergence Rate ($\tp$) & $O(\epsilon^{-6})$&$O(\epsilon^{-4})$& - &$O(\epsilon^{-4})^{\triangle}$ & \multicolumn{2}{c}{$O(\epsilon^{-3})$} \\
        Convergence Measure & $\epsilon$-sp (non-smooth) & $\epsilon$-sp & nearly $\epsilon$-cp & $\epsilon$-sp & \multicolumn{2}{c}{$\epsilon$-sp} \\
        Smoothness & \xmark & \cmark & \xmark & \cmark & \multicolumn{2}{c}{\cmark} \\
        Unbiasedness & \cmark & \xmark & \cmark & \xmark & \cellcolor{orange!7}\makecell[c]{with bias $O(1/\kappa)$ \\ when $\omega = 0 $} & \cellcolor{yellow!12}\cmark \\
        Per-Iter. Time Complexity & $O(n_+^B n_-^B)$ & $O(n_+^B n_-^B)$ & $O(n_+^B n_-^B)$ & $O(n_+^B n_-^B)$ & \multicolumn{2}{c}{$O(n_+^B + n_-^B)$} \\
        \bottomrule
    \end{tabular}
    \label{tab1}
\end{table*}

In many high-stakes applications, decisions are constrained to a specific region of the ROC curve. For instance, in malware detection a high FPR implies many benign files are undesirably flagged as malware, which is unacceptable even if TPR is high. Analogously, some tasks operate only in a low-TPR regime. The Partial AUC (PAUC), focusing only on the specific region in the ROC curve, has then emerged as a more proper metric in the production environment \cite{xie2024weakly}.
As illustrated in Fig.\ref{fig:aucs}, there are two typical types of PAUC:
\begin{itemize}
    \item One-way PAUC (OPAUC) computes the area within a specified FPR interval ($0 \le \mathrm{FPR} \le \beta$);
    \item Two-way PAUC (TPAUC) computes the area with the constraints $\mathrm{FPR} \le \beta$, $\mathrm{TPR} \ge \alpha$.
\end{itemize}
Directly optimizing PAUC is more challenging than the full AUC. Beyond the inefficiency of the \emph{pairwise} formulation, PAUC requires selecting instances inside the constrained bands, which induces an NP-hard combinatorial component. Hence the development in this field often lags behind that of full AUC optimization.
Early efforts rely on full-batch optimization and approximate sample selection (or ranking) process, suffering from uncontrolled biases and limited efficiency \cite{2003Partial, narasimhan2013structural, narasimhan2017support, kumar2021implicit}.

Most recently, the rise of deep learning has also sparked various stochastic mini-batch end-to-end PAUC optimization algorithms.
\cite{yang2021all,DBLP:journals/pami/YangXBHCH23} propose the early trial in this line of research by introducing surrogate weighting functions to approximate the ranking process, but the estimation of PAUC is still biased. Later, \cite{zhu2022auc} cleverly leverage the distributionally robust optimization framework to learn the sample weights, deriving an exact but nonsmooth objective and an inexact but smooth objective. \cite{yao2022large} cast the problem as a non-smooth difference-of-convex program, naturally inducing an unbiased formulation for OPAUC optimization. However, these approaches primarily focus on deal with the non-differentiable sample selection step, but still keep an explicit positive-negative pairing, leading to a $O(n_+^B n_-^B)$ per-iteration time complexity and a slow convergence rate, as listed in Tab.\ref{tab1}.

\textbf{This paper closes the approximation gap of PAUC optimization by introducing two novel instance-wise formulations}\footnote{We focus on the region $\mathrm{FPR} \le \beta$ for $\op$, while $\mathrm{FPR} \le \beta$ and $\mathrm{TPR} \ge \alpha$ for $\tp$.}. Our approach fundamentally differs from existing methods by first decoupling the pairwise dependencies through a minimax reformulation, transforming the problem into an instance-wise learning task. We then introduce a differentiable, sorting-free sample selection technique to efficiently handle the top/bottom-$k$ ranking procedure, overcoming a key computational bottleneck. This framework subsequently branches into two pathways: a smooth surrogate approximation and an exact unbiased reformulation, both ultimately yielding a standard minimax objective amenable to efficient stochastic optimization. With an efficient training algorithm, both formulations can enjoy a convergence rate of $O(\epsilon^{-3})$. Our methods reduce the per-iteration complexity to a linear $O(n_+^B + n_-^B)$, a comparative summary of which is in Tab.\ref{tab1}. Our contributions can be summarized as follows:
\begin{itemize}
    \item We propose two minimax instance-wise formulations for PAUC ($\op$ and $\tp$) maximization, where one is with an asymptotically vanishing gap while the other is exactly unbiased. The instance-wise design removes explicit pair enumeration and ranking, achieving a $O(n_+^B + n_-^B)$ per-iteration cost and a convergence rate of $O(\epsilon^{-3})$.
    \item Building on the instance-wise reformulation, we establish a tight generalization bound with a much easier proof than prior results \cite{narasimhan2017support, yang2021all, DBLP:journals/pami/YangXBHCH23}. The bound makes explicit the effects of the TPR/FPR constraints $\alpha$/$\beta$ and exhibits a sharp order of $\tilde{O}(\alpha^{-1}\np^{-1} + \beta^{-1}\nn^{-1})$.
    \item We conduct extensive experiments on multiple imbalanced image classification tasks. The results speak to the effectiveness of our proposed methods.
\end{itemize}


A preliminary version of this paper is published in NeurIPS \cite{shao2022asymptotically}, where we propose an efficient instance-wise PAUC maximization algorithm with an asymptotically vanishing approximation bias. In this long version, we have introduced the following major extensions:
\begin{itemize}
    \item \textbf{New Formulation}: We take a step further by proposing a new unbiased formulation to completely eliminate the approximation gap, ensuring that the estimation is as accurate as possible.
    \item \textbf{New Generalization Analysis}: The analysis is improved with the local Rademacher complexity based technique, reaching a much tighter bound than \cite{shao2022asymptotically}.
    \item \textbf{New Experiments}: We involve new empirical results in terms of datasets, competitors, sensitivity analysis and fine-grained visualization to make the evaluation more comprehensive.
\end{itemize}


The rest of this paper is organized as follows. Sec.\ref{sec:related} reviews the related work and Sec.\ref{sec:pre} provides the preliminary knowledge for AUC/PAUC optimization. Then Sec.\ref{sec:opauc} describes the proposed formulations, followed by a stochastic optimization algorithm in Sec.\ref{sec:alg} and a generalization analysis in Sec.\ref{sec:generalization}. Moreover, comprehensive empirical results are illustrated in Sec.\ref{sec:exp}. Finally, Sec.\ref{sec:con} provides the concluding remarks for this paper.

\section{Related work}\label{sec:related}

\subsection{Deep AUC Optimization}
Optimization is central to machine learning research, as algorithmic advances often hinge on efficient reformulations of complex objectives \cite{DBLP:journals/pami/GaoWFHJ23,DBLP:journals/tnn/XiaLHYJG23}. Among these, AUC maximization has been a key object since the late 1990s \cite{herbrich2000large}. In the past few decades, AUC optimization has already achieved remarkable success in long-tailed/imbalanced learning \cite{yang2022auc}. A partial list of the related literature includes \cite{graepel2000large, cortes2003auc, yan2003optimizing, joachims2005support, pepe2000combining, freund2003efficient, rakotomamonjy2004support, ying2016stochastic}. The main challenge in this field is to efficiently optimize the pairwise formulation over positive and negative instances. Early approaches only apply to the full-batch setting and are usually computationally expensive \cite{yan2003optimizing, freund2003efficient}.

In recent age, many studies focused on AUC optimization with stochastic gradient methods. As a milestone, on top of the square surrogate loss, \cite{ying2016stochastic} first proposed a minimax reformulation of the AUC, which is the average of individual data points. Such an instance-wise formulation significantly facilitates the stochastic optimization. With a strongly convex regularizer, \cite{natole2018stochastic} improved the convergence rate of the stochastic algorithm for AUC to $O(1/T)$. Then \cite{liu2019stochastic} further extends the minimax objective to nonconvex deep neural networks. In succession, many efforts are devoted to improving the minimax deep AUC optimization performance. For example, \cite{yuan2021compositional} builds a compositional objective combining both AUC loss and cross-entropy loss to improve the feature representation and get rid of the warm-up stage. \cite{DBLP:conf/iccv/Yuan0SY21} applies the squared hinge function in the objective and induces an AUC margin loss with better robustness.
Attracting more and more attention, AUC optimization is widely applied in various scenarios such as recommender systems \cite{DBLP:journals/pami/BaoX0CH23,DBLP:journals/tist/YeL25}, robust learning \cite{DBLP:conf/kdd/ZhangSLG23,DBLP:journals/pami/YangXHBHCH23}, multi-task learning \cite{DBLP:conf/nips/HuZY22,DBLP:conf/aaai/YangXCH19}, domain adaptation \cite{DBLP:journals/pami/YangXBWHCH23}, federated learning \cite{DBLP:conf/icml/Guo0LY23} and multi-instance learning \cite{DBLP:conf/icml/ZhuWCWSWY23}, semantic segmentation \cite{DBLP:conf/nips/HanX0BWJH24}, and has great potential in other learning settings \cite{DBLP:conf/www/Ye0025,DBLP:journals/pami/JiangLCHXYCH23,DBLP:journals/pami/JiangXZYWCH23,DBLP:conf/nips/ZhangZT0S20} and computer vision tasks \cite{DBLP:conf/cvpr/YangXLHZLFL24,DBLP:journals/pami/XiaYLZFGL24,shi2024unified,shi2022proposalclip,10681116,DBLP:conf/aaai/ZhangCZWZLS25,DBLP:conf/cvpr/0003CWWHLS25}. Recently, \cite{xie2024weakly} transforms AUC optimization in various weakly supervised scenarios into the maximization of a new type of PAUC, further connecting the AUC and PAUC optimization problems.

\subsection{Partial AUC (PAUC) Optimization}
The concept of PAUC could be dated back to 1989 \cite{mcclish1989analyzing}. The additional region constraints make PAUC more difficult to optimize than full AUC, and thus the research progress on PAUC optimization always trails that of AUC optimization. Earlier studies related to PAUC only paid attention to the simplest linear models. In \cite{pepe2000combining}, PAUC is first optimized by a distribution-free rank-based method. \cite{wang2011marker} developed a non-parametric estimate of PAUC, and selected features at each step to build the final classifier. \cite{narasimhan2013structural} develops a cutting plane algorithm to find the most violated constraint instance, decomposing PAUC optimization into subproblems and solving them by an efficient structural SVM-based approach.

However, most of the above approaches often fall into the non-differentiable property or intractable optimization problems, posing a significant obstacle to the end-to-end implementation. In the end-to-end learning direction, \cite{kar2014online} first proposes a na\"ive mini-batch stochastic method for PAUC that applies to deep neural networks, but it is not guaranteed to converge for minimizing the pAUC objective \cite{yang2022auc}. Using the Implicit Function Theorem, \cite{kumar2021implicit} formulated a rate-constrained optimization problem that modeled the quantile threshold as the output of a function of model parameters. As a milestone study, \cite{yang2021all} simplifies the challenging sample-selected problem in a bi-level manner and thus facilitates the end-to-end optimization for PAUC. Concretely, the inner level achieves instance selection, and the outer level minimizes the loss. Nevertheless, their estimation may suffer from an approximation error with the true PAUC. \cite{yao2022large} formulates the problem as a non-smooth difference-of-convex program for one-way PAUC and develops an approximated gradient descent method (AGD-SBCD) based on Moreau envelope smoothing to solve it. \cite{zhu2022auc} proposes both non-smooth and smooth estimators of PAUC based on DRO named SOPA and SOPA-S respectively, and provides sound theoretical convergence guarantee of their algorithms.

As the comparison in Tab.\ref{tab1} shows, these existing methods are limited by either an approximation error (SOPA-S and TPAUC \cite{yang2021all}), or a slow convergence rate in some cases (SOPA and AGD-SBCD). More importantly, they mainly devote to addressing the quantile calculation but still adopt the pairwise formulation. In contrast, based on the instance-wise reformulation and min-max swapping technique, our proposed methods enjoy an asymptotically biased approximation gap or even without the gap, together with a lower computational cost and a higher convergence rate.

\subsection{Generalization Analysis for PAUC Optimization}
The pairwise form of AUC-based losses also causes difficulties for its generalization analysis, since common independent-loss-term-based techniques in PAC learning theory or statistical learning theory are not applicable \cite{mohri2018foundations}. By introducing new notions of data-dependent hypothesis complexities, \cite{agarwal2005generalization, usunier2005data,yang2021learning} propose various studies on the generalization property of AUC optimization. Generalization bounds for such pairwise losses could also established on the concept of uniform stability \cite{lei2020sharper, lei2021generalization} which also takes the optimization algorithm into consideration.

There are fewer generalization analyses for PAUC optimization. \cite{narasimhan2017support} presents the first generalization analysis for $\op$ and derives a uniform convergence generalization bound. Following their work, a recent study \cite{yang2021all} extends this generalization bound to $\tp$. However, limited by the pairwise form of AUC, all of the above studies require complicated decomposition. Moreover, these generalization analyses only hold for hard-threshold functions and VC-dimension. Although \cite{DBLP:journals/pami/YangXBHCH23} improves the order of the bound in \cite{yang2021all}, the result ignores the impact of the constraints $\alpha$/$\beta$. Based on our instance-wise reformulation, we show that the generalization of PAUC is as simple as other instance-wise algorithms and can be tightly bounded for real-valued score functions with a proper dependence on $\alpha$/$\beta$.

\section{Preliminaries}\label{sec:pre}

\subsection{Notations} Let $\setX \subseteq \mathbb{R}^d$ be the input space, and $\setY=\{0, 1\}$ be the label space where $y=1$ ($0$ \textit{resp.}) is for the positive (negative \textit{resp.}) class. Given a set of $n$ training samples $S=\{\bm{z}_i=(\bm{x}_i,y_i)\}_{i=1}^n$ drawn from distribution $\setD_\setZ=\setX\times\setY$, we denote $\setP$ ($\setN$ \textit{resp.}) as the set of positive (negative \textit{resp.}) instances in the dataset, with a size of $n_+$ ($n_-$ \textit{resp.}). The corresponding positive and negative instance distribution are denoted as $\setD_\setP$ and $\setD_\setN$, respectively. The scoring function $f: \setX \mapsto [0,1]$ to be learned is parametrized by $\bm{\theta}$, which can be simply implemented by any deep neural network with sigmoid outputs. The indicator function $\indicator{A}$ is $1$ if the condition $A$ holds, otherwise it is $0$. $\hat{\E}_{z\sim S}[g(\bm{z})] = 1/n \cdot \sum_{i=1}^{n} g(\bm{z}_i)$ is the empirical expectation on $S$.

\subsection{Standard AUC and Partial AUCs}
\noindent\textbf{Standard AUC.} The standard AUC calculates the entire area under the ROC curve. As shown in \cite{J1982The}, $\mathrm{AUC}$ measures the probability of a positive instance having a higher score than a negative instance:
\begin{equation}
    \mathrm{AUC}(f) = \underset{\bm{x}\sim \setD_{\setP},\bm{x}'\sim \setD_{\setN}}{\Pr}\left[f(\bm{x})> f(\bm{x}')\right].\\
\end{equation}

\noindent\textbf{$\mathrm{\mathbf{OPAUC}}$.} According to \cite{2003Partial}, $\op$ is equivalent to the probability of a positive instance $\bm{x}$ being scored higher than a negative instance $\bm{x}'$ within the specific range $f(\bm{x}')\in[\efb, 1]$ \stt~ ${\Pr}_{\bm{x}'\sim\mathcal{\setD_\setN}}[f(\bm{x}')\geq\eta_{\beta}]=\beta$:
\begin{equation}
\begin{aligned}
    \mathrm{OPAUC}(f) &= \underset{\bm{x}\sim \setD_{\setP},\bm{x}'\sim \setD_{\setN}}{\Pr}\left[f(\bm{x})> f(\bm{x}'),  f(\bm{x}')\geq  \efb \right].
\end{aligned}
\label{OPAUC}
\end{equation}
Practically, we do not know the exact data distributions $\setD_{\setP}$, $\setD_{\setN}$ to calculate Eq.\eqref{OPAUC}. Therefore, we turn to its empirical estimation. The empirical $\op$ over a finite dataset $S$ could be expressed as \cite{narasimhan2013structural}:
\begin{equation}
    \begin{aligned}
        \widehat{\mathrm{AUC}}_{\beta}(f, S) = 1-\sum_{i=1}^{n_+} \sum_{j=1}^{n_-^\beta}\frac{\ell_{0,1}{\left(f(\bm{x}_i)- f(\bm{x}_{[j]}')\right)}}{n_+ n_-^\beta},
        \label{OPAUCM}
    \end{aligned}
\end{equation}
where $n_-^\beta=\lfloor n_-\cdot \beta \rfloor$; $\bm{x}'_{[j]}$ has the $j$-th largest score among negative samples; $\ell_{0,1}(t) = \indicator{t<0}$ is the 0-1 loss, which returns $1$ if $t <0$ and $0$ otherwise.

\noindent\textbf{$\mathrm{\mathbf{TPAUC}}$.}  More recently, \cite{yang2019two} argued that an efficient classifier should have low $\mathrm{FPR}$ and high $\mathrm{TPR}$ simultaneously. Therefore, we also study a more general variant called $\tp$, where the restricted regions satisfy $\mathrm{TPR}\geq \alpha$ and $\mathrm{FPR}\leq \beta$. Similar to $\op$, $\tp$ measures the probability that a positive instance $\bm{x}$ ranks higher than a negative instance $\bm{x}'$ where $f(\bm{x})\in[0, \eta_{\alpha}(f)]$ \stt~ ${\Pr}_{\bm{x}\sim\mathcal{D_\setP}}[f(\bm{x})\leq\eta_{\alpha}]=\alpha$, and $f(\bm{x}')\in[\efb, 1]$ \stt~ ${\Pr}_{\bm{x}'\sim\mathcal{D_\setN}}[f(\bm{x}')\geq\eta_{\beta}]=\beta$:
\begin{equation}
\begin{aligned}
    \mathrm{TPAUC}(f) = \underset{\bm{x}\sim \setD_{\setP},\bm{x}'\sim \setD_{\setN}}{\Pr}[ &f(\bm{x})> f(\bm{x}'),\\
    & f(\bm{x})\leq \eta_{\alpha}(f), f(\bm{x}')\geq  \efb ].
    \label{TPAUC}
\end{aligned}
\end{equation}
We can also adopt its empirical estimation \cite{yang2019two, yang2021all}:
\begin{equation}
    \begin{aligned}
        \widehat{\mathrm{AUC}}_{\alpha, \beta}(f, S) = 1- \sum_{i=1}^{n_+^\alpha} \sum_{j=1}^{n_-^\beta}\frac{\ell_{0,1}{\left(f(\bm{x}_{[i]})- f(\bm{x}'_{[j]})\right)}}{n_+^\alpha n_-^\beta},
        \label{TPAUCM}
    \end{aligned}
\end{equation}
where $n_+^\alpha=\lfloor n_+\cdot \alpha \rfloor$ and $\bm{x}_{[i]}$ has the $i$-th smallest score among all positive instances.

\begin{figure*}[t]
    \centering
    \includegraphics[width=0.94\textwidth]{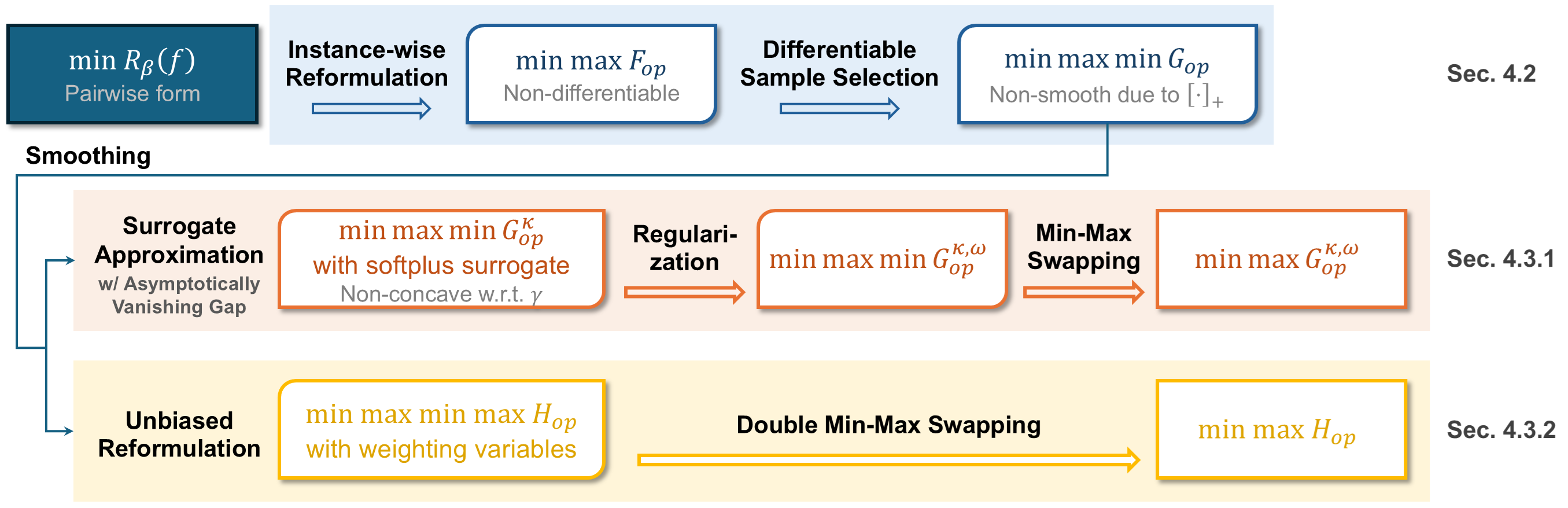}
    \caption{Roadmap of our formulation derivation for OPAUC. TPAUC formulation follows the same steps.}
    \label{fig:workflow}
\end{figure*}

\section{Optimizing Partial AUCs: From OPAUC to TPAUC}\label{sec:opauc}


\subsection{Challenges and Roadmap}
Efficiently optimizing the PAUC variants (Eq.\eqref{OPAUCM} and \eqref{TPAUCM}) presents significant challenges: \textbf{(C1)} the 0-1 loss is indifferentiable and the pairwise formulation imposes a high computational cost of $O(n_+ n_-)$; \textbf{(C2)} it is inherently complicated to determine the positive (negative \textit{resp.}) quantile function $\eta_{\alpha}(f)$ ($\efb$ \textit{resp.}).

Our roadmap is illustrated in Fig.\ref{fig:workflow}. We address \textbf{(C1)} by deriving an \emph{instance-wise} reformulation (Thm.\ref{thm:instance_op}) and \textbf{(C2)} by converting top-$k$ selection into a \emph{sorting-free} threshold learning problem (Thm.\ref{thm:instance_difftopk_op}). Then to mitigate the consequent non-smoothness issue, two smoothing strategies are presented in Sec.\ref{sec:opauc_asym} and Sec.\ref{sec:opauc_nonasym} with and without introducing an asymptotic gap, respectively. The same pipeline applies to $\tp$.


\subsection{Differentiable Instance-Wise Reformulation for OPAUC}\label{sec:instance}
To tackle the challenge (C1), we first adopt a differentiable loss $\ell$ as the surrogate of the 0-1 loss following the convention. Then maximizing $\widehat{\mathrm{AUC}}_{\beta}(f, S)$\footnote{For clarity, we present the empirical form throughout the paper. The population counterpart is analogous by replacing empirical averages with expectations (see Appx.\ref{sec:op_pop_ver}).} on a finite dataset $S$ is equivalent to solving the following surrogate problem:
\begin{equation}
    \underset{f}{\min} \ \hat{\mathcal{R}}_{\beta}(f, S) = \sum_{i=1}^{n_+} \sum_{j=1}^{n_-^\beta}\frac{\ell{\left(f(\bm{x}_i) - f(\bm{x}'_{[j]})\right)}}{n_+ n_-^\beta}.
\end{equation}

In this work, to simplify the subsequent reformulation, we will use the most popular \textbf{surrogate squared loss} $\ell(x) =(1-x)^2$. Sharing a similar merit to the stochastic AUC maximization \cite{ying2016stochastic}, this $\ell$ yields a favorable instance-wise reformulation of the $\op$ optimization problem by introducing some extra global variables, naturally eliminating the issue in (C1) (please see Appx.\ref{sec:proof_instance_op} for the proof):
\begin{thm}\label{thm:instance_op}
\textbf{(Instance-wise Reformulation.)}
For $f(\bm{x})\in[0,1]$ and $\beta\in[0,1]$, $\forall \bm{z}=(\bm{x},y)\in\setD_\setZ$, the instance-wise loss function $F_{op}(f,a,b,\gamma, t, \bm{z})$ is defined as:
\begin{equation}\label{eq:fop}
\begin{aligned}
  F_{op}(f,&a,b,\gamma,t, \bm{z}) = P(f,a,\gamma,\bm{x})\cdot y/p-\gamma^2\\
&+N(f,b,\gamma,\bm{x})\cdot[(1-y)\indicator{f(\bm{x})\geq t}]/[\beta(1-p)],
\end{aligned}
\end{equation}
where $p=\Pr[y=1]$ is the positive class prior probability, and
\begin{equation}
  \begin{aligned}
    P(f,a,\gamma,\bm{x}) =& (f(\bm{x})-a)^2-
    2(1+\gamma)f(\bm{x}), \\
    N(f,b,\gamma,\bm{x}) =& (f(\bm{x})-b)^2+2(1+\gamma)f(\bm{x}).
  \end{aligned}
\end{equation}
Then we have the following equivalent formulation for $\op$:
\begin{equation}\label{eq:minmaxopauc2}
    \begin{aligned}
    \underset{f}{\min}~ &\hat{\mathcal{R}}_{\beta}(f, S) \\
    \Leftrightarrow & \underset{\cmin}{\min}\ \underset{\gamma\in[-1,1]}{\max} \  \colbit{\hezds} \left[F_{op}(f,a,b,\gamma, \colbit{\hefb}, \bm{z} ) \right],
    \end{aligned}
\end{equation}
where $\colbit{\hefb}$ is the empirical quantile of negative instances in $S$.
\end{thm}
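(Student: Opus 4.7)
The plan is to exploit the bias--variance style decomposition available for the squared surrogate: the pairwise empirical risk $\hat{\mathcal{R}}_\beta(f,S)$ collapses into two single-variable empirical averages plus a squared gap of class-conditional means, each of which admits a clean variational representation. Stitching these representations together and relabeling the dummy variables recovers the instance-wise loss $F_{op}$ exactly, modulo a constant that drops out of $\min_f$.

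Concretely, first I would rewrite the empirical risk as $\hat{\mathcal{R}}_\beta(f,S) = \hat{\mathbb{E}}_{P}\,\hat{\mathbb{E}}_{N,\beta}\bigl[(1 - f(\bm{x}) + f(\bm{x}'))^2\bigr]$, where $\hat{\mathbb{E}}_{N,\beta}$ denotes the empirical mean over the $n_-^\beta$ negatives with $f(\bm{x}') \geq \hefb$. Expanding the square and using the product structure of the empirical measure (so the cross term factors) yields
\begin{equation*}
  \hat{\mathcal{R}}_\beta(f,S) = 1 + 2\bigl(\hat{\mathbb{E}}_{N,\beta}[f] - \hat{\mathbb{E}}_P[f]\bigr) + \widehat{\mathrm{Var}}_P[f] + \widehat{\mathrm{Var}}_{N,\beta}[f] + \bigl(\hat{\mathbb{E}}_{N,\beta}[f] - \hat{\mathbb{E}}_P[f]\bigr)^2.
\end{equation*}
Then I would invoke $\widehat{\mathrm{Var}}(X) = \min_c \hat{\mathbb{E}}[(X-c)^2]$ on the two variances (introducing $a$ and $b$), and $z^2 = \max_\gamma\{2\gamma z - \gamma^2\}$ on the squared mean gap (introducing $\gamma$). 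Because the three auxiliary variables act on additively separable terms, the two minima and the maximum commute and combine into a joint $\min_{a,b}\max_\gamma$. Collecting linear-in-$f$ terms, the coefficient of $f(\bm{x})$ inside $\hat{\mathbb{E}}_P[\cdot]$ becomes $-2(1+\gamma)$ and the coefficient of $f(\bm{x}')$ inside $\hat{\mathbb{E}}_{N,\beta}[\cdot]$ becomes $+2(1+\gamma)$, exactly matching the definitions of $P(f,a,\gamma,\bm{x})$ and $N(f,b,\gamma,\bm{x})$.

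The main obstacle is converting the two per-class empirical averages into the single global sample average $\hat{\mathbb{E}}_{\bm{z}\sim S}$ that appears on the right-hand side of Eq.\eqref{eq:minmaxopauc2}. This is precisely the role of the importance weights baked into $F_{op}$: with $p = n_+/n$ one has $\hat{\mathbb{E}}_{\bm{z}\sim S}[(y/p)\,g(\bm{x})] = \hat{\mathbb{E}}_P[g]$, while plugging $t=\hefb$ together with the denominator $\beta(1-p) = n_-^\beta/n$ gives $\hat{\mathbb{E}}_{\bm{z}\sim S}\bigl[(1-y)\indicator{f(\bm{x})\geq t}\,g(\bm{x})/[\beta(1-p)]\bigr] = \hat{\mathbb{E}}_{N,\beta}[g]$, since the indicator selects exactly the $n_-^\beta$ top-scored negatives (ties at the quantile can be absorbed by the usual tie-breaking convention). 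It only remains to check that the box constraints $(a,b)\in[0,1]^2$ and $\gamma\in[-1,1]$ are inactive: the unconstrained optimizers are $a^\star = \hat{\mathbb{E}}_P[f]\in[0,1]$, $b^\star = \hat{\mathbb{E}}_{N,\beta}[f]\in[0,1]$ and $\gamma^\star = b^\star - a^\star\in[-1,1]$ whenever $f\in[0,1]$, so no restriction is lost. Dropping the additive constant $1$, which does not affect $\min_f$, then yields the claimed equivalence.
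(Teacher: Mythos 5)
Your proposal is correct and follows essentially the same route as the paper's proof: expand the squared surrogate over the product empirical measure into the two per-class variances plus the squared mean gap, replace the variances by $\min_a/\min_b$ and the squared gap by $\max_\gamma\{2\gamma z-\gamma^2\}$, verify the optimizers $a^\star=\hat{\mathbb{E}}_P[f]$, $b^\star=\hat{\mathbb{E}}_{N,\beta}[f]$, $\gamma^\star=b^\star-a^\star$ lie in the stated boxes, and fold the per-class averages into a single sample average via the $y/p$ and $(1-y)\indicator{f(\bm{x})\ge \hefb}/[\beta(1-p)]$ reweighting. The only cosmetic difference is that you argue directly on the empirical measure, whereas the paper derives the population version and remarks that the empirical case is identical.
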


\begin{rem}
Here, $P(\cdot)$ and $N(\cdot)$ represent the individual positive and negative losses. The variables $a$ and $b$ serve as reference averages for positives and top quantile negatives, respectively. Minimizing over $(a,b)$ adaptively calibrates the scores for each class. Meanwhile, the dual variable $\gamma$ highlights the score gap between classes. This maximization step emphasizes the discriminability between positives and negatives. In this sense, each sample only interacts with global statistics rather than all pairs.

\end{rem}

Thm.\ref{thm:instance_op} removes explicit positive-negative pairing and provides a support to convert the pairwise loss into instance-wise for $\op$. Once the quantile computation is instance-wise, the entire formulation exhibits a linear computational cost \wrt~the sample size. However, the quantile-based operation $\indicator{f(\bm{x}')\geq\hat{\eta}_{\beta}}$ requires sorting negative instances for selection. A second key step is to transform the top-$k$ selection into a selection threshold learning problem, making the quantile operation differentiable and thus solving the issue (C2).

We denote $N(f,b,\gamma,\bm{x})$ as $\ell_-(\bm{x})$ for short when $f,b,\gamma$ are not discussed. Since $\ell_-(\bm{x})$ is increasing in $f(\bm{x})$ for $\gamma \in [b-1,1]$, the selection based on $f(\bm{x})$ can be equivalently cast in terms of $\ell_-(\bm{x})$. By Lem.1 of \cite{fan2017learning}, the top-$k$ operator admits a sorting-free threshold reformulation:
\begin{equation}
  \begin{aligned}
    &\hat{\E}_{\bm{x}\sim\setN}[\indicator{f(\bm{x})\geq\hat{\eta}_{\beta}(f)}\cdot\ell_-(\bm{x})] \\
    &= \min_{s'} \frac{1}{\beta}\cdot  \hat{\E}_{\bm{x}\sim\setN} [\beta s' + [\ell_-(\bm{x})-s']_+],
  \end{aligned}
\end{equation}
where $s'$ serves as a learnable threshold for the selection.
This yields Thm.\ref{thm:instance_difftopk_op} with a differentiable negative sample selection (please see Appx.\ref{sec:proof_instance_difftopk_op} for the proof):
\begin{thm}\label{thm:instance_difftopk_op}
    \textbf{(Differentiable Sample Selection.)}
For $f(\bm{x})\in[0,1]$ and $\beta\in[0,1]$, $\forall \bm{z}=(\bm{x},y)\in \setD_\setZ$, we have the equivalent optimization problem for $\op$:
\begin{equation}\label{minmaxmin_op}
\begin{aligned}
  &\underset{\cmin}{\min}\ \underset{\gamma\in[-1,1]}{\max} \
\colbit{\hezds}[F_{op}(f,a,b,\gamma, \colbit{\hefb},\bm{z})]\\
&\Leftrightarrow \underset{\cmin}{\min}\
\underset{\gamma\in\Omega_{\gamma} }{\max}
\ \underset{s'\in\Omega_{s'}}{\min}
\ \colbit{\hezds}[G_{op}(f,a,b,\gamma,\bm{z},s')],
\end{aligned}
\end{equation}
where $\Omega_{\gamma}=[b-1,1]$, $\Omega_{s'}=[0,5]$ and
\begin{equation}\label{OPAUC_IB}
\begin{aligned}
G_{op}(f,&a,b,\gamma,\bm{z},s') = P(f,a,\gamma,\bm{x})\cdot y/p-\gamma^2\\
& +
\left(\beta s' +\left[N(f,b,\gamma,\bm{x})-s'\right]_+\right)\cdot (1-y)/[\beta(1-p)].
\end{aligned}
\end{equation}
\label{thm:2}
\end{thm}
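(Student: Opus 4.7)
The plan is to derive Thm.\ref{thm:instance_difftopk_op} from Thm.\ref{thm:instance_op} by rewriting the indicator-based quantile term in $F_{op}$ as a smooth variational minimum over an auxiliary threshold $s'$. The workhorse is the classical top-$k$ identity of \cite{fan2017learning}: for any scalars $a_1,\dots,a_n$, the sum of the largest $k$ values equals $\min_{s}\bigl\{ks + \sum_i [a_i - s]_+\bigr\}$. In Eq.\eqref{eq:minmaxopauc2}, the indicator $\indicator{f(\bm{x})\geq \hefb}$ weighting $N(f,b,\gamma,\bm{x})$ over the negatives is precisely a top-$n_-^\beta$ sum of $N$-values, so this identity is the right tool to eliminate the combinatorial selection while preserving equivalence.

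The first substantive step is to guarantee that the top-$n_-^\beta$ subset selected by $f(\bm{x})$ coincides with the one selected by $N(f,b,\gamma,\bm{x})$. A direct computation gives $\partial_f N = 2(f(\bm{x})-b) + 2(1+\gamma)$, which is non-negative on $f(\bm{x})\in[0,1]$ as soon as $\gamma\geq b-1$. Under this monotonicity the two orderings agree, and applying the top-$k$ identity to $\{N(f,b,\gamma,\bm{x}_j')\}_{j=1}^{n_-}$ yields
\begin{equation*}
\hat{\E}_{\bm{x}\sim\setN}\bigl[\indicator{f(\bm{x})\geq \hefb}\, N(f,b,\gamma,\bm{x})\bigr]
=\min_{s'}\,\hat{\E}_{\bm{x}\sim\setN}\bigl[\beta s' + [N(f,b,\gamma,\bm{x})-s']_+\bigr].
\end{equation*}
Substituting this into $F_{op}$ turns the negative-sample term into the $G_{op}$ expression; since $s'$ does not appear in the positive part $P$ or in $-\gamma^2$, the newly introduced $\min_{s'}$ slides inside the outer $\min$--$\max$ and produces the three-level $\min$--$\max$--$\min$ structure of Eq.\eqref{minmaxmin_op}.

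Next I would certify the two compact feasible sets $\Omega_{s'}$ and $\Omega_{\gamma}$. For $s'$: with $f(\bm{x}),b\in[0,1]$ and $\gamma\leq 1$, one has $0\leq N(f,b,\gamma,\bm{x})\leq(f(\bm{x})-b)^2+2(1+\gamma)f(\bm{x})\leq 1+4=5$, so the variational minimizer lies in $[\min N,\max N]\subseteq[0,5]$ and clipping to $\Omega_{s'}=[0,5]$ is without loss. For $\gamma$: the restriction $\gamma\in[b-1,1]$ is exactly what the monotonicity argument demands; to show it is harmless, note that the objective is strictly concave in $\gamma$ (due to $-\gamma^2$) with unconstrained maximizer $\gamma^{\star}=\hat{\E}[f|y{=}0,\text{top-}\beta]-\hat{\E}[f|y{=}1]$, while the outer $\min$ over $b$ drives $b$ toward $\hat{\E}[f|y{=}0,\text{top-}\beta]$, at which point $\gamma^{\star}\geq b-1$; therefore restricting $\gamma$ to $\Omega_{\gamma}=[b-1,1]$ leaves the overall $\min$--$\max$ value unchanged.

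The main obstacle I foresee is the mild chicken-and-egg between the two reductions: the top-$k$ identity requires $N$ to be monotone in $f$, which in turn requires $\gamma\geq b-1$, but the clipping of $\gamma$ must itself be justified at the level of the indicator-based formulation \emph{before} $s'$ is introduced. I would resolve this by first establishing, directly on Eq.\eqref{eq:minmaxopauc2}, that restricting $\gamma$ to $[b-1,1]$ preserves the outer-min value via the saddle-point argument above; once the restricted problem is in hand, the top-$k$ variational substitution is unambiguously applicable, yielding Eq.\eqref{minmaxmin_op} as a genuine equivalence rather than a one-sided inequality.
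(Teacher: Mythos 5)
Your overall route is the paper's: establish monotonicity of $N(f,b,\gamma,\cdot)$ in $f(\bm{x})$ for $\gamma\ge b-1$ (your derivative computation matches the paper's Prop.~1), then apply the average-top-$k$ variational identity of Fan et al.\ to replace the indicator selection by the threshold $s'$, and check $N\in[0,5]$ so that $\Omega_{s'}=[0,5]$ is harmless. All of that is sound. You also correctly identify the one delicate point, namely that the restriction of $\gamma$ from $[-1,1]$ to $[b-1,1]$ must be justified on the indicator-based formulation before the top-$k$ substitution. The problem is that your proposed justification of that restriction does not actually close the argument.

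Write $V(a,b)=\max_{\gamma\in[-1,1]}\Phi(a,b,\gamma)$ and $W(a,b)=\max_{\gamma\in[b-1,1]}\Phi(a,b,\gamma)$, where $\Phi$ is the objective of Eq.~\eqref{eq:minmaxopauc2}. Since $[b-1,1]\subseteq[-1,1]$ for $b\in[0,1]$, we have $W\le V$ pointwise, hence $\min_{a,b}W\le\min_{a,b}V$ automatically. The direction that needs work is $\min_{a,b}W\ge\min_{a,b}V$. Your argument only checks that at the \emph{unrestricted} optimum $(a^{*},b^{*})=(E_{+},E_{-})$ the inner maximizer $\gamma^{\star}=E_{-}-E_{+}$ satisfies $\gamma^{\star}\ge b^{*}-1$, i.e.\ the constraint is inactive there. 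That gives $W(a^{*},b^{*})=V(a^{*},b^{*})$ and hence again $\min W\le\min V$ --- it says nothing about whether some \emph{other} $b$ in the active-constraint region $b>1+\gamma^{\star}$ could make $W(a,b)$ drop strictly below $\min V$ (the minimizer of $W$ need not coincide with that of $V$, which is exactly what you are implicitly assuming). The paper's Lemma on the constrained reformulation closes this by an explicit case analysis: in the active regime the clipped inner max equals $E_b+2(b-1)\Delta E-(b-1)^2$ plus the $a$-terms, whose derivative in $b$ is $2-2E_{+}\ge 0$, so its infimum over $b\ge 1+\Delta E$ is attained at the boundary $b=1+\Delta E$, where it coincides with the unclipped value and is therefore $\ge V(a^{*},b^{*})$. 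You need this (or an equivalent argument, e.g.\ the Lagrangian decoupling the paper uses elsewhere) to turn your one-sided inequality into the claimed equivalence; note also that Sion's minimax theorem cannot be invoked directly here because the $\gamma$-domain is coupled to $b$.
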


So far, we have obtained an instance-wise formulation with a simplified and differentiable quantile calculation, successfully address issues (C1) and (C2). However, this new min-max-min formulation, in turn, gives rise to new difficulties on optimization. An ideal solution to navigate this complexity is to swap the order of $\max_{\gamma}$ and $\min_{s'}$, thereby reframing it as a minimax problem that can be more effectively tackled with advanced optimization techniques. In order to realize this idea, we must address the non-smoothness issue caused by the non-smooth function $[\cdot]_+$.

\subsection{Handling Non-Smooth Selection: Surrogate Approximation vs. Unbiased Reformulation}

At this point, two options emerge for handling the non-smoothness: (i) employ a smooth surrogate, leading to an asymptotically unbiased optimization; (ii) adopt an unbiased reformulation via auxiliary weights, which preserves exactness at the cost of more variables. We next detail each formulation in turn.

\subsubsection{Surrogate Approximation with Asymptotically Vanishing Gap}\label{sec:opauc_asym}
The simplest strategy to deal with the non-smooth function is to replace it with a smooth surrogate.
Using a very common way, we apply the \texttt{softplus} surrogate function \cite{glorot2011deep} to smooth $[\cdot]_+$:
\begin{equation}\label{eq:kappa}
    r_{\colblue{\kappa}}(x) = \frac{\log\left(1+\exp({\colblue{\kappa}} \cdot x)\right)}{{\colblue{\kappa}}}.
\end{equation}
It is easy to show that $r_{\colblue{\kappa}}(x) \overset{\kappa \rightarrow \infty}{\rightarrow} [x]_+$. \textbf{We then proceed to optimize the smooth surrogate objective $G_{op}^{\colblue{\kappa}}(f,a,b,\gamma,\bm{z},s')$ where the $[\cdot]_+$ in $G_{op}(f,a,b,\gamma,\bm{z},s')$ is replaced with $r_{\colblue{\kappa}}(\cdot)$}.

Undoubtedly, the introduction of a surrogate will incorporate a degree of bias into the optimization objective. When $f,b,\gamma$ are fixed, smoothing the loss with $r_k$ results in a different optimal selection threshold $s'$ from using $[\cdot]_+$. This new threshold actually selects the top-$\tbeta$ quantile of negative instances, diverging from the original intent of top-$\beta$ selection. In other words, the surrogate essentially causes a deviation over the proportion of negative instances involved for $\op$ optimization, as illustrated in Fig.\ref{fig:diff_beta}.

Nevertheless, we can prove the approximation gap induced by $r_k$ has a finite convergence rate $O(1/\kappa)$ by the following theorem. Namely, this gap vanishes when $\kappa\rightarrow \infty$, and the quantile deviation will correspondingly diminish. Consequently, our surrogate optimization problem is guaranteed to be asymptotically unbiased towards its non-smooth counterpart. Please see Appx.\ref{sec:uniform_convergence} for the proof.
\begin{thm}\label{thm:bias_converge_op}
    \textbf{(Asymptotically Vanishing Approximation Gap.)}
Denote the bias induced by the approximation as
\begin{align*}
    & \Delta_\kappa^{op} =
    \min_{\cmin} \max_{\cmax} \min_{s^{\prime}\in\Omega_{s'}}
    \hezds \big[
    G_{op}^{\colblue{\kappa}}\left(f, a, b, \gamma, \boldsymbol{z}, s^{\prime}\right)
    \big]\\
    &~~~~~~~~~~-
    \min_{ \cmin} \max_{\cmax} \min_{s^{\prime}\in\Omega_{s'}}
    \hezds \big[
    G_{op}(f,a, b, \gamma, \boldsymbol{z}, s^{\prime})\big].
\end{align*}
For $f(\bm{x}) \in [0,1]$, we have the following convergence result:
\begin{equation*}
    \Delta_\kappa^{op} = O(1/\kappa).
\end{equation*}
\end{thm}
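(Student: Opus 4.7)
The plan is to bound $\Delta_\kappa^{op}$ by a \emph{uniform pointwise gap} between the softplus surrogate $r_\kappa$ and the hinge $[\cdot]_+$, and then transport that uniform gap through the nested $\min$-$\max$-$\min$ by a standard sandwich argument. Since $G_{op}^{\kappa}$ and $G_{op}$ differ \emph{only} in the term $r_\kappa(N(\cdot)-s') - [N(\cdot)-s']_+$, controlling this single term by $O(1/\kappa)$ uniformly in all free variables is essentially enough to conclude.

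First, I would establish the clean pointwise inequality
\begin{equation*}
    0 \le r_\kappa(x) - [x]_+ \le \frac{\log 2}{\kappa}, \qquad \forall x \in \mathbb{R}.
\end{equation*}
This is a one-line computation: for $x \ge 0$ the gap equals $\kappa^{-1}\log(1+e^{-\kappa x})$, and for $x < 0$ it equals $\kappa^{-1}\log(1+e^{\kappa x})$. Both expressions are positive and maximized at $x=0$ with value $\kappa^{-1}\log 2$, giving a bound that is independent of $x$ and of every other parameter appearing in $G_{op}$.

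Next, I would push this bound into the integrand. Since the only discrepancy between $G_{op}^\kappa$ and $G_{op}$ sits in the negative branch weighted by $(1-y)/[\beta(1-p)]$, the pointwise inequality above lifts to
\begin{equation*}
    0 \le G_{op}^\kappa(f,a,b,\gamma,\bm{z},s') - G_{op}(f,a,b,\gamma,\bm{z},s') \le \frac{(1-y)\log 2}{\beta(1-p)\,\kappa}
\end{equation*}
for every $(\bm{z},f,a,b,\gamma,s')$. Taking the empirical expectation $\hezds[\cdot]$ and using that $\hezds[(1-y)]$ is bounded by a constant multiple of $(1-p)$ yields the uniform objective bound
\begin{equation*}
    \sup_{f,a,b,\gamma,s'} \bigl| \hezds[G_{op}^\kappa - G_{op}] \bigr| \;=\; O(1/\kappa).
\end{equation*}

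Finally, I would invoke the standard sandwich lemma: if two functions of a common variable differ by at most $\varepsilon$ uniformly, their $\min$ and $\max$ also differ by at most $\varepsilon$. Applied inductively to the three nested operations $\min_{f,a,b}\max_{\gamma}\min_{s'}$, the uniform bound above transports immediately to $|\Delta_\kappa^{op}| = O(1/\kappa)$, which is the claim. The hardest part will be verifying the common-domain hypothesis of the sandwich lemma, since $\Omega_\gamma = [b-1,1]$ mildly depends on $b$; this is harmless because the uniform inequality is derived \emph{before} any extremum is taken, so at each level the two objectives are compared on an identical feasible set, and the remaining steps are then routine.
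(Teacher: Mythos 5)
Your proposal is correct and follows essentially the same route as the paper: both rest on the uniform pointwise bound $0 \le r_\kappa(x) - [x]_+ \le \kappa^{-1}\log 2$ (attained at $x=0$) and then transport it through the nested $\min$-$\max$-$\min$ via the difference-of-extrema (sandwich) argument. Your version is if anything slightly more careful, since you make the sandwich step and the $b$-dependence of $\Omega_\gamma$ explicit, whereas the paper passes directly to the supremum of the pointwise gap over $g\in[-5,5]$.
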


However, $r_k$ itself has brought new obstacles to the min-max swapping --- the concavity of the objective \wrt~$\gamma$, which is necessary for the swapping, does not hold. Fortunately, it is easy to check that $r_\colblue{\kappa}(x)$ has a bounded second-order derivative. In this way, we can regard $G^{\colblue{\kappa}}_{op}(f,a,b,\gamma,\bm{z},s')$ as a weakly-concave function \cite{boyd2004convex} of $\gamma$. By employing an $\ell_2$ regularization, we turn to a regularized form:
\begin{equation}
    G^{{\colblue{\kappa}},\colbit{\omega}}_{op}(f,a,b,\gamma,\bm{z} ,s') = G^{\colblue{\kappa}}_{op}(f,a,b,\gamma,\bm{z},s')  - \colbit{\omega} \cdot \gamma^2.
\end{equation}
With a sufficiently large $\colbit{\omega}$, $G^{{\colblue{\kappa}},\colbit{\omega}}_{op}(f,a,b,\gamma,\bm{z} ,s')$ is strongly-concave \wrt~$\gamma$. Note that the regularization scheme will inevitably introduce some bias. Nonetheless, it is known to be a necessary building block to stabilize the solutions and improve generalization performance.

We then reach a minimax problem in the final step.

\noindent\textbf{Min-Max Swapping.} For fixed $(f,a,b)$, the inner domains $\Omega_\gamma$ and $\Omega_{s'}$ are convex and compact. Moreover, $G_{op}^{{\colblue{\kappa}},\colbit{\omega}}$ is convex in $s'$ and concave in $\gamma$.
Applying the minimax theorem \cite{boyd2004convex}, we can interchange $\max_{\gamma}$ and $\min_{s'}$:
\begin{equation}\label{minmax_op_em}
\begin{aligned}
  &\underset{\cmin}{\min}\ \underset{\gamma\in\Omega_{\gamma}}{\max} \min_{s' \in \Omega_{s'}}
    \ \colbit{\hezds}[{G}_{op}^{{\colblue{\kappa}},\colbit{\omega}}] \\
    &\Leftrightarrow \underset{\cminl}{\min}\ \underset{\cmax}{\max}
\ \colbit{\hezds}[{G}_{op}^{{\colblue{\kappa}},\colbit{\omega}}],
\end{aligned}
\end{equation}
where $G_{op}^{{\colblue{\kappa}},\colbit{\omega}} = G_{op}^{{\colblue{\kappa}},\colbit{\omega}}(f,a,b,\gamma,\bm{z}, s')$. In this sense, we come to a regularized non-convex strongly-concave problem. In Sec.\ref{sec:alg}, we will employ an efficient solver to optimize it.


\begin{figure}[t]
    \centering
    \includegraphics[width=0.9\linewidth]{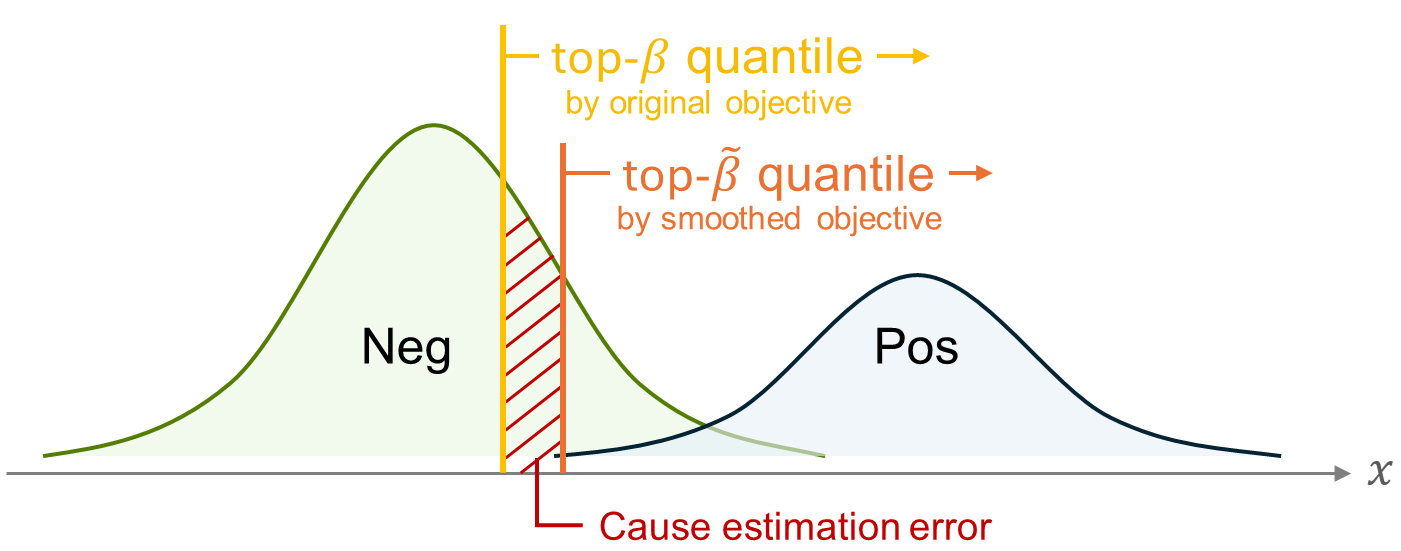}
    \caption{Illustration of the quantile deviation caused by the smoothed objective for OPAUC in the univariate case.}
    \label{fig:diff_beta}
\end{figure}

\subsubsection{Unbiased Reformulation without Asymptotic Gap}
\label{sec:opauc_nonasym}


Although the gap $\Delta_\kappa^{op}$ vanishes asymptotically as discussed in Thm.\ref{thm:bias_converge_op}, the smoothness of the function will also vanish undesirably when $\kappa\to\infty$. To make the optimization efficient, we must consider a limited value of $\kappa$, for which the asymptotic gap cannot be ignored. In this case, the quantile deviation could have a potential impact on the variable estimation to some extent.

We now present an alternative \emph{unbiased reformulation} that exactly preserves the original non-smooth objective. By integrating a continuous auxiliary weighting variable $c\in[0,1]$, we can effectively cast $[\cdot]_+$ as an equivalent maximization problem:
\begin{equation}
    [x]_+ = \max_{c\in[0,1]} c\cdot x.
\end{equation}
It maintains the natural smoothness of the function and preserves the convexity \wrt~$x$.

With this fact at hand, we can further reformulate the sample selection procedure as
\begin{equation}
    \begin{aligned}
      &\min_s \frac{1}{\beta}\cdot  \hat{\E}_{\bm{x}'\sim\setN} [\beta s + [\ell_-(\bm{x}')-s]_+] \\
      =& \min_s \max_{c(\bm{x})\in[0,1]} \frac{1}{\beta}\cdot  \hat{\E}_{\bm{x}'\sim\setN} \big[\beta s + c(\bm{x}') \cdot (\ell_-(\bm{x}')-s)\big].
    \end{aligned}
\end{equation}
Here $c(\bm{x}')$ is instantiated as a set of weights $\bm{c}=\{c_j\in[0,1]\}^{n_-}_{j=1}$. Namely, each negative instance is explicitly assigned a selection weight.

Consequently, we derive the following equivalent optimization problem for Eq.\eqref{minmaxmin_op}:
\begin{equation}\label{minmaxminmax_op}
    \begin{aligned}
      & \underset{\cmin}{\min}\
    \underset{\gamma\in\Omega_{\gamma} }{\max}
    \ \underset{s'\in\Omega_{s'}}{\min}
    \ \colbit{\hezds}[G_{op}(f,a,b,\gamma,\bm{z},s')] \\
    &\Leftrightarrow
    \underset{\cmin}{\min}\
    \underset{\gamma\in\Omega_{\gamma} }{\max}
    \ \underset{s'\in\Omega_{s'}}{\min} \underset{c}{\max}
    \ \colbit{\hezds}[H_{op}(f,a,b,\gamma,\bm{z},s',c)],
    \end{aligned}
\end{equation}
where
\begin{equation}
    \begin{aligned}
    H_{op}&(f,a,b,\gamma,\bm{z},s',c) = P(f,a,\gamma,\bm{x})\cdot y/p-\gamma^2\\
    & +
    \left(\beta s' + c(\bm{x})\cdot \left(N(f,b,\gamma,\bm{x})-s'\right)\right)\cdot (1-y)/[\beta(1-p)].
    \end{aligned}
\end{equation}

\begin{rem}
    One might recall that \cite{yang2021all} and \cite{zhu2022auc} also introduce sample weights to deal with the quantile selection. The fundamental distinction of our approach from these methods lies in the fact that our sample weighting scheme is established on the instance-wise reformulation. This simultaneously ensures a reduced computational complexity and an unbiased estimation.
\end{rem}

Even though the resulting problem exhibits a min-max-min-max form, its convexity/concavity with respect to the inner variables enables a direct swapping of the min and max operations, ultimately converting it into a conventional minimax problem.

\noindent\textbf{Min-Max Swapping.}
Due to the convexity \wrt\ $s'$ and concavity \wrt\ $c,\gamma$ of $H_{op}$ on the convex and compact inner domains (fixing $(f,a,b)$), the minimax theorem can be applied twice to swap the respective inner minimization and maximization operations, yielding:
\begin{align}\label{eq:nonasym_op}
    &\underset{\cmin}{\min}\ \underset{\cmax }{\max} \ \underset{s'\in\Omega_{s'}}{\min}\ \underset{c}{\max} \colbit{\hezds}[H_{op}] \nonumber \\
    \Leftrightarrow&\underset{\cmin}{\min}\ \underset{\cmax, c}{\max} \ \underset{s'\in\Omega_{s'}}{\min} \ \colbit{\hezds}[H_{op}] \nonumber \\
    \Leftrightarrow& \underset{\cmin, s'\in\Omega_{s'}}{\min}\ \underset{\cmax, c}{\max} \ \colbit{\hezds}[H_{op}],
\end{align}
where $H_{op}=H_{op}(f,a,b,\gamma,\bm{z},s',c)$.
\begin{rem}
    In practice, the choice between the two formulations hinges on the trade-off between efficiency and exactness. The surrogate approximation is preferable when training cost or memory is the primary concern, while the unbiased reformulation is more suitable when the elimination of approximation bias is critical.
\end{rem}

\begin{rem}
    Our derivations can extend beyond the squared margin loss to a broad family of convex and continuous surrogates (e.g., logistic, exponential). By Bernstein-polynomial approximation \cite{yang2020stochastic}, such losses can be uniformly approximated while preserving convexity/monotonicity, so the instance-wise minimax structure and linear per-iteration complexity remain intact. Overall, the extension requires no change to the training pipeline; only the surrogate-specific coefficients/updates differ.
\end{rem}

\subsection{Extension to TPAUC}\label{sec:tpauc}
According to Eq.\eqref{TPAUCM}, given a surrogate loss $\ell$ and finite dataset $S$, maximizing $\widehat{\mathrm{AUC}}_{\alpha, \beta}(f, S)$ is equivalent to solving the following problem:
\begin{equation}
\begin{aligned}
\underset{f}{\min} \ \hat{\mathcal{R}}_{\alpha, \beta}(f, S)= \sum_{i=1}^{\npa} \sum_{j=1}^{\nnb}\frac{\ell{\left(f(\bm{x}_{[i]})- f(\bm{x}'_{[j]})\right)}}{n_+^\alpha n_-^\beta}.
\end{aligned}
\end{equation}
The extension to $\tp$ follows the same pipeline developed for $\op$. The key difference lies in incorporating the TPR constraint $\alpha$, which introduces an additional threshold variable $s$ for selecting hard positive instances.
Due to the limited space, we present the result directly, please refer to Appx.\ref{sec:tpauc_reformulation} for more details.

\begin{itemize}
\item \textbf{Surrogate Approximation:}
\begin{equation}
    \min_{\substack{f, (a, b) \in [0, 1]^2, \\(s, s') \in \Omega_{s'}^2}} \max_{\gamma \in \Omega_{\gamma}} \colbit{\underset{\bm{z}\sim S}{\hat{\mathbb{E}}}}\left[G_{t p}^{\colblue{\kappa}, \colbit{\omega}}\left(f, a, b, \gamma, \bm{z}, s, s'\right)\right],
    \label{eq:asym_tp_em}
\end{equation}
where $\Omega_{\gamma}=[\max\{-a,b-1\},1]$ and
\begin{equation}
\begin{aligned}
&G_{tp}^{\colblue{\kappa},\colbit{\omega}}(f,a,b,\gamma,\bm{z},s,s') \\
=&\left(\alpha s + r_{{\colblue{\kappa}}}\left(P(f,a,\gamma,\bm{x})-s\right)\right)\cdot y/(\alpha p) -(\colbit{\omega}+1)\gamma^2\\
& + \left(\beta s' +r_{{\colblue{\kappa}}}\left(N(f,b,\gamma,\bm{x})-s'\right)\right)\cdot(1-y)/[\beta (1-p)].
\label{TPAUC_OB}
\end{aligned}
\end{equation}
When $\alpha=1$, it degenerates to $G_{op}^{\colblue{\kappa},\colbit{\omega}}$.

\item \textbf{Unbiased Reformulation:}
\begin{equation}\label{eq:minimax_unbiased_tp}
    \min_{\substack{f, (a, b) \in [0, 1]^2, \\(s, s') \in \Omega_{s'}^2}} \max_{\gamma \in \Omega_{\gamma}, c} \colbit{\underset{\bm{z}\sim S}{\hat{\mathbb{E}}}}\left[H_{tp}(f,a,b,\gamma,\bm{z},s,s',c)\right],
\end{equation}
where $\Omega_{\gamma}=[\max\{-a,b-1\},1]$,
\begin{equation}
\begin{aligned}
&H_{tp}(f,a,b,\gamma,\bm{z},s,s',c) \\
=&(\alpha s + c(\bm{x})\cdot[P(f,a,\gamma,\bm{x})-s])\cdot y/(\alpha p) - \gamma^2\\
 +& (\beta s' + c(\bm{x})\cdot[N(f,b,\gamma,\bm{x})-s'])\cdot(1-y)/[\beta (1-p)],
\end{aligned}
\end{equation}
and the weight function $c(\bm{x})\in[0,1]$ can be instantiated by setting individual weight variable for each instance.
\end{itemize}

Note that the domain $\Omega_{\gamma}$ is still coupled with $a$ and $b$ in all the above formulations. According to Thm.3.4 in \cite{tsaknakis2021minimax}, we can use Lagrange multipliers to decouple these constraints and obtain standard unconstrained minimax forms. For example, the unbiased formulation can be formalized as follows.
\begin{coro}\label{coro:lagrange_asym}
    \textbf{(Lagrangian Decoupling.)}
Eq.\eqref{eq:minimax_unbiased_tp} is equivalent to the following minimax problem with Lagrange multipliers $\theta_b, \theta_a$:
\begin{equation}\label{eq:nonasym_tp_lag}
\begin{aligned}
    &\min_{\substack{f,(a,b)\in[0,1]^2,\\ (s, s') \in \Omega_{s'}^2}} \max_{\gamma\in[\max\{-a,b-1\},1],c} \colbit{\underset{\bm{z}\sim S}{\hat{\mathbb{E}}}}[H_{tp}]
\\ \Leftrightarrow
&\min_{\substack{f,(a,b)\in[0,1]^2,\\ (s, s') \in \Omega_{s'}^2,\\ \theta_a \in[0,M_2],\theta_b \in[0,M_3]}} \max_{\gamma\in[-1,1],c} \colbit{\underset{\bm{z}\sim S}{\hat{\mathbb{E}}}}[H_{tp}] \\
 &\qquad \qquad \qquad \qquad \quad -\theta_{b}(b-1-\gamma)- \theta_{a}(-a-\gamma).
\end{aligned}
\end{equation}
\end{coro}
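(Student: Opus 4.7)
The plan is to invoke Theorem 3.4 of \cite{tsaknakis2021minimax}, which gives an exact Lagrangian reformulation for minimax problems whose inner constraint set depends on the outer variables. The first step is to make the coupling explicit: the inner set $\gamma\in[\max\{-a,b-1\},1]$ is equivalent to imposing the two affine inequalities $-a-\gamma\le 0$ and $b-1-\gamma\le 0$ together with $\gamma\in[-1,1]$ (since $a,b\in[0,1]$ already imply $-a\ge -1$ and $b-1\ge -1$, the upper endpoint is unchanged and only the lower endpoint is coupled). In this form the objective $H_{tp}$ is still concave in $\gamma$ (once $\gamma^2$ dominates, which is guaranteed by the construction of $P$ and $N$) and the feasible set is convex and compact in $\gamma$ for every fixed $(a,b)$.

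Next I would apply Lagrangian duality to the inner maximization in $\gamma$. Since the two coupled constraints are linear in $\gamma$ and the relaxed set $\gamma\in[-1,1]$ is a compact convex superset, Slater's condition holds (any interior $\gamma$ with $\gamma>\max\{-a,b-1\}$ strictly satisfies both inequalities), so strong duality gives
\begin{equation*}
\max_{\gamma\in[\max\{-a,b-1\},1]} \phi(\gamma) \;=\; \min_{\theta_a,\theta_b\ge 0} \max_{\gamma\in[-1,1]} \bigl[\phi(\gamma) - \theta_b(b-1-\gamma) - \theta_a(-a-\gamma)\bigr],
\end{equation*}
where $\phi(\gamma)=\hat{\mathbb{E}}_{\bm z\sim S}[H_{tp}(f,a,b,\gamma,\bm z,s,s',c)]$, treating $c$ as part of the outer maximization. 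The bounds $M_2,M_3$ on the multipliers are obtained exactly as in \cite{tsaknakis2021minimax}: since $\phi$ has a Lipschitz dependence on $\gamma$ (a consequence of the boundedness of $f(\bm x)\in[0,1]$, $a,b\in[0,1]$, $c(\bm x)\in[0,1]$, and the bounded slopes of $P$ and $N$), the optimal multipliers at the Lagrange saddle point are a priori bounded by the Lipschitz constants of the constraint residuals divided by the Slater margin, giving finite explicit constants $M_2,M_3$ so that restricting $\theta_a\in[0,M_2]$ and $\theta_b\in[0,M_3]$ is lossless.

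The remaining step is bookkeeping: I would absorb the two multipliers into the outer minimization (over $f,a,b,s,s'$) since they do not appear in the inner concave maximization over $(\gamma,c)$ in a way that could change its nature, and then swap the two minimums (the one in $s,s',a,b,f$ and the one in $\theta_a,\theta_b$), which is valid because they are independent min operations. This produces exactly Eq.\eqref{eq:nonasym_tp_lag}. The main obstacle is the verification that strong duality and the bounded-multiplier reduction apply in our concave-in-$\gamma$, convex-in-$s'$ setting with the additional maximization over $c$; this is handled by noting that once the saddle in $\gamma$ is taken, the additional variable $c$ enters linearly and does not affect the constraint qualification for the $\gamma$-subproblem. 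The same argument, applied with $c(\bm x)$ collapsed to the indicator-induced weight, yields the analogous decoupling for the surrogate formulation in Eq.\eqref{eq:asym_tp_em}.
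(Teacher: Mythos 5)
Your proposal takes essentially the same route as the paper: the corollary is justified purely by invoking Thm.~3.4 of \cite{tsaknakis2021minimax}, exactly as you do, and the paper's only additional remark is that the multiplier bounds follow simply from the objective being bounded above (with $M_1=M_2=10^9$ used in practice) rather than from a Slater-margin computation. One caution on your reconstruction: moving $\min_{\theta_a,\theta_b}$ outside the $\max_c$ is a genuine minimax interchange (one direction is weak duality, the other uses nonnegativity of the penalty on the feasible set), not merely a "swap of two independent min operations," but this is precisely what the cited theorem licenses, so the argument stands.
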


\begin{rem}
    For $\op$, $\theta_a$ can be set to 0. The tight constraints $\theta_b \in [0,M_1], \theta_a \in [0,M_2]$ come from the fact that optimum $\theta_b,\theta_a$ are both finite since the objective function is bounded from above. To make sure that $M_1,M_2$ are sufficiently large, we set $M_1 = M_2 =10^9$ in experiments.
\end{rem}

Similar to Coro.\ref{coro:lagrange_asym}, we can also obtain the equivalent Lagrangian form for the approximated formulations $G_{(\cdot)}^{\colblue{\kappa},\colbit{\omega}}$ in Eq.\eqref{minmax_op_em} and Eq.\eqref{eq:asym_tp_em}. For brevity, the Lagrangian form objective functions are denoted as $\breve{H}_{(\cdot)}$ and $\breve{G}_{(\cdot)}^{\colblue{\kappa},\colbit{\omega}}$.

\section{Stochastic Optimization Algorithm}\label{sec:alg}
Our goal is then to solve the resulting empirical non-convex concave minimax optimization problems Eq.\eqref{eq:nonasym_tp_lag}. Following the similar spirit of \cite{zhang2022sapd+}, we optimize the nonconvex strongly-concave surrogate $\breve{H}^\omega_{(\cdot)} = \breve{H}_{(\cdot)} - \omega \cdot (\gamma^2 + \norm{\bm{c}}^2)$ with a small $\omega$ for faster convergence. Then based on the work \cite{huang2022accelerated}, we employ an accelerated stochastic gradient descent ascent (ASGDA) method to solve the minimax problem.

Denote $\bm{\theta}\in\mathbb{R}^d$ as the parameters of function $f$, $\bm{\tau}=\{\bm{\theta},a,b,s,s',\theta_a,\theta_b\}\in \Omega_{\bm{\tau}}$ as the variables for the outer min-problem, $\bm{\gamma} = \{\gamma, \bm{c}\} \in \Omega_{\bm{\gamma}}$ as the variables for the inner max-problem. $\breve{H}^\omega_{(\cdot)}$ on a minibatch $\mathcal{B}$ can be denoted as $\breve{H}^\omega_{(\cdot)}(\bm{\tau},\bm{\gamma};\mathcal{B})$. Then Alg.\ref{alg:1} shows the procedure of ASGDA following \cite{huang2022accelerated}. There are two key steps: (1) \texttt{Line 5-6}: variables $\bm{\tau}_{t+1}$ and $\bm{\gamma}_{t+1}$ are updated in a momentum way. Moreover, the convex combination ensures that they are always feasible given that the initial solution is feasible. (2) \texttt{Line 9-10}: using the momentum-based variance reduction technique, we can estimate the stochastic first-order partial gradients $\bm{v}_t$ and $\bm{w}_t$ in a more stable manner.

\begin{algorithm}[t]
    \caption{ASGDA}
    \begin{algorithmic}[1]
    \label{alg:1}
    \STATE \textbf{Input}:\,Dataset $S$,\,hyperparameters\,$\{\nu, \lambda, k, m, \iota_1, \iota_2, T\}$
    \STATE \textbf{Initialize:} Randomly select $\bm{\tau}_0=\{\bm{\theta}_0$, $a_0$, $b_0$, $s_0$, $s^\prime_0$, $\theta_a$, $\theta_b\}$ from $\Omega_{\bm{\tau}}$, $\bm{v}_0=\bm{0}^{d+6}$, $\bm{\gamma}_0$ from $\Omega_{\bm{\gamma}}$, $\bm{w}_0=\bm{0}$.
    \FOR{$t=0,1,\cdots,T$}
        \STATE Compute the coefficient $\eta_t=\frac{k}{(m+t)^{1/3}}$;
        \STATE Update $\bm{\tau}_{t+1}=(1-\eta_t)\bm{\tau}_t + \eta_t \setP_{\Omega_{\bm{\tau}}}(\bm{\tau}_t-\nu \bm{v}_t)$;
        \STATE Update $\bm{\gamma}_{t+1} = (1-\eta_t)\bm{\gamma}_t+\eta_t \setP_{\Omega_{\bm{\gamma}}}(\bm{\gamma}_t+\lambda \bm{w}_t)$;
        \STATE Compute $\rho_{t+1}=\iota_1\eta_t^2$ and $\xi_{t+1}=\iota_2 \eta_t^2$;
        \STATE Sample a mini-batch of data $\mathcal{B}_{t+1}$ from dataset $S$;  \\ \rightline{$\rhd$ Replace $\breve{H}^\omega_{(\cdot)}$ with $\breve{G}_{(\cdot)}^{\colblue{\kappa},\colbit{\omega}}$ for the approximated form}
        \STATE Update $\bm{v}_{t+1}=\nabla_{\bm{\tau}}\breve{H}^\omega_{(\cdot)}(\bm{\tau}_{t+1},\bm{\gamma}_{t+1};\mathcal{B}_{t+1})+(1-\rho_{t+1})[\bm{v}_{t}-\nabla_{\bm{\tau}}\breve{H}^\omega_{(\cdot)}(\bm{\tau}_t,\bm{\gamma}_t,\mathcal{B}_{t+1})]$;
        \STATE Update
        $\bm{w}_{t+1}=\nabla_{\bm{\gamma}}\breve{H}^\omega_{(\cdot)}(\bm{\tau}_{t+1},\bm{\gamma}_{t+1};\mathcal{B}_{t+1})+(1-\xi_{t+1})[\bm{w}_{t}-\nabla_{\bm{\gamma}}\breve{H}^\omega_{(\cdot)}(\bm{\tau}_t,\bm{\gamma}_t,\mathcal{B}_{t+1})]$;
    \ENDFOR
    \STATE \textbf{Return} $\bm{\tau}_{T+1}$ and $\bm{\gamma}_{T+1}$
    \end{algorithmic}
\end{algorithm}

The convergence rate of Alg.\ref{alg:1} is presented in Thm.\ref{thm:opt}.
\begin{thm}\label{thm:opt}
    (Thm.9 of \cite{huang2022accelerated}) Supposing that $\breve{H}^\omega_{(\cdot)}(\bm{\tau},\bm{\gamma};\mathcal{B})$ has $L_H$-Lipschitz gradients \wrt $\bm{\tau}$ and $\bm{\gamma}$, let $\{\bm{\tau}_t,\bm{\gamma}_t\}$ be a sequence generated by Alg.\ref{alg:1}, and $\mu$ be the strongly-concavity constant of the objective, if the learning rates $\nu,\lambda$ satisfy:
   \begin{equation}
   \begin{aligned}
       &\iota_1\geq \frac{2}{3k^3}+\frac{9\mu^2}{4}, \qquad \iota_2\geq \frac{2}{3k^3}+\frac{75L_H^2}{2}, \qquad k>0\\
       &m\geq \max(2, k^3, (\iota_1 k)^3, (\iota_2 k)^3),~~ \lambda \leq \min\left(\frac{1}{6L_H},\frac{27\mu}{16}\right)\\
       &\nu\leq \min\Big(\frac{\lambda \mu}{2L_H}\sqrt{\frac{2}{8\lambda^2+75(L_H/\mu)^2}},~\frac{m^{1/3}}{2L_H(1+\frac{L_H}{\mu})k}\Big),
   \end{aligned}
   \end{equation}
   then we have:
   \begin{equation}
       \frac{1}{T} \sum_{t=1}^{T} \E\left[\left\|\frac{1}{\nu}\big(\bm{\tau}_t-\setP_{\Omega_{\bm{\tau}}}(\bm{\tau}_{t}-\nu F_{(\cdot)}(\bm{\tau}_t))\big)\right\|\right]
       \leq O\big(\frac{(L_H/\mu)^{3/2}}{T^{1/3}}\big),
   \end{equation}
   where $\|\frac{1}{\nu}(\bm{\tau}_t-\setP_{\Omega_{\bm{\tau}}}(\bm{\tau}_{t}-\nu \nabla F_{(\cdot)}(\bm{\tau}_t)))\|$ is the $l_2$-norm of gradient mapping metric for the outer problem \cite{dunn1987convergence, ghadimi2016mini, razaviyayn2020nonconvex} with $F_{(\cdot)}(\bm{\tau}_t)=\max_{\bm{\gamma}\in\Omega_{\bm{\gamma}}}\breve{H}^\omega_{(\cdot)}(\bm{\tau}_t,\bm{\gamma})$.
   \label{thm:algorithm}
   \end{thm}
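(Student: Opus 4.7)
The plan is to reduce the claim to a direct invocation of Theorem~9 of Huang \etal\ 2022, since ASGDA is precisely the algorithm analyzed there. The proof therefore amounts to checking that the hypotheses of that result hold for the surrogate minimax objective $\breve{H}^\omega_{(\cdot)}(\bm{\tau},\bm{\gamma};\mathcal{B})$ on the compact feasible set $\Omega_{\bm{\tau}}\times\Omega_{\bm{\gamma}}$, and that the step-size and momentum schedule we prescribe matches theirs.

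First, I would verify the $L_H$-Lipschitz gradient condition. Every building block of $\breve{H}^\omega_{(\cdot)}$ (and of $\breve{G}^{\colblue{\kappa},\colbit{\omega}}_{(\cdot)}$) is $C^2$: the quadratic core $P(\cdot),N(\cdot)$ is polynomial in variables confined to a bounded box; the smoothed hinge $r_{\colblue{\kappa}}$ has bounded second derivative of order $\colblue{\kappa}$; and the bilinear coupling $c(\bm{x})\cdot(N-s')$ appearing in the unbiased form is smooth with gradients controlled by the diameter of $[0,1]$. Under the standard assumption that $\bm{\theta}\mapsto f(\cdot;\bm{\theta})$ has bounded and Lipschitz gradient, the joint Hessian of $\breve{H}^\omega_{(\cdot)}$ is uniformly bounded and a finite $L_H$ exists, depending polynomially on $1/\alpha$, $1/\beta$, $1/p$, the Lagrange bounds $M_1,M_2$, and either $\colblue{\kappa}$ or the $\bm{c}$-domain radius. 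Second, I would verify strong concavity in $\bm{\gamma}=(\gamma,\bm{c})$: the explicit quadratic regularizer $-\colbit{\omega}(\gamma^2+\norm{\bm{c}}^2)$ provides $2\colbit{\omega}$-strong concavity, which dominates the mild weak concavity introduced by $r_{\colblue{\kappa}}$ once $\colbit{\omega}$ is chosen sufficiently large, yielding the constant $\mu$ appearing in the statement. Bounded variance of the stochastic gradient estimators $\nabla\breve{H}^\omega_{(\cdot)}(\cdot;\mathcal{B})$ then follows at once from the boundedness of all variables on $\Omega_{\bm{\tau}}\times\Omega_{\bm{\gamma}}$ together with i.i.d.\ mini-batch sampling from $S$.

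With these three hypotheses in place, the quantitative conditions on $(k,m,\iota_1,\iota_2,\nu,\lambda)$ in the statement are precisely those required by Huang \etal\ for their nonconvex strongly-concave analysis of ASGDA; substituting the constants $(L_H,\mu)$ identified above into their bound yields the claimed $O((L_H/\mu)^{3/2}/T^{1/3})$ rate for the averaged gradient-mapping norm of the outer envelope $F_{(\cdot)}(\bm{\tau})=\max_{\bm{\gamma}\in\Omega_{\bm{\gamma}}}\breve{H}^\omega_{(\cdot)}(\bm{\tau},\bm{\gamma})$. The main obstacle is not conceptual but bookkeeping: in the unbiased form $\breve{H}^\omega_{(\cdot)}$ couples the high-dimensional parameter $\bm{\theta}$ with an auxiliary weight vector $\bm{c}$ whose length equals the mini-batch size, and one must confirm that the coupling terms contribute only a sample-size-independent factor to $L_H$ so that the prescribed $(\nu,\lambda)$ remain meaningful uniformly in $n_\pm$. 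Once this scaling is established, the conclusion follows by direct citation.
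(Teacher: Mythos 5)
Your proposal matches the paper's treatment exactly: the paper offers no independent proof of this result, stating it verbatim as Theorem~9 of the cited work of Huang~\emph{et al.}\ with the $L_H$-Lipschitz-gradient and $\mu$-strong-concavity conditions taken as hypotheses of the theorem itself (the strong concavity being supplied by the added $-\omega(\gamma^2+\norm{\bm{c}}^2)$ term, and for the surrogate form by the footnoted condition $\kappa \le 2+2\omega$). Your additional sketch verifying that these hypotheses actually hold for $\breve{H}^\omega_{(\cdot)}$ and $\breve{G}^{\kappa,\omega}_{(\cdot)}$ is a reasonable supplement but goes beyond what the paper does; the core argument --- direct citation with matching step-size conditions --- is identical.
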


\begin{rem}
    By replacing $\breve{H}^\omega_{(\cdot)}$ with $\breve{G}_{(\cdot)}^{\colblue{\kappa},\colbit{\omega}}$\footnote{It is easy to check that they are strongly-concave w.r.t $\gamma$ whenever $\kappa \le 2 + 2\omega$.} and using $\bm{\gamma} = \{\gamma\}$, we could obtain a similar result for the approximated formulation, \ie, the convergence rate is $O(\frac{(L_H/\mu)^{3/2}}{T^{1/3}})$. By $\frac{(L_H/\mu)^{3/2}}{T^{1/3}}\leq \epsilon$, then the iteration number to achieve the $\epsilon$-first-order saddle point satisfies: $T\geq (L_H/\mu)^{4.5}\epsilon^{-3}$.
\end{rem}

\begin{table*}[ht]
    \centering
    \caption{Dataset statistics.}
    \begin{tabular}{lllll}
        \toprule
        Dataset & Pos. Class ID & Pos. Class Name & \# Pos& \#Neg \\
        \midrule
        CIFAR-10-LT-1 & 2 & birds & 1,508 & 8,907 \\
        CIFAR-10-LT-2 & 1 & automobiles & 2,517 & 7,898 \\
        CIFAR-10-LT-3 & 3 & birds & 904 & 9,511 \\
        \midrule
        CIFAR-100-LT-1 & 6,7,14,18,24 & insects & 1,928 & 13,218 \\
        CIFAR-100-LT-2 & 0,51,53,57,83 & fruits and vegatables & 885 & 14,261 \\
        CIFAR-100-LT-3 & 15,19,21,32,38 & large omnivores herbivores  & 1,172 & 13,974 \\
        \midrule
        Tiny-ImageNet-200-LT-1 & 24,25,26,27,28,29 & dogs & 2,100 & 67,900 \\
        Tiny-ImageNet-200-LT-2 & 11,20,21,22 & birds & 1,400 & 68,600 \\
        Tiny-ImageNet-200-LT-3 & 70,81,94,107,111,116,121,133,145,153,164,166 & vehicles & 4,200 & 65,800 \\
        \midrule
        iNaturalist2021 & / & Fungi & 20,460 & 151,560 \\
        \bottomrule
    \end{tabular}
    \label{tab:dataset}
\end{table*}

\section{Generalization Analysis}\label{sec:generalization}


In this section, we theoretically analyze the generalization performance of our proposed estimators (please see Appx.\ref{sec:proof_generalization} for the proof).

For $\op$, the population risk can be defined as
\begin{equation}
    \mathcal{R}_{\beta}(f) = \underset{\bm{x} \sim \setD_\setP, \bm{x}'\sim \setD_\setN}{\E} \left[\indicator{f(\bm{x}') \ge \efb} \cdot \ell(f(\bm{x}) - f(\bm{x}'))\right].
\end{equation}
To prove the uniform convergence result over a hypothesis class $\setF$ of the scoring function $f$, we need to show that:
\begin{equation*}
    \begin{split}
        \sup_{f \in \setF}\left[ \mathcal{R}_{\beta}(f)  - \hat{\mathcal{R}}_{\beta}(f) \right] \le \epsilon,
    \end{split}
\end{equation*}
holds with high probability.

According to Thm.\ref{thm:instance_difftopk_op} in Sec.\ref{sec:instance}, we know that the generalization error of OPAUC with the surrogate loss $\ell$ can be measured as:
\begin{equation}
    \mathcal{R}_{\beta}(f) \propto \underset{\cmins}{\min}\
    \underset{\cmax }{\max}  \underset{s'\in\Omega_{s'}}{\min}
    \ \colblue{\ezdz}[G_{op}(f,a,b,\gamma,\bm{z},s')],
\end{equation}
and
\begin{equation}
    \hat{\mathcal{R}}_{\beta}(f) \propto  \underset{\cmins}{\min}\
    \underset{\cmax }{\max}  \underset{s'\in\Omega_{s'}}{\min}
    \ \colbit{\hezds}[G_{op}(f,a,b,\gamma,\bm{z},s')].
\end{equation}
Consequently, we only need to prove that:
\begin{equation*}
  \begin{split}
    \sup_{f \in \setF} \Big[ & \underset{\cmins}{\min}\
    \underset{\cmax }{\max}  \underset{s'\in\Omega_{s'}}{\min}
    \ \colblue{\ezdz}[G_{op}(f,a,b,\gamma,\bm{z},s')] \\
    & - \underset{\cmins}{\min}\
    \underset{\cmax }{\max}  \underset{s'\in\Omega_{s'}}{\min}
    \ \colbit{\hezds}[G_{op}(f,a,b,\gamma,\bm{z},s')] \Big] \le \epsilon.
  \end{split}
\end{equation*}

After relaxation, the instance-wise loss function $G_{op}$ allows us to easily employ the standard Rademacher complexity based technique \cite{mohri2018foundations} together with the covering numbers, forming a generalization bound usually with the order of $\tilde{O}(\np^{-1/2} + \beta^{-1}\nn^{-1/2})$. Nevertheless, this order is suboptimal in some situations \cite{bartlett2005local}. Here we take a step further by incorporating the local Rademacher complexity measure \cite{bartlett2005local} into the analysis. Since the local Rademacher complexity only considers a small region in the hypothesis class (\eg, the hypothesis function will small empirical errors), it often leads to a sharper bound.

\begin{thm}
Assume there exist three positive constants $R, D$ and $h$ such that the following bound holds for the covering number of $\setF$ \wrt\ $\norm{\cdot}_2$ norm:
\begin{equation}
    \log \setN(\epsilon, \setF, \norm{\cdot}_2) \leq D\log^h(R/\epsilon).
\end{equation}
Then for any $\delta>0$, with probability at least $1-\delta$ over the draw of an i.i.d. sample set $S$ of size $n$ ($n \geq R^{-2}$), for all $f\in\setF$ and $K>1$ we have:
\begin{equation*}
\begin{aligned}
&\underset{\cmins}{\min}
\underset{\cmax }{\max}  \underset{s'\in\Omega_{s'}}{\min}
\ \colblue{\ezdz}[G_{op}(f,a,b,\gamma,\bm{z},s')] \\
\le& \frac{K}{K-1}\ \underset{\cmins}{\min}\  \underset{\cmax }{\max}  \underset{s'\in\Omega_{s'}}{\min}
\ \colbit{\hezds}[G_{op}(f,a,b,\gamma,\bm{z},s')] \\
&+ \tilde{O}(\np^{-1} + \beta^{-1}\nn^{-1}).
\end{aligned}
\end{equation*}
\label{thm:4}
\end{thm}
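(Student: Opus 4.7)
The plan is to reduce the three-level minimax gap to a uniform deviation bound on the instance-wise loss $G_{op}$, then invoke local Rademacher complexity with the covering-number entropy of $\setF$. As a first step, for any fixed $f\in\setF$, define $V(f)=\min_{\cmins}\max_{\cmax}\min_{s'\in\Omega_{s'}}\ezdz[G_{op}]$ and its empirical counterpart $\hat V(f)$ by replacing $\ezdz$ with $\hezds$. Applying $\min_z[A(z)+B(z)]\le\min_z A(z)+\sup_z B(z)$ at each of the three levels, evaluated at the empirical optimizer, yields $V(f)-\hat V(f)\le\sup_{a,b,\gamma,s'}(\ezdz-\hezds)[G_{op}(f,a,b,\gamma,\cdot,s')]$. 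Because $G_{op}$ is instance-wise, this right-hand side splits into a positive-side empirical process on the $\np$ positive samples (through $P(f,a,\gamma,\cdot)\cdot y/p$) and a negative-side one on the $\nn$ negative samples (through $(\beta s'+[N(f,b,\gamma,\cdot)-s']_+)\cdot(1-y)/[\beta(1-p)]$), the latter carrying a multiplicative $1/\beta$.

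The second step is to control each empirical process uniformly over $(f,a,b,\gamma,s')$ via the local Rademacher complexity framework of Bartlett--Bousquet--Mendelson. The key ingredients are: (i) a Bernstein-type variance condition $\mathrm{Var}[G_{op}]\lesssim\E[G_{op}]$, which follows from the squared-loss form of $P,N$ together with the boundedness $f(\bm{x})\in[0,1]$; (ii) Dudley's entropy integral, whose input is the covering number $\log\setN(\epsilon,\setF,\norm{\cdot}_2)\le D\log^h(R/\epsilon)$ together with a standard $\epsilon$-net on the auxiliary variables $(a,b,\gamma,s')$ (whose domains are compact and on which $G_{op}$ is Lipschitz, so they contribute only a $\tilde{O}(1)$ correction to the entropy integral); (iii) Talagrand's concentration inequality followed by the usual peeling/fixed-point argument. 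Solving the resulting sub-root equation gives the fast rates $\tilde{O}(1/\np)$ and $\tilde{O}(1/(\beta\nn))$ for the positive and negative pieces respectively, and the peeling step naturally supplies the prefactor $\tfrac{K}{K-1}$ for any $K>1$.

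The main obstacle is twofold. First, verifying the Bernstein variance condition after taking the inner $\min_{s'}$ and $\max_{\gamma}$ is nontrivial, since the effective loss class depends on data through these optimizers: one must show that the variance bound propagates through the min-max-min structure, which can be handled by a Talagrand-type Lipschitz contraction applied to the hinge term $[N(f,b,\gamma,\cdot)-s']_+$, preserving the condition up to absolute constants. Second, ensuring the $1/\beta$ scaling enters the final rate only linearly rather than as $1/\beta^2$: this hinges on the observation that the variance of the normalized negative piece inherits a single $1/\beta$ factor because, at near-optimal $s'$, the hinge $[N-s']_+$ is effectively supported on a $\beta$-fraction of negatives, so the effective sample size for the negative empirical process is $\beta\nn$ rather than $\nn$. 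Once these two points are in place, the remainder is routine Bernstein--Talagrand concentration plus chaining, which closes the bound in the form stated.
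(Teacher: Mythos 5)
Your proposal follows essentially the same route as the paper's proof: the three-level min--max--min gap is reduced via the elementary ``difference of minima/maxima $\le$ supremum of differences'' inequality (the paper's Lem.~4) to a uniform deviation of the instance-wise loss $G_{op}$ over $(f,a,b,\gamma,s')$, the Bernstein condition $\mathrm{Var}(G_{op})\lesssim \E[G_{op}]$ is verified from boundedness, and Bartlett--Bousquet--Mendelson local Rademacher complexity combined with a Lipschitz decomposition of $G_{op}$ in $(a,b,\gamma,s',f)$, finite $\epsilon$-nets on the compact auxiliary domains, chaining, and the sub-root fixed-point computation yields the $\tilde{O}(\np^{-1}+\beta^{-1}\nn^{-1})$ rate with the $\tfrac{K}{K-1}$ prefactor. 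The only (inessential) divergence is in how the single factor $\beta^{-1}$ is traced: the paper obtains it because the Lipschitz constant $q(\beta)\propto\beta^{-1}$ of the negative branch enters the entropy integral linearly at the discretization scale $\epsilon=q(\beta)/\sqrt{n}$, whereas you argue via the variance of the hinge term being supported on a $\beta$-fraction of negatives; both lead to the same linear dependence.
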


Similarly, we could derive the generalization bound of $\tp$ which is with the order of $\tilde{O}(\alpha^{-1}\np^{-1} + \beta^{-1}\nn^{-1})$ (please see Appx.\ref{sec:proof_gene_tp} for the details). When $\alpha=1$, it could recover the case of $\op$.

\begin{rem}
    The assumption for the covering number applies to most popular models ranging from linear models to deep neural networks \cite{lei2016local,longgeneralization}. For example, the proof of Lem.2.3 in \cite{longgeneralization} shows that when a network is $(B,d)$-Lipschitz parametrized, our assumption holds by setting $R=3B, D=d, h=1$. \cite{longgeneralization} also demonstrates that such $B$ is usually large for neural networks (which could degenerate to linear models), and thus the condition $n \geq R^{-2}$ is also very easy to satisfy.
\end{rem}

\begin{rem}
    Compared with previous studies \cite{yang2021all, narasimhan2017support, DBLP:journals/pami/YangXBHCH23}, our generalization analysis is simpler and does not require complex error decomposition. Moreover, our results are sharp and hold for all real-valued hypothesis class with outputs in $[0,1]$, while the results in \cite{yang2021all, narasimhan2017support} only hold for hard-threshold functions with a limited order of $\tilde{O}(\np^{-1/2} + \nn^{-1/2})$. On the other hand, unlike \cite{DBLP:journals/pami/YangXBHCH23}, our bound is dependent on $\alpha$/$\beta$, explicitly illustrating the impact of the TPR/FPR constraints on the generalization.
\end{rem}

\begin{rem}
Since $H_{op}$ is an unbiased formulation of $G_{op}$, its generalization gap also takes the order of $\tilde{O}(\np^{-1} + \beta^{-1}\nn^{-1})$.
For the surrogate approximation $G_{op}^{\colblue{\kappa}}$, it is easy to derive that the bound exhibits an extra term of $O(\kappa^{-1})$ which is the approximation bias.
\end{rem}


\begin{table*}[!t]
    \centering
    \caption{
        OPAUC ($\mathrm{FPR}\leq \beta$) on CIFAR-10-LT with different $\beta$. The \best{best} and \secbest{second best} results are both highlighted.
    }
    \label{tab:op_cifar10}
    \begin{tabular}{l|cc|cc|cc}
    \toprule
        \multirow{2}*{Method} & \multicolumn{2}{c|}{CIFAR-10-LT-1} & \multicolumn{2}{c|}{CIFAR-10-LT-2} & \multicolumn{2}{c}{CIFAR-10-LT-3} \\
        \cmidrule{2-7}~ & 0.3 & 0.5 & 0.3 & 0.5 & 0.3 & 0.5 \\
        \midrule
        CE & 0.7417 $\pm$ .0033 & 0.8350 $\pm$ .0046 & 0.9431 $\pm$ .0028 & 0.9551 $\pm$ .0037 & 0.7428 $\pm$ .0007 & 0.8387 $\pm$ .0018 \\
        AUC-M \cite{ying2016stochastic} & 0.7334 $\pm$ .0044 & 0.8267 $\pm$ .0057 & 0.9609 $\pm$ .0009 & 0.9729 $\pm$ .0015 & 0.7442 $\pm$ .0084 & 0.8411 $\pm$ .0097 \\
        MB \cite{kar2014online} & 0.7492 $\pm$ .0079 & 0.8425 $\pm$ .0092 & 0.9648 $\pm$ .0098 & 0.9768 $\pm$ .0109 & 0.7500 $\pm$ .0034 & 0.8469 $\pm$ .0047 \\
        SOPA \cite{zhu2022auc} & 0.7659 $\pm$ .0067 & 0.8481 $\pm$ .0070 & 0.9688 $\pm$ .0045 & 0.9801 $\pm$ .0058 & 0.7651 $\pm$ .0052 & 0.8500 $\pm$ .0063 \\
        SOPA-S \cite{zhu2022auc} & 0.7548 $\pm$ .0042 & 0.8470 $\pm$ .0055 & 0.9674 $\pm$ .0017 & 0.9794 $\pm$ .0029 & 0.7542 $\pm$ .0076 & 0.8491 $\pm$ .0089 \\
        AGD-SBCD \cite{yao2022large} & 0.7526 $\pm$ .0031 & 0.8459 $\pm$ .0044 & 0.9615 $\pm$ .0056 & 0.9735 $\pm$ .0070 & 0.7497 $\pm$ .0005 & 0.8468 $\pm$ .0016 \\
        AUC-poly \cite{yang2021all} & 0.7542 $\pm$ .0075 & \best{0.8475 $\pm$ .0088} & 0.9672 $\pm$ .0031 & 0.9792 $\pm$ .0044 & 0.7538 $\pm$ .0026 & 0.8497 $\pm$ .0039 \\
        AUC-exp \cite{yang2021all} & 0.7347 $\pm$ .0012 & 0.8280 $\pm$ .0025 & 0.9620 $\pm$ .0084 & 0.9740 $\pm$ .0097 & 0.7457 $\pm$ .0097 & 0.8416 $\pm$ .0109 \\
        \midrule
        PAUCI \cite{shao2022asymptotically} (Ours) & \secbest{0.7721 $\pm$ .0086} & 0.8454 $\pm$ .0099 & \best{0.9716 $\pm$ .0092} & \best{0.9838 $\pm$ .0105} & \secbest{0.7746 $\pm$ .0001} & \secbest{0.8505 $\pm$ .0014} \\
        UPAUCI (Ours) & \best{0.7764 $\pm$ .0063} & \secbest{0.8472 $\pm$ .0045} & \secbest{0.9702 $\pm$ .0049} & \secbest{0.9809 $\pm$ .0043} & \best{0.8061 $\pm$ .0036} & \best{0.8692 $\pm$ .0030} \\
    \bottomrule
    \end{tabular}
\end{table*}

\begin{table*}[!t]
    \centering
    \caption{
        OPAUC ($\mathrm{FPR}\leq \beta$) on CIFAR-100-LT with different $\beta$.
    }
    \label{tab:op_cifar100}
    \begin{tabular}{l|cc|cc|cc}
    \toprule
        \multirow{2}*{Method} & \multicolumn{2}{c|}{CIFAR-100-LT-1} & \multicolumn{2}{c|}{CIFAR-100-LT-2} & \multicolumn{2}{c}{CIFAR-100-LT-3} \\
        \cmidrule{2-7}~ & 0.3 & 0.5 & 0.3 & 0.5 & 0.3 & 0.5 \\
        \midrule
        CE & 0.8903 $\pm$ .0046 & 0.9070 $\pm$ .0060 & 0.9695 $\pm$ .0082 & 0.9725 $\pm$ .0074 & 0.8321 $\pm$ .0015 & 0.8553 $\pm$ .0024 \\
        AUC-M \cite{ying2016stochastic} & 0.8996 $\pm$ .0002 & 0.9163 $\pm$ .0066 & 0.9845 $\pm$ .0032 & 0.9875 $\pm$ .0078 & 0.8403 $\pm$ .0060 & 0.8635 $\pm$ .0075 \\
        MB \cite{kar2014online} & 0.9003 $\pm$ .0066 & 0.9170 $\pm$ .0059 & 0.9804 $\pm$ .0041 & 0.9834 $\pm$ .0082 & \best{0.8575 $\pm$ .0057} & \secbest{0.8807 $\pm$ .0030} \\
        SOPA \cite{zhu2022auc} & 0.9108 $\pm$ .0073 & \best{0.9275 $\pm$ .0057} & 0.9875 $\pm$ .0088 & 0.9905 $\pm$ .0076 & 0.8483 $\pm$ .0096 & 0.8715 $\pm$ .0029 \\
        SOPA-S \cite{zhu2022auc} & 0.9033 $\pm$ .0029 & 0.9200 $\pm$ .0067 & 0.9860 $\pm$ .0061 & 0.9890 $\pm$ .0080 & 0.8449 $\pm$ .0087 & 0.8681 $\pm$ .0026 \\
        AGD-SBCD \cite{yao2022large} & 0.9105 $\pm$ .0011 & \secbest{0.9272 $\pm$ .0061} & 0.9814 $\pm$ .0069 & 0.9844 $\pm$ .0090 & 0.8406 $\pm$ .0044 & 0.8638 $\pm$ .0068 \\
        AUC-poly \cite{yang2021all} & 0.9027 $\pm$ .0084 & 0.9194 $\pm$ .0076 & 0.9859 $\pm$ .0072 & 0.9889 $\pm$ .0085 & 0.8441 $\pm$ .0053 & 0.8673 $\pm$ .0034 \\
        AUC-exp \cite{yang2021all} & 0.8987 $\pm$ .0071 & 0.9154 $\pm$ .0062 & 0.9850 $\pm$ .0022 & 0.9880 $\pm$ .0094 & 0.8407 $\pm$ .0038 & 0.8639 $\pm$ .0045 \\
        \midrule
        PAUCI \cite{shao2022asymptotically} (Ours) & \secbest{0.9155 $\pm$ .0014} & 0.9172 $\pm$ .0064 & \secbest{0.9889 $\pm$ .0053} & \secbest{0.9919 $\pm$ .0074} & 0.8492 $\pm$ .0020 & 0.8722 $\pm$ .0039 \\
        UPAUCI (Ours) & \best{0.9225 $\pm$ .0051} & 0.9221 $\pm$ .0051 & \best{0.9905 $\pm$ .0069} & \best{0.9937 $\pm$ .0071} & \secbest{0.8503 $\pm$ .0062} & \best{0.8851 $\pm$ .0049} \\
    \bottomrule
    \end{tabular}
\end{table*}

\begin{table*}[!t]
    \centering
    \caption{
        OPAUC ($\mathrm{FPR}\leq \beta$) on Tiny-ImageNet-200-LT with different $\beta$.
    }
    \label{tab:op_tinyimagenet}
    \begin{tabular}{l|cc|cc|cc}
    \toprule
        \multirow{2}*{Method} & \multicolumn{2}{c|}{Tiny-ImageNet-200-LT-1} & \multicolumn{2}{c|}{Tiny-ImageNet-200-LT-2} & \multicolumn{2}{c}{Tiny-ImageNet-200-LT-3} \\
        \cmidrule{2-7}~ & 0.3 & 0.5 & 0.3 & 0.5 & 0.3 & 0.5 \\
        \midrule
        CE & 0.8023 $\pm$ .0081 & 0.8681 $\pm$ .0056 & 0.8917 $\pm$ .0077 & 0.9296 $\pm$ .0049 & 0.8878 $\pm$ .0004 & 0.9226 $\pm$ .0010 \\
        AUC-M \cite{ying2016stochastic} & 0.8102 $\pm$ .0023 & 0.8760 $\pm$ .0028 & 0.9011 $\pm$ .0078 & 0.9390 $\pm$ .0054 & 0.9043 $\pm$ .0067 & 0.9391 $\pm$ .0053 \\
        MB \cite{kar2014online} & 0.8193 $\pm$ .0018 & \secbest{0.8851 $\pm$ .0007} & 0.9072 $\pm$ .0049 & 0.9451 $\pm$ .0025 & 0.9091 $\pm$ .0063 & 0.9439 $\pm$ .0049 \\
        SOPA \cite{zhu2022auc} & 0.8157 $\pm$ .0035 & 0.8815 $\pm$ .0029 & 0.9037 $\pm$ .0021 & 0.9416 $\pm$ .0011 & 0.9066 $\pm$ .0059 & 0.9414 $\pm$ .0043 \\
        SOPA-S \cite{zhu2022auc} & 0.8180 $\pm$ .0048 & 0.8838 $\pm$ .0022 & 0.9087 $\pm$ .0079 & 0.9466 $\pm$ .0053 & 0.9095 $\pm$ .0099 & 0.9443 $\pm$ .0085 \\
        AGD-SBCD \cite{yao2022large} & 0.8135 $\pm$ .0095 & 0.8793 $\pm$ .0067 & 0.9081 $\pm$ .0043 & 0.9460 $\pm$ .0017 & 0.9057 $\pm$ .0012 & 0.9405 $\pm$ .0004 \\
        AUC-poly \cite{yang2021all} & 0.8185 $\pm$ .0086 & 0.8843 $\pm$ .0058 & 0.9084 $\pm$ .0065 & 0.9463 $\pm$ .0041 & 0.9100 $\pm$ .0030 & 0.9448 $\pm$ .0018 \\
        AUC-exp \cite{yang2021all} & 0.8127 $\pm$ .0073 & 0.8785 $\pm$ .0046 & 0.9026 $\pm$ .0050 & 0.9405 $\pm$ .0026 & 0.9049 $\pm$ .0016 & 0.9397 $\pm$ .0002 \\
        \midrule
        PAUCI \cite{shao2022asymptotically} (Ours) & \secbest{0.8267 $\pm$ .0040} & 0.8825 $\pm$ .0021 & \secbest{0.9214 $\pm$ .0036} & \secbest{0.9492 $\pm$ .0010} & \secbest{0.9217 $\pm$ .0051} & \secbest{0.9465 $\pm$ .0037} \\
        UPAUCI (Ours) & \best{0.8559 $\pm$ .0013} & \best{0.9064 $\pm$ .0028} & \best{0.9515 $\pm$ .0027} & \best{0.9612 $\pm$ .0075} & \best{0.9558 $\pm$ .0053} & \best{0.9616 $\pm$ .0038} \\
    \bottomrule
    \end{tabular}
\end{table*}

\section{Experiments}\label{sec:exp}

\begin{table*}[t]
    \centering
    \caption{
        TPAUC ($\mathrm{TPR} \geq \alpha$, $\mathrm{FPR}\leq \beta$) on CIFAR-10-LT with different $(\alpha, \beta)$.
    }
    \label{tab:tp_cifar10}
    \begin{tabular}{l|cc|cc|cc}
    \toprule
        \multirow{2}*{Method} & \multicolumn{2}{c|}{CIFAR-10-LT-1} & \multicolumn{2}{c|}{CIFAR-10-LT-2} & \multicolumn{2}{c}{CIFAR-10-LT-3} \\
        \cmidrule{2-7}~ & (0.3, 0.3) & (0.5, 0.5) & (0.3, 0.3) & (0.5, 0.5) & (0.3, 0.3) & (0.5, 0.5) \\
        \midrule
        CE & 0.2855 $\pm$ .0034 & 0.6420 $\pm$ .0038 & 0.8811 $\pm$ .0073 & 0.9353 $\pm$ .0067 & 0.3636 $\pm$ .0008 & 0.6798 $\pm$ .0014 \\
        AUC-M \cite{ying2016stochastic} & 0.2955 $\pm$ .0058 & 0.6520 $\pm$ .0064 & 0.8839 $\pm$ .0084 & 0.9381 $\pm$ .0078 & 0.3659 $\pm$ .0077 & 0.6821 $\pm$ .0083 \\
        MB \cite{kar2014online} & 0.2872 $\pm$ .0008 & 0.6437 $\pm$ .0012 & 0.8950 $\pm$ .0095 & 0.9492 $\pm$ .0089 & 0.3751 $\pm$ .0013 & 0.6913 $\pm$ .0017 \\
        SOPA \cite{zhu2022auc} & 0.3531 $\pm$ .0049 & 0.7096 $\pm$ .0052 & \best{0.9051 $\pm$ .0028} & 0.9593 $\pm$ .0024 & 0.4058 $\pm$ .0055 & 0.7220 $\pm$ .0071 \\
        SOPA-S \cite{zhu2022auc} & 0.3038 $\pm$ .0075 & 0.6603 $\pm$ .0081 & 0.8914 $\pm$ .0041 & 0.9456 $\pm$ .0035 & 0.3755 $\pm$ .0013 & 0.6917 $\pm$ .0027 \\
        AUC-poly \cite{yang2021all} & 0.3239 $\pm$ .0043 & 0.6804 $\pm$ .0049 & 0.9001 $\pm$ .0057 & 0.9543 $\pm$ .0051 & 0.3812 $\pm$ .0070 & 0.6974 $\pm$ .0086 \\
        AUC-exp \cite{yang2021all} & 0.3104 $\pm$ .0091 & 0.6669 $\pm$ .0097 & 0.8951 $\pm$ .0017 & 0.9493 $\pm$ .0011 & 0.3768 $\pm$ .0030 & 0.6930 $\pm$ .0044 \\
        \midrule
        PAUCI \cite{shao2022asymptotically} (Ours) & \secbest{0.3627 $\pm$ .0059} & \secbest{0.7192 $\pm$ .0065} & \secbest{0.9021 $\pm$ .0053} & \secbest{0.9663 $\pm$ .0047} & \secbest{0.4143 $\pm$ .0083} & \secbest{0.7305 $\pm$ .0099} \\
        UPAUCI (Ours) & \best{0.3886 $\pm$ .0035} & \best{0.7486 $\pm$ .0047} & 0.9015 $\pm$ .0053 & \best{0.9772 $\pm$ .0035} & \best{0.4386 $\pm$ .0023} & \best{0.7632 $\pm$ .0025} \\
    \bottomrule
    \end{tabular}
\end{table*}

\begin{table*}[t]
    \centering
    \caption{
        TPAUC ($\mathrm{TPR} \geq \alpha$, $\mathrm{FPR}\leq \beta$) on CIFAR-100-LT with different $(\alpha, \beta)$.
    }
    \label{tab:tp_cifar100}
    \begin{tabular}{l|cc|cc|cc}
    \toprule
        \multirow{2}*{Method} & \multicolumn{2}{c|}{CIFAR-100-LT-1} & \multicolumn{2}{c|}{CIFAR-100-LT-2} & \multicolumn{2}{c}{CIFAR-100-LT-3} \\
        \cmidrule{2-7}~ & (0.3, 0.3) & (0.5, 0.5) & (0.3, 0.3) & (0.5, 0.5) & (0.3, 0.3) & (0.5, 0.5) \\
        \midrule
        CE & 0.7074 $\pm$ .0096 & 0.8467 $\pm$ .0080 & 0.9350 $\pm$ .0061 & 0.9603 $\pm$ .0031 & 0.4841 $\pm$ .0127 & 0.7311 $\pm$ .0099 \\
        AUC-M \cite{ying2016stochastic} & 0.7112 $\pm$ .0073 & 0.8505 $\pm$ .0059 & 0.9569 $\pm$ .0048 & 0.9822 $\pm$ .0020 & 0.4854 $\pm$ .0097 & 0.7324 $\pm$ .0071 \\
        MB \cite{kar2014online} & 0.7272 $\pm$ .0038 & 0.8665 $\pm$ .0026 & 0.9424 $\pm$ .0123 & 0.9677 $\pm$ .0095 & \secbest{0.5113 $\pm$ .0067} & \secbest{0.7583 $\pm$ .0041} \\
        SOPA \cite{zhu2022auc} & 0.7321 $\pm$ .0046 & 0.8714 $\pm$ .0034 & \secbest{0.9602 $\pm$ .0047} & 0.9855 $\pm$ .0019 & 0.5015 $\pm$ .0073 & 0.7485 $\pm$ .0046 \\
        SOPA-S \cite{zhu2022auc} & 0.7224 $\pm$ .0060 & 0.8617 $\pm$ .0048 & 0.9559 $\pm$ .0102 & 0.9812 $\pm$ .0074 & 0.4949 $\pm$ .0040 & 0.7419 $\pm$ .0013 \\
        AUC-poly \cite{yang2021all} & 0.7225 $\pm$ .0087 & 0.8618 $\pm$ .0075 & 0.9582 $\pm$ .0096 & 0.9835 $\pm$ .0068 & 0.4961 $\pm$ .0115 & 0.7431 $\pm$ .0089 \\
        AUC-exp \cite{yang2021all} & 0.7220 $\pm$ .0075 & 0.8613 $\pm$ .0063 & 0.9574 $\pm$ .0085 & 0.9827 $\pm$ .0057 & 0.4977 $\pm$ .0062 & 0.7447 $\pm$ .0036 \\
        \midrule
        PAUCI \cite{shao2022asymptotically} (Ours) & \secbest{0.7411 $\pm$ .0053} & \secbest{0.8814 $\pm$ .0037} & 0.9601 $\pm$ .0051 & \secbest{0.9874 $\pm$ .0023} & 0.5027 $\pm$ .0034 & 0.7497 $\pm$ .0018 \\
        UPAUCI (Ours) & \best{0.7444 $\pm$ .0035} & \best{0.8878 $\pm$ .0056} & \best{0.9667 $\pm$ .0045} & \best{0.9882 $\pm$ .0039} & \best{0.5465 $\pm$ .0035} & \best{0.7801 $\pm$ .0046} \\
    \bottomrule
    \end{tabular}
\end{table*}

\begin{table*}[t]
    \centering
    \caption{
        TPAUC ($\mathrm{TPR} \geq \alpha$, $\mathrm{FPR}\leq \beta$) on Tiny-ImageNet-LT with different $(\alpha, \beta)$.
    }
    \label{tab:tp_tinyimagenet}
    \begin{tabular}{l|cc|cc|cc}
    \toprule
        \multirow{2}*{Method} & \multicolumn{2}{c|}{Tiny-ImageNet-200-LT-1} & \multicolumn{2}{c|}{Tiny-ImageNet-200-LT-2} & \multicolumn{2}{c}{Tiny-ImageNet-200-LT-3} \\
        \cmidrule{2-7}~ & (0.3, 0.3) & (0.5, 0.5) & (0.3, 0.3) & (0.5, 0.5) & (0.3, 0.3) & (0.5, 0.5) \\
        \midrule
        CE & 0.4211 $\pm$ .0035 & 0.7223 $\pm$ .0051 & 0.6662 $\pm$ .0073 & 0.8517 $\pm$ .0064 & 0.6541 $\pm$ .0088 & 0.8478 $\pm$ .0072 \\
        AUC-M \cite{ying2016stochastic} & 0.4349 $\pm$ .0080 & 0.7361 $\pm$ .0084 & 0.6662 $\pm$ .0056 & 0.8517 $\pm$ .0048 & 0.6661 $\pm$ .0111 & 0.8598 $\pm$ .0097 \\
        MB \cite{kar2014online} & 0.4336 $\pm$ .0062 & 0.7348 $\pm$ .0069 & 0.6796 $\pm$ .0040 & 0.8651 $\pm$ .0032 & 0.6687 $\pm$ .0069 & 0.8624 $\pm$ .0055 \\
        SOPA \cite{zhu2022auc} & 0.4405 $\pm$ .0039 & 0.7417 $\pm$ .0053 & 0.6826 $\pm$ .0060 & 0.8681 $\pm$ .0045 & 0.6716 $\pm$ .0079 & 0.8650 $\pm$ .0062 \\
        SOPA-S \cite{zhu2022auc} & 0.4342 $\pm$ .0036 & 0.7354 $\pm$ .0056 & 0.6811 $\pm$ .0107 & 0.8666 $\pm$ .0091 & 0.6694 $\pm$ .0044 & 0.8628 $\pm$ .0038 \\
        AUC-poly \cite{yang2021all} & 0.4337 $\pm$ .0057 & 0.7349 $\pm$ .0077 & 0.6821 $\pm$ .0037 & 0.8676 $\pm$ .0021 & 0.6693 $\pm$ .0068 & 0.8627 $\pm$ .0054 \\
        AUC-exp \cite{yang2021all} & 0.4316 $\pm$ .0078 & 0.7328 $\pm$ .0098 & 0.6817 $\pm$ .0029 & 0.8672 $\pm$ .0015 & 0.6692 $\pm$ .0057 & 0.8626 $\pm$ .0043 \\
        \midrule
        PAUCI \cite{shao2022asymptotically} (Ours) & \secbest{0.4606 $\pm$ .0029} & \secbest{0.7618 $\pm$ .0033} & \secbest{0.7020 $\pm$ .0093} & \secbest{0.8875 $\pm$ .0080} & \secbest{0.6923 $\pm$ .0034} & \secbest{0.8860 $\pm$ .0029} \\
        UPAUCI (Ours) & \best{0.4927 $\pm$ .0035} & \best{0.7941 $\pm$ .0036} & \best{0.7480 $\pm$ .0034} & \best{0.8940 $\pm$ .0016} & \best{0.7165 $\pm$ .0024} & \best{0.9047 $\pm$ .0024} \\
    \bottomrule
    \end{tabular}
\end{table*}

\subsection{Datasets}
We adopt three imbalanced binary classification datasets: CIFAR-10-LT \cite{elson2007asirra}, CIFAR-100-LT \cite{krizhevsky2009learning} and Tiny-ImageNet-200-LT following the instructions in \cite{yang2021all}, where the binary datasets are constructed by selecting one super category as positive class and the other categories as negative class. Besides, we also construct a larger binary dataset based on iNaturalist2021\footnote{\url{https://github.com/visipedia/inat_comp/tree/master/2021}} to make a more realistic evaluation. Their statistics are detailed in Tab.\ref{tab:dataset}.

\begin{itemize}
    \item \textbf{Binary CIFAR-10-LT.} The CIFAR-10 dataset contains 60,000 images, each of 32x32 shapes, grouped into 10 classes of 6,000 images. The training and test sets contain 50,000 and 10,000 images, respectively. We construct the binary datasets by selecting one super category as positive class and the other categories as negative class. We generate three binary subsets composed of positive categories, including 1) birds, 2) automobiles, and 3) cats.
    \item \textbf{Binary CIFAR-100-LT.} The original CIFAR-100 dataset has 100 classes, with each containing 600 images. These 100 classes could be divided into 20 superclasses. By selecting a superclass as the positive class, we create CIFAR-100-LT following the same process as CIFAR-10-LT. The positive superclasses are 1) fruits and vegetables, 2) insects, and 3) large omnivores and herbivores, respectively.
    \item \textbf{Binary Tiny-ImageNet-200-LT.} There are 100,000 256x256 colored pictures in the Tiny-ImageNet-200 dataset, divided into 200 categories, with 500 pictures per category. We also choose 3 positive superclasses to create binary subsets: 1) dogs, 2) birds, and 3) vehicles.
    \item \textbf{Binary iNaturalist2021.} The large-scale real-world dataset iNaturalist2021 consists of images of 10,000 species of plants and animals and is naturally long-tailed. We create a binary subset by taking the Fungi super category as the positive class, and the Insects as the negative.
\end{itemize}

All data are divided into training, validation and test sets with a proportion of 0.7 : 0.15 : 0.15. For the first three datasets, sample sizes decay exponentially in each class, and the ratio of sample sizes of the least frequent to the most frequent class is set to 0.01.

\subsection{Competitors}
We compare our algorithm with 8 competitive baselines: (1) the approximation algorithms of PAUC, denoted as \textbf{AUC-poly} \cite{yang2021all} (poly calibrated weighting function) and \textbf{AUC-exp} \cite{yang2021all} (exp weighting function); (2) the DRO formulations of PAUC, \textbf{SOPA} \cite{zhu2022auc} (exact estimator) and \textbf{SOPA-S} \cite{zhu2022auc} (soft estimator); (3) the large-scale OPAUC optimization method \textbf{AGD-SBCD} \cite{yao2022large}; (4) the na\"ive mini-batch version of empirical partial AUC optimization, denoted as \textbf{MB} \cite{kar2014online}; (5) the AUC minimax \cite{ying2016stochastic} optimization, denoted as \textbf{AUC-M}; (6) the binary \textbf{CE} loss. The proposed methods with and without asymptotically vanishing gap are denoted as \textbf{PAUCI} (NeurIPS version) and \textbf{UPAUCI} (journal version), respectively.

\begin{figure*}[t]
	\centering
    \subfloat[PAUCI Subset-1]{\includegraphics[width=0.16\linewidth]{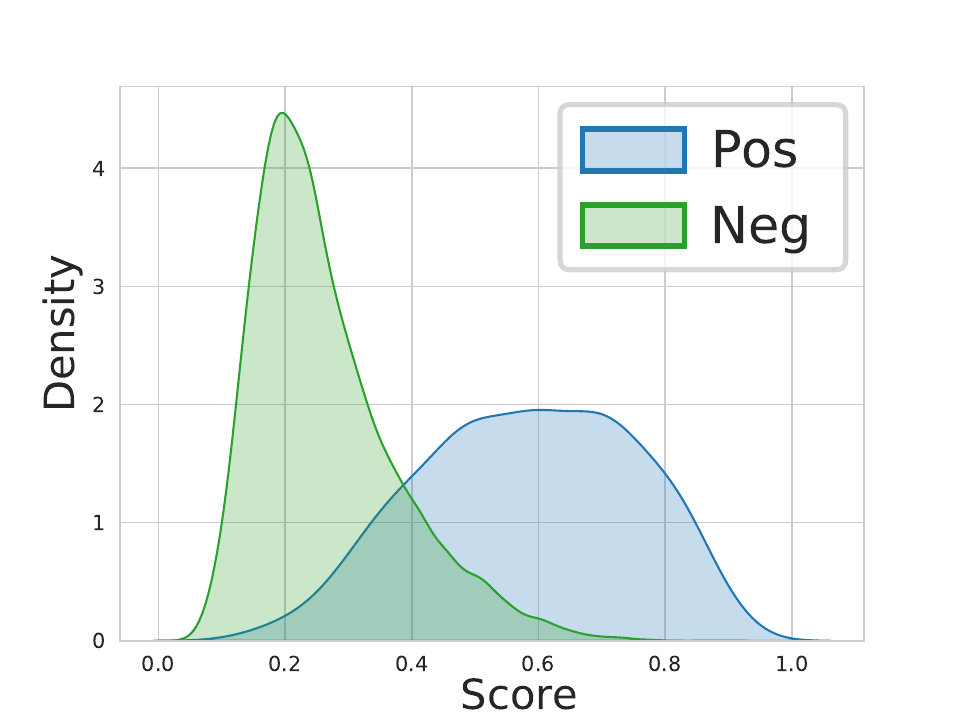}%
    }
    \subfloat[PAUCI Subset-2]{\includegraphics[width=0.16\linewidth]{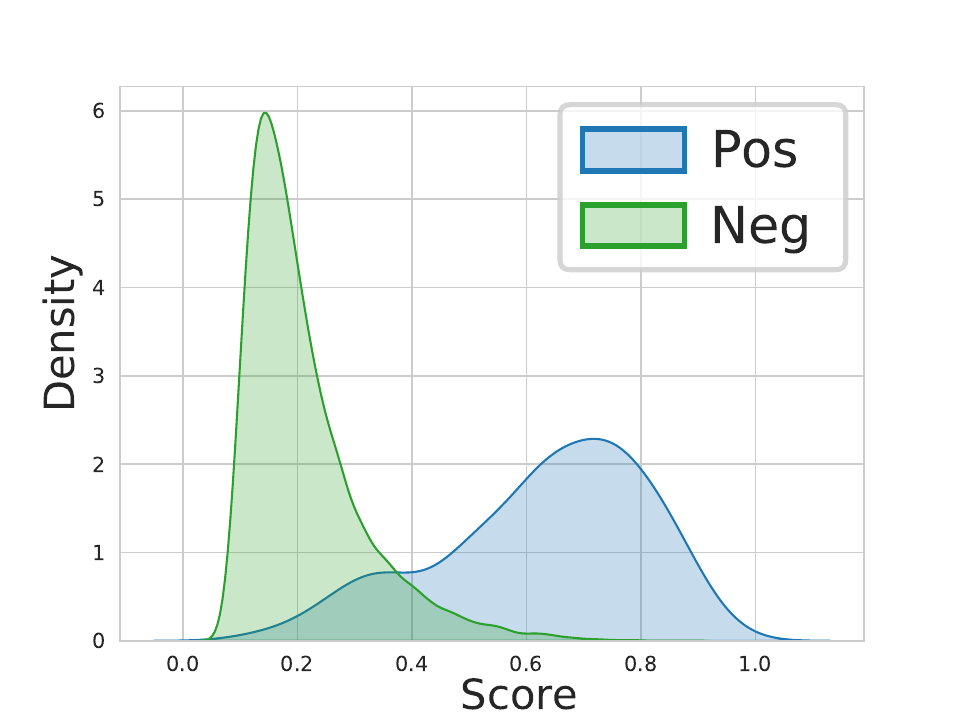}%
    }
    \subfloat[PAUCI Subset-3]{\includegraphics[width=0.16\linewidth]{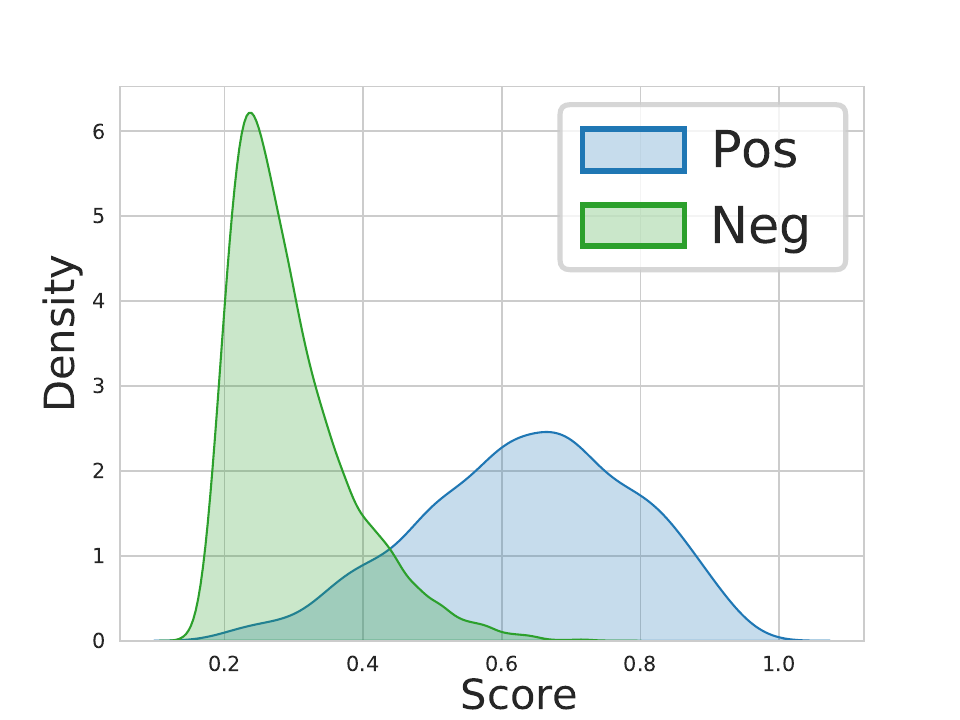}%
    }
	\subfloat[UPAUCI Subset-1]{\includegraphics[width=0.16\linewidth]{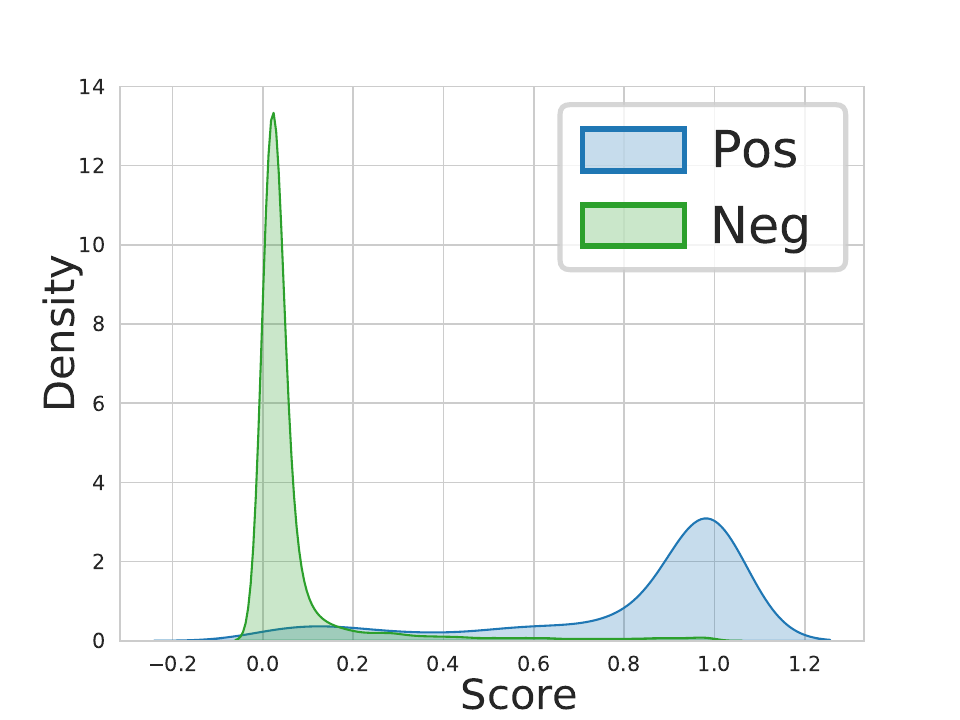}%
    }
    \subfloat[UPAUCI Subset-2]{\includegraphics[width=0.16\linewidth]{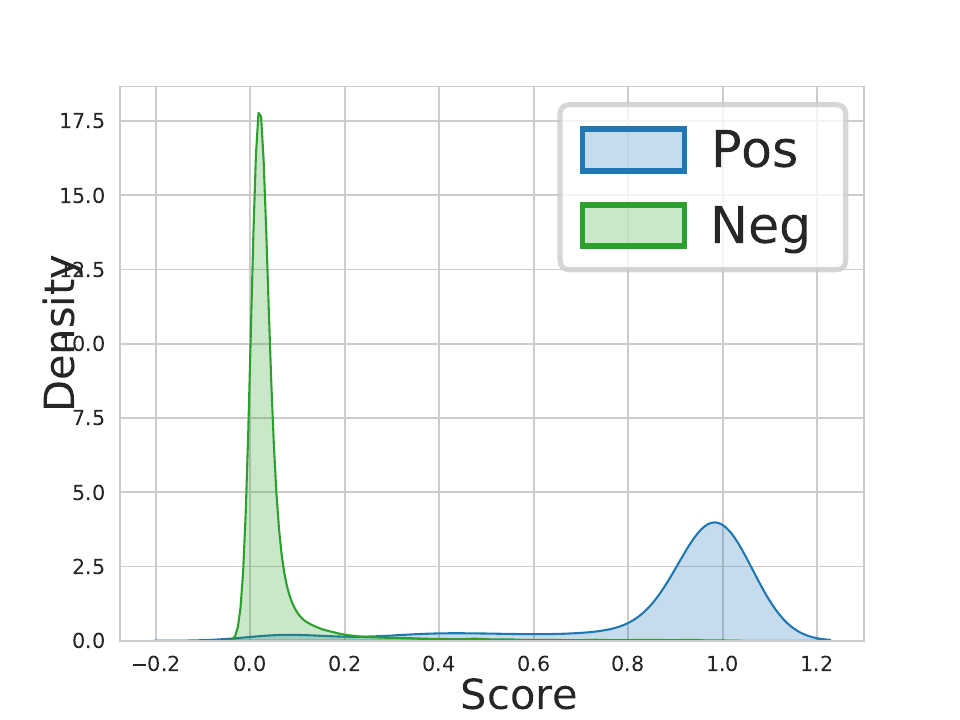}%
    }
    \subfloat[UPAUCI Subset-3]{\includegraphics[width=0.16\linewidth]{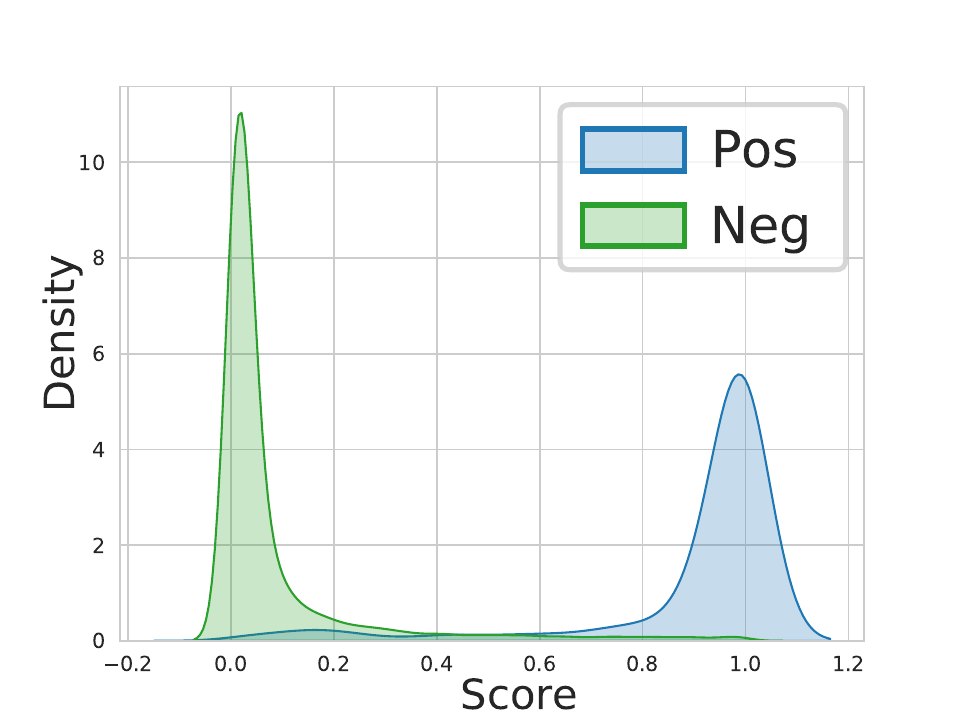}%
    }
	\caption{Score distribution over positive and negative samples by OPAUC optimization ($\mathrm{FPR}\leq 0.3$) on Tiny-ImageNet-200-LT.
    }
	\label{fig:score_op_tiny}
\end{figure*}

\begin{figure*}[t]
	\centering
    \subfloat[PAUCI Subset-1]{\includegraphics[width=0.16\linewidth]{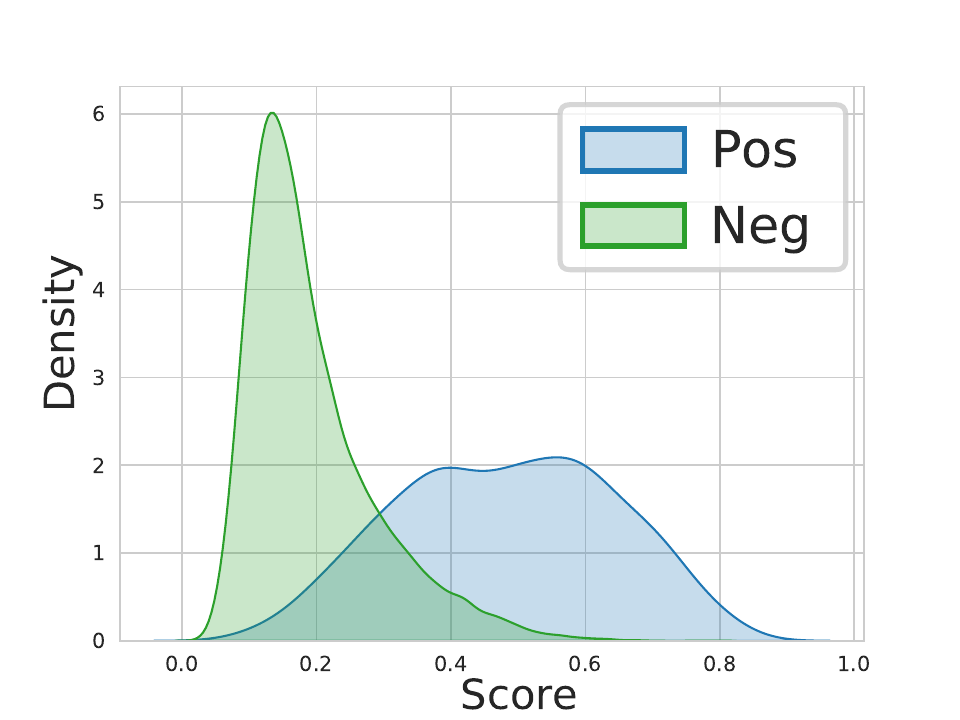}%
    }
    \subfloat[PAUCI Subset-2]{\includegraphics[width=0.16\linewidth]{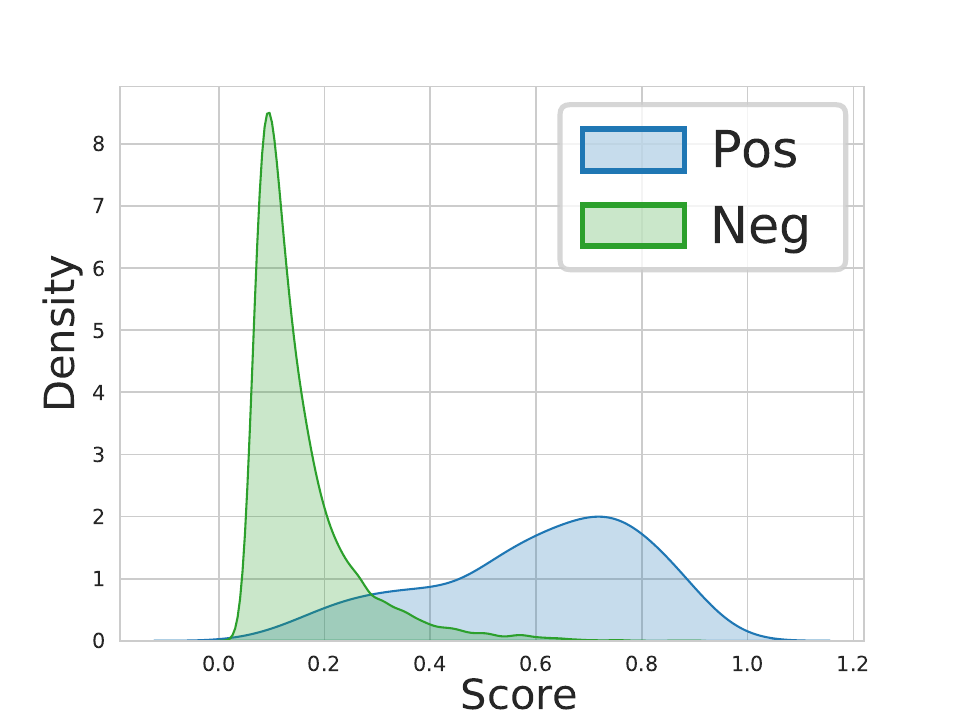}%
    }
    \subfloat[PAUCI Subset-3]{\includegraphics[width=0.16\linewidth]{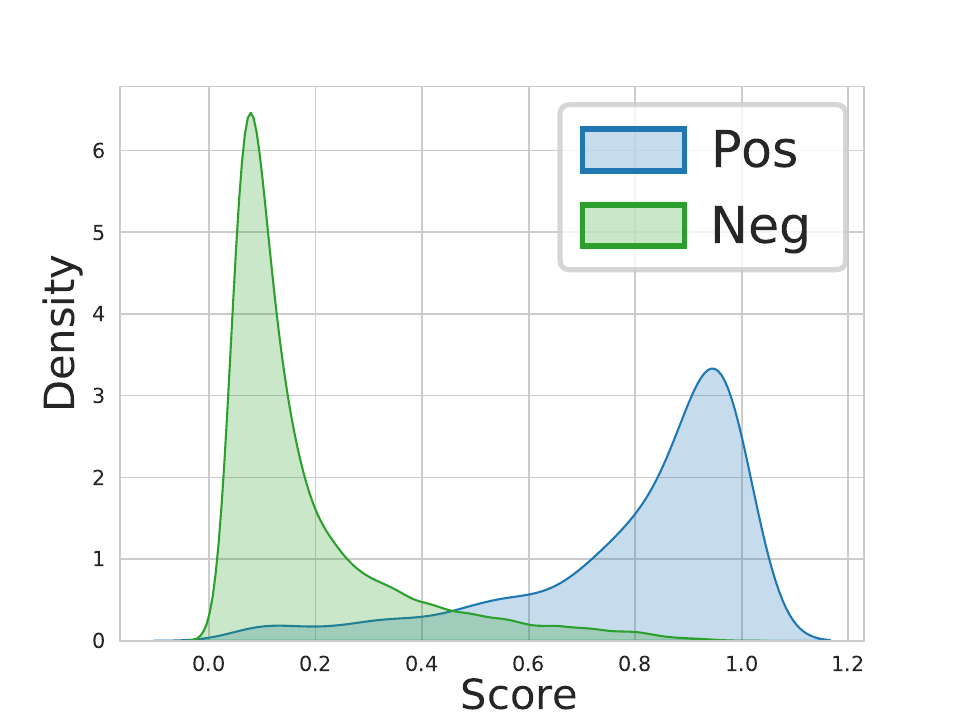}%
    }
	\subfloat[UPAUCI Subset-1]{\includegraphics[width=0.16\linewidth]{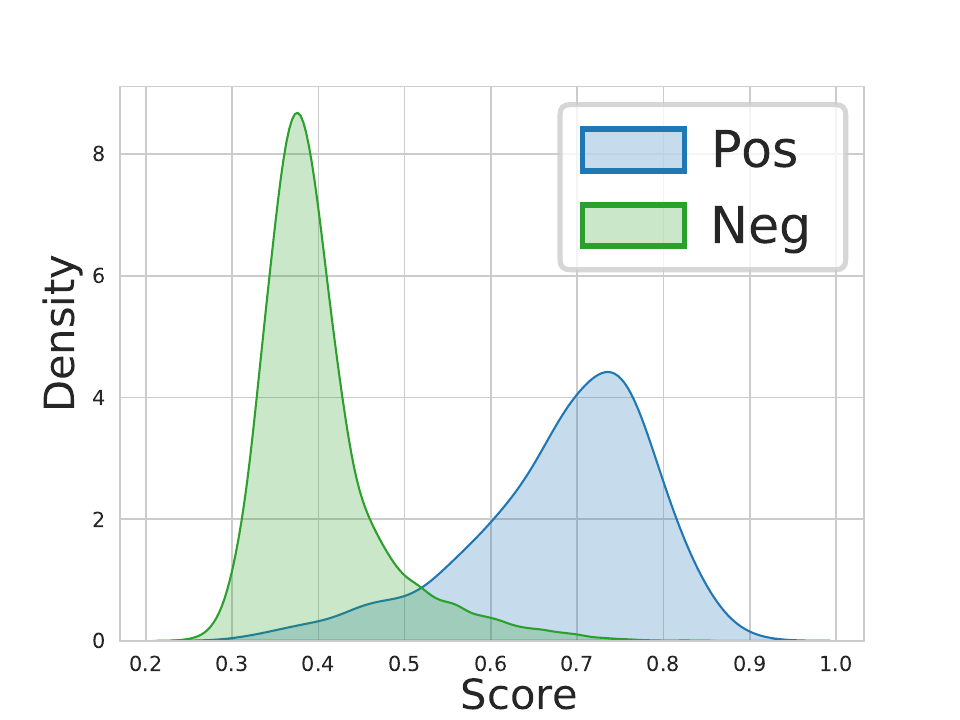}%
    }
    \subfloat[UPAUCI Subset-2]{\includegraphics[width=0.16\linewidth]{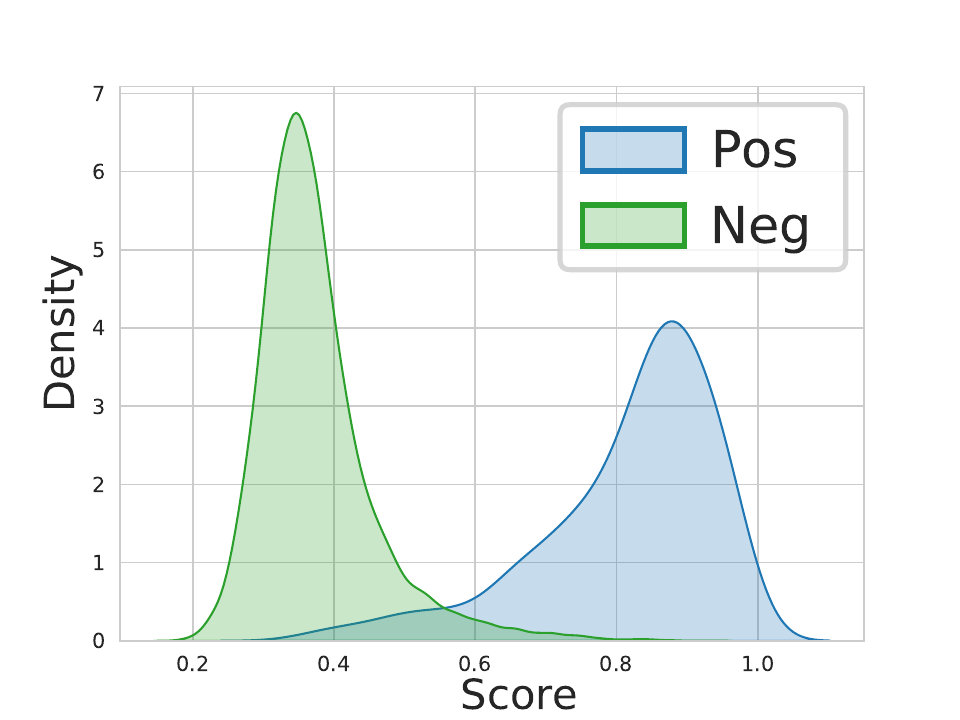}%
    }
    \subfloat[UPAUCI Subset-3]{\includegraphics[width=0.16\linewidth]{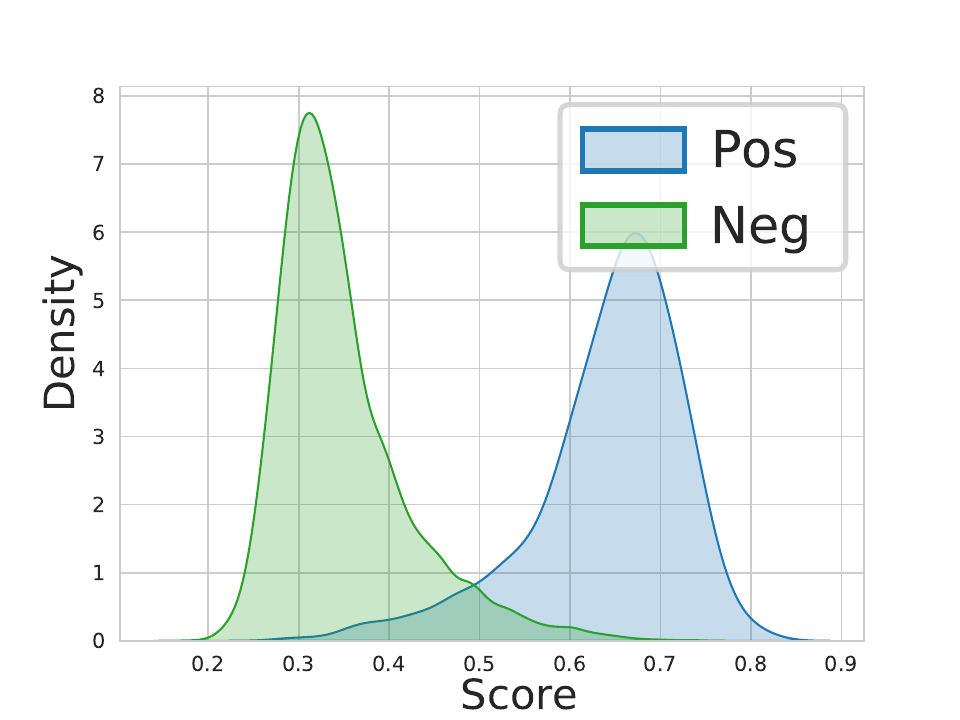}%
    }
	\caption{Score distribution over positive and negative samples by TPAUC optimization ($\mathrm{TPR} \geq 0.3$, $\mathrm{FPR}\leq 0.3$) on Tiny-ImageNet-200-LT.}
	\label{fig:score_tp_tiny}
\end{figure*}

\subsection{Implementation Details}
All experiments are conducted on an Ubuntu 16.04.1 server equipped with an Intel(R) Xeon(R) Silver 4110 CPU and four RTX 3090 GPUs, and all codes are developed in \texttt{Python 3.8} and \texttt{pytorch 1.8.2} environment. We use the ResNet-18 as a backbone. With a \texttt{Sigmoid} function, the output is scaled into $[0,1]$. The batch size is set as $1024$. Following the previous studies \cite{yang2021all, yuan2021large, guo2020communication}, we warm up all algorithms for $10$ epochs with CE loss to avoid overfitting. All competitors are trained using \texttt{SGD} as the basic optimizer. The evaluation metrics in experiments are $\widehat{\mathrm{AUC}}_{\beta}$ and $\widehat{\mathrm{AUC}}_{\alpha, \beta}$ with different $\alpha$ and $\beta$. The experiments are repeated ten times randomly and we report the mean and std values. Please see Appx.\ref{section:experiment_details} for more details of parameter tuning.

\begin{table}[t]
    \centering
    \caption{
        OPAUC on iNaturalist2021 with different $\beta$.
    }
    \label{tab:op_inat}
    \begin{tabular}{l|cc}
        \toprule
        \multirow{2}*{Method} & \multicolumn{2}{c}{iNaturalist2021} \\
        \cmidrule{2-3}~ & 0.3 & 0.5 \\
        \midrule
        CE & 0.9204 $\pm$ .0025 & 0.9358 $\pm$ .0098 \\
        AUC-M \cite{ying2016stochastic} & 0.9283 $\pm$ .0029 & 0.9437 $\pm$ .0071 \\
        MB \cite{kar2014online} & \secbest{0.9374 $\pm$ .0041} & \secbest{0.9527 $\pm$ .0055} \\
        SOPA \cite{zhu2022auc} & 0.9338 $\pm$ .0021 & 0.9492 $\pm$ .0091 \\
        SOPA-S \cite{zhu2022auc} & 0.9361 $\pm$ .0046 & 0.9515 $\pm$ .0073 \\
        AGD-SBCD \cite{yao2022large} & 0.9316 $\pm$ .0018 & 0.9470 $\pm$ .0058 \\
        AUC-poly \cite{yang2021all} & 0.9366 $\pm$ .0052 & 0.9519 $\pm$ .0069 \\
        AUC-exp \cite{yang2021all} & 0.9308 $\pm$ .0039 & 0.9462 $\pm$ .0082 \\
        \midrule
        PAUCI \cite{shao2022asymptotically} (Ours) & 0.9348 $\pm$ .0063 & 0.9501 $\pm$ .0044 \\
        UPAUCI (Ours) & \best{0.9587 $\pm$ .0048} & \best{0.9741 $\pm$ .0061} \\
        \bottomrule
    \end{tabular}
\end{table}

\subsection{Overall Results}

In Tab.\ref{tab:op_cifar10}--\ref{tab:tp_tinyimagenet}, we record the performance on test sets of all the methods on subsets of CIFAR-10-LT, CIFAR-100-LT, and Tiny-Imagent-200-LT. Besides, the results on iNaturalist2021 are recorded in Tab.\ref{tab:op_inat} and \ref{tab:tp_inat}. Each method is tuned independently for $\op$ and $\tp$ metrics. From the results, we make the following remarks:
\begin{enumerate}
    \item[(1)] Our proposed methods outperform all the baselines in most cases for $\op$ and $\tp$. Even for the failure cases (\eg, $\op$ with FPR$\le 0.5$ on CIFAR-100-LT-1, $\tp$ with TPR$\ge 0.3$ and FPR$\le0.3$ on CIFAR-10-LT-2), our methods could attain fairly competitive results compared with the best competitors.
    \item[(2)] We can see that the normal AUC optimization method AUC-M has less reasonable performance under PAUC metric. This demonstrates the necessity of developing the PAUC optimization algorithm to focus on the partial area.
    \item[(3)] Approximation methods SOPA-S, AGD-SBCD, AUC-poly and AUC-exp have lower performance than the unbiased algorithm SOPA and our instance-wise algorithm PAUCI/UPAUCI in most cases. Therefore, optimization based on the unbiased approximation of PAUC is a better choice than biased ones.
    \item[(4)] The proposed UPAUCI achieves higher performance than PAUCI in all but one setting, implying that an unbiased formulation may be more important for optimization. Although the approximation gap in PAUCI vanishes asymptotically, such a bias will inevitably induce biased parameters in practice due to the limited value of $\kappa$. On relatively simple benchmarks such as CIFARs, UPAUCI exhibits more consistent advantages over PAUCI under the $\tp$ metric. This suggests that the approximation bias will be amplified in the presence of both TPR and FPR constraints. Moreover, this issue is particularly evident on Tiny-ImageNet-LT and iNaturalist2021, which exhibit more severe class imbalance or fine-grained categories with small inter-class margins. In such extreme settings, the approximation bias of PAUCI is strongly amplified, whereas UPAUCI's unbiased reformulation remains faithful to the true partial AUC objective, leading to significantly larger gains.
\end{enumerate}
Above all, the experimental results show the effectiveness of our proposed methods.

\begin{table}[t]
    \centering
    \caption{
        TPAUC on iNaturalist2021 with different $(\alpha, \beta)$.
    }
    \label{tab:tp_inat}
    \begin{tabular}{l|cc}
        \toprule
        \multirow{2}*{Method} & \multicolumn{2}{c}{iNaturalist2021} \\
        \cmidrule{2-3}~ & (0.3, 0.3) & (0.5, 0.5) \\
        \midrule
        CE & 0.8075 $\pm$ .0054 & 0.9077 $\pm$ .0153 \\
        AUC-M \cite{ying2016stochastic} & 0.8154 $\pm$ .0058 & 0.9136 $\pm$ .0127 \\
        MB \cite{kar2014online} & \secbest{0.8245 $\pm$ .0070} & \secbest{0.9226 $\pm$ .0111} \\
        SOPA \cite{zhu2022auc} & 0.8209 $\pm$ .0037 & 0.9181 $\pm$ .0147 \\
        SOPA-S \cite{zhu2022auc} & 0.8232 $\pm$ .0075 & 0.9204 $\pm$ .0124 \\
        AUC-poly \cite{yang2021all} & 0.8237 $\pm$ .0081 & 0.9218 $\pm$ .0115 \\
        AUC-exp \cite{yang2021all} & 0.8179 $\pm$ .0068 & 0.9161 $\pm$ .0133 \\
        \midrule
        PAUCI \cite{shao2022asymptotically} (Ours) & 0.8219 $\pm$ .0092 & 0.9195 $\pm$ .0099 \\
        UPAUCI (Ours) & \best{0.8458 $\pm$ .0029} & \best{0.9431 $\pm$ .0056} \\
        \bottomrule
    \end{tabular}
\end{table}

\begin{figure*}[t]
	\centering
	\subfloat[CIFAR-10-LT-1]{\includegraphics[width=0.32\linewidth]{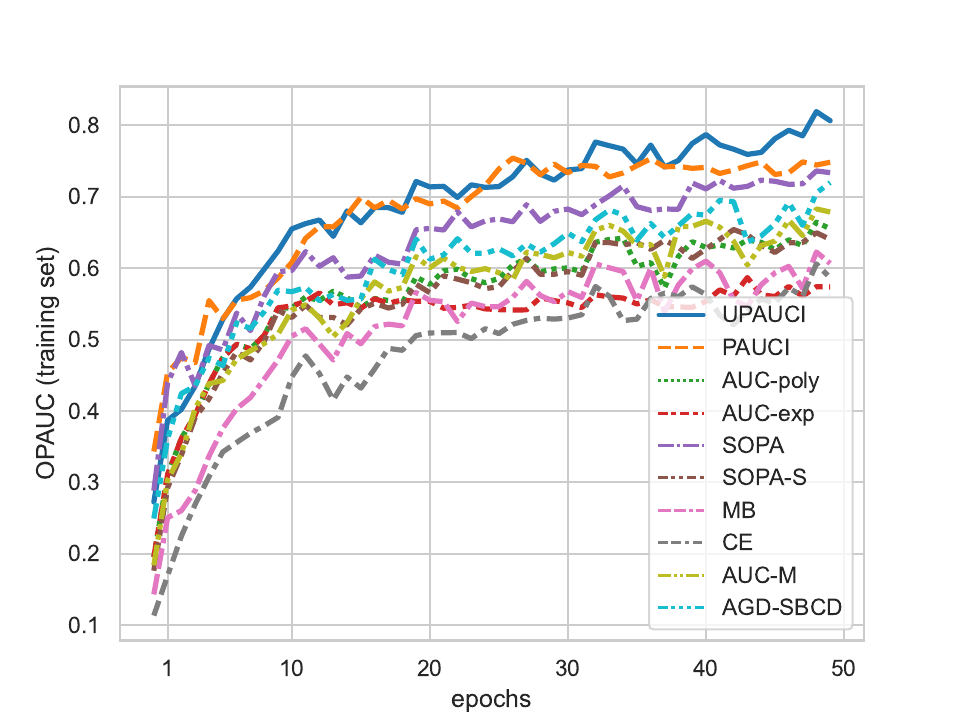}%
  }
  \subfloat[CIFAR-10-LT-2]{\includegraphics[width=0.32\linewidth]{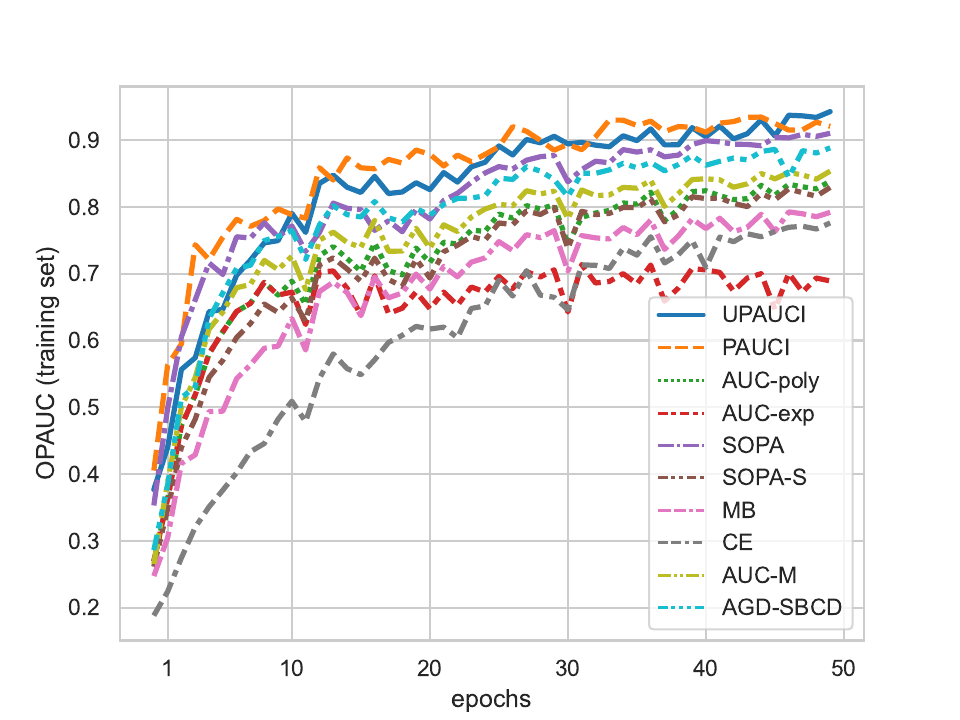}%
  }
  \subfloat[CIFAR-10-LT-3]{\includegraphics[width=0.32\linewidth]{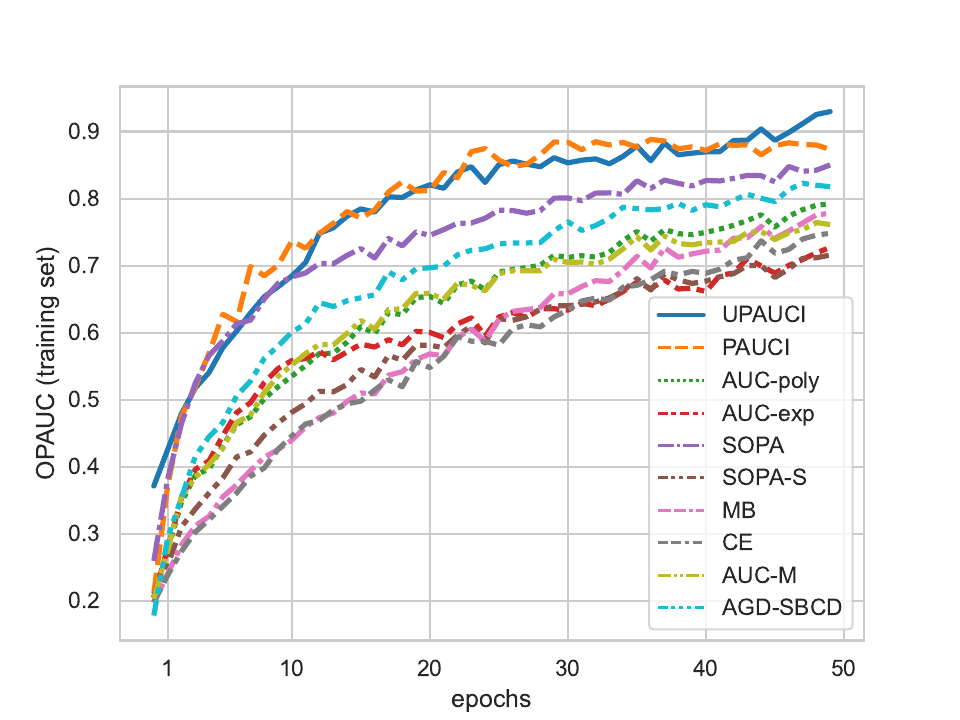}%
  }
	\caption{Convergence of OPAUC optimization($\mathrm{FPR}\leq 0.3$) on CIFAR-10-LT.}
	\label{fig:opaucconvergence}
\end{figure*}
\begin{figure*}[t]
	\centering
	\subfloat[CIFAR-10-LT-1]{\includegraphics[width=0.32\linewidth]{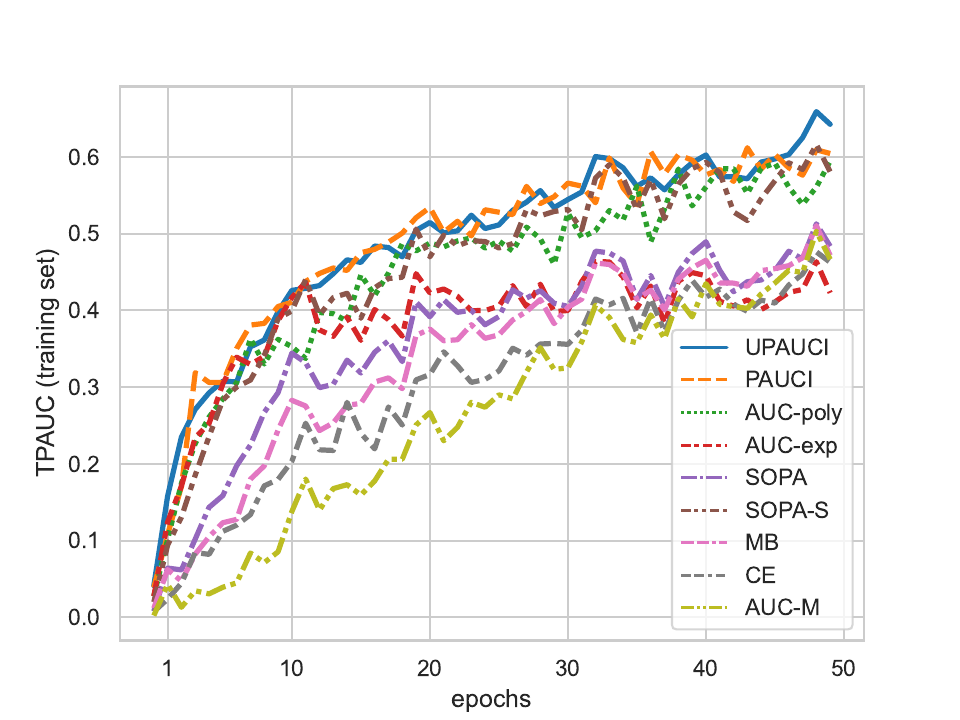}%
  }
  \subfloat[CIFAR-10-LT-2]{\includegraphics[width=0.32\linewidth]{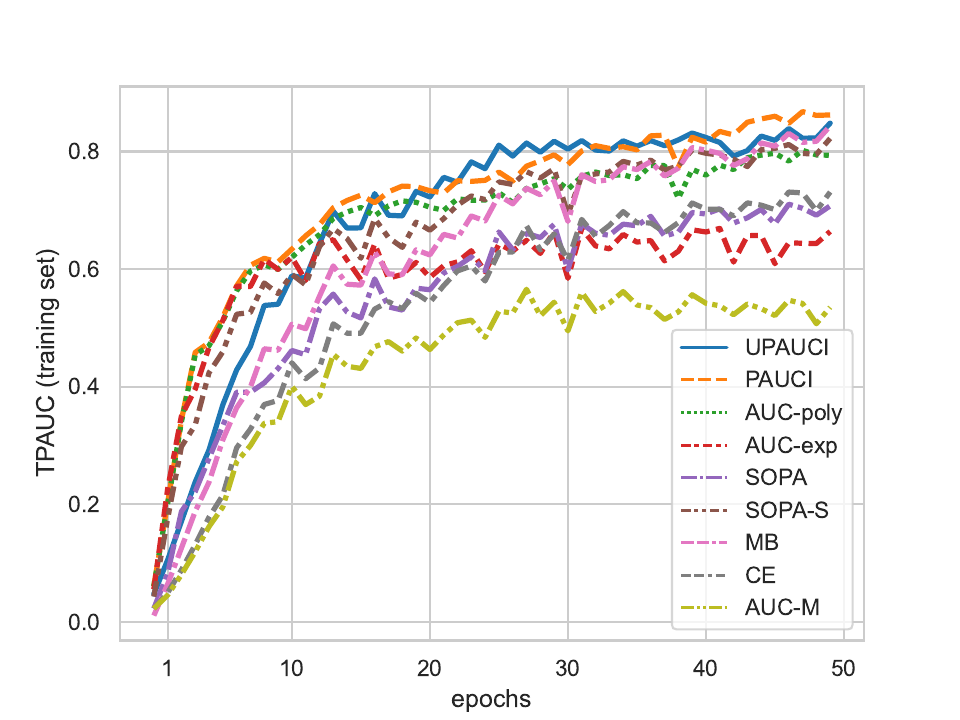}%
  }
  \subfloat[CIFAR-10-LT-3]{\includegraphics[width=0.32\linewidth]{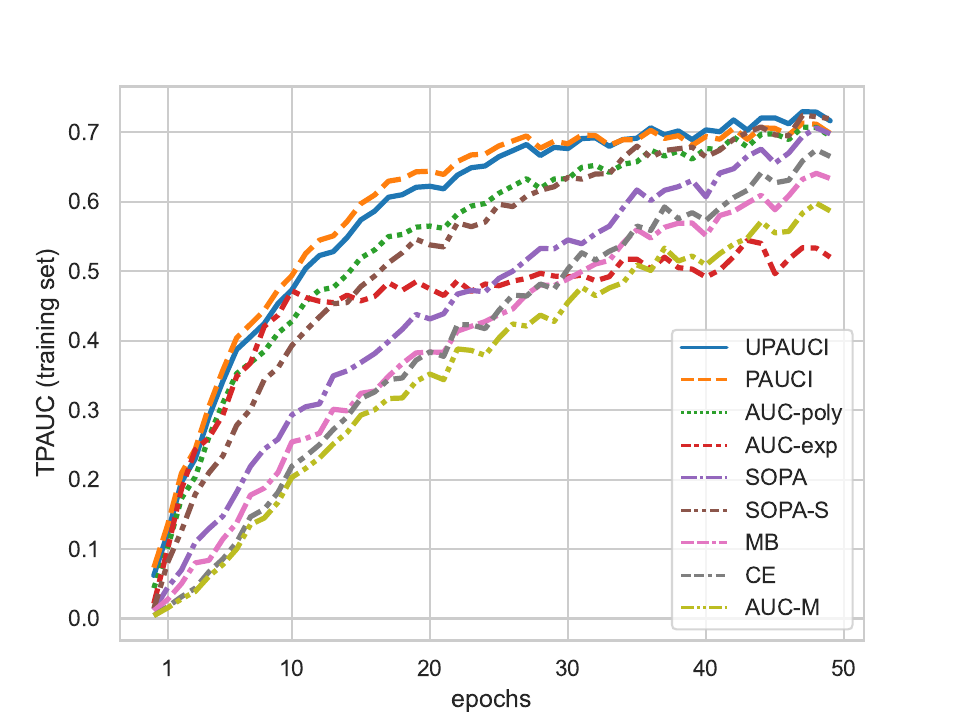}%
  }
	\caption{Convergence of TPAUC ($\mathrm{TPR} \geq 0.3$, $\mathrm{FPR}\leq 0.3$) optimization on CIFAR-10-LT.}
	\label{fig:tpaucconvergence}
\end{figure*}

\subsection{Score distribution of PAUCI and UPAUCI}
For a fine-grained comparison, we visualize the score distributions for positive and negative instances obtained by PAUCI and UPAUCI. Taking the results on Tiny-ImageNet-200-LT as an example. According to Tab.\ref{tab:op_tinyimagenet} and Tab.\ref{tab:tp_tinyimagenet}, the absolute performance gain of UPAUCI is obvious (up to $0.034$ for $\op$ and $0.046$ for $\tp$). Such a gain is also reflected in the learned score distributions in Fig.\ref{fig:score_op_tiny} and Fig.\ref{fig:score_tp_tiny}. By comparing the results of PAUCI and UPAUCI on three subsets, \ie, comparing (a)/(b)/(c) with (d)/(e)/(f), we can observe (1) both the positive and negative score distributions are much sharper in UPAUCI, and (2) UPAUCI is able to significantly reduce the overlap between positive and negative distributions. Subsequently, the partial area under high FPR and low TPR constraints is better optimized.

\begin{figure*}[!t]
	\centering
    \subfloat[$a$]{\includegraphics[width=0.33\linewidth]{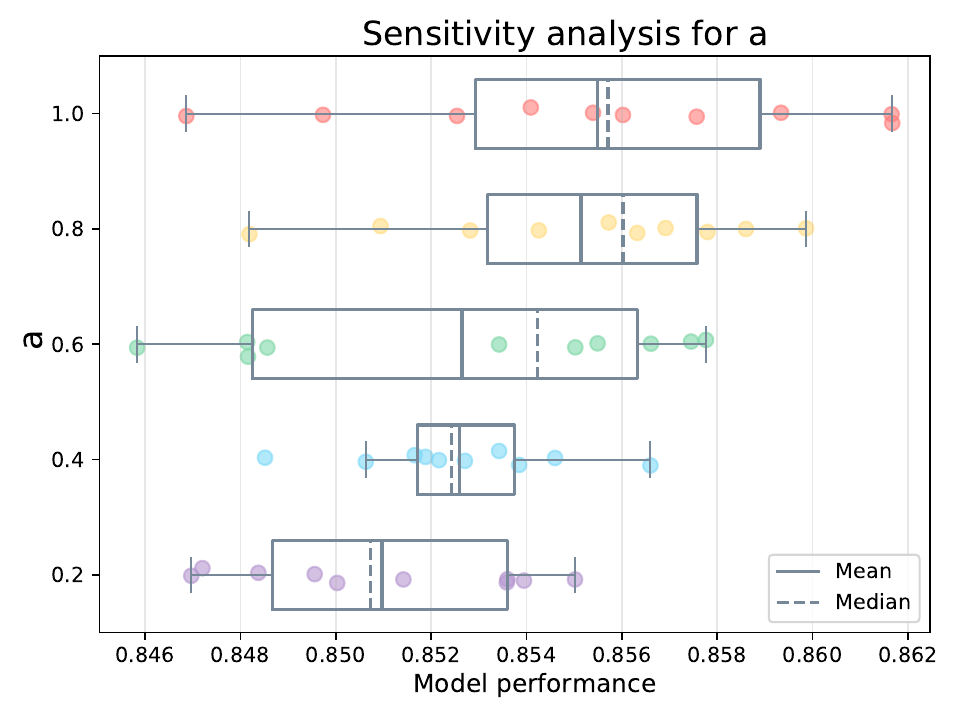}%
    }
    \subfloat[$b$]{\includegraphics[width=0.33\linewidth]{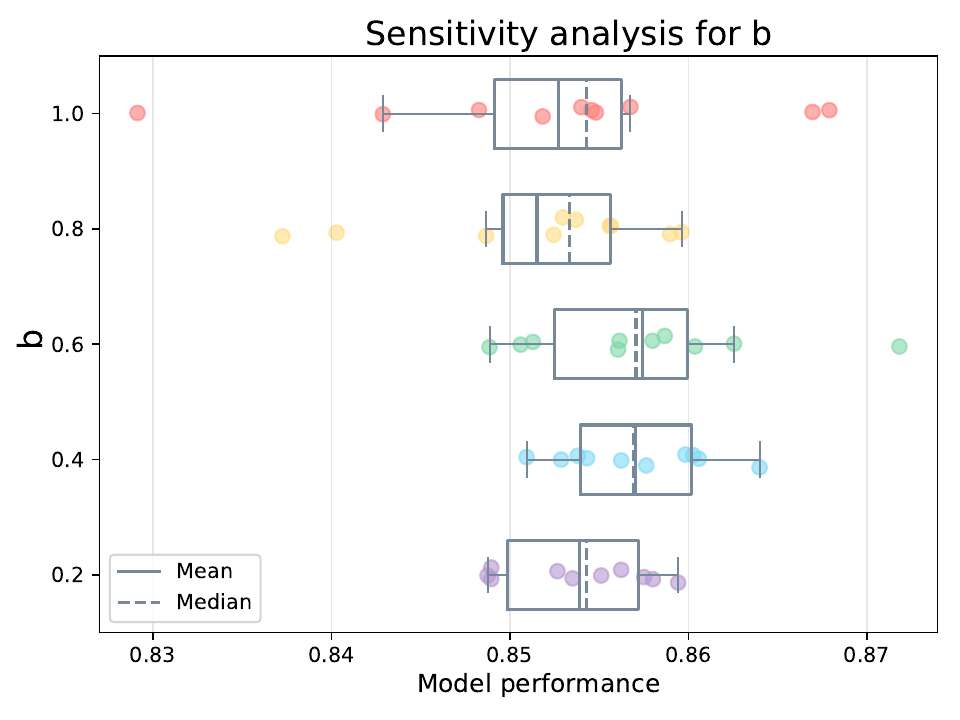}%
    }
    \subfloat[$\gamma$]{\includegraphics[width=0.33\linewidth]{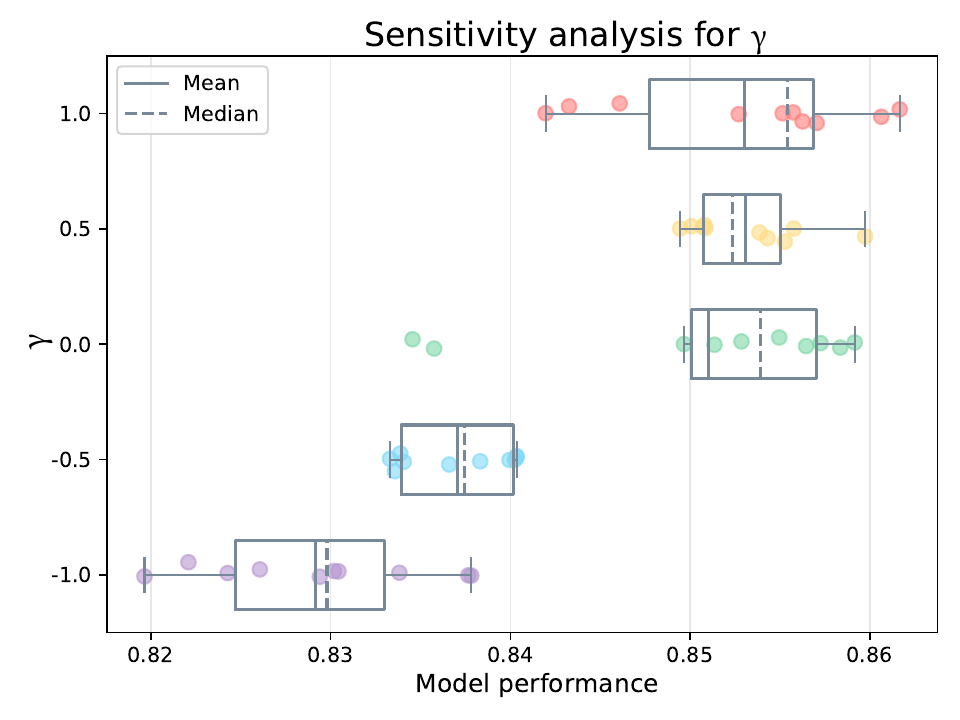}%
    }
    \\
    \subfloat[$s'$]{\includegraphics[width=0.33\linewidth]{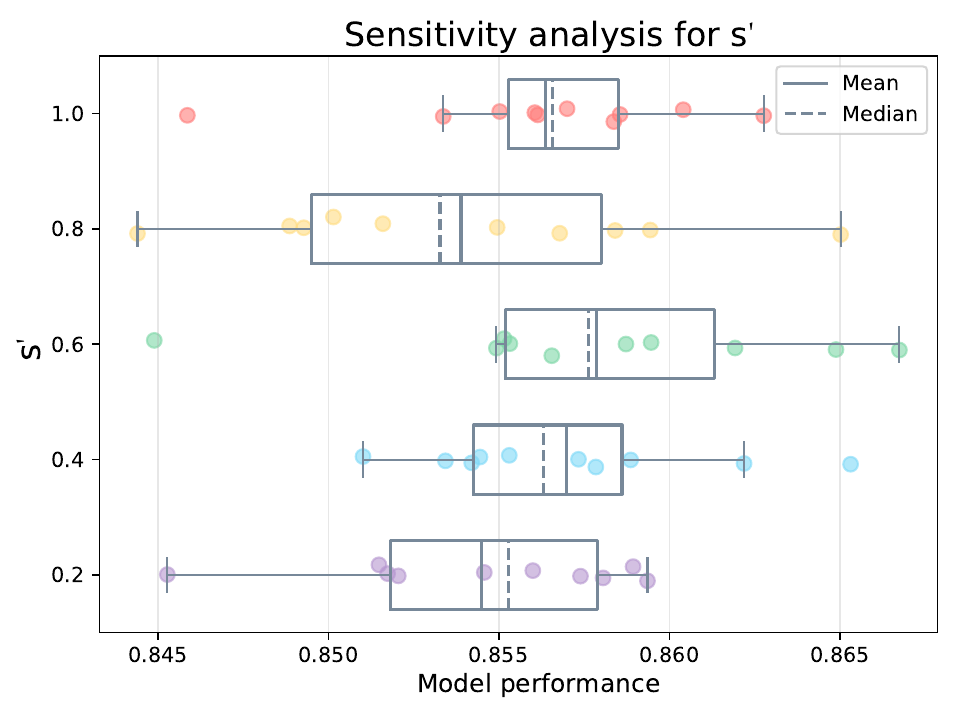}%
    }
    \subfloat[$c$]{\includegraphics[width=0.33\linewidth]{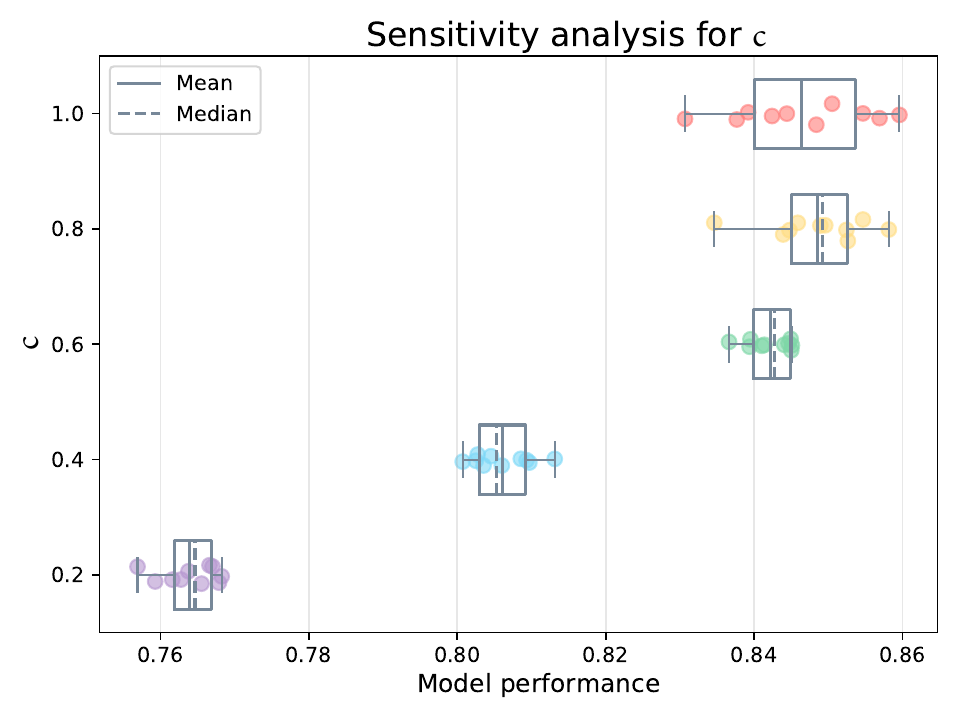}%
    }
	\caption{Sensitivity towards the variable initialization of UPAUCI over OPAUC ($\mathrm{FPR}\leq 0.5$) on CIFAR-10-LT-1.}
	\label{fig:sensi_init_op_0.5_cifar10-1}
\end{figure*}

\begin{figure*}[!t]
	\centering
    \subfloat[$a$]{\includegraphics[width=0.33\linewidth]{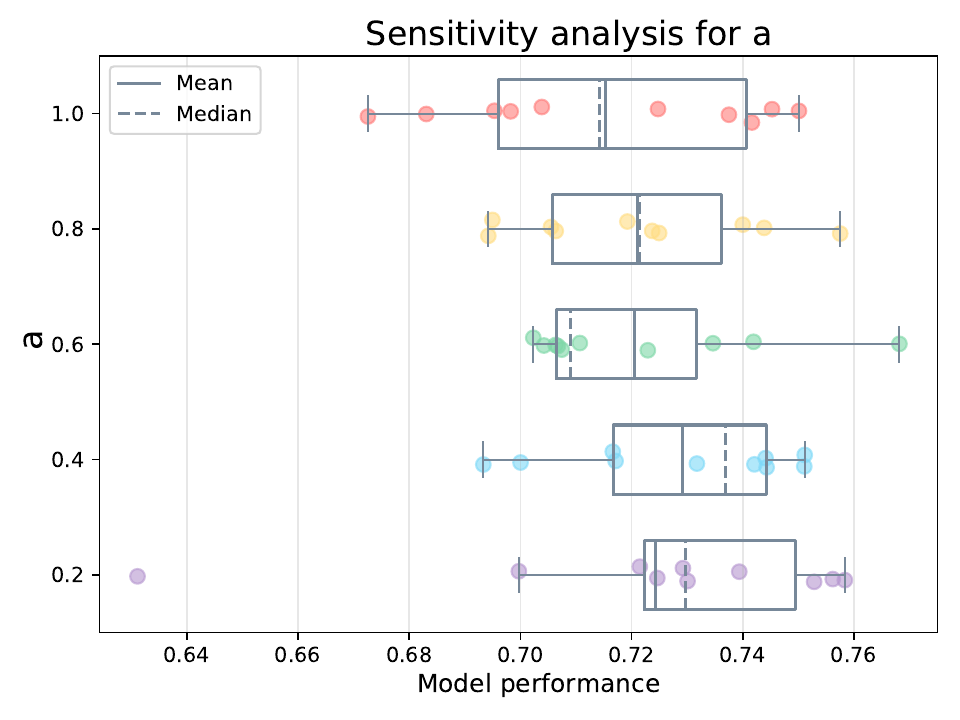}%
    }
    \subfloat[$b$]{\includegraphics[width=0.33\linewidth]{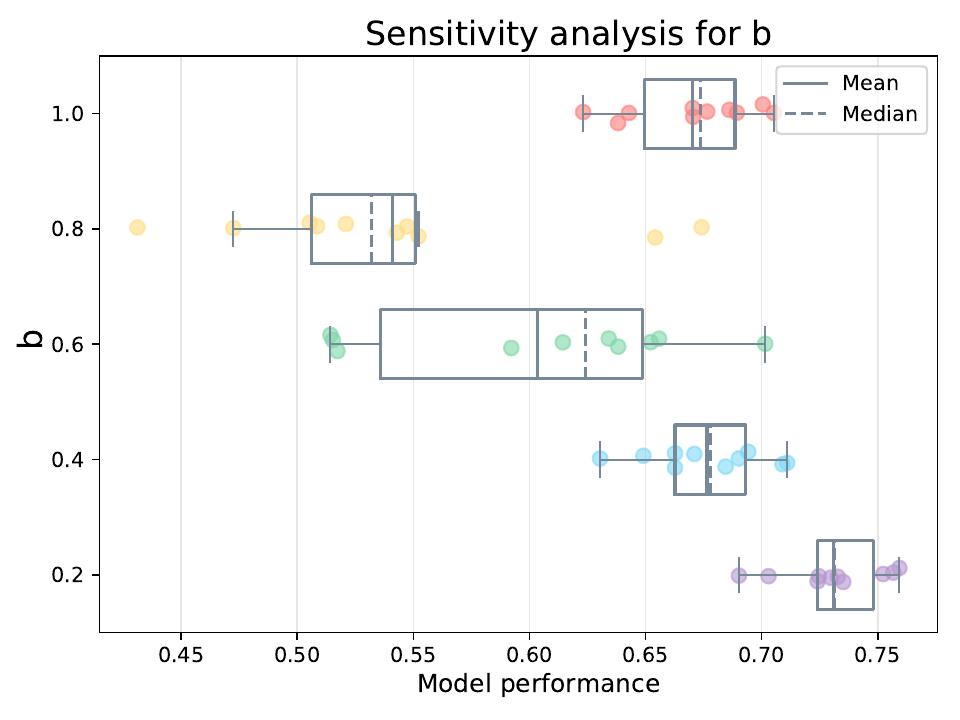}%
    }
    \subfloat[$\gamma$]{\includegraphics[width=0.33\linewidth]{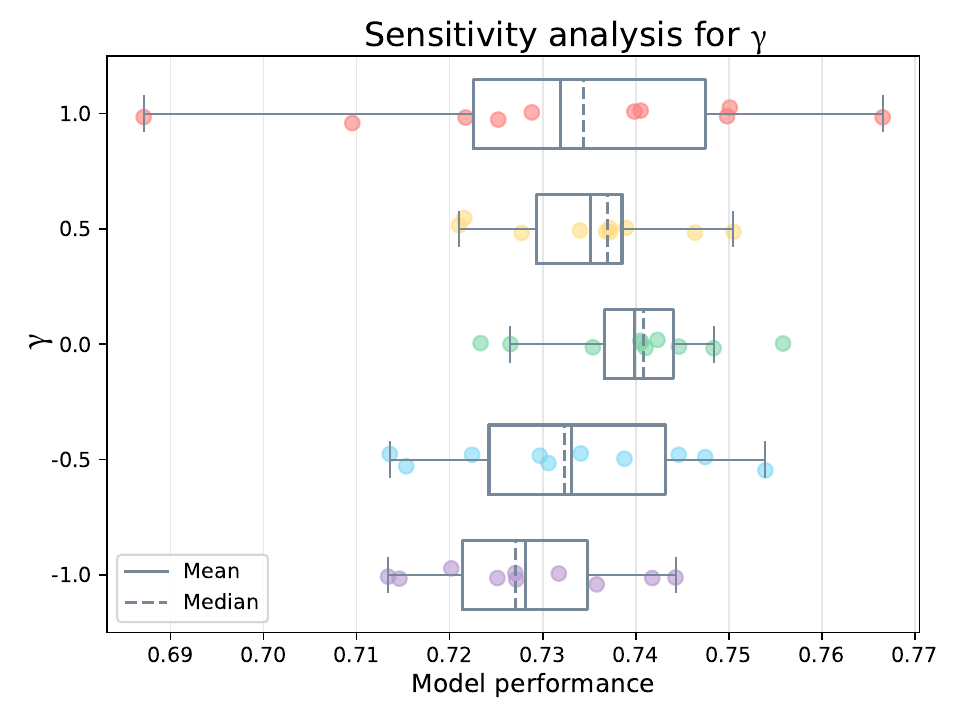}%
    }
    \\
    \subfloat[$s$]{\includegraphics[width=0.33\linewidth]{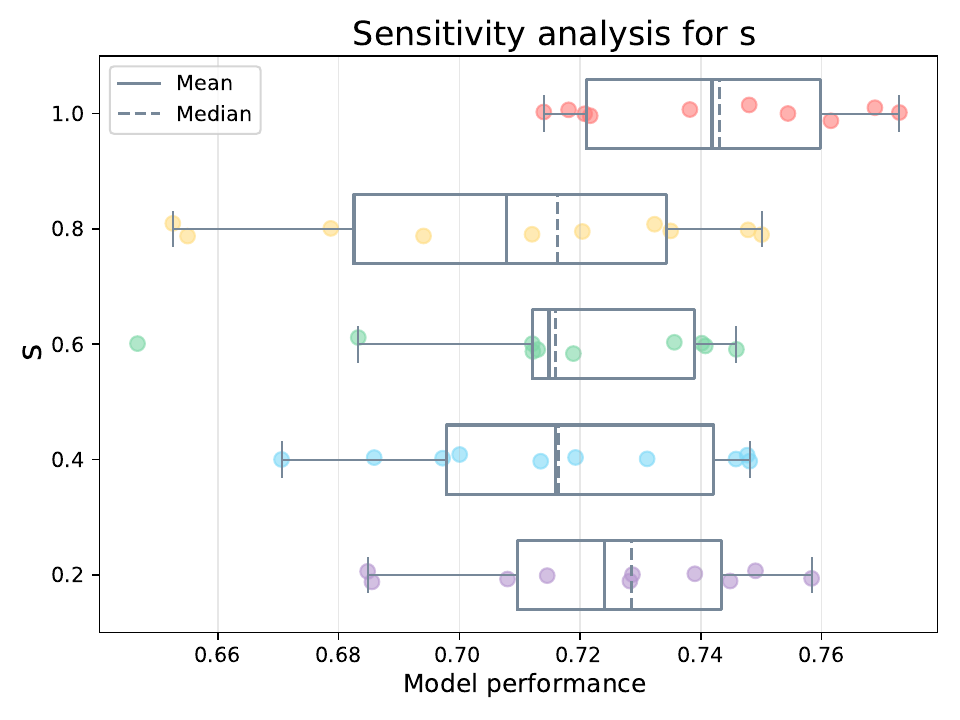}%
    }
    \subfloat[$s'$]{\includegraphics[width=0.33\linewidth]{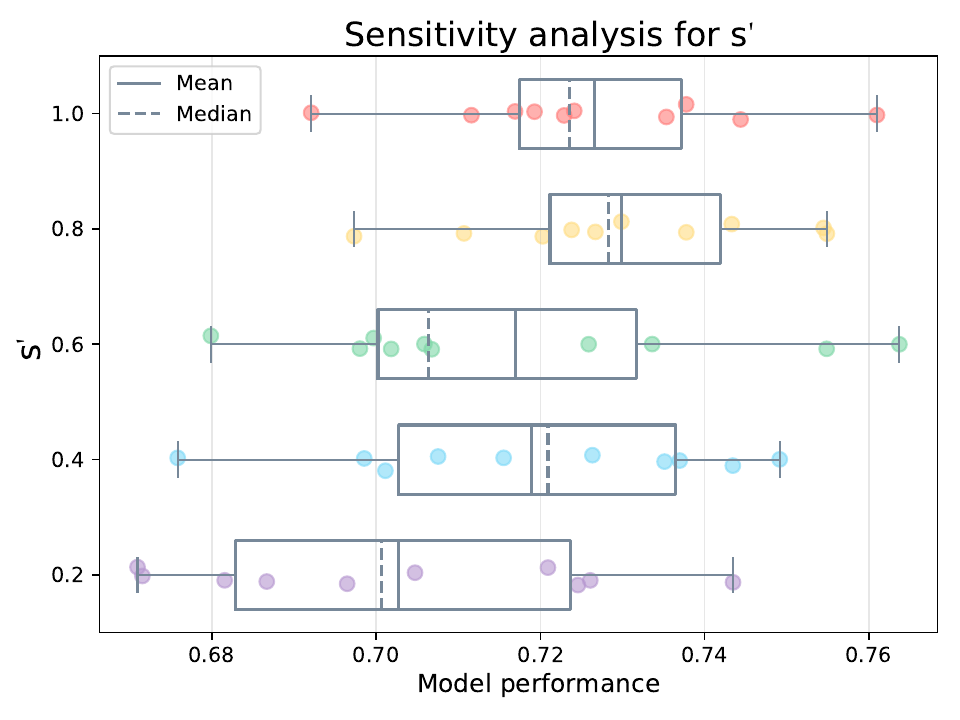}%
    }
    \subfloat[$c$]{\includegraphics[width=0.33\linewidth]{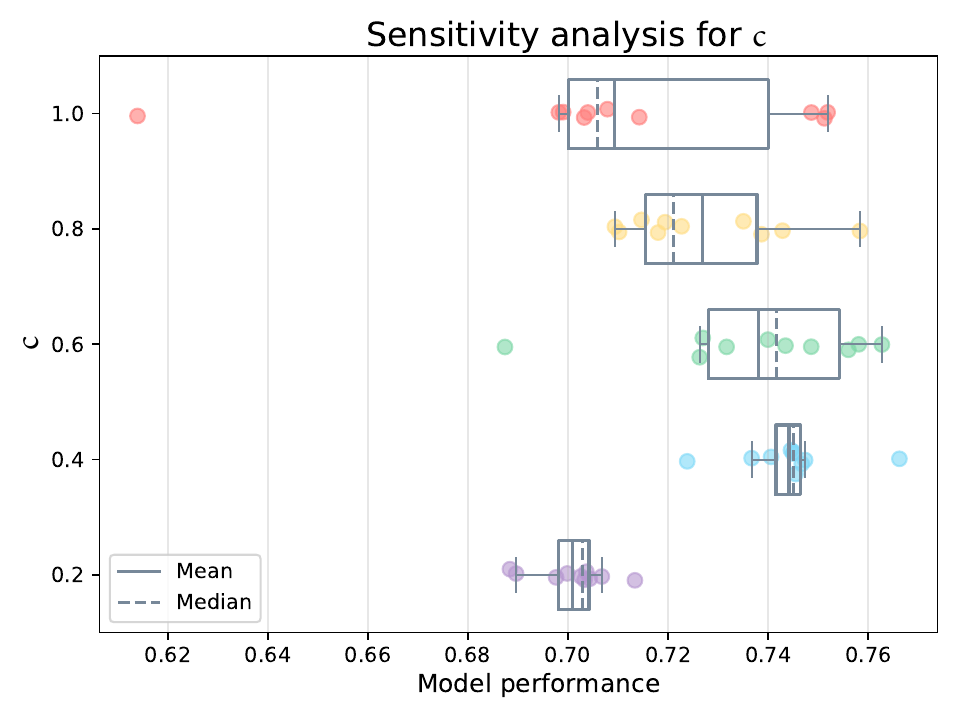}%
    }
	\caption{Sensitivity towards the variable initialization of UPAUCI over TPAUC ($\mathrm{TPR} \geq 0.5$, $\mathrm{FPR}\leq 0.5$) on CIFAR-10-LT-1.}
	\label{fig:sensi_init_tp_0.5_cifar10-1}
\end{figure*}

\begin{figure*}[!t]
	\centering
    \subfloat[OPAUC ($\mathrm{FPR}\leq 0.5$)]{\includegraphics[width=0.33\linewidth]{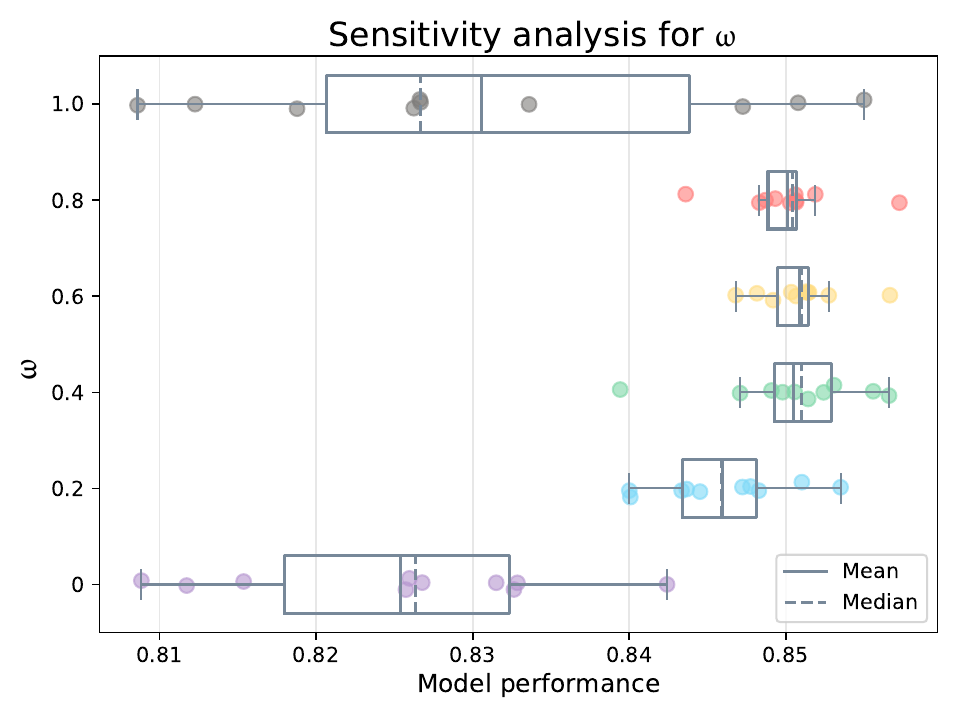}%
    }
    \subfloat[TPAUC ($\mathrm{TPR} \geq 0.3$, $\mathrm{FPR}\leq 0.3$)]{\includegraphics[width=0.33\linewidth]{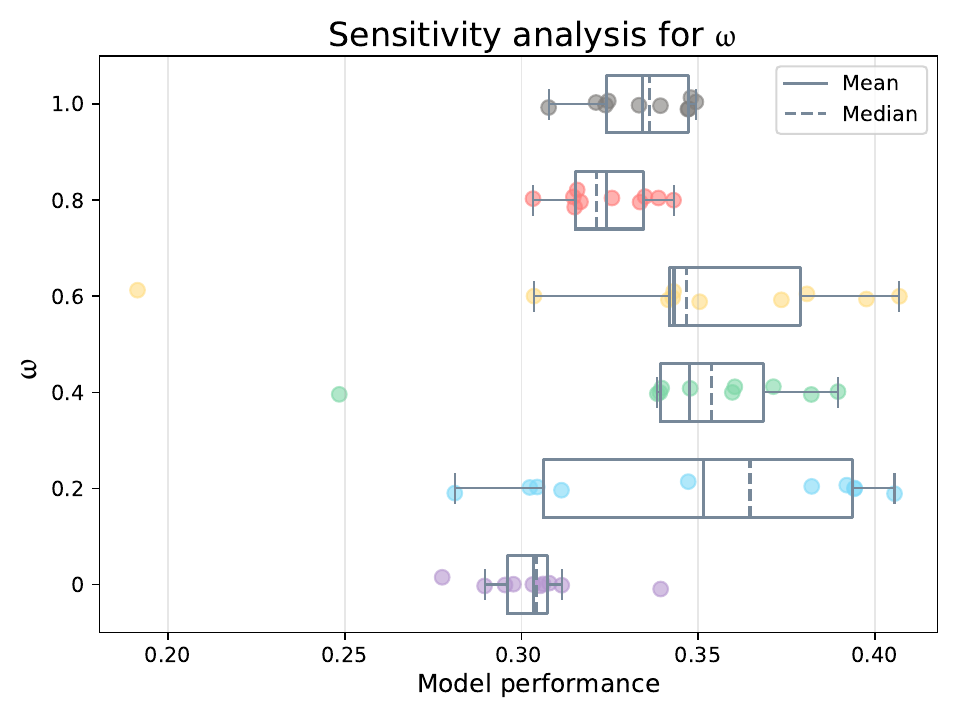}%
    }
    \subfloat[TPAUC ($\mathrm{TPR} \geq 0.5$, $\mathrm{FPR}\leq 0.5$)]{\includegraphics[width=0.33\linewidth]{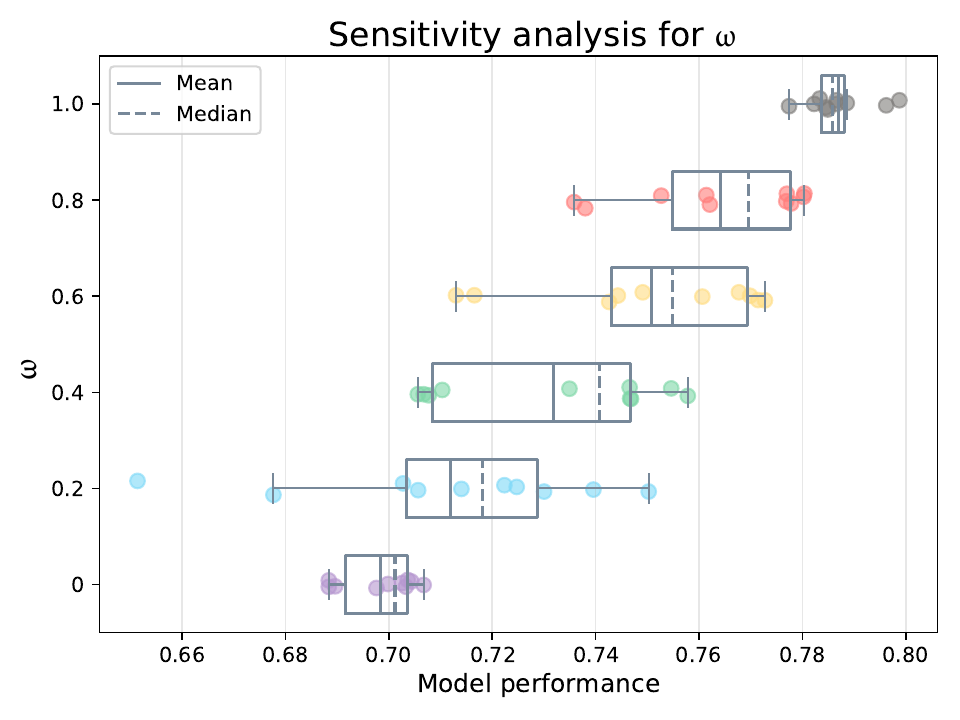}%
    }
	\caption{Sensitivity towards $\omega$ of UPAUCI on CIFAR-10-LT-1.}
	\label{fig:sensi_omega_0.5_cifar10-1}
\end{figure*}

\subsection{Convergence Analysis}
In the convergence analysis, for the sake of fairness, we did not use warm-up. All algorithms use hyperparameters in the performance experiments. We show the plots of training convergence in Fig.\ref{fig:opaucconvergence} and Fig.\ref{fig:tpaucconvergence} on CIFAR-10 for both $\op$ and $\tp$.
According to the figures, we can make the following observations: (1) Our algorithms and SOPA converge faster than other methods for $\op$. However, for $\tp$ optimization, the SOPA converges very slowly due to its complicated algorithm, while our method still shows the best convergence property in most cases. (2) It's notable that our algorithms converge to stabilize after twenty epochs in most cases. That means our methods have better stability in practice.



\subsection{Sensitivity Analysis}
\textbf{Sensitivity towards Variable Initialization.} Except for the network parameters $\bm{\theta}$, the major learnable variables in UPAUCI are (1) $a, b, \gamma$ introduced in the instance-wise reformulation, and (2) $s, s', \bm{c}$ introduced in the differentiable sample selection and the smoothing steps. To facilitate the practical implementation, we need to study how to initialize these values through the sensitivity analysis towards their initialization. Results over OPAUC ($\mathrm{FPR}\leq 0.5$) and TPAUC ($\mathrm{TPR} \geq 0.5$, $\mathrm{FPR}\leq 0.5$) on CIFAR-10-LT-1 are visualized by boxplots in Fig.\ref{fig:sensi_init_op_0.5_cifar10-1} and Fig.\ref{fig:sensi_init_tp_0.5_cifar10-1}, respectively. The sensitivity behavior over OPAUC and TPAUC is different to a certain extent. For example,
among these variables, UPAUCI is most sensitive to the choice of initial $\bm{c}$ over OPAUC, where the performance gap between the initial value of $1$ and $0.2$ is approaching $0.1$.
Also we can see the trend is that the larger $c_i$ is initialized, the better the performance is. Such an inefficiency might be attributed to the limited selected instances with small $\bm{c}$ that make the optimization insufficient. However, in terms of TPAUC, UPAUCI is not as sensitive to $\bm{c}$ as OPAUC, but becomes very sensitive towards the choice of $s,s'$. The reason might be that TPAUC optimization depends on both the top-quantile of negative instances and bottom-quantile of positive instances, which are calculated based on the selection threshold $s,s'$. Moreover, blindly increasing the initial $\bm{c}$ does not necessarily improve the performance on TPAUC, considering the joint effect of positive and negative instances. Besides, it is recommended to choose a large $a$, a small $b$ and $\gamma=0$ for initialization over both OPAUC and TPAUC. This is consistent with the optimal solution for $a, b, \gamma$. Under different FPR or TPR constraints, the method shows a similar tendency. Please see Fig.\ref{fig:sensi_init_op_0.3_cifar10-1} and Fig.\ref{fig:sensi_init_tp_0.3_cifar10-1} in Appx.\ref{sec:add_sensi} for more results.

\noindent\textbf{Necessity of Hyperparameter $\omega$.} Unlike PAUCI which requires the additional $\omega\cdot \gamma^2$ term to achieve strongly-concavity \wrt~$\gamma$, the unbiased objective $H_{op}/H_{tp}$ in UPAUCI is already concave \wrt~the inner variables $\gamma$ and $\bm{c}$. One may think it is not necessary to involve the $\omega$-related term in UPAUCI. However, it is noteworthy that existing minimax optimizers only achieve a convergence rate of $O(\epsilon^{-6})$ under the concavity property, which is much slower than the rate of $O(\epsilon^{-3})$ under strongly-concavity. Considering this, we still add the term with a small coefficient $\omega$ to make the objective strongly-concave \wrt~$\gamma$ and $\bm{c}$ for more effective optimization. The sensitivity boxplots over OPAUC and TPAUC are presented in Fig.\ref{fig:sensi_omega_0.5_cifar10-1}. Apparently, compared with the mere concavity case ($\omega=0$), introducing the squared term significantly improves the performance, especially over TPAUC. Furthermore, when $\omega$ becomes too large (\eg, $\omega=1.0$ over OPAUC), the performance gain will be weakened by the introduced approximation bias. Therefore, a slight $\omega$ is sufficient for our method.

\section{Conclusion}\label{sec:con}
To overcome the efficiency bottleneck of existing approximate PAUC optimization methods, we derive two smooth minimax instance-wise formulations to efficient optimize $\op$ and $\tp$ in this paper. Specifically, the complicated top/bottom instance ranking process is formulated as a differentiable sample selection problem. And the problem is further smoothed by either a smooth surrogate function or an auxiliary variable based reformulation to reach a standard minimax formulation. By employing an efficient stochastic algorithm we can find an $\epsilon$-first order saddle point after $O(\epsilon^{-3})$ iterations, faster than existing results. Moreover, we present a theoretical analysis of the generalization error of our formulation. The result is tight with the order of $\tilde{O}(\alpha^{-1}\np^{-1} + \beta^{-1}\nn^{-1})$ for $\tp$. Finally, extensive empirical studies over a range of long-tailed benchmark datasets speak to the effectiveness of our proposed algorithm.


%



\ifCLASSOPTIONcaptionsoff
  \newpage
\fi


\bibliographystyle{IEEEtran}
\bibliography{egbib}

%

\begin{IEEEbiography}[{\includegraphics[width=1in,height=1.25in,clip,keepaspectratio]{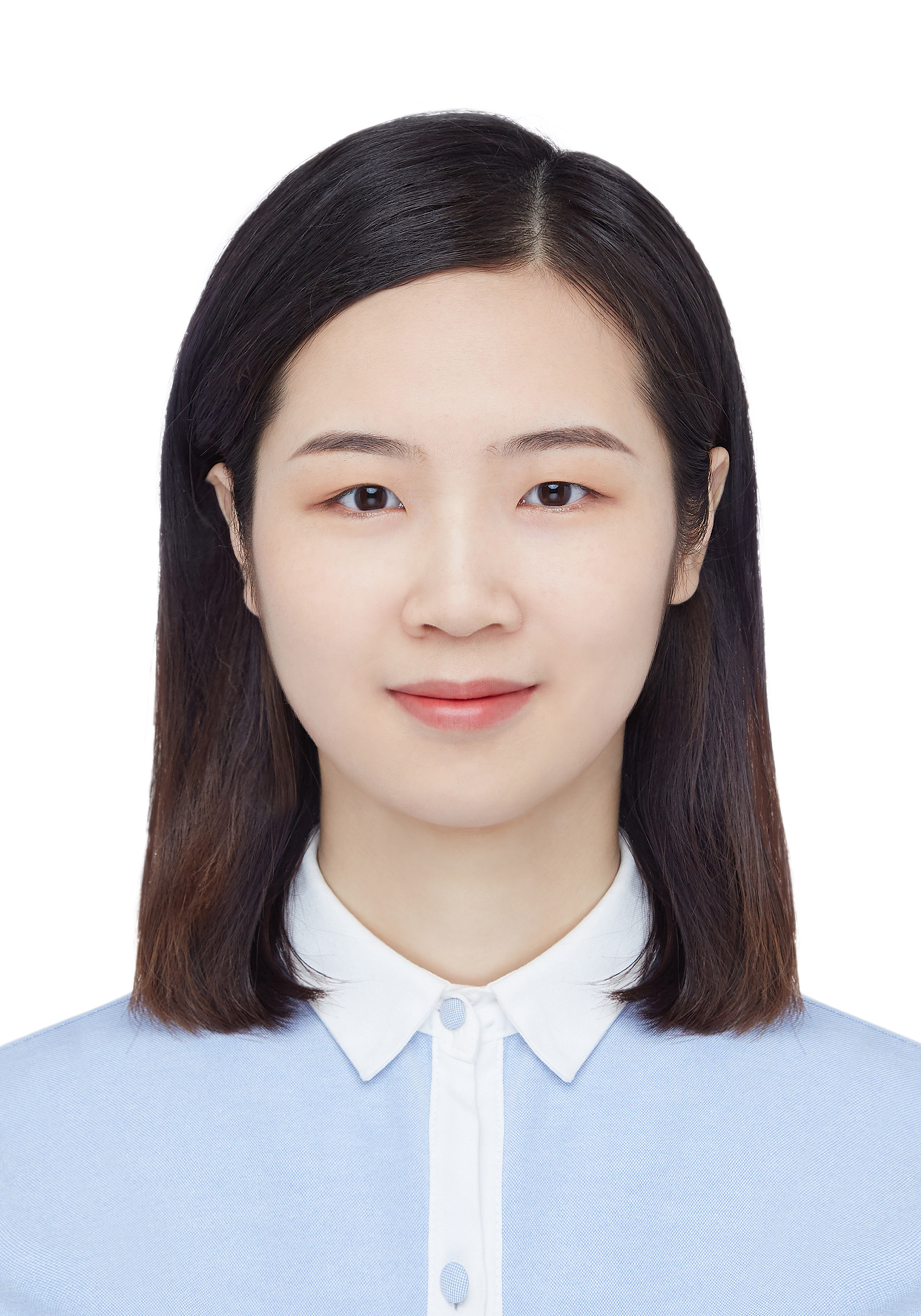}}]
	{Yangbangyan Jiang} received the B.S. degree in instrumentation and control from Beihang University in 2017 and the Ph.D. degree in computer science from University of Chinese Academy of Sciences in 2023. She is currently a postdoctoral research fellow with University of Chinese Academy of Sciences. Her research interests include machine learning and computer vision. She has authored or coauthored 20+ academic papers in international journals and conferences including T-PAMI, NeurIPS, CVPR, AAAI, ACM MM, {etc}. She served as a reviewer for several top-tier conferences such as ICML, NeurIPS, ICLR, CVPR, ICCV, AAAI.
\end{IEEEbiography}
\begin{IEEEbiography}[{\includegraphics[width=1in,height=1.25in,clip,keepaspectratio]{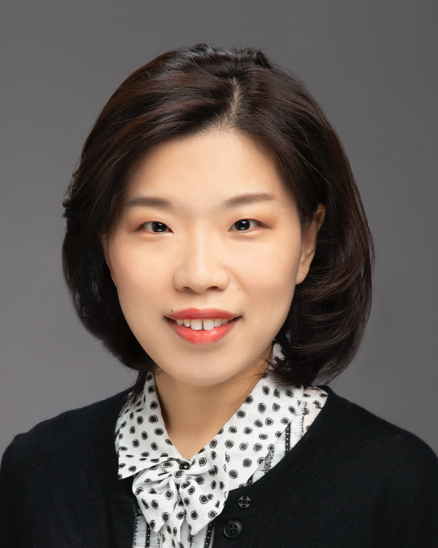}}]
  {Qianqian Xu} received the B.S. degree in computer science from China University of Mining and Technology in 2007 and the Ph.D. degree in computer science from University of Chinese Academy of Sciences in 2013. She is currently a Professor with the Institute of Computing Technology, Chinese Academy of Sciences, Beijing, China. Her research interests include statistical machine learning, with applications in multimedia and computer vision. She has authored or coauthored 100+ academic papers in prestigious international journals and conferences (including T-PAMI, IJCV, T-IP, NeurIPS, ICML, CVPR, AAAI, etc). Moreover, she serves as an associate editor of IEEE Transactions on Circuits and Systems for Video Technology, IEEE Transactions on Multimedia, and ACM Transactions on Multimedia Computing, Communications, and Applications.
\end{IEEEbiography}
\begin{IEEEbiography}
    [{\includegraphics[width=1in,height=1.25in,clip,keepaspectratio]{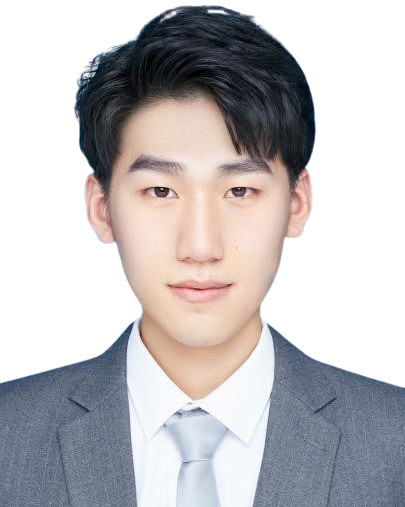}}]
    {Huiyang Shao} received the B.S. degree in software engineering from Liao Ning University in 2021 and the M.Sc. degree in computer science from University of Chinese Academy of Sciences (UCAS) in 2024. He is currently an algorithm engineer in ByteDance Inc. His research interests lie in machine learning and learning theory, with special focus on AUC optimization. He has authored or co-authored 5 academic papers in prestigious conferences (including NeurIPS, ICML, ICCV, AAAI, etc). Moreover, He serves as a reviewer for several top-tier conferences such as NeurIPS and ICLR.
    \end{IEEEbiography}
\begin{IEEEbiography}[{\includegraphics[width=1in,height=1.25in,clip,keepaspectratio]{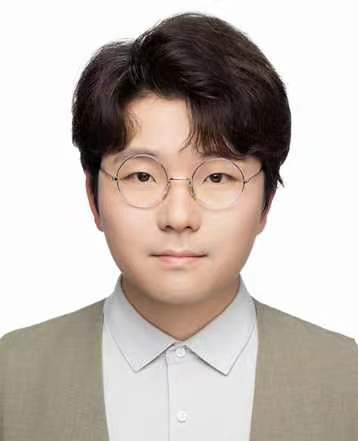}}]
{Zhiyong Yang} received his M.Sc. degree in computer science and technology from the University of Science and Technology Beijing (USTB) in 2017, and Ph.D. degree from the University of Chinese Academy of Sciences (UCAS) in 2021. He is currently an Associate Professor at the University of Chinese Academy of Sciences. His research interests include trustworthy machine learning, long-tail learning, and optimization frameworks for complex metrics. He is one of the key developers of the X-curve learning framework (https://xcurveopt.github.io/), designed to address decision biases between model trainers and users. His work has been recognized with various awards, including Top 100 Baidu AI Chinese Rising Stars Around the World, Top-20 Nomination for the Baidu Fellowship,  Asian Trustworthy Machine Learning (ATML) Fellowship, and the China Computer Federation (CCF) Doctoral Dissertation Award. He has authored or co-authored over 60 papers in top-tier international conferences and journals, including more than 30 papers in T-PAMI, ICML, and NeurIPS. He has also served as an Area Chair (AC) for NeurIPS 2024/ICLR 2025, a Senior Program Committee (SPC) member for IJCAI 2021, and as a reviewer for several prestigious journals and conferences, such as T-PAMI, IJCV, TMLR, ICML, NeurIPS, and ICLR.
\end{IEEEbiography}
\begin{IEEEbiography}
	[{\includegraphics[width=1in,height=1.25in,clip,keepaspectratio]{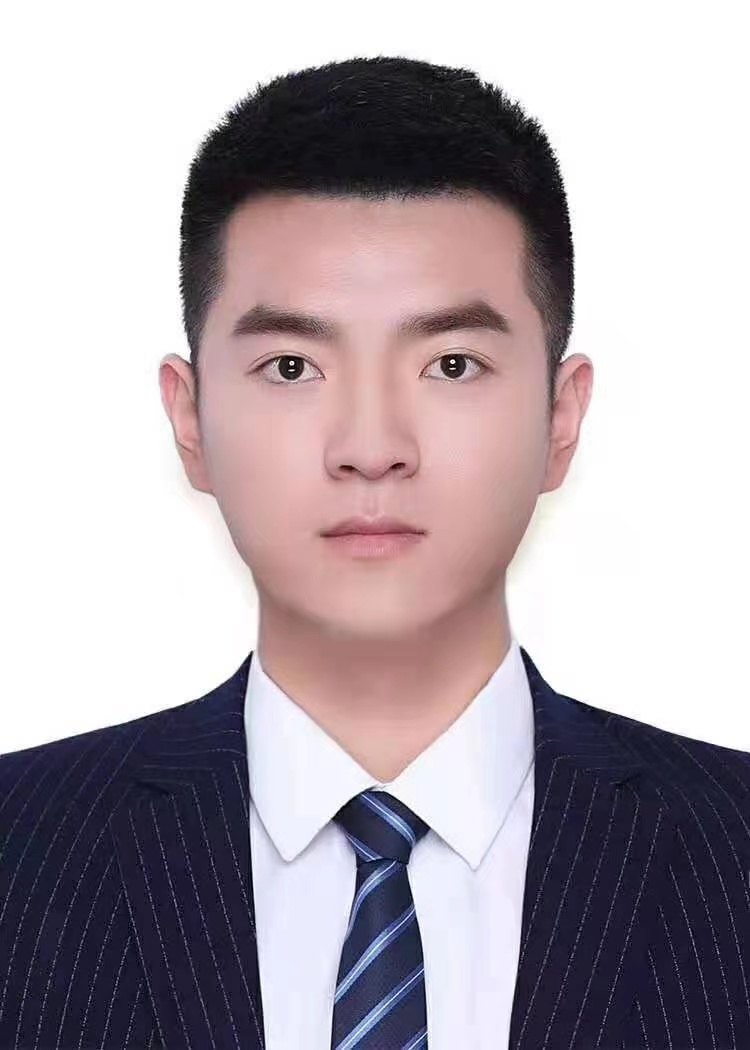}}]
  {Shilong Bao} received the B.S. degree from the College of Computer Science and Technology, Qingdao University in 2019 and the Ph.D. degree from the Institute of Information Engineering, Chinese Academy of Sciences (IIE, CAS) in 2024. He is currently a Post-doc Fellow with the School of Computer Science and Technology, University of Chinese Academy of Sciences (UCAS). His research interests are machine learning and data mining. He has authored or co-authored several academic papers in top-tier international conferences and journals including T-PAMI, NeurIPS, ICML, and ACM Multimedia. He also served as a reviewer for several top-tier conferences and journals, including ICML/NeurIPS/ICLR and IEEE T-MM/T-CSVT.
\end{IEEEbiography}
\begin{IEEEbiography}[{\includegraphics[width=1in,height=1.25in,clip,keepaspectratio]{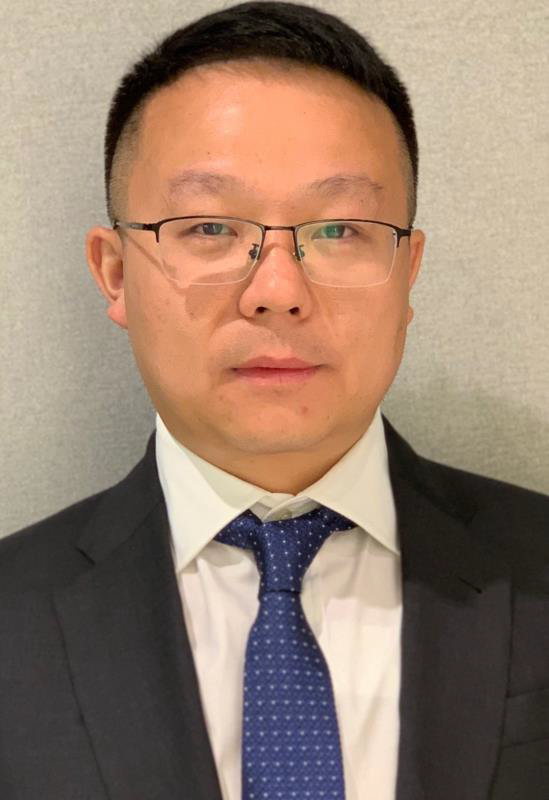}}]
 {Xiaochun Cao} is a Professor of School of Cyber Science and Technology, Shenzhen Campus of Sun Yat-sen University. He received the B.E. and M.E. degrees both in computer science from Beihang University (BUAA), China, and the Ph.D. degree in computer science from the University of Central Florida, USA, with his dissertation nominated for the university level Outstanding Dissertation Award. After graduation, he spent about three years at ObjectVideo Inc. as a Research Scientist. From 2008 to 2012, he was a professor at Tianjin University. Before joining SYSU, he was a professor at Institute of Information Engineering, Chinese Academy of Sciences. He has authored and coauthored over 200 journal and conference papers. In 2004 and 2010, he was the recipients of the Piero Zamperoni best student paper award at the International Conference on Pattern Recognition. He is on the editorial boards of IEEE Transactions on Image Processing and IEEE Transactions on Multimedia, and was on the editorial board of IEEE Transactions on Circuits and Systems for Video Technology.
\end{IEEEbiography}
\begin{IEEEbiography}[{\includegraphics[width=1in,height=1.25in,clip,keepaspectratio]{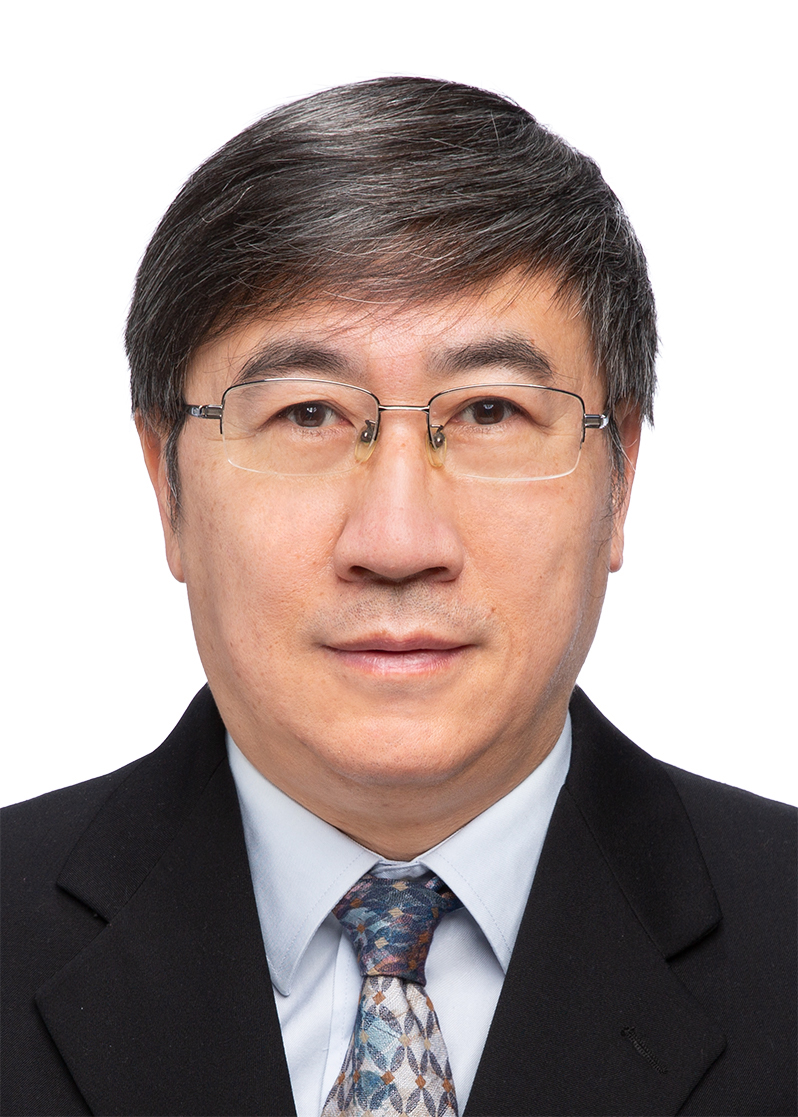}}]
  {Qingming Huang} is a chair professor in University of Chinese Academy of Sciences and an adjunct research professor in the Institute of Computing Technology, Chinese Academy of Sciences. He graduated with a Bachelor degree in Computer Science in 1988 and Ph.D. degree in Computer Engineering in 1994, both from Harbin Institute of Technology, China. His research areas include multimedia computing, image processing, computer vision and pattern recognition. He has authored or coauthored more than 400 academic papers in prestigious international journals and top-level international conferences. He was the associate editor of IEEE Trans. on CSVT and Acta Automatica Sinica, and the reviewer of various international journals including IEEE Trans. on PAMI, IEEE Trans. on Image Processing, IEEE Trans. on Multimedia, etc. He is a Fellow of IEEE and has served as general chair, program chair, area chair and TPC member for various conferences, including ACM Multimedia, CVPR, ICCV, ICME, ICMR, PCM, BigMM, PSIVT, etc.
\end{IEEEbiography}





\clearpage
\onecolumn
\appendices

\settocdepth{section}
\addtocontents{ptc}{\setcounter{tocdepth}{3}}

\LARGE \textbf{List of Appendix}
\normalsize
\startcontents[sections]
\printcontents[sections]{}{1}{}

\newpage

\section{Population Reformulation for OPAUC}\label{sec:op_pop_ver}

\begin{lem}\label{lem:topk}
$\sum_{i=1}^k x_{[i]}$ is a convex function of $(x_1,\cdots,x_n)$ where $x_{[i]}$ is its top-$i$ element. Furthermore, we have
$$\frac{1}{k}\sum_{i=1}^kx_{[i]}=\min_{s}\{s+\frac{1}{k}\sum_{i=1}^n[x_i-s]_+\},$$
where $[a]_+=\max\{0,a\}$.
For the population version, we have $$\E_{x}[x\cdot\indicator{x\geq \eta(\alpha)}]=\min_s \frac{1}{\alpha} \E_{x}[\alpha s+[x-s]_+],$$ where $\eta(\alpha)=\arg\min_{\eta\in\mathbb{R}}[\E_{x}[\indicator{x\geq \eta}]=\alpha]$.
\end{lem}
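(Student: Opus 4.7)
The plan is to first establish the finite-sample variational identity, deduce the convexity claim as a corollary, and finally repeat the argument at the population level with expectations in place of sums.

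For the finite identity, I would define $g(s) = s + \frac{1}{k}\sum_{i=1}^{n}[x_i - s]_+$ and locate its minimizer by a subgradient analysis. As a sum of convex piecewise-linear functions, $g$ is convex and piecewise linear in $s$, with right- and left-derivatives $g_+'(s) = 1 - \frac{1}{k}|\{i : x_i > s\}|$ and $g_-'(s) = 1 - \frac{1}{k}|\{i : x_i \geq s\}|$, respectively. Hence the first-order optimality condition $g_-'(s) \leq 0 \leq g_+'(s)$ reduces to $|\{i : x_i > s\}| \leq k \leq |\{i : x_i \geq s\}|$, which is satisfied precisely at $s^{\star} = x_{[k]}$ (an interval of minimizers when ties occur at the $k$-th largest value, a single point otherwise). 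Substituting $s^{\star} = x_{[k]}$ and telescoping the surviving positive parts yields $g(x_{[k]}) = x_{[k]} + \frac{1}{k}\sum_{i=1}^{k}(x_{[i]} - x_{[k]}) = \frac{1}{k}\sum_{i=1}^{k} x_{[i]}$, which is the claimed identity.

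Convexity of the top-$k$ sum then follows essentially for free: the joint map $(\bm{x}, s) \mapsto ks + \sum_{i=1}^{n}[x_i - s]_+$ is jointly convex in $(\bm{x}, s)$ because each summand $[x_i - s]_+$ is the maximum of two affine functions of $(x_i, s)$. Since the partial infimum of a jointly convex function over one of its arguments is convex in the remaining arguments, and since this infimum equals $\sum_{i=1}^{k} x_{[i]}$ by the identity just proved, the top-$k$ sum is convex in $\bm{x}$.

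For the population statement, the identical blueprint applies with $G(s) = s + \frac{1}{\alpha}\E[(x - s)_+]$, whose right- and left-derivatives are $1 - \frac{1}{\alpha}\Pr[x > s]$ and $1 - \frac{1}{\alpha}\Pr[x \geq s]$. The optimality condition $G_-'(s) \leq 0 \leq G_+'(s)$ becomes $\Pr[x > s] \leq \alpha \leq \Pr[x \geq s]$, that is, $s^{\star} = \eta(\alpha)$. Using the pointwise identity $(x - \eta(\alpha))\indicator{x \geq \eta(\alpha)} = (x - \eta(\alpha))_+$ together with the defining relation $\Pr[x \geq \eta(\alpha)] = \alpha$ yields $G(\eta(\alpha)) = \frac{1}{\alpha}\E[x \cdot \indicator{x \geq \eta(\alpha)}]$, which is the desired population identity (up to the $1/\alpha$ normalization, which is the natural counterpart of the $1/k$ on the left-hand side of the finite version).

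The main technical subtlety is handling ties in the finite case and atoms $\Pr[x = \eta(\alpha)] > 0$ in the population case: the optimizer $s^{\star}$ is then an interval rather than a singleton, but the optimum value of $g$ (respectively $G$) is unchanged on this interval, so the identity remains valid. Beyond this bookkeeping, the argument is routine convex analysis.
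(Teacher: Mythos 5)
Your proof is correct, and for the population identity it uses the same underlying idea as the paper: a first-order/subgradient optimality condition in $s$ whose solution is the $\alpha$-quantile. The differences are ones of completeness rather than of route. The paper delegates the finite summation case entirely to Lem.~1 of Fan et al.\ (2017) and, for the expectation case, stops after identifying the minimizer (``the optimal $s$ achieves the top-$\alpha$ quantile'') without substituting it back to verify that the minimum \emph{value} equals the left-hand side; you carry out both the finite-sample argument and the value verification explicitly, and you also supply the convexity claim via the partial-infimum-of-a-jointly-convex-function argument, which the paper never addresses. Your handling of ties and atoms is a genuine improvement over the paper's bare subdifferential computation. Finally, you are right to flag the $1/\alpha$ normalization: as literally written, the population identity in the lemma equates $\E_{x}[x\cdot\indicator{x\geq \eta(\alpha)}]$ with a quantity that your (correct) computation shows equals $\tfrac{1}{\alpha}\E_{x}[x\cdot\indicator{x\geq \eta(\alpha)}]$, so the two sides differ by a factor of $\alpha$; the consistent statement is $\E_{x}[x\cdot\indicator{x\geq \eta(\alpha)}]=\min_s \E_{x}[\alpha s+[x-s]_+]$, mirroring the fact that the finite version's left-hand side carries the $1/k$ average over the selected $k$ elements rather than over all $n$.
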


\textbf{Instance-wise Reformulation.}
With the above lemma, we can obtain
\begin{equation}\label{eq:minmaxopauc1}
    \begin{aligned}
    \underset{f}{\min}~ {\mathcal{R}}_{\beta}(f)
    \Leftrightarrow \underset{\cmin}{\min}\ \underset{\gamma\in[-1,1]}{\max} \
    \colblue{\ezdz} \left[F_{op}(f,a,b,\gamma, \colblue{\efb},\bm{z})\right],
    \end{aligned}
\end{equation}
where $\colblue{\efb}=\arg\min_{\colblue{\eta_{\beta}} \in\mathbb{R}}\left[\E_{\bm{x}'\sim \setD_{\setN}}[\indicator{f(\bm{x}')\geq\colblue{\eta_{\beta}}}]=\beta\right]$.

\textbf{Smoothing Approximation.}
Since
\begin{equation}\label{eq:atk}
  \begin{aligned}
    &\E_{\bm{x}'\sim\setD_\setN}[\indicator{f(\bm{x}')\geq\eta_{\beta}(f)}\cdot\ell_-(\bm{x}')] \\
    &= \min_{s'} \frac{1}{\beta}\cdot  \E_{\bm{x}'\sim\setD_\setN} [\beta s' + [\ell_-(\bm{x}')-s']_+],
  \end{aligned}
\end{equation}
we have
\begin{equation}\label{minmaxmin_op-pop}
\begin{aligned}
  &\underset{\cmin}{\min}\ \underset{\gamma\in[-1,1]}{\max} \
\colblue{\ezdz}[F_{op}(f,a,b,\gamma,\colblue{\efb}, \bm{z})]\\
&\Leftrightarrow \underset{\cmin}{\min}\
\underset{\cmax }{\max}
\ \underset{s'\in\Omega_{s'}}{\min}
\ \colblue{\ezdz}[G_{op}(f,a,b,\gamma,\bm{z},s')],
\end{aligned}
\end{equation}

With the softplus surrogate and regularization, the surrogate optimization problem becomes:
\begin{equation}\label{minmax_op}
  \begin{aligned}
    &\underset{\cmin}{\min}\ \underset{\gamma\in\Omega_{\gamma}}{\max} \min_{s' \in \Omega_{s'}}
    \ \colblue{\ezdz}[G_{op}^{{\colblue{\kappa}},\colbit{\omega}}] \\
    &\Leftrightarrow \underset{\cminl}{\min}\ \underset{\cmax}{\max}
\ \colblue{\ezdz}[G_{op}^{{\colblue{\kappa}},\colbit{\omega}}],
  \end{aligned}
\end{equation}

\textbf{Unbiased Reformulation.}
\begin{equation}\label{minmaxminmax_op-pop}
    \begin{aligned}
      & \underset{\cmin}{\min}\ \underset{\cmax }{\max}
    \ \underset{s'\in\Omega_{s'}}{\min}
    \ \colblue{\ezdz}[G_{op}(f,a,b,\gamma,\bm{z},s')], \\
      \Leftrightarrow&
      \underset{\cmin}{\min}\ \underset{\cmax }{\max}
    \ \underset{s'\in\Omega_{s'}}{\min} \underset{c}{\max}
    \ \colblue{\ezdz}[H_{op}(f,a,b,\gamma,\bm{z},s',c)], \\
        \Leftrightarrow& \underset{\cmin}{\min}\ \underset{\cmax, c}{\max} \ \underset{s'\in\Omega_{s'}}{\min} \colblue{\ezdz}[H_{op}], \\
    \Leftrightarrow& \underset{\cmin, s'\in\Omega_{s'}}{\min}\ \underset{\cmax, c}{\max} \ \colblue{\ezdz}[H_{op}].
    \end{aligned}
\end{equation}

\section{Reformulation for TPAUC}
\label{sec:tpauc_reformulation}
According to Eq.\eqref{TPAUCM}, given a surrogate loss $\ell$ and the finite dataset $S$, maximizing $\tp$ and $\hat{\mathrm{AUC}}_{\alpha,\beta}(f, S)$ is equivalent to solving the following problems, respectively:
\begin{equation}
        \underset{f}{\min}~  \mathcal{R}_{\alpha,\beta}(f)= \E_{\bm{x} \sim \setD_\setP, \bm{x}'\sim \setD_\setN} \left[\indicator{f(\bm{x}) \le \efa} \cdot \indicator{f(\bm{x}') \ge \efb} \cdot \ell(f(\bm{x})- f(\bm{x}'))\right],
        \label{TPAUCO}
    \end{equation}
\begin{equation}
\begin{aligned}
    \underset{f}{\min} \ \hat{\mathcal{R}}_{\alpha,\beta}(f, S)= \sum_{i=1}^{n_+^\alpha} \sum_{j=1}^{n_-^\beta}\frac{\ell{\left(f(\bm{x}_{[i]})- f(\bm{x}'_{[j]})\right)}}{n_+^\alpha n_-^\beta}.
    \label{TPAUCO}
\end{aligned}
\end{equation}
Similar to $\op$, we have the following theorem with an instance-wise reformulation of the $\tp$ optimization problem:
\begin{thm}
\label{thm:7}
Assuming that $f(\bm{x})\in[0,1]$, $\forall \bm{x}\in\mathcal{X}$, $F_{tp}(f,a,b,\gamma, t, t', \bm{z})$ is defined as:
\begin{equation}
    \begin{aligned}
F_{tp}(&f,a,b,\gamma,t, t', \bm{z})=(f(\bm{x})-a)^2y\indicator{f(\bm{x})\leq t}/(\alpha p)+
(f(\bm{x})-b)^2(1-y)\indicator{f(\bm{x}')\geq t'}/[\beta (1-p)]\\
&+2(1+\gamma)f(\bm{x})(1-y)\indicator{f(\bm{x}')\geq t'}/[\beta (1-p)] -
2(1+\gamma)f(\bm{x})y/p\indicator{f(\bm{x})\leq t}/(\alpha p) -\gamma^2,
\end{aligned}
\label{eq:fop}
\end{equation}
where $y=1$ for positive instances, $y=0$ for negative instances and we have the following conclusions:
\begin{enumerate}[leftmargin=20pt]
    \item[(a)] (\textbf{Population Version}.) We have:
    \begin{equation}
        \underset{f}{\min} \ {\mathcal{R}}_{\alpha,\beta}(f) \Leftrightarrow \underset{\cmin}{\min}\ \underset{\gamma\in[-1,1]}{\max} \
    \colblue{\ezdz}
    \left[F_{tp}(f,a,b,\gamma, \colblue{\efa},\colblue{\efb},\bm{z})\right],
    \label{eq:minmaxtpauc1}
    \end{equation}
    where $\colblue{\efa}=\arg\min_{\colblue{\eta_{\alpha}} \in\mathbb{R}}\left[\E_{\bm{x}\sim \setD_{\setP}}[\indicator{f(\bm{x})\leq\ \colblue{\eta_{\alpha}}}]=\alpha\right]$ and $\colblue{\efb}=\arg\min_{\colblue{\eta_{\beta}} \in\mathbb{R}}\left[\E_{\bm{x}'\sim \setD_{\setN}}[\indicator{f(\bm{x}')\geq\ \colblue{\eta_{\beta}}}]=\beta\right]$.
    \item[(b)] (\textbf{Empirical Version}.) Moreover, given a training dataset $S$ with sample size $n$, denote:
    \begin{equation*}
    \colbit{\ehat_{z \sim S}}[F_{tp}(f,a,b,\gamma,\colbit{\hat{\eta}_{\alpha}(f)}, \colbit{\hefb}, \bm{z})] = \frac{1}{n}\sum_{i=1}^n F_{tp}(f,a,b,\gamma,\colbit{\hat{\eta}_{\alpha}(f)}, {\colbit{\hefb}}, \bm{z}),
    \end{equation*}
    where $\colbit{\hat{\eta}_{\alpha}(f)}$ and $\colbit{\hefb}$ are the empirical quantile of the positive and negative instances in $S$, respectively. We have:
    \begin{equation}
        \underset{f}{\min} \ \hat{\mathcal{R}}_{\alpha,\beta}(f, S) \Leftrightarrow \underset{\cmin}{\min}\ \underset{\gamma\in[-1,1]}{\max} \
    \colbit{\hezds}
    \left[F_{tp}(f,a,b,\gamma, \colbit{\hat{\eta}_{\alpha}(f)}, \colbit{\hefb}, \bm{z} ) \right],
    \label{eq:minmaxtpauc2}
    \end{equation}
\end{enumerate}
\end{thm}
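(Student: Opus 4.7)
The plan is to mirror the derivation of Theorem 1 (OPAUC case), using the squared surrogate $\ell(x)=(1-x)^2$ together with the Ying-2016 style minimax decoupling of the pairwise squared loss, and then absorbing the two quantile constraints via conditional expectations and class-prior normalizations.

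First, I would rewrite the objective. For the population version, the squared surrogate gives
\begin{equation*}
\mathcal{R}_{\alpha,\beta}(f)=\E_{\bm{x}\sim\setD_\setP,\bm{x}'\sim\setD_\setN}\!\left[\indicator{f(\bm{x})\le\efa}\indicator{f(\bm{x}')\ge\efb}\,(1-(f(\bm{x})-f(\bm{x}')))^2\right],
\end{equation*}
and since $\Pr_P[f(\bm{x})\le\efa]=\alpha$ and $\Pr_N[f(\bm{x}')\ge\efb]=\beta$, this equals $\alpha\beta$ times the expectation of $(1-u+v)^2$ under the \emph{conditional} distribution $(u,v)$ supported on the constrained region. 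Applying the Ying-2016 identity on that conditional distribution,
\begin{equation*}
\E[(1-u+v)^2]=\min_{a,b}\max_{\gamma\in[-1,1]}\big\{\E[(u-a)^2]+\E[(v-b)^2]-2(1+\gamma)\E[u-v]-\gamma^2\big\}+1,
\end{equation*}
which follows from completing the square in $a,b$ (optimal $a,b$ being the conditional means) and in $\gamma$ (optimal $\gamma$ being the mean gap), would reduce the problem to a tractable instance-wise form.

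Next, I would undo the conditioning. Multiplying by $\alpha\beta$, each conditional expectation over the positives becomes $\alpha\cdot\E_{\bm{x}|P}[\indicator{f(\bm{x})\le\efa}\cdot(\cdot)]$, which in turn equals $\E_{\bm{z}\sim\setD_\setZ}[y\cdot\indicator{f(\bm{x})\le\efa}\cdot(\cdot)]/p$ by Bayes; analogously for the negatives with $\indicator{f(\bm{x})\ge\efb}$ and $(1-y)/(1-p)$. The factors $\alpha\beta$ cancel against the $1/\alpha$ and $1/\beta$ that emerge from the normalizations, leaving exactly the $1/(\alpha p)$ and $1/[\beta(1-p)]$ weights that appear in $F_{tp}$. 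After grouping the $(u-a)^2$ and $-2(1+\gamma)u$ into a per-positive term, and $(v-b)^2$ and $+2(1+\gamma)v$ into a per-negative term, the expression becomes $\E_{\bm{z}}[F_{tp}(f,a,b,\gamma,\efa,\efb,\bm{z})]$ (modulo the additive constant $\alpha\beta$ from the $+1$, which is irrelevant for $\min_f$). Taking $\min_{a,b\in[0,1]^2}\max_{\gamma\in[-1,1]}$ on both sides establishes (a), where the box $[0,1]^2\times[-1,1]$ suffices because the optimal $a,b$ are conditional means in $[0,1]$ and the optimal $\gamma=\mu_N-\mu_P\in[-1,1]$.

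For the empirical version (b), I would repeat the same pipeline verbatim with $\hat{\eta}_\alpha(f),\hat{\eta}_\beta(f)$ and $\hatexpect{\bm{z}\sim S}$, noting that when $t=\hat{\eta}_\alpha(f)$ and $t'=\hat{\eta}_\beta(f)$, the sums $\sum_i \indicator{f(\bm{x}_i)\le t}$ and $\sum_j \indicator{f(\bm{x}'_j)\ge t'}$ return exactly $n_+^\alpha$ and $n_-^\beta$ positives/negatives, so that the empirical conditional means reproduce the sorted top-$k$/bottom-$k$ statistics implicit in $\hat{\mathcal{R}}_{\alpha,\beta}(f,S)$. The anticipated obstacle is purely bookkeeping: one has to keep track of (i) the $p$ and $1-p$ factors that appear when converting class-conditional expectations to joint expectations over $\setD_\setZ$, (ii) the constraint region and its scaling by $\alpha,\beta$, and (iii) verifying that the saddle point $(a^\star,b^\star,\gamma^\star)$ lies in the declared domain so that swapping the outer $\min_f$ with $\min_{a,b}\max_\gamma$ is legitimate. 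No new analytical idea is required beyond the OPAUC case ($\alpha=1$), which recovers Theorem 1 as a special case.
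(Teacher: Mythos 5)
Your proposal follows essentially the same route as the paper: rewrite $\mathcal{R}_{\alpha,\beta}$ as $\alpha\beta$ times a conditional expectation, expand the squared surrogate so the variance terms become minimizations over $a,b$ (attained at the conditional means in $[0,1]$) and the squared mean gap becomes a maximization over $\gamma\in[-1,1]$, and then convert the class-conditional expectations to joint expectations over $\setD_\setZ$ via the $y/p$, $(1-y)/(1-p)$ factors and the quantile indicators, with the same argument repeated for the empirical version. This matches the paper's proof (which reduces Theorem~\ref{thm:7} to the argument of Theorem~\ref{thm:instance_op}), so no further comparison is needed.
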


Thm.\ref{thm:7} provides a support to convert the pair-wise loss into instance-wise loss for $\tp$. Actually, for $\mathcal{R}_{\alpha, \beta}(f)$, we can just reformulate it as an  Average Top-$k$ (ATk) loss. Here we denote $P(f,a,\gamma,\bm{x})$ and $N(f,b,\gamma,\bm{x}')$ as $\ell_+(\bm{x})$ and $\ell_-(\bm{x}')$ for short respectively when $f,a,b,\gamma$ are not discussed. In the proof of the next theorem, we will show that $\ell_+(\bm{x})$ is a decreasing function and $\ell_-(\bm{x}')$ is an increasing function w.r.t. $f(\bm{x})$ and $f(\bm{x}')$, namely:
\begin{equation}
    \E_{\bm{x}\sim\setD_\setP}[\indicator{f(\bm{x})\leq\eta_{\alpha}(f)}\cdot\ell_+(\bm{x})]= \min_s \frac{1}{\alpha} \cdot \E_{\bm{x}\sim\setD_\setP} [\alpha s + [\ell_+(\bm{x})-s]_+],
\end{equation}
\begin{equation}
    \E_{\bm{x}'\sim\setD_\setN}[\indicator{f(\bm{x}')\geq\eta_{\beta}(f)}\cdot\ell_-(\bm{x}')]= \min_{s'}  \frac{1}{\beta} \cdot \E_{\bm{x}'\sim\setD_\setN} [\beta s' + [\ell_-(\bm{x}')-s']_+],
\end{equation}
The similar result holds for $\hat{\mathcal{R}}_{\alpha, \beta}(f, S)$. Then, we can reach to Thm.\ref{tpthm:step2}.
\begin{thm}\label{tpthm:step2}
Assuming that $f(\bm{x})\in[0,1]$, for all $\bm{x}\in\mathcal{X}$, we have the equivalent optimization for $\tp$:
\begin{equation}
\begin{aligned}
    \underset{\cmin}{\min}\ \underset{\gamma\in[-1,1]}{\max} \
\colblue{\ezdz}[F_{tp}(f,a,b,\gamma,\colblue{\efa}, \colblue{\efb}, \bm{z})]
\\ \Leftrightarrow \underset{\cmin}{\min}\
\underset{\cmax }{\max}
\ \underset{s\in\Omega_{s},s'\in\Omega_{s'}}{\min}
\ \colblue{\ezdz}[G_{tp}(f,a,b,\gamma,\bm{z},s,s')],
\label{minmaxmin_tp}
\end{aligned}
\end{equation}
\begin{equation}
    \begin{aligned}
        \underset{\cmin}{\min}\ \underset{\cmax}{\max} \
    \colbit{\hezds}[F_{tp}(f,a,b,\gamma, \colbit{\hefa}, \colbit{\hefb},\bm{z})]
    \\ \Leftrightarrow \underset{\cmin}{\min}\
    \underset{\gamma\in\Omega_{\gamma} }{\max}
    \ \underset{s\in\Omega_{s},s'\in\Omega_{s'}}{\min}
    \ \colbit{\hezds}[G_{tp}(f,a,b,\gamma,\bm{z},s,s')],
    \label{minmaxmin_tp}
    \end{aligned}
    \end{equation}
where $\Omega_{\gamma}=[\max\{b-1, -a\},1]$, $\Omega_{s}=[-4, 1]$, $\Omega_{s'}=[0,5]$ and
\begin{equation}
\begin{aligned}
G_{tp}(f,a,b,\gamma,&\bm{z},s, s')=\left(\alpha s + \left[(f(\bm{x})-a)^2-
2(1+\gamma)f(\bm{x})-s\right]_+\right)y/(\alpha p)\\
& +
\left(\beta s' +\left[(f(\bm{x})-b)^2+2(1+\gamma) f(\bm{x})-s'\right]_+\right)(1-y)/[\beta (1-p)] -\gamma^2.
\end{aligned}
\label{OPAUC_IB}
\end{equation}
\label{thm:8}
\end{thm}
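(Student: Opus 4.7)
The plan is to mirror the two-step derivation used for $\op$ (Thm.~\ref{thm:instance_op} and Thm.~\ref{thm:instance_difftopk_op}), applying the top-$k$/bottom-$k$ threshold lemma (Lem.~\ref{lem:topk}) twice, once for the positive tail and once for the negative tail. Starting from the right-hand side of Thm.~\ref{thm:7}, observe that $F_{tp}$ can be rewritten so that the indicators $\indicator{f(\bm{x})\le \eta_\alpha(f)}$ and $\indicator{f(\bm{x}')\ge \eta_\beta(f)}$ multiply exactly the per-instance quantities
\begin{equation*}
\ell_+(\bm{x}) := P(f,a,\gamma,\bm{x}) = (f(\bm{x})-a)^2 - 2(1+\gamma)f(\bm{x}),\qquad
\ell_-(\bm{x}') := N(f,b,\gamma,\bm{x}') = (f(\bm{x}')-b)^2 + 2(1+\gamma)f(\bm{x}'),
\end{equation*}
together with the class-constant term $-\gamma^2$. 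The aim is to turn these quantile-indexed expectations into two nested minimizations over scalar thresholds $s,s'$, which is exactly the content of the displayed claim.

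First, I would establish the \emph{monotonicity alignment} that is needed before Lem.~\ref{lem:topk} can be applied. A direct computation gives $\partial_{f(\bm{x})}\ell_+ = 2(f(\bm{x})-a) - 2(1+\gamma)$, which is non-positive on $f(\bm{x})\in[0,1]$ exactly when $\gamma \ge -a$; analogously, $\partial_{f(\bm{x}')}\ell_- = 2(f(\bm{x}')-b) + 2(1+\gamma) \ge 0$ on $[0,1]$ exactly when $\gamma \ge b-1$. These are precisely the two inequalities baked into $\Omega_\gamma = [\max\{b-1,-a\},1]$. Under this alignment, the bottom-$\alpha$ (top-$\beta$ resp.) quantile \emph{in score} $f$ agrees with the top-$\alpha$ (top-$\beta$ resp.) quantile \emph{in loss} $\ell_+$ ($\ell_-$ resp.), so
\begin{equation*}
\E_{\bm{x}\sim\setD_\setP}\bigl[\indicator{f(\bm{x})\le \eta_\alpha(f)}\cdot \ell_+(\bm{x})\bigr]
= \min_{s}\tfrac{1}{\alpha}\,\E_{\bm{x}\sim\setD_\setP}\bigl[\alpha s + [\ell_+(\bm{x})-s]_+\bigr],
\end{equation*}
and the parallel identity holds for $\ell_-$ with threshold $s'$. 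Plugging both identities back into $\colblue{\ezdz}[F_{tp}]$, absorbing the class priors $p$, $1-p$, and pulling the two outer $\min_s,\min_{s'}$ out (they act on disjoint summands) yields precisely $\colblue{\ezdz}[G_{tp}(f,a,b,\gamma,\bm{z},s,s')]$. The empirical equivalence follows by the same chain of reasoning with the empirical form of Lem.~\ref{lem:topk}.

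The only non-routine step is verifying that the optimal thresholds always lie in the compact intervals $\Omega_s=[-4,1]$ and $\Omega_{s'}=[0,5]$, so that the unrestricted $\min_s,\min_{s'}$ can be replaced by $\min_{s\in\Omega_s}$ and $\min_{s'\in\Omega_{s'}}$ without changing the value. I would obtain these bounds by a range analysis of $\ell_+$ and $\ell_-$: since $f(\bm{x}),a,b\in[0,1]$ and $\gamma\in[-1,1]$, a short interval calculation shows $\ell_+(\bm{x})\in[-4,1]$ and $\ell_-(\bm{x}')\in[0,5]$, hence any optimum of $s\mapsto \alpha s + \E[(\ell_+-s)_+]$ (respectively $s'\mapsto \beta s' + \E[(\ell_--s')_+]$) must itself lie in the range of the corresponding loss, as taking $s$ outside the loss support only increases the objective. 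The final assemble then matches the stated min-max-min structure, and specializing to $\alpha=1$ (so the positive-side ATk reduces to the full expectation) recovers Thm.~\ref{thm:instance_difftopk_op}, which is a useful sanity check for the derivation.
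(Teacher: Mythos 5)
Your outline captures the second half of the paper's argument correctly: the monotonicity computations for $\ell_+$ and $\ell_-$ match the paper's Prop.~2 and Prop.~1, the double application of Lem.~\ref{lem:topk} is exactly how the paper converts the two quantile-conditional expectations into the $\min_{s},\min_{s'}$ form, and your interval analysis justifying $\Omega_s=[-4,1]$ and $\Omega_{s'}=[0,5]$ is sound. However, there is a genuine gap at the very first step. The statement to be proven has $\max_{\gamma\in[-1,1]}$ on the left-hand side and $\max_{\gamma\in\Omega_\gamma}$ with $\Omega_\gamma=[\max\{b-1,-a\},1]$ on the right-hand side. You treat the restriction of $\gamma$ to $\Omega_\gamma$ as automatic because "these are precisely the two inequalities baked into $\Omega_\gamma$" — but that observation only tells you that \emph{once} $\gamma$ is restricted, the monotonicity needed for Lem.~\ref{lem:topk} holds. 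It does not tell you that the restriction is harmless. Shrinking the domain of the inner maximization can only decrease the inner max for each fixed $(f,a,b)$, so a priori the restricted min-max value could be strictly smaller than the unrestricted one; equality must be proven.

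The paper closes this gap with a dedicated argument (Lem.~\ref{lem:B} and Thm.~\ref{thm:A} in Appx.~\ref{val_constraintval}): it solves the inner truncated quadratic program $\max_{\gamma\in[c,1]}\,2\gamma\Delta\tilde E-\gamma^2$ with $c=\max\{-a,b-1\}$ in closed form, splits the outer minimization over $(a,b)$ into the cases $c\le\Delta\tilde E$, $b-1\ge\Delta\tilde E$, and $-a\ge\Delta\tilde E$, and verifies by monotonicity of the resulting convex pieces that the global minimum is still attained at $a=\tilde a^*$, $b=b^*$, $\gamma^*=\Delta\tilde E$ (which is feasible for $\Omega_\gamma$), so the constrained and unconstrained problems share the same optimum $\tilde E_{\tilde a^*}+E_{b^*}+(\Delta\tilde E)^2+2\Delta\tilde E$. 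Without this case analysis (or an equivalent argument that the unconstrained maximizer of the inner problem automatically lies in $\Omega_\gamma$ at every candidate minimizer one needs to consider), your chain of equivalences does not go through: you would only obtain a one-sided inequality between the two sides of \Eqref{minmaxmin_tp}. Everything after that point in your proposal is essentially the paper's proof.
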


Similar to $\op$, we can get a regularized non-convex strongly-concave $\tp$ optimization problem:
\begin{equation}
    \underset{\cmin}{\min}\ \underset{\gamma\in\Omega_{\gamma}}{\max} \min_{s \in \Omega_{s}, s' \in \Omega_{s'}}
    \ \colblue{\ezdz}[G_{tp}^{{\colblue{\kappa}},\colbit{\omega}}]\Leftrightarrow \underset{f,(a,b)\in[0,1]^2,s\in\Omega_{s},s'\in\Omega_{s'}}{\min}\ \underset{\cmax}{\max}
\ \colblue{\ezdz}[G_{tp}^{{\colblue{\kappa}},\colbit{\omega}}],
\label{minmax_tp}
\end{equation}

\begin{equation}
    \underset{\cmin}{\min}\ \underset{\gamma\in\Omega_{\gamma}}{\max} \min_{s \in \Omega_{s}, s'\in \Omega_{s'}}
    \ \colbit{\hezds}[{G}_{tp}^{{\colblue{\kappa}},\colbit{\omega}}]\Leftrightarrow \underset{f,(a,b)\in[0,1]^2,s\in\Omega_{s},s'\in\Omega_{s'}}{\min}\ \underset{\cmax}{\max}
\ \colbit{\hezds}[{G}_{tp}^{{\colblue{\kappa}},\colbit{\omega}}],
\end{equation}
where $G_{tp}^{{\colblue{\kappa}},\colbit{\omega}}=G_{tp}^{{\colblue{\kappa}},\colbit{\omega}}(f,a,b,\gamma,\bm{z},s,s')$.

\section{The Constrained Reformulation}

\label{val_constraintval}
In this section, we will prove that the constrained reformulation which is used in the proof of Thm.\ref{thm:instance_difftopk_op} and Thm.\ref{tpthm:step2}. Our proof can be established by Lem.\ref{lem:A}, Lem.\ref{lem:B}, and Thm.\ref{thm:A}. Throughout the proof, we will define:
\begin{equation}
\begin{aligned}
&a^* = \E_{\bm{x}\sim\setD_{\setP}}[f(\bm{x})] &:= E_+\\
&b^* = \E_{\bm{x}'\sim\setD_{\setN}}[f(\bm{x}')|f(\bm{x}')\ge \efb] &:=E_- \\
&b^* - a^* & := \Delta E\\
&\tilde{a}^* = \E_{\bm{x}\sim\setD_{\setP}}[f(\bm{x})|f(\bm{x})\le \efa] &:= \tilde{E}_+\\
&b^* - \tilde{a}^* & := \Delta \tilde{E}\\
& \E_{\bm{x}\sim\setD_\setP}[(f(\bm{x})-a)^2]& :={E}_a\\
& \E_{\bm{x}\sim\setD_\setP}[(f(\bm{x})-a)^2|
f(\bm{x})\leq\efa]& :=\tilde{E}_a\\
& \E_{\bm{x}'\sim\setD_{\setN}}[(f(\bm{x}')-b)^2|f(\bm{x}')\geq\eta_{\beta}(f)]  &:= E_b \\
&\E_{\bm{x}\sim\setD_\setP}[f(\bm{x})^2|
f(\bm{x})\leq\efa] &:=E_{+,2}\\
& \E_{\bm{x}'\sim\setD_{\setN}}[f(\bm{x}')^2|f(\bm{x}')\geq\eta_{\beta}(f)] &:=E_{-,2}
\end{aligned}
\end{equation}

\begin{lem}[The Reformulation for OPAUC]\label{lem:A}
For a fixed scoring function $f$ satisfying $f(\bm{x}) \in [0,1],~ \forall \bm{x}$, the following two problems share the same optimum:
\begin{equation}
    \begin{aligned}
    \boldsymbol{(OP1)} \min_{(a,b)\in[0,1]^2}\max_{\gamma \in [-1,1]}
\E_{\bm{x}\sim\setD_\setP}[(f(\bm{x})-a)^2] +
\E_{\bm{x}'\sim\setD_\setN}[(f(\bm{x}')-b)^2|
f(\bm{x}')\geq\efb] \\
+2\Delta E + 2\gamma \Delta E-\gamma^2,
\\
\boldsymbol{(OP2)} \min_{(a,b)\in[0,1]^2}\max_{\gamma \in [b-1,1]}
\E_{\bm{x}\sim\setD_\setP}[(f(\bm{x})-a)^2] +
\E_{\bm{x}'\sim\setD_\setN}[(f(\bm{x}')-b)^2|
f(\bm{x}')\geq\efb] \\
+2\Delta E + 2\gamma \Delta E-\gamma^2.
    \end{aligned}
\end{equation}
\end{lem}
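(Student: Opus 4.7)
The plan is to prove the equality by separately establishing $\mathrm{OP1}\ge \mathrm{OP2}$ and $\mathrm{OP1}\le \mathrm{OP2}$, using the fact that the inner maximization over $\gamma$ admits an explicit solution. Since $b\in[0,1]$ implies $[b-1,1]\subseteq[-1,1]$, the (OP2) feasible region for $\gamma$ is contained in that of (OP1). Hence, for every fixed $(a,b)$, the inner max of (OP2) is dominated by that of (OP1), and taking the outer min immediately gives $\mathrm{OP2}\le \mathrm{OP1}$. This direction is essentially free.

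The less trivial direction requires exhibiting a feasible point of (OP2) whose value matches $\mathrm{OP1}$. To this end, I would first solve the inner max of (OP1) in closed form: the $\gamma$-dependent part $2\gamma\Delta E-\gamma^2$ is strictly concave with unconstrained maximizer $\gamma^\star=\Delta E = E_--E_+$. Because $f(\bm{x})\in[0,1]$ forces $E_+,E_-\in[0,1]$, we have $\gamma^\star\in[-1,1]$, so this maximizer is interior to the (OP1) feasible set. Substituting back yields the reduced outer objective $E_a+E_b+2\Delta E+(\Delta E)^2$, in which the last two terms do not depend on $(a,b)$. Thus the outer minimization decouples into $\min_a E_a$ and $\min_b E_b$, whose minimizers are $a^\star=E_+\in[0,1]$ and $b^\star=E_-\in[0,1]$ by the standard variance-minimization identity.

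The crucial verification is that the saddle point $(a^\star,b^\star,\gamma^\star)$ identified above is also admissible for (OP2). The (OP2) lower bound on $\gamma$ at $b=b^\star$ is $b^\star-1=E_--1$, so the requirement $\gamma^\star\ge b^\star-1$ collapses to $E_--E_+\ge E_--1$, i.e., $E_+\le 1$, which holds automatically because $f\in[0,1]$. Consequently $(a^\star,b^\star,\gamma^\star)$ is feasible for (OP2) and attains the (OP1) value, which gives $\mathrm{OP2}\le \mathrm{OP1}$. Combining both directions closes the proof.

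The main (and essentially only) obstacle is to handle the coupling between the inner constraint $[b-1,1]$ and the outer variable $b$ in a clean way: one must argue that the outer min is \emph{not} affected by the constraint becoming inactive. The argument sketched above sidesteps this coupling because the optimizer $a^\star\in[0,1]$ forced by the outer min is exactly what certifies the $\gamma$-constraint of (OP2). A technical remark worth including is that $\gamma^\star$ is unique by strict concavity, so even if one first restricted to $[b-1,1]$, the same $\gamma^\star$ would arise whenever feasible, making the two inner problems coincide at the common optimum.
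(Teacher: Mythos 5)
Your proposal establishes only one of the two inequalities, and the direction you label as "less trivial" in fact re-proves the same inequality as your first step. Writing $V_1,V_2$ for the optima of (OP1), (OP2): the containment $[b-1,1]\subseteq[-1,1]$ gives $V_2\le V_1$, as you say. But exhibiting a point $(a^\star,b^\star)=(E_+,E_-)$ that is feasible for (OP2) and whose inner max equals $V_1$ again only shows $V_2\le V_1$ (the min of (OP2) is at most its value at one feasible point). What is actually needed is $V_2\ge V_1$, i.e.\ that for \emph{every} $(a,b)\in[0,1]^2$ one has $\max_{\gamma\in[b-1,1]}\phi(a,b,\gamma)\ge V_1$. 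Your feasibility check at the single point $(a^\star,b^\star)$ does not address the outer points where the truncation is active. Note also that a saddle-point-style argument does not rescue this: $\gamma^\star=\Delta E$ need not lie in $[b-1,1]$ for all $b$ (e.g.\ $\Delta E<0$ and $b$ close to $1$), so one cannot lower-bound the (OP2) inner max by $\phi(a,b,\gamma^\star)$ uniformly in $(a,b)$.

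The missing piece is exactly the case the paper treats separately: when $b-1>\Delta E$ the inner max of (OP2) is attained at the boundary $\gamma=b-1$ and equals
\begin{equation*}
E_a+E_{-,2}-2bE_-+2b-1+2(b-1)\Delta E+2\Delta E,
\end{equation*}
which is strictly smaller than the corresponding (OP1) inner max $E_a+E_b+(\Delta E)^2+2\Delta E$, so one must verify it never drops below $V_1$. The paper does this by observing that its derivative in $b$ is $2-2E_+\ge 0$ on $b\ge 1+\Delta E$, so its infimum over that region is attained at $b=1+\Delta E$, where it coincides with the unconstrained-case value and hence dominates $F_{1,0}(E_-)=V_1-E_{a^*}-2\Delta E$ (after also minimizing over $a$). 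Your remark that "the same $\gamma^\star$ would arise whenever feasible" concedes precisely the gap: the whole content of the lemma is what happens when it is \emph{not} feasible. Adding this monotonicity argument for the truncated branch would complete your proof and make it essentially identical to the paper's.
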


\begin{rem}
$\boldsymbol{(OP1)}$ and $\boldsymbol{(OP2)}$ have the equivalent formulation:
\begin{equation}
\begin{aligned}
\boldsymbol{(OP1)}\Leftrightarrow \min_{(a,b)\in[0,1]^2}&\max_{\gamma \in [-1,1]} \E_{\bm{z}\sim\setD_{\mathcal{Z}}}\Big[\Large[(f(\bm{x})-a)^2-
2(1+\gamma)f(\bm{x})\Large]y/p-\gamma^2\\
&+\Large[(f(\bm{x})-b)^2+2(1+\gamma)f(\bm{x})\Large]\cdot[(1-y)\indicator{f(\bm{x})\geq \efb}]/[(1-p)\beta]\big], \\
\boldsymbol{(OP2)}\Leftrightarrow \min_{(a,b)\in[0,1]^2}&\max_{\gamma \in [b-1,1]} \E_{\bm{z}\sim\setD_{\mathcal{Z}}}\Big[\Large[(f(\bm{x})-a)^2-
2(1+\gamma)f(\bm{x})\Large]y/p-\gamma^2\\
&+\Large[(f(\bm{x})-b)^2+2(1+\gamma)f(\bm{x})\Large]\cdot[(1-y)\indicator{f(\bm{x})\geq \efb}]/[(1-p)\beta]\big].
\end{aligned}
\end{equation}
\end{rem}

\begin{proof}
From the proof of our main paper, we know that $\boldsymbol{(OP1)}$ has a closed-form minimum:
\begin{equation}
     E_{a^*} + {E}_{b^*} + (\Delta E)^2 + 2 \Delta E.
\end{equation}
Hence, we only need to prove that $\boldsymbol{(OP2)}$ has the same minimum solution. By expanding $\boldsymbol{(OP2)}$, we have:
\begin{equation}
\min_{(a,b)\in[0,1]^2}\max_{\gamma\in[b-1,1]}
\E_{\bm{z}\sim\setD_{\mathcal{Z}}}[
F_{op}(f,a,b,\gamma,\efb,\bm{z})] = 2\Delta E + \min_{a \in [0,1]}E_a  + \min_{b\in[0,1]}\max_{\gamma \in [b-1,1]} F_0,
\end{equation}
where
\begin{equation}
    F_0:= E_b + 2\gamma\Delta E - \gamma^2.
\end{equation}
Obviously since $a$ is decoupled with $b,\gamma$, we have:
\begin{equation}
    \min_{a \in [0,1]} E_a  = E_{a^*}.
\end{equation}
Now, we solve the minimax problem of $F_0$. For any fixed feasible $b$, the inner max problem is a truncated quadratic programming, which has a unique and closed-form solution. Hence, we first solve the inner maximization problem for fixed $b$, and then represent the minimax problem as a minimization problem for $b$. Specifically, we have:
\begin{equation}
    \left(\max_{\gamma\in[b-1,1]} 2\gamma \Delta E-\gamma^2\right) =
\begin{cases}
(\Delta E)^2,  & \Delta E \ge b-1 \\
2(b-1)\Delta E - (b-1)^2, & \text{otherwise}
\end{cases}.
\end{equation}

Thus, we have:
\begin{equation}
    \min_{b\in[0,1]} \max_{\gamma\in[b-1,1]} F_0=
\min_{b \in [0,1]} F_1,
\end{equation}
where
\begin{equation}
    F_1 = \begin{cases}
&F_{1,0}(b) := E_b + (\Delta E)^2,  b-1 \le \Delta E \\
& F_{1,1}(b) := E_{-,2}- 2bE_- + 2b-1 + 2(b-1)\Delta E,  \text{otherwise}
\end{cases}.
\end{equation}
It is easy to see that both cases of $F_1$ are convex functions \wrt $b$. So, we can find the global minimum by comparing the minimum of $F_{1,0}$ and $F_{1,1}$.
\begin{itemize}
\item CASE 1: $\Delta E \ge b-1$.
    It is easy to see that $b^* = E_- \in (-\infty, 1+\Delta E]$, by taking the derivative to zero, we have, the optimum value is obtained at $b= E_-$ for $F_{1,0}$.

\item CASE 2: $\Delta E \leq b-1$.
    Again by taking the derivative, we have:
    \begin{equation}
        F_{1,1}(b)' = -2E_- + 2 + 2 \Delta E = 2-2E_+ \ge 0.
    \end{equation}
    We must have:
    \begin{equation}
        \inf_{b \geq 1 + \Delta E} F_{1,1}(b) ~\ge~ F_{1,1}(1+\Delta E) ~=~ F_{1,0}(1+\Delta E) ~\geq~ F_{1,0}(E_-) ~=~ F_{1,0}(b^*).
    \end{equation}

\item Putting all together
Hence the global minimum of $F_1$ is obtained at $b^*$ with:
\begin{equation}
    F_1(b^*) = F_{1,0}(b^*) = E_{b^*}+ (\Delta E)^2.
\end{equation}
\end{itemize}

Hence, we have $(OP2)$ has the minimum value:
\begin{equation}
     E_{a^*} + E_{b^*} + (\Delta E)^2 + 2 \Delta E.
\end{equation}
\end{proof}

Now, we use a similar trick to prove the result for TPAUC:

\begin{lem}[The Reformulation for TPAUC]\label{lem:B}
    For a fixed scoring function $f$, the following two problems shares the same optimum, given that the scoring function satisfies: $f(\bm{x}) \in [0,1],~ \forall \bm{x}$:
\begin{equation}
    \begin{aligned}
    \boldsymbol{(OP3)} \min_{f,(a,b)\in[0,1]^2}\max_{\gamma \in [-1,1]}
&\E_{\bm{x}\sim\setD_\setP}[(f(\bm{x})-a)^2|
f(\bm{x})\leq\efa]\\+
&\E_{\bm{x}'\sim\setD_\setN}[(f(\bm{x}')-b)^2|
f(\bm{x}')\geq\efb]
+2\Delta \tilde{E} + 2\gamma \Delta \tilde{E}-\gamma^2,
\end{aligned}
\end{equation}
\begin{equation}
\begin{aligned}
\boldsymbol{(OP4)} \min_{f,(a,b)\in[0,1]^2}\max_{\gamma \in [\max\{-a,b-1\},1]}
&\E_{\bm{x}\sim\setD_\setP}[(f(\bm{x})-a)^2|
f(\bm{x})\leq\efa]\\+
&\E_{\bm{x}'\sim\setD_\setN}[(f(\bm{x}')-b)^2|
f(\bm{x}')\geq\efb]
 +2\Delta \tilde{E} + 2\gamma \Delta \tilde{E}-\gamma^2.
\end{aligned}
\end{equation}
\end{lem}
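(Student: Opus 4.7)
The plan is to replicate the structure of the proof of Lem.\ref{lem:A}: compute the closed-form optimum of OP3 via a saddle triple, check that this triple remains feasible under the tighter OP4 constraint (which yields one direction), and then exhaust the active-boundary regimes to rule out any strictly better minimizer under OP4 (which gives the reverse direction).

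For the value of OP3, I would proceed exactly as in Lem.\ref{lem:A}: the inner objective
\[
F(a,b,\gamma) := \tilde{E}_a + E_b + 2\Delta\tilde{E} + 2\gamma\Delta\tilde{E} - \gamma^2
\]
is concave in $\gamma$, so its unconstrained maximizer is $\gamma^{\star}=\Delta\tilde{E}$, which already lies in $[-1,1]$ because $\tilde{E}_+, E_- \in [0,1]$. After substituting this back, the outer minimization separates in $a$ and $b$, yielding $a^{\star}=\tilde{E}_+$, $b^{\star}=E_-$, and value
\[
V_3 := \tilde{E}_{a^{\star}} + E_{b^{\star}} + 2\Delta\tilde{E} + (\Delta\tilde{E})^2.
\]
Feasibility of $(a^{\star}, b^{\star}, \gamma^{\star})$ under OP4 follows from $\gamma^{\star}-(-a^{\star})=E_-\ge 0$ and $\gamma^{\star}-(b^{\star}-1)=1-\tilde{E}_+\ge 0$, so $\gamma^{\star}\in[\max\{-a^{\star},b^{\star}-1\},1]$. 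Evaluating OP4 at $(a^{\star},b^{\star})$ then delivers $V_4 \le V_3$.

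For the reverse inequality, set $\phi(a,b):=\max_{\gamma\in[\max\{-a,b-1\},1]}F(a,b,\gamma)$ and show $\phi(a,b)\ge V_3$ for every $(a,b)\in[0,1]^2$ by a three-case analysis. (i) If $\Delta\tilde{E}\in[\max\{-a,b-1\},1]$ (interior max), a direct completion of squares gives $\phi(a,b)-V_3=(a-\tilde{E}_+)^2+(b-E_-)^2\ge 0$. (ii) If $b-1>\Delta\tilde{E}$ and $b-1\ge-a$ (upper boundary active), substituting $\gamma=b-1$ and writing $u:=1-\tilde{E}_+\ge 0$, $v:=b-E_-$ reduces $\phi(a,b)-V_3$ to $(a-\tilde{E}_+)^2+u(2v-u)$, which is nonnegative since the active-constraint condition forces $v>u\ge 0$. (iii) If $-a>\Delta\tilde{E}$ and $-a\ge b-1$ (lower boundary active), substituting $\gamma=-a$ reduces $\phi(a,b)-V_3$ to $(b-E_-)^2+E_-\bigl(2(\tilde{E}_+-a)-E_-\bigr)$, which is nonnegative because the case condition $\tilde{E}_+-a>E_-$ makes the bracketed factor exceed $E_-\ge 0$.

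Combining the three regimes yields $V_4=\min_{a,b}\phi(a,b)\ge V_3$, so $V_3=V_4$ and $(a^{\star},b^{\star},\gamma^{\star})$ is a common minimizer. The main obstacle is the double-boundary algebra in the reverse direction: whereas Lem.\ref{lem:A} handles a single active constraint $b-1\le\gamma$, here the coupled constraint $\max\{-a,b-1\}\le\gamma$ couples $a$ with $b$ through two distinct boundary regimes, each requiring a tailored completion of squares before the defining case inequality transparently delivers non-negativity.
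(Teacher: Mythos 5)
Your proposal is correct and follows essentially the same route as the paper: both first obtain the closed-form optimum of OP3 via the saddle point $(a^{\star},b^{\star},\gamma^{\star})=(\tilde{E}_+,E_-,\Delta\tilde{E})$, verify its feasibility under the tighter constraint, and then split the reverse direction into the same three regimes (interior maximizer, $b-1$ active, $-a$ active). The only difference is cosmetic — you certify non-negativity in the boundary cases by explicit completion of squares, whereas the paper argues via monotonicity of the derivative and evaluation at the boundary — and your algebra in all three cases checks out.
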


\begin{rem}
$\boldsymbol{(OP3)}$ and $\boldsymbol{(OP4)}$ have the equivalent formulation:
\begin{equation}
   \begin{aligned}
\boldsymbol{(OP3)} \Leftrightarrow
\min_{(a,b)\in[0,1]^2}&\max_{\gamma \in [-1, 1]} \E_{\bm{z}\sim \setD_\mathcal{Z}}\Big[\Large[(f(\bm{x})-a)^2-
2(1+\gamma)f(\bm{x})\Large]\cdot[y\indicator{f(\bm{x})\leq \efa}]/p-\gamma^2\\
&+\Large[(f(\bm{x})-b)^2+2(1+\gamma)f(\bm{x})\Large]\cdot[(1-y)\indicator{f(\bm{x})\geq \efb}]/[(1-p)\beta]\Big],
\end{aligned}
\end{equation}
\begin{equation}
\begin{aligned}
\boldsymbol{(OP4)} \Leftrightarrow
\min_{(a,b)\in[0,1]^2}&\max_{\gamma \in [\max\{-a,b-1\} 1]} \E_{\bm{z}\sim \setD_\mathcal{Z}}\Big[\Large[(f(\bm{x})-a)^2-
2(1+\gamma)f(\bm{x})\Large]\cdot[y\indicator{f(\bm{x})\leq \efa}]/p\\
&+\Large[(f(\bm{x})-b)^2+2(1+\gamma)f(\bm{x})\Large]\cdot[(1-y)\indicator{f(\bm{x})\geq \efb}]/[(1-p)\beta]-\gamma^2\Big].
\end{aligned}
\end{equation}
\end{rem}

\begin{proof}
Again, $\boldsymbol{(OP3)}$ has the minimum value:
\begin{equation}
    \tilde{E}_{\tilde{a}^*} + E_{b^*} + (\Delta \tilde{E})^2 + 2 \Delta \tilde{E}.
\end{equation}

We prove that $\boldsymbol{(OP4)}$ ends up with the minimum value. By expanding $\boldsymbol{(OP4)}$, we have:
\begin{equation}
    \begin{aligned}
(OP4)= 2\Delta \tilde{E} + \min_{(a,b)\in[0,1]^2}\max_{\gamma \in [\max\{-a,b-1\},1]} F_3,
\end{aligned}
\end{equation}
where
\begin{equation}
    F_3:=  \tilde{E}_a+ E_{b} + 2\Delta \tilde{E} + 2\gamma \Delta \tilde{E}-\gamma^2.
\end{equation}

For any fixed feasible $a,b$, the inner max problem is a truncated quadratic programming, which has a unique and closed-form solution. Specifically, define $c = \max\{-a,b-1\}$, we have:
\begin{equation}
   \left(\max_{\gamma \in [c,1]} 2\gamma \Delta \tilde{E}-\gamma^2\right) =
\begin{cases}
(\Delta \tilde{E})^2,  & \Delta \tilde{E} \ge c \\
2c\Delta \tilde{E} - c^2, & \text{otherwise}
\end{cases}.
\end{equation}
Thus, we have:
\begin{equation}
    \min_{(a,b)\in[0,1]^2} \max_{\gamma\in[c,1]} F_3=
\min_{(a,b) \in [0,1]} F_4,
\end{equation}
where
\begin{equation}
    F_4 = \begin{cases}
&F_{4,0}(a,b) := \tilde{E}_a + E_b + (\Delta \tilde{E})^2,  c \le \Delta \tilde{E} \\
& F_{4,1}(a,b) := \tilde{E}_a + E_{-,2} -2b E_- + 2(b-1) \Delta \tilde{E} + 2b -1 ,  b-1 \ge \Delta \tilde{E}, -a \le b-1 \\
& F_{4,2}(a,b) := E_b + E_{+,2}  -2a \tilde{E}_+ -2a \Delta \tilde{E},   -a \ge \Delta \tilde{E}, b-1 \le -a
\end{cases}.
\end{equation}
It is easy to see that both cases of $F_1$ are convex functions w.r.t $b$. So, we can find the global minimum by comparing the minimum of $F_{1,0}$ and $F_{1,1}$.
\begin{itemize}
    \item CASE 1: $\Delta \tilde{E} \ge \max\{-a,b-1\}$.

It is easy to check that when $a = \tilde{E}_+, b = E_- $, we have $-a \le \Delta \tilde{E}$ and $b-1 \le \Delta \tilde{E}$. It is easy to see that $a,b$ are decoupled in the expression of $F_{4,0}(a,b)$. By setting:
\begin{equation}
     \begin{aligned}
\frac{\partial F_{4,0}(a,b)}{\partial a} = 0, \\
\frac{\partial F_{4,0}(a,b)}{\partial b} = 0.
\end{aligned}
\end{equation}
We know that the minimum solution is attained at $a= \tilde{a}^*$, $b= b^*$. Then the minimum value of $F_{4,0}(a,b)$ at this range becomes:
\begin{equation}
    \tilde{E}_{\tilde{a}^*} + E_{b^*} + (\Delta \tilde{E})^2.
\end{equation}
Moreover, we will also use the fact that $E_{\tilde{a}^*}$ and $E_{b^*}$ are also the global minimum for $E_a$ and $E_b$, respectively.

    \item CASE 2: $b-1 \ge \Delta \tilde{E},~ -a \le b-1$.

It is easy to see that $E_a \ge E_{\tilde{a}^*}$ in this case.  According to the same derivation as in Lem.\ref{lem:A} CASE 2, we have:
\begin{equation}
    E_{-,2} -2b E_- + 2(b-1) \Delta \tilde{E} + 2b -1 \ge E_{b^*} + (\Delta \tilde{E})^2
\end{equation}
holds when $b -1 \geq \Delta \tilde{E}$. Recall that CASE 2 is include in the condition $b -1 \geq\Delta \tilde{E}$. So, under the condition of CASE 2:
\begin{equation}
    F_{4,1}(a,b) \ge \tilde{E}_{\tilde{a}^*} + E_{b^*} + (\Delta \tilde{E})^2.
\end{equation}

    \item CASE 3: $-a\geq \Delta \tilde{E}, b-1\leq -a$.

In this case, we have $E_b \ge E_{b^*}$. It remains to check:
\begin{equation}
    g(a) =  -2a \tilde{E}_+ -2a \Delta \tilde{E}.
\end{equation}
By taking derivative, we have:
\begin{equation}
    g'(a) =  -2\tilde{E}_+ - 2\Delta \tilde{E} = -2\tilde{E}_- \le 0.
\end{equation}
Similar as the proof of CASE 2, when $-a \ge \Delta \tilde{E}$, we have:
\begin{equation}
    g(a) \ge \tilde{E}_{\tilde{a}^*} + (\Delta \tilde{E})^2,
\end{equation}
 and thus
\begin{equation}
    F_{4,2}(a,b) \ge \tilde{E}_{\tilde{a}^*} + E_{b^*} + (\Delta \tilde{E})^2
\end{equation}
holds. Since the condition of CASE 3 is included in the set $-a \ge \Delta \tilde{E}$:
\begin{equation}
    F_{4,2}(a,b) \ge \tilde{E}_{\tilde{a}^*} + E_{b^*} + (\Delta \tilde{E})^2
\end{equation}
holds under the condition of CASE 3.
    \item Putting altogether:
The minimum value of $\boldsymbol{(OP4)}$ reads:
\begin{equation}
    \tilde{E}_{\tilde{a}^*} + E_{b^*} + (\Delta \tilde{E})^2 + 2\Delta \tilde{E},
\end{equation}
which is the same as $\boldsymbol{(OP3)}$.
\end{itemize}
\end{proof}

Finally, since for each fixed $f$ $(OP3) = (OP4)$, and $(OP1) = (OP2)$. We can then claim the following theorem.

\begin{thm}[Constrainted Reformulation]\label{thm:A}
\label{Constrainted_Reformulation}
\begin{equation}
    \min_f (OP1) = \min_f (OP2), ~~ \min_f (OP3) = \min_f (OP4)
\end{equation}
\end{thm}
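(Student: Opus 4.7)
The plan is to reduce the theorem to a pointwise statement: if for every fixed scoring function $f$ the constrained and unconstrained inner problems have the same optimal value, then their minima over $f$ must coincide. So the first move is to observe that both sides of each equality are of the form $\min_f \Phi_i(f)$, where $\Phi_i(f)$ denotes the result of the inner $\min_{a,b}\max_\gamma$ (or $\max_\gamma \min_{a,b}$ depending on the ordering, but the outer $\min_f$ is kept intact). If one can establish $\Phi_1(f) = \Phi_2(f)$ and $\Phi_3(f) = \Phi_4(f)$ for every admissible $f$, then taking $\min_f$ on both sides immediately yields the claim.

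The heavy lifting for the pointwise identities is already done in Lem.\ref{lem:A} and Lem.\ref{lem:B}. Specifically, Lem.\ref{lem:A} shows that for each fixed $f$ both $(OP1)$ and $(OP2)$ attain the closed-form optimum $E_{a^*} + E_{b^*} + (\Delta E)^2 + 2\Delta E$, and Lem.\ref{lem:B} shows that for each fixed $f$ both $(OP3)$ and $(OP4)$ attain $\tilde{E}_{\tilde{a}^*} + E_{b^*} + (\Delta\tilde E)^2 + 2\Delta\tilde E$. Hence the second step of my plan is simply to invoke these two lemmas to conclude $\Phi_1(f)=\Phi_2(f)$ and $\Phi_3(f)=\Phi_4(f)$ pointwise.

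The final step is the trivial monotonicity of $\min_f$ under pointwise equality: if $\Phi_i(f)=\Phi_j(f)$ for all $f$, then $\inf_f \Phi_i(f) = \inf_f \Phi_j(f)$, and since the infima are attained on the same feasible set of scoring functions, these are equal minima. Combining the two gives exactly the two assertions of Thm.\ref{thm:A}.

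There is effectively no obstacle at this stage, because all the nontrivial work, namely verifying that shrinking the $\gamma$-domain from $[-1,1]$ to $[b-1,1]$ (resp.\ to $[\max\{-a,b-1\},1]$) does not change the saddle value, is exactly the content of Lem.\ref{lem:A} and Lem.\ref{lem:B}. The only minor care needed is to make sure the feasible set for $f$ is identical in both formulations (which it is, since only the $\gamma$-domain differs between $(OP1)$ and $(OP2)$, and likewise between $(OP3)$ and $(OP4)$), so that taking the outer $\min_f$ commutes with the pointwise equality. This justifies presenting the theorem as an immediate corollary of the two lemmas rather than as a fresh proof.
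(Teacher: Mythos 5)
Your proposal is correct and matches the paper's own argument exactly: the paper likewise treats Thm.~\ref{thm:A} as an immediate consequence of Lem.~\ref{lem:A} and Lem.~\ref{lem:B}, noting that for each fixed $f$ the constrained and unconstrained inner problems share the same optimal value, so taking $\min_f$ on both sides yields the claim. The only (harmless) extra remark in your write-up is the comment about infima being attained, which is not needed for equality of the infima and is not invoked by the paper either.
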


\begin{rem}
Since the calculation is irrelevant to the definition of the expectation, the replace the population-level expectation with the empirical expectation over the training data.
\end{rem}

\begin{rem}
By applying it on Thm.\ref{thm:instance_op}, we can get the reformulation result in Thm.\ref{thm:instance_difftopk_op}
\item  for $\op$
\begin{equation}
    \min_{f,(a,b)\in[0,1]^2} \max_{\gamma\in [b-1,1]}\min_{s'\in\Omega_{s'}}\E_{\bm{z}\sim \setD_\mathcal{Z}} [G_{op}(f,a,b,\gamma,\bm{z},s')]
\end{equation}
where
\begin{equation}
    \begin{aligned}
    G_{op}(f,a,b,\gamma,\bm{z},s')&=[(f(\bm{x})-a)^2-
    2(1+\gamma)f(\bm{x})]y/p-\gamma^2\\
    & +
    \left(\beta s' +\left[(f(\bm{x})-b)^2+2(1+\gamma) f(\bm{x})-s'\right]_+\right)(1-y)/[\beta (1-p)].
    \end{aligned}
\end{equation}
\item for $\tp$
\begin{equation}
    \min_{f,(a,b)\in[0,1]^2 } \max_{\gamma\in [\max\{-a,b-1\}, 1]}\min_{s\in\Omega_{s},s'\in\Omega_{s'}}\E_{\bm{z}\sim \setD_\mathcal{Z}} [G_{tp}(f,a,b,\gamma,\bm{z},s,s')]
\end{equation}
where
\begin{equation}
    \begin{aligned}
    G_{tp}(f,a,b,\gamma,\bm{z},s,s')&=\left(\alpha s + \left[(f(\bm{x})-a)^2-
    2(1+\gamma)f(\bm{x})-s\right]_+\right)y/(\alpha p) -\gamma^2\\
    & +
    \left(\beta s' + \left[(f(\bm{x})-b)^2+2(1+\gamma) f(\bm{x})-s'\right]_+\right)(1-y)/[\beta (1-p)].
    \end{aligned}
\end{equation}
\end{rem}

\section{Proofs for Section 4}
\label{sec:proofs_section4}

\subsection{Proof for Lem.~\ref{lem:topk}}\label{sec:proof_topk}
\begin{proof}
For the summation case, please see Lem.~1 in \cite{fan2017learning} for the proof.  We only prove the expectation case here. Specifically, calculating the sub-differential of the term $\E_{x}[\alpha s+[x-s]_+]$ \wrt $s$, we get:
\begin{equation}
\begin{aligned}
& \alpha-\E_x[\indicator{x\geq s}] \in \partial \left(\E_{x}[\alpha s +[x-s]_+]\right)\\
\end{aligned}
\end{equation}
Since $s$ is convex for $\alpha s +\left[x-s\right]_+$, we can get the optimal $s$ by letting it be 0:
\begin{equation}
    \E_x[\indicator{x\geq s}]=\alpha
\end{equation}
It is clear that optimal $s$ achieves top-$\alpha$ quantile.
\end{proof}

\subsection{Proofs for OPAUC}
\subsubsection{Step 1 (\thmref{thm:instance_op})} \label{sec:proof_instance_op}
\textbf{Restate of Theorem \ref{thm:instance_op}.}
\emph{Assuming that $f(\bm{x})\in[0,1]$, $\forall \bm{x}\in\mathcal{X}$, $F_{op}(f,a,b,\gamma, t, \bm{z})$ is defined as:}

\begin{equation}
    \begin{aligned}
F_{op}(f,a,b,\gamma,t, \bm{z})=&\Large[(f(\bm{x})-a)^2-
2(1+\gamma)f(\bm{x})\Large]y/p-\gamma^2\\
&\Large[(f(\bm{x})-b)^2+2(1+\gamma)f(\bm{x})\Large](1-y)\indicator{f(\bm{x})\geq t}/(1-p)/\beta,
\end{aligned}
\label{eq:fop}
\end{equation}

\emph{where $y=1$ for positive instances, $y=0$ for negative instances and we have the following conclusions:
\begin{enumerate}[leftmargin=20pt]
    \item[(a)] (\textbf{Population Version}.) We have:
    \begin{equation}
        \underset{f}{\min} \ {\mathcal{R}}_{\beta}(f) \Leftrightarrow \underset{\cmin}{\min}\ \underset{\gamma\in[-1,1]}{\max} \
    \colblue{\ezdz}
    \left[F_{op}(f,a,b,\gamma, \colblue{\efb},\bm{z})\right],
    \label{eq:minmaxopauc1}
    \end{equation}
     where  $\colblue{\efb}=\arg\min_{\colblue{\eta_{\beta}} \in\mathbb{R}}\E_{\bm{x}'\sim \setD_{\setN}}[\indicator{f(\bm{x}')\geq\ \colblue{\eta_{\beta}}}=\beta]$.
    \item[(b)] (\textbf{Empirical Version}.) Moreover, given a training dataset $S$ with sample size $n$, denote:
    \begin{equation*}
    \colbit{\underset{\bm{z} \sim S}{\ehat}}[F_{op}(f,a,b,\gamma,\colbit{\hefb}, \bm{z})] = \frac{1}{n}\sum_{i=1}^n F_{op}(f,a,b,\gamma,{\colbit{\hefb}}, \bm{z}_i),
    \end{equation*}
    where $\colbit{\hefb}$ is the empirical quantile of the negative instances in $S$. We have:
    \begin{equation}
        \underset{f}{\min} \ \hat{\mathcal{R}}_{\beta}(f, S) \Leftrightarrow \underset{\cmin}{\min}\ \underset{\cmax}{\max} \
    \colbit{\hezds}
    \left[F_{op}(f,a,b,\gamma, \colbit{\hefb}, \bm{z} ) \right],
    \end{equation}
\end{enumerate}
}

\begin{proof}
Firstly, we give a reformulation of $\op$:
\begin{equation}
\begin{aligned}
    \underset{f}{\min} \ {\mathcal{R}}_{\beta}(f) &=\underset{f}{\min} \ \E_{\bm{x} \sim \setD_\setP, \bm{x}'\sim \setD_\setN} \left[\indicator{f(\bm{x}') \ge \efb} \cdot \ell(f(\bm{x})- f(\bm{x}'))\right]\\
    &=\underset{f}{\min}\ \E_{\bm{x} \sim \setD_\setP, \bm{x}'\sim \setD_\setN} \left[\ell(f(\bm{x})- f(\bm{x}'))|f(\bm{x}') \ge \efb\right]\cdot \underset{\bm{x}'\sim\setD_{\setN}}{\mathbb{P}}[f(\bm{x}') \ge \efb]\\
    &=\underset{f}{\min}\ \E_{\bm{x} \sim \setD_\setP, \bm{x}'\sim \setD_\setN} \left[\ell(f(\bm{x})- f(\bm{x}'))|f(\bm{x}') \ge \efb\right]\cdot\beta\\
    &=\beta\cdot\underset{f}{\min}\ \E_{\bm{x} \sim \setD_\setP, \bm{x}'\sim \setD_\setN} \left[\ell(f(\bm{x})- f(\bm{x}'))|f(\bm{x}') \ge \efb\right].
\end{aligned}
\end{equation}

Applying the surrogate loss $(1-x)^2$ to the estimator of $\op$, we have:
\begin{equation}
\begin{aligned}
&\underset{\bm{x},\bm{x}'\sim\setD_{\setP}, \setD_{\setN}}
{\E}[(1-(f(\bm{x})-f(\bm{x}')))^2|f(\bm{x}')\geq\efb]\\
&=1+\exdp[f(\bm{x})^2]
+\exdn[f(\bm{x}')^2|f(\bm{x}')\geq
\eta_{\beta}(f)] -2\exdp[f(\bm{x})]\\
& \qquad +2\exdn[f(\bm{x}')|f(\bm{x}')\geq
\eta_{\beta}(f)] -2\underset{\bm{x}\sim \setD_\setP}{\E}[f(\bm{x})]
\underset{\bm{x}'\sim \setD_\setN}{\E}[f(\bm{x}')|f(\bm{x}')\geq
\eta_{\beta}(f)]\\
&=1+\underset{\bm{x}\sim \setD_\setP}{\E}[f(\bm{x})^2]
-\underset{\bm{x}\sim \setD_\setP}{\E}[f(\bm{x})]^2+\underset{\bm{x}'\sim \setD_\setN}{\E}[f(\bm{x}')^2|f(\bm{x}') \ge \efb] \\
& \qquad - \underset{\bm{x}'\sim \setD_\setN}{\E}[f(\bm{x}')^2|f(\bm{x}') \ge \efb]^2-2\underset{\bm{x}\sim \setD_\setP}{\E}[f(\bm{x})]
+2\underset{\bm{x}'\sim \setD_\setN}{\E}[f(\bm{x}')|f(\bm{x}') \ge \efb] \\
& \qquad + (\underset{\bm{x}\sim \setD_\setP}{\E}[f(\bm{x})]
-\underset{\bm{x}'\sim \setD_\setN}{\E}[f(\bm{x}')|f(\bm{x}') \ge \efb])^2.
\end{aligned}
\end{equation}
Note that
\begin{equation}
   \underset{\bm{x}\sim \setD_\setP}{\E}[f(\bm{x})^2]-\underset{\bm{x}\sim \setD_\setP}{\E}[f(\bm{x})]^2=\min_{a\in[0,1]}\underset{\bm{x}\sim \setD_\setP}{\E}[(f(\bm{x})-a)^2],
\end{equation}
where the minimization is achieved by:
\begin{equation}
    a^* = \underset{\bm{x}\sim \setD_\setP}{\E}[f(\bm{x})],
\end{equation}
where $a^*\in[0, 1]$. Likewise,
\begin{equation}
\begin{aligned}
    &\underset{\bm{x}'\sim \setD_\setN}{\E}[f(\bm{x}')^2|f(\bm{x}') \ge \efb] - \underset{\bm{x}'\sim \setD_\setN}{\E}[f(\bm{x}')|f(\bm{x}') \ge \efb]^2=\\
    &\qquad \qquad \qquad \min_{b\in[0,1]}\underset{\bm{x}'\sim \setD_\setN}{\E}[(f(\bm{x}')-b)^2|f(\bm{x}') \ge \efb],
\end{aligned}
\end{equation}
where the minimization is get by:
\begin{equation}
    b^* = \underset{\bm{x}'\sim \setD_\setN}{\E}[f(\bm{x}')|f(\bm{x}') \ge \efb].
\end{equation}
where $b^*\in[0, 1]$. It's notable that
\begin{equation}
\begin{aligned}
    &\left(\underset{\bm{x}'\sim \setD_\setN}{\E}[f(\bm{x}')|f(\bm{x}') \ge \efb]-\underset{\bm{x}\sim \setD_\setP}{\E}[f(\bm{x})]\right)^2=\\
&\max_{\gamma}\left\{2\gamma\left(\underset{\bm{x}'\sim \setD_\setN}{\E}[f(\bm{x}')|f(\bm{x}') \ge \efb]-\underset{\bm{x}\sim \setD_\setP}{\E}[f(\bm{x})]\right)-\gamma^2\right\},
\end{aligned}
\end{equation}
where the maximization can be obtained by:
\begin{equation}
    \gamma^* = \underset{\bm{x}'\sim \setD_\setN}{\E}[f(\bm{x}')|f(\bm{x}') \ge \efb]-\underset{\bm{x}\sim \setD_\setP}{\E}[f(\bm{x})].
\end{equation}
It's clear that $\gamma^*=b^*-a^*$. Then we can constraint $\gamma$ with range $[-1, 1]$ and get the equivalent optimization formulation:
\begin{equation}
\begin{aligned}
    &\underset{\bm{x},\bm{x}'\sim\setD_{\setP}, \setD_{\setN}}
{\E}[(1-(f(\bm{x})-f(\bm{x}')))^2|f(\bm{x}')\geq\efb]\Leftrightarrow \\
&\min_{(a,b)\in[0,1]^2}\max_{\gamma\in[-1,1]} \exdp [(f(\bm{x})-a)^2-2(\gamma+1)f(\bm{x})]  -\gamma^2 \\
&+ \exdn [(f(\bm{x}')-b)^2+2(\gamma+1)f(\bm{x}')|f(\bm{x}')\geq \eta_{\beta}(f)].
\end{aligned}
\end{equation}
Taking expectation $\textit{w.r.t.}$, $\bm{z}$, we have:
\begin{equation}
    \begin{aligned}
\underset{f}{\min} \ \mathcal{R}_{\beta}(f)
\Leftrightarrow \underset{f,a,b}{\min}\ \underset{\gamma\in[-1, 1]}{\max} \
\colblue{\ezdz}
[F_{op}(f,a,b,\gamma,\eta_{\beta}(f),\bm{z})],
\end{aligned}
\end{equation}
and the instance-wise function $F_{op}(f,a,b,\gamma,\eta_{\beta}(f),\bm{z})$ is defined by:

\begin{equation}
    \begin{aligned}
F_{op}(f,a,b,\gamma,t, \bm{z})=&\Large[(f(\bm{x})-a)^2-
2(1+\gamma)f(\bm{x})\Large]y/p-\gamma^2\\
&\Large[(f(\bm{x})-b)^2+2(1+\gamma)f(\bm{x})\Large](1-y)\indicator{f(\bm{x})\geq t}/(1-p)/\beta,
\end{aligned}
\end{equation}

where $p=\Pr[y=1]$. The same result holds for empirical version $\colbit{\underset{\bm{z}\sim S}{\hat{\E}}}
[F_{op}(f,a,b,\gamma,\hat{\eta}_{\beta}(f),\bm{z})]$.
\end{proof}

\subsubsection{Step 2 (\thmref{thm:instance_difftopk_op})}\label{sec:proof_instance_difftopk_op}
First we need the following proposition to complete the proof in this subsection.
\begin{prop}\label{prop1}
If $\gamma \in \Omega_{\gamma} =[b-1, 1]$, $\ell_{-}(\bm{x}')=(f(\bm{x}')-b)^2+2(1+\gamma) f(\bm{x}')$ is an increasing function \wrt $f(\bm{x}')$ when $\bm{x}'\sim\setD_\setN$ and $f(\bm{x}')\in[0,1]$.
\end{prop}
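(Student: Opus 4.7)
The plan is to prove monotonicity directly by differentiating $\ell_-$ with respect to $u := f(\bm{x}')$ and showing that the derivative stays non-negative on the admissible range $u \in [0,1]$ whenever $\gamma \in [b-1, 1]$. Since $\ell_-(u) = (u-b)^2 + 2(1+\gamma)u$ is a smooth univariate quadratic in $u$, this reduces the claim to an elementary inequality, and no deeper machinery is required.

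Concretely, I would first compute
\[
\frac{\mathrm{d}\ell_-}{\mathrm{d}u} = 2(u - b) + 2(1+\gamma) = 2\bigl(u + (1 + \gamma - b)\bigr).
\]
Then I would bound this from below using the two hypotheses on the range of $u$ and $\gamma$: the constraint $\gamma \geq b-1$ gives $1+\gamma - b \geq 0$, and $u \geq 0$ pushes the bracket even further up. Combining these yields $\ell_-'(u) \geq 0$ throughout $[0,1]$, which is precisely the definition of $\ell_-$ being (weakly) increasing in $f(\bm{x}')$.

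Since the argument is a one-line derivative check once the domain constraints are unpacked, there is no real obstacle; the only thing to be careful about is to make explicit \emph{why} the lower endpoint of $\Omega_\gamma$ is exactly $b-1$ — namely, because the worst case in the bound $u + 1 + \gamma - b \geq 0$ occurs at $u = 0$, corresponding to the smallest admissible negative score, and this is precisely the tightest constraint that guarantees monotonicity over the whole interval. This observation also justifies, a posteriori, the choice $\Omega_\gamma = [b-1, 1]$ used in the reformulation of Thm.~\ref{thm:instance_difftopk_op}, which in turn allows the top-$k$ selection over $f(\bm{x}')$ to be replaced by the sorting-free threshold operator applied to $\ell_-(\bm{x}')$ via Lem.~\ref{lem:topk}.
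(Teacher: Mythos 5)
Your proof is correct and follows exactly the paper's own argument: differentiate $\ell_-$ with respect to $u=f(\bm{x}')$ to get $2(u-b+1+\gamma)$, then use $\gamma\ge b-1$ and $u\ge 0$ to conclude the derivative is nonnegative on $[0,1]$. Your additional remark that the worst case $u=0$ explains why the lower endpoint of $\Omega_\gamma$ is exactly $b-1$ is a nice clarification but does not change the substance.
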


\begin{proof}
We have:
\begin{equation}
    \frac{\partial \ell_{-}(\bm{x}')}{\partial f(\bm{x}')}=2(f(\bm{x}')-b+1+\gamma).
\end{equation}
Assuming that $f(\bm{x}')\in[0,1]$, then the feasible solution of $b$ is nonnegative. When $\gamma \in [b-1,1]$, the negative loss function's partial derivative $\partial\ell_{-}(\bm{x}')/\partial f(\bm{x}')\geq 0$. Then $\ell_{-}(\bm{x}')$ is an increasing function \wrt $f(\bm{x}')$.
\end{proof}

\begin{rem}
For negative instances, if the loss function is an increasing function \wrt the score $f(\bm{x}')$, then the top-ranked losses are equivalent to the losses of top-ranked instances.
\end{rem}
\noindent \textbf{Restate of Theorem \ref{thm:instance_difftopk_op}.}
\emph{Assuming that $f(\bm{x})\in[0,1]$, for all $\bm{x}\in\mathcal{X}$, we have the equivalent optimization for $\op$: }
\begin{equation}
\begin{aligned}
    \underset{\cmin}{\min}\ \underset{\gamma \in [-1,1]}{\max} \
\colblue{\ezdz}[F_{op}(f,a,b,\gamma,\colblue{\efb}, \bm{z})]
&\Leftrightarrow \underset{\cmin}{\min}\
\underset{\cmax }{\max}
\ \underset{s'\in\Omega_{s'}}{\min}
\ \colblue{\ezdz}[G_{op}(f,a,b,\gamma,\bm{z},s')],
\end{aligned}
\end{equation}
\begin{equation}
    \begin{aligned}
        \underset{\cmin}{\min}\ \underset{\gamma\in [-1,1]}{\max} \
    \colbit{\hezds}[F_{op}(f,a,b,\gamma, \colbit{\hefb},\bm{z})]
    &\Leftrightarrow \underset{\cmin}{\min}\
    \underset{\gamma\in\Omega_{\gamma} }{\max}
    \ \underset{s'\in\Omega_{s'}}{\min}
    \ \colbit{\hezds}[G_{op}(f,a,b,\gamma,\bm{z},s')],
\end{aligned}
\end{equation}

\emph{where $\Omega_{\gamma}=[b-1,1]$, $\Omega_{s'}=[0,5]$ and}

\begin{equation}
    \begin{aligned}
G_{op}(f,a,b,\gamma,\bm{z},s')&=[(f(\bm{x})-a)^2-
2(1+\gamma)f(\bm{x})]y/p-\gamma^2\\
& +
\left(\beta s' +\left[(f(\bm{x})-b)^2+2(1+\gamma) f(\bm{x})-s'\right]_+\right)(1-y)/[\beta(1-p)].
\end{aligned}
\end{equation}

\begin{proof}


According to the Thm.\ref{Constrainted_Reformulation} in Appx.\ref{val_constraintval},
when we constraint $\gamma$ in range $\Omega_{\gamma}=[b-1, 1]$, we have:

\begin{equation}
    \min_{f,(a,b)\in[0,1]^2}\max_{\gamma\in [-1,1]}\colblue{\E_{z\sim\setD_{\mathcal{Z}}}} [F_{op}]
    \Leftrightarrow
    \min_{f,(a,b)\in[0,1]^2}\max_{\gamma\in [b-1,1]}\colblue{\E_{z\sim\setD_{\mathcal{Z}}}} [F_{op}]
\end{equation}

According to Thm.\ref{thm:instance_op}, we have:

\begin{equation}
    \begin{aligned}
\underset{\bm{z}\sim \setD_\mathcal{Z}}{\E}
[F_{op}(f,a,b,\gamma,\colblue{\eta_{\beta}(f)},&\bm{z})]\Leftrightarrow
\underset{\bm{x}\sim \setD_\setP}{\E}[(f(\bm{x})-a)^2-
2(1+\gamma)f(\bm{x})] -\gamma^2\\
& +\underset{\bm{x}'\sim \setD_\setN}{\E}
\left([(f(\bm{x}')-b)^2+2(1+\gamma)f(\bm{x}')]\cdot \indicator{f(\bm{x})\geq \eta_{\beta}(f)}\right)/\beta.
\end{aligned}
\end{equation}

We denote $\ell_-(\bm{x}')=(f(\bm{x}')-b)^2+2(1+\gamma)f(\bm{x}')$. Prop.\ref{prop1} ensures that the negative loss function $\ell_-(\bm{x}')$ is an increasing function when $\gamma\in[b-1,1]$. Then we can get:

\begin{equation}
    \E_{\bm{x}'\sim\setD_\setN}[\indicator{f(\bm{x}')\geq\eta_{\beta}(f)}\cdot\ell_-(\bm{x}')]= \min_s \frac{1}{\beta} \E_{\bm{x}'\sim\setD_\setN} [\beta s + [\ell_-(\bm{x}')-s]_+],
\end{equation}

Applying Lem.\ref{lem:topk} to the negative loss function, we have:
\begin{equation}
    \begin{aligned}
\colblue{\ezdz}
[F_{op}(f,a,b,\gamma,\colblue{\eta_{\beta}(f)},\bm{z})]&= \underset{s'}{\min}
\underset{\bm{x}\sim \setD_\setP}{\E}[(f(\bm{x})-a)^2-
2(1+\gamma)f(\bm{x})] -\gamma^2\\
&\quad +\underset{\bm{x}'\sim \setD_\setN}{\E}
\left(\beta s' +\left[(f(\bm{x}')-b)^2+2(1+\gamma)f(\bm{x}')-s'\right]_+\right)/\beta.
\end{aligned}
\end{equation}

Then, we get:
\begin{equation}
\begin{aligned}
\colblue{\ezdz}[F_{op}(f,a,b,\gamma,\colblue{\eta_{\beta}(f)},\bm{z})]
&=
\ \underset{s'\in\Omega_{s'}}{\min}
\ \colblue{\ezdz}[G_{op}(f,a,b,\gamma,\bm{z},s')],
\end{aligned}
\end{equation}
where
\begin{equation}
    \begin{aligned}
G_{op}(f,a,b,\gamma,\bm{z},s')&=[(f(\bm{x})-a)^2-
2(1+\gamma)f(\bm{x})]y/p -\gamma^2\\
& +
\left(\beta s' +\left[(f(\bm{x})-b)^2+2(1+\gamma) f(\bm{x})-s'\right]_+\right)(1-y)/[\beta(1-p)].
\end{aligned}
\end{equation}

We have the equivalent optimization problems for $\op$:
\begin{equation}
\begin{aligned}
    &\underset{\cmin}{\min}\ \underset{\gamma\in[-1,1]}{\max} \
\colblue{\ezdz}[F_{op}(f,a,b,\gamma,\colblue{\efb}, \bm{z})]
\Leftrightarrow \\
&\underset{\cmin}{\min}\
\underset{\cmax }{\max}
\ \underset{s'\in\Omega_{s'}}{\min}
\ \colblue{\ezdz}[G_{op}(f,a,b,\gamma,\bm{z},s')],
\end{aligned}
\end{equation}
where $\Omega_{\gamma}=[b-1,1]$, $\Omega_{s'}=[0,5]$, $p=\mathbb{P}[y=1]$. The same result holds for the empirical version $\colbit{\underset{\bm{z}\sim S}{\hat{\E}}}[G_{op}(f,a,b,\gamma,\bm{z},s')]$.

\end{proof}

\subsubsection{Step 3 (\thmref{thm:bias_converge_op})}
\label{sec:uniform_convergence}
\begin{proof}
According to the definition:
\begin{equation*}
\begin{aligned}
\Delta_\kappa^{op} =&\bigg|
\min_{\cmin} \max_{\cmax} \min_{s^{\prime}\in\Omega_{s'}}
\ezdz \Big[
G_{op}^{\kappa}\left(f, a, b, \gamma, \bm{z}, s^{\prime}\right)
\Big]\\
&~-
\min_{\cmin} \max_{\cmax} \min_{s^{\prime}\in\Omega_{s'}}
\ezdz \Big[
G_{op}(f,a, b, \gamma, \bm{z}, s^{\prime})\Big]
\bigg|,
\end{aligned}
\end{equation*}
first we have:
\begin{equation}
\begin{aligned}
& \limsup_{\kappa \rightarrow +\infty}\Delta_\kappa^{op} \leq & \underbrace{
\limsup_{\kappa \rightarrow +\infty}
\sup_{\cmin, \cmax, s^{\prime}\in \Omega_{s'},
\bm{z}\sim\setD_{\setZ}}
\left|
\frac{\log(1+\exp(\kappa\cdot g))}{\kappa} -
[g]_+
\right|}_{(a)},
\end{aligned}
\end{equation}
where $g=(f(\bm{x})-b)^2+2(1+\gamma)f(\bm{x})-s'$ and $[x]_+=\max\{x,0\}$. Since $g \in[-5,5]$ in the feasible set, we have:
\begin{equation}
(a)\le \limsup_{\kappa \rightarrow +\infty}
\sup_{x\in[-5,5]}
\left|
\frac{\log(1+\exp(\kappa\cdot x))}{\kappa} -
[x]_+
\right|.
\end{equation}

Next we prove that
\begin{equation}
\underset{\kappa\to\infty}{\limsup}
\underset{x \in [-5,5]}{\sup}
\left[\left|
\frac{\log(1+\exp(\kappa\cdot x))}{\kappa} -
[x]_+
\right|\right] \le 0.
\end{equation}

For the sake of simplicity, we denote:
\begin{equation}
\ell(x) = \left|
\frac{\log(1+\exp(\kappa\cdot x))}{\kappa} - [x]_+
\right|.
\end{equation}
It is easy to see that, when $x<0$, we have:
\begin{equation}
\nabla \ell(x) = \nabla \left(\frac{\log(1+\exp(\kappa \cdot x))}{\kappa}\right) \ge 0.
\end{equation}
When $x>0$, we have:
\begin{equation}
\nabla \ell(x)= \nabla \left(\frac{\log(1+\exp(\kappa \cdot x))}{\kappa}-x\right) \le 0.
\end{equation}
Hence, the supremum must be attained at $x=0$. We thus have:
\begin{equation}
    (a)\le \limsup_{\kappa\rightarrow +\infty} \frac{\log(1)}{\kappa}= 0 .
\end{equation}

Obviously, the absolute value ensures that:
\begin{equation}
    \liminf_{\kappa\rightarrow +\infty}\Delta_\kappa^{op}\ge 0.
\end{equation}
Then the result follows from the fact:
\begin{equation}
    0 \le \liminf_{\kappa \rightarrow +\infty} \Delta_\kappa^{op} \le \limsup_{\kappa \rightarrow +\infty} \Delta_\kappa^{op} \le 0.
\end{equation}

Moreover, from the proof above, we also obtain a convergence rate:
\begin{equation}
    \Delta_\kappa^{op} = O(1/\kappa).
\end{equation}

\end{proof}

\subsection{Proofs for TPAUC}

\subsubsection{Step 1}
\textbf{Restate of Theorem \ref{thm:7}.}
\emph{Assuming that $f(\bm{x})\in[0,1]$, $\forall \bm{x}\in\mathcal{X}$, $F_{tp}(f,a,b,\gamma, t, t', \bm{z})$ is defined as:}

\begin{equation}
    \begin{aligned}
F_{tp}(f,a,b,&\gamma,t, t', \bm{z})=(f(\bm{x})-a)^2y\indicator{f(\bm{x})\leq t}/(\alpha p)+
(f(\bm{x})-b)^2(1-y)\indicator{f(\bm{x}')\geq t'}/[\beta(1-p)]\\
&+2(1+\gamma)f(\bm{x})(1-y)\indicator{f(\bm{x}')\geq t'}/[\beta(1-p)] -
2(1+\gamma)f(\bm{x})y\indicator{f(\bm{x})\leq t}/(\alpha p) -\gamma^2,
\end{aligned}
\label{eq:fop}
\end{equation}

\emph{where $y=1$ for positive instances, $y=0$ for negative instances and we have the following conclusions:
\begin{enumerate}[leftmargin=20pt]
    \item[(a)] (\textbf{Population Version}.) We have:
    \begin{equation}
        \underset{f}{\min} \ {\mathcal{R}}_{\alpha,\beta}(f) \Leftrightarrow \underset{\cmin}{\min}\ \underset{\gamma\in[-1,1]}{\max} \
    \colblue{\ezdz}
    \left[F_{tp}(f,a,b,\gamma, \colblue{\efa},\colblue{\efb},\bm{z})\right],
    \label{eq:minmaxtpauc1}
    \end{equation}
    where $\colblue{\efa}=\arg\min_{\colblue{\eta_{\alpha}} \in\mathbb{R}}\E_{\bm{x}\sim \setD_{\setP}}[\indicator{f(\bm{x})\leq\ \colblue{\eta_{\alpha}}}=\alpha]$ and $\colblue{\efb}=\arg\min_{\colblue{\eta_{\beta}} \in\mathbb{R}}\E_{\bm{x}'\sim \setD_{\setN}}[\indicator{f(\bm{x}')\geq\ \colblue{\eta_{\beta}}}=\beta]$.
    \item[(b)] (\textbf{Empirical Version}.) Moreover, given a training dataset $S$ with sample size $n$, denote:
    \begin{equation*}
    \colbit{\underset{\bm{z} \sim S}{\ehat}}[F_{tp}(f,a,b,\gamma,\colbit{\hat{\eta}_{\alpha}(f)}, \colbit{\hefb}, \bm{z})] = \frac{1}{n}\sum_{i=1}^n F_{tp}(f,a,b,\gamma,\colbit{\hat{\eta}_{\alpha}(f)}, {\colbit{\hefb}}, \bm{z})
    \end{equation*}
    where $\colbit{\hat{\eta}_{\alpha}(f)}$ and $\colbit{\hefb}$ are the empirical quantile of the positive and negative instances in $S$, respectively. We have:
    \begin{equation}
        \underset{f}{\min} \ \hat{\mathcal{R}}_{\alpha,\beta}(f, S) \Leftrightarrow \underset{\cmin}{\min}\ \underset{\cmax}{\max} \
    \colbit{\hezds}
    \left[F_{tp}(f,a,b,\gamma, \colbit{\hat{\eta}_{\alpha}(f)}, \colbit{\hefb}, \bm{z} ) \right],
    \end{equation}
\end{enumerate}
}
\begin{proof}
Firstly, we give a reformulation of $\tp$:

\begin{equation}
\begin{aligned}
    \underset{f}{\min} \ {\mathcal{R}}_{\alpha,\beta}(f) &=\underset{f}{\min} \ \E_{\bm{x} \sim \setD_\setP, \bm{x}'\sim \setD_\setN} \left[\indicator{f(\bm{x}) \le \efa} \cdot\indicator{f(\bm{x}') \ge \efb} \cdot \ell(f(\bm{x})- f(\bm{x}'))\right]\\
    &=\underset{f}{\min}\ \E_{\bm{x} \sim \setD_\setP, \bm{x}'\sim \setD_\setN} \left[\ell(f(\bm{x})- f(\bm{x}'))|f(\bm{x}') \ge \efb, f(\bm{x}) \le \efa\right]\\
    & \quad \cdot \underset{\bm{x}'\sim\setD_{\setN}}{\mathbb{P}}[f(\bm{x}') \ge \efb]\cdot \underset{\bm{x}\sim\setD_{\setP}}{\mathbb{P}}[f(\bm{x}) \le \efa]\\
    &=\underset{f}{\min}\ \E_{\bm{x} \sim \setD_\setP, \bm{x}'\sim \setD_\setN} \left[\ell(f(\bm{x})- f(\bm{x}'))|f(\bm{x}') \ge \efb,f(\bm{x}) \le \efa \right]\cdot\alpha\beta\\
    &=\alpha\beta\cdot\underset{f}{\min}\ \E_{\bm{x} \sim \setD_\setP, \bm{x}'\sim \setD_\setN} \left[\ell(f(\bm{x})- f(\bm{x}'))|f(\bm{x}') \ge \efb,f(\bm{x}) \le \efa \right].
\end{aligned}
\end{equation}

Similar to the proof of Thm.\ref{thm:instance_op}, using the square surrogate loss, we can get the equivalent optimization formulation:
\begin{equation}
    \begin{aligned}
&\underset{f}{\min} \ \mathcal{R}_{\alpha,\beta}(f)
\Leftrightarrow  \underset{f,(a,b)\in[0,1]^2}{\min}\ \underset{\gamma\in[-1,1]}{\max} \
\colblue{\ezdz}
[F_{tp}(f,a,b,\gamma,\colblue{\eta_{\alpha}(f)}, \colblue{\eta_{\beta}(f)},\bm{z})],
\end{aligned}
\end{equation}
and the instance-wise function $F_{tp}(f,a,b,\gamma,\colblue{\eta_{\alpha}(f)},\colblue{\eta_{\beta}(f)},\bm{z})$ is defined by:

\begin{equation}
    \begin{aligned}
F_{tp}&(f,a,b,\gamma,\colblue{\eta_{\alpha}(f)},\colblue{\eta_{\beta}(f)},\bm{z})\\
&=(f(\bm{x})-a)^2y\indicator{f(\bm{x})\leq\eta_{\alpha}(f)}/(\alpha p)+
(f(\bm{x})-b)^2(1-y)\indicator{f(\bm{x})\geq\eta_{\beta}(f)}/[\beta (1-p)]\\
&+2(1+\gamma)f(\bm{x})(1-y)\indicator{f(\bm{x})\geq\eta_{\beta}(f)}/[\beta (1-p)] -
2(1+\gamma)f(\bm{x})y\indicator{f(\bm{x})\leq\eta_{\alpha}(f)}/(\alpha p) -\gamma^2.
\end{aligned}
\end{equation}

The same result holds for empirical version $\colbit{\underset{\bm{z}\sim S}{\hat{\E}}}
[F_{tp}(f,a,b,\gamma,\colbit{\hat{\eta}_{\alpha}(f)},\colbit{\hat{\eta}_{\beta}(f)},\bm{z})]$.
\end{proof}

\subsubsection{Step 2}
First we need the following proposition to complete the proof in this subsection.


\begin{prop}\label{prop2}
If $\gamma \in \Omega_{\gamma} =[\max\{b-1, -a\}, 1]$, $\ell_+(\bm{x})=(f(\bm{x})-a)^2-2(1+\gamma) f(\bm{x})$ is a decreasing function \wrt $f(\bm{x})$ when $\bm{x}\sim\setD_\setP$ and $f(\bm{x})\in[0,1]$.
\end{prop}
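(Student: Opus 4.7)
The plan is to mirror the proof of Proposition~\ref{prop1}: differentiate $\ell_+$ with respect to $f(\bm{x})$, identify the sign condition that must hold for monotone decrease, and verify it on the feasible set using the explicit shape of $\Omega_\gamma$. Concretely, I would start from
\[
\frac{\partial \ell_+(\bm{x})}{\partial f(\bm{x})} \;=\; 2\bigl(f(\bm{x})-a\bigr) - 2(1+\gamma) \;=\; 2\bigl(f(\bm{x}) - a - 1 - \gamma\bigr),
\]
so the claim reduces to showing $f(\bm{x}) - a - 1 - \gamma \le 0$, i.e., $\gamma \ge f(\bm{x}) - a - 1$ everywhere on the feasible set.

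Next, I would exploit the range constraint $f(\bm{x})\in[0,1]$ to upper bound the right-hand side: $f(\bm{x}) - a - 1 \le 1 - a - 1 = -a$. Combining this with the assumption $\gamma \in \Omega_\gamma = [\max\{b-1,-a\},1]$, which guarantees $\gamma \ge -a$, yields $\gamma \ge -a \ge f(\bm{x}) - a - 1$, hence $\partial \ell_+(\bm{x})/\partial f(\bm{x}) \le 0$. This gives the claimed monotonicity on the entire feasible region.

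Note that the $b-1$ half of the $\max$ defining $\Omega_\gamma$ plays no role here; it is the companion condition used in Proposition~\ref{prop1} to control $\ell_-$. So the structural reason the feasible set is $\Omega_\gamma = [\max\{b-1,-a\},1]$ is precisely that each of the two lower bounds governs one of the two monotonicity statements (Propositions~\ref{prop1} and~\ref{prop2}), and both are needed simultaneously when we invoke the sorting-free top/bottom-$k$ reformulation (Lemma~\ref{lem:topk}) for $\tp$.

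The main obstacle is essentially only a bookkeeping one: making sure the right half of the $\max$ in the domain of $\gamma$ is the one doing the work, and confirming that the bound $f(\bm{x})\le 1$ is tight enough. There is no further analytic subtlety, since the derivative is affine in $f(\bm{x})$ and the required inequality is a one-line consequence of the domain constraints.
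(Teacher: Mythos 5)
Your proof is correct and follows essentially the same route as the paper's: compute $\partial \ell_+/\partial f(\bm{x}) = 2(f(\bm{x})-a-1-\gamma)$ and use $f(\bm{x})\le 1$ together with $\gamma \ge -a$ to conclude the derivative is nonpositive. You actually spell out the one-line inequality $\gamma \ge -a \ge f(\bm{x})-a-1$ that the paper leaves implicit, so nothing is missing.
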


\begin{proof}
We have:
\begin{equation}
    \frac{\partial \ell_{+}(\bm{x})}{\partial f(\bm{x})}=2(f(\bm{x})-a-1-\gamma).
\end{equation}
Assuming that $f(\bm{x})\in [0,1]$, then the feasible solution of $a$ is nonnegative. When $\gamma \in [\max\{b-1, -a\}, 1]$, the positive loss function's partial derivative $\partial\ell_{+}(\bm{x})/\partial f(\bm{x})\leq 0$. Then $\ell_{+}(\bm{x})$ is an decreasing function \wrt $f(\bm{x})$.
\end{proof}

\begin{rem}
For positive instances, if the loss function is an decreasing function \wrt the score $f(\bm{x})$, then the top-ranked losses are equivalent to the losses of bottom-ranked instances.
\end{rem}

\textbf{Restate of Theorem \ref{thm:8}.}
\emph{Assuming that $f(\bm{x})\in[0,1]$ for all $\bm{x} \in\mathcal{X}$, we have the equivalent optimization for $\tp$:}
\begin{equation}
\begin{aligned}
    \underset{\cmin}{\min}\ \underset{\gamma\in[-1,1]}{\max} \
\colblue{\ezdz}[F_{tp}(f,a,b,\gamma,\colblue{\efa}, \colblue{\efb}, \bm{z})]
\\ \Leftrightarrow \underset{\cmin}{\min}\
\underset{\cmax }{\max}
\ \underset{s\in\Omega_{s},s'\in\Omega_{s'}}{\min}
\ \colblue{\ezdz}[G_{tp}(f,a,b,\gamma,\bm{z},s,s')],
\end{aligned}
\end{equation}
\begin{equation}
    \begin{aligned}
        \underset{\cmin}{\min}\ \underset{\gamma\in[-1,1]}{\max} \
    \colbit{\hezds}[F_{tp}(f,a,b,\gamma, \colbit{\hefa}, \colbit{\hefb},\bm{z})]
    \\ \Leftrightarrow \underset{\cmin}{\min}\
    \underset{\gamma\in\Omega_{\gamma} }{\max}
    \ \underset{s\in\Omega_{s}, s'\Omega_{s'}}{\min}
    \ \colbit{\hezds}[G_{tp}(f,a,b,\gamma,\bm{z},s,s')],
    \end{aligned}
\end{equation}

\emph{where $\Omega_{\gamma}=[\max\{b-1, -a\},1]$, $\Omega_{s}=[-4, 1]$, $\Omega_{s'}=[0,5]$ and}

\begin{equation}
    \begin{aligned}
G_{tp}(f,a,b,\gamma,\bm{z},s, s')&=\left(\alpha s + \left[(f(\bm{x})-a)^2-
2(1+\gamma)f(\bm{x})-s\right]_+\right)y/(\alpha p)\\
& +
\left(\beta s' +\left[(f(\bm{x})-b)^2+2(1+\gamma) f(\bm{x})-s'\right]_+\right)(1-y)/[\beta(1-p)] -\gamma^2.
\end{aligned}
\label{OPAUC_IB}
\end{equation}

\begin{proof}

According to the Thm.\ref{Constrainted_Reformulation} in Appx.\ref{val_constraintval},
when we constraint $\gamma$ in range $\Omega_{\gamma}=[\max\{-a,b-1\}, 1]$, we have:
\begin{equation}
    \min_{f,(a,b)\in[0,1]^2}\max_{\gamma\in [-1,1]}\colblue{\underset{\bm{z}\sim\setD_{\mathcal{Z}}}{\E}} [F_{tp}]
    \Leftrightarrow
    \min_{f,(a,b)\in[0,1]^2}\max_{\gamma\in [\max\{-a,b-1\},1]}\colblue{\underset{\bm{z}\sim\setD_{\mathcal{Z}}}{\E}} [F_{tp}]
\end{equation}

According to the Thm.\ref{thm:8}, we have:

\begin{equation}
    \begin{aligned}
\colblue{\ezdz}
[F_{tp}(f,a,b,\gamma,\colblue{\eta_{\alpha}(f)},&\colblue{\eta_{\beta}(f)},\bm{z})]\Leftrightarrow
\underset{\bm{x}\sim \setD_\setP}{\E}\left([(f(\bm{x})-a)^2-
2(1+\gamma)f(\bm{x})]\cdot\indicator{f(\bm{x})\leq \eta_{\alpha}(f)}\right)/\alpha\\
&+\underset{\bm{x}'\sim \setD_\setN}{\E}
\left([(f(\bm{x}')-b)^2+2(1+\gamma)f(\bm{x}')]\cdot \indicator{f(\bm{x}')\geq  \eta_{\beta}(f)}\right)/\beta -\gamma^2.
\end{aligned}
\end{equation}

When we constraint $\gamma$ in range $\Omega_{\gamma}=[\max\{b-1, -a\},1]$, Prop.\ref{prop1} and Prop.\ref{prop2} ensure that the positive and negative loss functions are monotonous. Then we can get:
\begin{equation}
    \E_{\bm{x}\sim\setD_\setP}[\indicator{f(\bm{x})\leq\eta_{\alpha}(f)}\cdot\ell_+(\bm{x})]= \min_s \frac{1}{\alpha} \cdot \E_{\bm{x}\sim\setD_\setP} [\alpha s + [\ell_+(\bm{x})-s]_+],
\end{equation}
\begin{equation}
    \E_{\bm{x}'\sim\setD_\setN}[\indicator{f(\bm{x}')\geq\eta_{\beta}(f)}\cdot\ell_-(\bm{x}')]= \min_{s'} \frac{1}{\beta} \cdot \E_{\bm{x}'\sim\setD_\setN} [\beta s' + [\ell_-(\bm{x}')-s']_+].
\end{equation}
Applying the Lem.\ref{lem:topk} to positive and negative loss, we have:
\begin{equation}
    \begin{aligned}
\colblue{\ezdz}
[F_{tp}(f,a,b,\gamma,\colblue{\eta_{\alpha}(f)},\colblue{\eta_{\beta}(f)},\bm{z})]&\Leftrightarrow \underset{s, s'}{\min}
\underset{\bm{x}\sim \setD_\setP}{\E}\left(\alpha s + \left[(f(\bm{x})-a)^2-
2(1+\gamma)f(\bm{x}) - s\right]_+\right)/\alpha -\gamma^2\\
&+\underset{\bm{x}'\sim \setD_\setN}{\E}
\left(\beta s' +\left[(f(\bm{x}')-b)^2+2(1+\gamma)f(\bm{x}')-s'\right]_+\right)/\beta,
\end{aligned}
\end{equation}
Then, we get:
\begin{equation}
\begin{aligned}
\colblue{\ezdz}[F_{tp}(f,a,b,\gamma, \colblue{\eta_{\alpha}(f)}, \colblue{\eta_{\beta}(f)}, \bm{z})]
&\Leftrightarrow
\ \underset{s\in\Omega_{s},s'\in\Omega_{s'}}{\min}
\ \colblue{\ezdz}[G_{tp}(f,a,b,\gamma,\bm{z},s,s')].
\end{aligned}
\end{equation}
where $\Omega_{\gamma}=[\max\{b-1, -a\},1]$, $\Omega_{s}=[-4,1]$, $\Omega_{s'}=[0,5]$, $p=\mathbb{P}[y=1]$ and

\begin{equation}
    \begin{aligned}
G_{tp}(f,a,b,\gamma,&\bm{z},s,s')=\left(\alpha s + \left[(f(\bm{x})-a)^2-
2(1+\gamma)f(\bm{x})-s\right]_+\right)y/(\alpha p)\\
& +
\left(\beta s' +\left[(f(\bm{x})-b)^2+2(1+\gamma) f(\bm{x})-s'\right]_+\right)(1-y)/[\beta(1-p)] -\gamma^2.
\end{aligned}
\end{equation}

we have the equivalent optimization for $\tp$:
\begin{equation}
\begin{aligned}
    \underset{\cmin}{\min}\ \underset{\gamma\in[-1,1]}{\max} \
\colblue{\ezdz}[F_{tp}(f,a,b,\gamma,\colblue{\efa}, \colblue{\efb}, \bm{z})]
\\ \Leftrightarrow \underset{\cmin}{\min}\
\underset{\cmax }{\max}
\ \underset{s\in\Omega_{s},s'\in\Omega_{s'}}{\min}
\ \colblue{\ezdz}[G_{tp}(f,a,b,\gamma,\bm{z},s,s')],
\end{aligned}
\end{equation}
The same result is hold for empirical version $\colbit{\hezds}[G_{tp}(f,a,b,\gamma,\bm{z},s,s')]$.
\end{proof}

\section{Proof of Generalization Bound}\label{sec:proof_generalization}
The proof is based on the following lemmas.

\begin{lem}
\begin{equation}
\begin{aligned}
    \underset{x}{\max} \ f(x) - \underset{x'}{\max} \ g(x') &\leq \underset{x,x'=x}{\max}\ f(x)-g(x)\\
    \underset{x}{\min} \ f(x) - \underset{x'}{\min} \ g(x') &\leq \underset{x,x'=x}{\max}\ f(x)-g(x).
\end{aligned}
\end{equation}
\label{lem:4}
\end{lem}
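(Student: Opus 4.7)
The plan is to prove each of the two inequalities by the same one-line trick: pick the point that attains one of the extrema on the left-hand side, then bound the other extremum by its value at that single point, since a maximum dominates and a minimum is dominated by any fixed-point evaluation.

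For the first inequality, I would let $x^\star \in \arg\max_x f(x)$ so that $\max_x f(x) = f(x^\star)$. By the definition of maximum, $\max_{x'} g(x') \geq g(x^\star)$, hence
\begin{equation*}
  \max_x f(x) - \max_{x'} g(x') \leq f(x^\star) - g(x^\star) \leq \max_x\bigl(f(x)-g(x)\bigr),
\end{equation*}
which is exactly the claim, with the right-hand side interpreted as maximizing over the diagonal $x'=x$.

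For the second inequality I would reverse the role: pick $x^\star \in \arg\min_{x'} g(x')$ so that $\min_{x'} g(x') = g(x^\star)$, and use $\min_x f(x) \leq f(x^\star)$ to obtain
\begin{equation*}
  \min_x f(x) - \min_{x'} g(x') \leq f(x^\star) - g(x^\star) \leq \max_x\bigl(f(x)-g(x)\bigr).
\end{equation*}

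There is essentially no obstacle here; the only care needed is to ensure the relevant extrema are attained (otherwise one replaces the argmax/argmin by a supremum and takes an $\epsilon$-near-optimizer, passing to the limit $\epsilon \to 0$). Since the lemma will be applied in Sec.\ref{sec:generalization} to minimax/maximin objectives built from bounded instance-wise losses over compact domains such as $[0,1]^2$, $\Omega_\gamma$, and $\Omega_{s'}$, the extrema are indeed attained, so the direct argument above suffices.
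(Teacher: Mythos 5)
Your proof is correct and is essentially the paper's argument in witness form: the paper rewrites the difference of extrema as a $\max$--$\min$ (resp.\ $\min$--$\max$) of $f(x)-g(x')$ and then restricts to the diagonal $x'=x$, while you evaluate both extrema at the optimizer of one of them, which amounts to the same diagonal substitution. Your explicit caveat about attainment (replacing argmax/argmin by $\epsilon$-near optimizers when needed) is a small point the paper leaves implicit, but nothing of substance differs.
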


\begin{proof}
Since the difference of suprema does not exceed the supremum of the
difference, we have:
\begin{equation}
    \underset{x}{\max} \ f(x) - \underset{x'}{\max} \ g(x')\leq \underset{x}{\max} \ \underset{x'}{\min} f(x)-g(x')\leq \underset{x,x'=x}{\max}\ f(x)-g(x).
\end{equation}
For $\underset{x}{\min} \ f(x) - \underset{x'}{\min} \ g(x')\leq \underset{x,x'=x}{\max}\ f(x)-g(x)$, we have:
\begin{equation}
    \begin{aligned}
\underset{x}{\min} \ f(x) - \underset{x'}{\min} \ g(x')
&\leq\underset{x}{\min}\ \underset{x'}{\max}\ f(x) - g(x')\\
&=\underset{x'}{\max}\ \underset{x}{\min}\ f(x) - g(x')
\leq \underset{x,x'=x}{\max}\ f(x)-g(x).
\end{aligned}
\end{equation}
\end{proof}

\begin{lem}\label{lem:abs_hinge}
    For any $x,y\in\mathbb{R}$, we have that
    \begin{equation}
        \lvert[x]_+ - [y]_+\rvert \leq \lvert x-y\rvert.
    \end{equation}
\end{lem}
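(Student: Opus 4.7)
The plan is to prove this standard 1-Lipschitz property of the hinge/positive-part function by a direct case analysis on the signs of $x$ and $y$. By symmetry in $x$ and $y$, it suffices to consider three cases, and in each the inequality either reduces to an equality or follows from an obvious bound.

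First I would handle the case $x \geq 0$ and $y \geq 0$: here $[x]_+ = x$ and $[y]_+ = y$, so $|[x]_+ - [y]_+| = |x - y|$ with equality. Next, for $x \leq 0$ and $y \leq 0$, both hinge terms vanish, so the left-hand side is $0$, which is trivially bounded by $|x - y|$. The only mildly nontrivial case is when $x$ and $y$ have opposite signs; WLOG take $x \geq 0 \geq y$. Then $|[x]_+ - [y]_+| = x$, and since $-y \geq 0$, we have $|x - y| = x - y = x + (-y) \geq x$, which gives the desired bound.

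An equivalent and slightly slicker route would be to use the identity $[x]_+ = \tfrac{1}{2}(x + |x|)$ and apply the triangle inequality together with $||x| - |y|| \leq |x - y|$, yielding $|[x]_+ - [y]_+| \leq \tfrac{1}{2}(|x - y| + ||x| - |y||) \leq |x - y|$. Either route is essentially a one-line verification.

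There is no real obstacle here: the lemma is a basic Lipschitz estimate for the ReLU/hinge map, and the case analysis is exhaustive and mechanical. I would include it simply because it is invoked later in the generalization analysis when bounding differences of terms involving $[\,\cdot\,]_+$ in $G_{op}$ and $G_{tp}$.
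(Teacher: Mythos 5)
Your case analysis on the signs of $x$ and $y$ is exactly the paper's proof (the paper writes out all four sign combinations explicitly rather than invoking symmetry, but the content is identical), and it is correct. The alternative via $[x]_+ = \tfrac{1}{2}(x+|x|)$ is a nice extra, but not needed.
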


\begin{proof}
    We can consider the different cases:
    \begin{itemize}
        \item When $x\geq 0, y\geq 0$, the inequality holds naturally.
        \item When $x\geq 0, y<0$,
        \begin{equation}
            \lvert[x]_+ - [y]_+\rvert = \lvert x\rvert \leq x-y = \lvert x-y\rvert.
        \end{equation}
        \item When $x<0, y\geq 0$,
        \begin{equation}
            \lvert[x]_+ - [y]_+\rvert = \lvert y\rvert \leq y-x = \lvert x-y\rvert.
        \end{equation}
        \item When $x<0, y<0$,
        \begin{equation}
            \lvert[x]_+ - [y]_+\rvert = 0 \leq \lvert x-y\rvert.
        \end{equation}
    \end{itemize}
\end{proof}

\begin{defn}[Sub-root Function \cite{bartlett2005local}]
    A function $\psi: [0,\infty) \to [0,\infty)$ is sub-root if it is nonnegative, nondecreasing, and if $r \mapsto \psi(r)/\sqrt{r}$ is nonincreasing for $r>0$.
\end{defn}



\begin{defn}[Local Rademacher Complexity \cite{bartlett2005local}]\label{defn:localrad_pop}
    Let $\setF_r = \{f\in\setF: \E f^2(\bm{x}) \leq r\}$ be a subset of the function class $\setF$ with radius $r$. The local Rademacher complexity of the function class $\setF$ on the distribution $\setD_\setZ$ is:
    \begin{equation}
        \begin{aligned}
            \Re_r(\setF) = \underset{S\sim \setD_\setZ^n,\bm{\sigma}}{\E}\left[\underset{f\in\setF_r}{\sup} \frac{1}{n}\sum_{i=1}^{n}\sigma_i f(\bm{x}_i)\right],
        \end{aligned}
    \end{equation}
    where $(\sigma_1, \cdots, \sigma_{n})$ are independent uniform random variables taking values in $\{-1, +1\}$.
\end{defn}

\subsection{OPAUC}
\textbf{Restate of Theorem \ref{thm:4}.}
\emph{Assume there exist three positive constants $R, D$ and $h$ such that the following bound holds for the covering number of $\setF$ \wrt\ $\norm{\cdot}_2$ norm:}
\begin{equation}
    \log \setN(\epsilon, \setF, \norm{\cdot}_2) \leq D\log^h(R/\epsilon).
\end{equation}
\emph{Then for any $\delta>0$, with probability at least $1-\delta$ over the draw of an i.i.d. sample set $S$ of size $n$ ($n \geq R^{-2}$), for all $f\in\setF$ and $K>1$ we have:}
\begin{equation*}
\begin{aligned}
\underset{(a,b)\in[0,1]^2}{\min}\
\underset{\cmax }{\max} \underset{s'\in\Omega_{s'}}{\min}
\ \colblue{\ezdz}[G_{op}(f,a,b,\gamma,\bm{z},s')] \le& \frac{K}{K-1}\ \underset{\cmins}{\min}\underset{\cmax }{\max}  \underset{s'\in\Omega_{s'}}{\min}
\ \colbit{\hezds}[G_{op}(f,a,b,\gamma,\bm{z},s')] \\
&~~ + \tilde{O}(n_+^{-1} + \beta^{-1}n_-^{-1}).
\end{aligned}
\end{equation*}

\begin{proof}
Firstly it is easy to see that for any $f\in\setF$, $\text{Var}(\op(f))\leq B\cdot\E[\op(f)], \forall B\geq 4$. Thus, for any $a^*, b^*, \gamma^*$ and $s'^*$ recovering the OPAUC objective, we should also have that $\text{Var}(G_{op}(f, a^*, b^*, \gamma^*, \bm{z}, s'^*))\leq B\cdot\E[G_{op}(f,a^*, b^*, \gamma^*, \bm{z}, s'^*)]$. Then letting $G_f^*$ denote $G_{op}(f, a^*, b^*, \gamma^*, \bm{z}, s'^*)$, according to Lem.\ref{lem:4}, for any $K>1$ we have that
\begin{equation}
\begin{aligned}
    &\sup_{f\in\setF}\bigg(\underset{(a,b)\in[0,1]^2}{\min}\
    \underset{\cmax }{\max} \underset{s'\in\Omega_{s'}}{\min}
    \ \colblue{\ezdz}\left[G_{op}(f,a,b,\gamma,\bm{z},s')\right] \\
    &~~~~~~~~~~ -\frac{K}{K-1}\ \underset{\cmins}{\min}\underset{\cmax }{\max}  \underset{s'\in\Omega_{s'}}{\min} \ \colbit{\hezds} \left[G_{op}(f,a,b,\gamma,\bm{z},s')\right]\bigg) \\
    \leq& \sup_{f\in\setF,\text{Var}(G_f^*)\leq B\cdot\E[G_f^*]}\left(\colblue{\ezdz}\left[G_f^*\right] - \frac{K}{K-1}\ \colbit{\hezds}\left[G_f^*\right]\right). \\
\end{aligned}
\end{equation}

Moreover, it naturally holds that $\text{Var}(G_f^*)\leq\E[G_f^{*^2}]\leq B\cdot\E[G_f^*]$. Applying Thm.3.3 in \cite{bartlett2005local}, for any $K>1$ and $\delta>0$, with probability at least $1-\delta$, it holds that
\begin{equation}\label{eq:local_bound}
\begin{aligned}
    \underset{\bm{z}\sim\setD_\setZ}{\E}\left[G_f^*\right] - \frac{K}{K-1}\underset{\bm{z}\sim S}{\hat{\E}}\left[G_f^*\right] \leq \frac{c_1K}{B}r^*+\frac{(44+c_2BK)\log\frac{1}{\delta}}{n},
\end{aligned}
\end{equation}
where $c_1=704,c_2=26$, $r^*$ is the fixed point\footnote{That is, $r^*$ is the solution of $r=\psi(r)$.} of the sub-root function $\psi(r) = \Re_r(\{G^*_f: f\in\setF \})$ where the local Rademacher complexity $\Re_r(\cdot)$ is defined in Defn.\ref{defn:localrad_pop}.

Now we bound the fixed point $r^*$ based on covering numbers.

For any $f, \tilde{f}\in\setF$, we have the following decomposition
\begin{equation}\label{eq:f_decomp}
\begin{aligned}
    & \lvert G^*_f - G^*_{\tilde{f}}\rvert = \abs{G_{op}(f, a^*, b^*, \gamma^*, \bm{z}, s'^*) - G_{op}(\tilde{f}, \tilde{a}^*, \tilde{b}^*, \tilde{\gamma}^*, \bm{z}, \tilde{s}'^*)}\\
    \leq & \left\lvert (P(f,a^*,\gamma^*, \bm{x}) -  P(\tilde{f},\tilde{a}^*, \tilde{\gamma}^*, \bm{x})) \frac{y}{p} + \gamma^{*2} - \tilde{\gamma}^{*2} + (s'^* - \tilde{s}'^*)\frac{1-y}{\beta(1-p)} \right.\\
    & \left. + \left([N(f,b^*,\gamma^*, \bm{x}) - s'^*]_+ -  [N(\tilde{f},\tilde{b}^*,\tilde{\gamma}^*, \bm{x}) - \tilde{s}'^*]_+\right) \frac{1-y}{\beta(1-p)}\right\rvert\\
    \leq & \left\lvert P(f,a^*,\gamma^*, \bm{x}) -  P(\tilde{f},\tilde{a}^*, \tilde{\gamma}^*, \bm{x}) \right\rvert \frac{y}{p} + \lvert \gamma^{*2} - \tilde{\gamma}^{*2}\rvert + \left\lvert s'^* - \tilde{s}'^*\right\rvert\frac{1-y}{\beta(1-p)} \\
    & + \left\lvert[N(f,b^*,\gamma^*, \bm{x}) - s'^*]_+ -  [N(\tilde{f},\tilde{b}^*,\tilde{\gamma}^*, \bm{x}) - \tilde{s}'^*]_+\right\rvert \frac{1-y}{\beta(1-p)}\\
    \leq & \left\lvert P(f,a^*,\gamma^*, \bm{x}) -  P(\tilde{f},\tilde{a}^*, \tilde{\gamma}^*, \bm{x}) \right\rvert \frac{y}{p} + 2 \lvert \gamma^{*} - \tilde{\gamma}^{*}\rvert + \left\lvert s'^* - \tilde{s}'^*\right\rvert\frac{2(1-y)}{\beta(1-p)} \\
    & + \left\lvert N(f,b^*,\gamma^*, \bm{x}) -  N(\tilde{f},\tilde{b}^*,\tilde{\gamma}^*, \bm{x})\right\rvert \frac{1-y}{\beta(1-p)}\\
    \leq & \left( 2\lvert a^*-\tilde{a}^*\rvert + 2\lvert \gamma^*-\tilde{\gamma}^*\rvert + 6\lvert f(\bm{x})-\tilde{f}(\bm{x})\rvert \right) \frac{y}{p} + 2 \lvert \gamma^{*} - \tilde{\gamma}^{*}\rvert + \left\lvert s'^* - \tilde{s}'^*\right\rvert\frac{2(1-y)}{\beta(1-p)} \\
    & + \left(2\lvert b^* - \tilde{b}^*\rvert + 2\lvert \gamma^*-\tilde{\gamma}^*\rvert + 8\lvert f(\bm{x}) - \tilde{f}(\bm{x})\rvert\right) \frac{1-y}{\beta(1-p)}\\
    \leq & \frac{2y}{p}\lvert a^* - \tilde{a}^*\rvert + \frac{2(1-y)}{\beta(1-p)}\lvert b^*-\tilde{b}^*\rvert + \left(\frac{2y}{p} + \frac{2(1-y)}{\beta(1-p)} + 2\right)\lvert \gamma^*-\tilde{\gamma}^*\rvert\\
    & + \frac{2(1-y)}{\beta(1-p)}\lvert s'^* - \tilde{s}'^*\rvert + \left(\frac{6y}{p} + \frac{8(1-y)}{\beta(1-p)}\right)\lvert f-\tilde{f}\rvert,
\end{aligned}
\end{equation}
where the third inequality follows from Lem.\ref{lem:abs_hinge}.

Let $q(\beta) := \frac{10y}{p} + \frac{14(1-y)}{\beta(1-p)} + 2$ and $\mathcal{G}_{\beta} := \{G^*_f: f\in\setF\}$. We could decompose an $\epsilon$-covering set of $\mathcal{G}_{\beta}$ according to Eq.\eqref{eq:f_decomp}, and then bound its covering number as follows:
\begin{equation}\label{eq:cover_decomp}
\begin{aligned}
    \log \setN(\epsilon, \mathcal{G}_{\beta}, \lVert\cdot\rVert_2)\leq & 2\log\setN\left(\frac{\epsilon}{q(\beta)}, [0,1], \lvert\cdot\rvert\right) + \log\setN\left(\frac{\epsilon}{q(\beta)}, [-1,1], \lvert\cdot\rvert\right) \\
    & + \log\setN\left(\frac{\epsilon}{q(\beta)}, [0,5], \lvert\cdot\rvert\right) + \log\setN\left(\frac{\epsilon}{q(\beta)}, \setF, \lVert\cdot\rVert_2\right) \\
    \leq & 2\log\left(\frac{q(\beta)}{2\epsilon}\right) + \log\left(\frac{q(\beta)}{\epsilon}\right) + \log\left(\frac{3q(\beta)}{2\epsilon}\right) + D\log^h\left(\frac{q(\beta)R}{\epsilon}\right)\\
    \leq & 4\log\left(\frac{3q(\beta)}{\epsilon}\right) + D\log^h\left(\frac{q(\beta)R}{\epsilon}\right),
\end{aligned}
\end{equation}
where the second last line follows from $\setN\left(\epsilon, [a,b], \lvert\cdot\rvert\right) \leq \frac{b-a}{2\epsilon}$.

Then we follow a similar proof as that of Cor.1 in \cite{lei2016local} to bound the local Rademacher complexity by covering numbers. Let $\Re^S_r(\setF)$ be the local Rademacher complexity defined on the set $\setF^S_r = \{f\in\setF: \hat{\E}_S f^2(\bm{x}) \leq r\}$\footnote{It is different from $\Re_r(\setF)$ which is defined on $\setF_r = \{f\in\setF: \E f^2(\bm{x}) \leq r\}$.}. According to Thm.2 in \cite{lei2016local}, for $f\in\setF$ with the upper bounded output $B_f$ it holds that
\begin{equation}
\begin{aligned}
    \Re_r(\setF) \leq \inf_{\epsilon>0}\left[2\Re^S_{\epsilon^2}(\tilde{\setF}) + \frac{8B_f\log\setN(\epsilon/2, \setF, \lVert\cdot\rVert_2)}{n} + \sqrt{\frac{2r\log\setN(\epsilon/2,\setF, \lVert\cdot\rVert_2)}{n}}\right],
\end{aligned}
\end{equation}
where $\tilde{\setF} := \left\{f-g:f,g\in\setF\right\}$. Then with the result in Eq.\eqref{eq:cover_decomp}, we have
\begin{equation}\label{eq:rad_cover}
\begin{aligned}
    \Re_r(\mathcal{G}_{\beta}) \leq \inf_{0<\epsilon\leq 2\tilde{R}}\left[2\Re^S_{\epsilon^2}(\mathcal{\tilde{G}}_{\beta}) + \frac{32\left(4\log\left(\frac{3q(\beta)}{\epsilon}\right) + D\log^h\left(\frac{q(\beta)R}{\epsilon}\right)\right)}{n} + \sqrt{\frac{2r\left(4\log\left(\frac{3q(\beta)}{\epsilon}\right) + D\log^h\left(\frac{q(\beta)R}{\epsilon}\right)\right)}{n}}\right],
\end{aligned}
\end{equation}
where $\tilde{R} := \min\left(3q(\beta), R q(\beta)\right)$.

Now we focus on the first term $\Re^S_{\epsilon^2}(\mathcal{\tilde{G}}_{\beta})$. Set $\epsilon_{k} = 2^{-k}\epsilon$. According to Lem.A.5 in \cite{lei2016local}, we can obtain
\begin{equation}
\begin{aligned}
    \Re^S_{\epsilon^2}(\mathcal{\tilde{G}}_{\beta}) \leq & 4\sum_{k=1}^N\epsilon_{k-1}\sqrt{\frac{\log\setN(\epsilon_k/2, \tilde{\mathcal{G}}_{\beta}, \lVert\cdot\rVert_2)}{n}} + \epsilon_N \\
    \leq & 2^{\frac{7}{2}}\epsilon\sqrt{\frac{D'}{n}}\sum_{k=1}^N2^{-k}\sqrt{\log\left(\frac{3\cdot 2^{k+2}q(\beta)}{\epsilon}\right)+\log^h\left(\frac{2^{k+2}q(\beta)R}{\epsilon}\right)} + \epsilon_N \\
    \leq & 2^{\frac{7}{2}}\epsilon\sqrt{\frac{D'}{n}}\sum_{k=1}^N2^{-k}\left(\sqrt{(k+1)\log 2} + \sqrt{\log\left(\frac{6q(\beta)}{\epsilon}\right)} + \log^{\frac{h}{2}}\left(\frac{2^{k+2}q(\beta)R}{\epsilon}\right)\right) + \epsilon_N \\
    \leq & 2^{\frac{7+h}{2}}\epsilon\sqrt{\frac{D'}{n}}\sum_{k=1}^N2^{-k}\left(\sqrt{(k+1)\log 2} + \sqrt{\log\left(\frac{6q(\beta)}{\epsilon}\right)} + \left((k+1)\log 2\right)^{\frac{h}{2}} + \log^{\frac{h}{2}}\left(\frac{2q(\beta)R}{\epsilon}\right)\right) + \epsilon_N \\
    \leq & 2^{\frac{7+h}{2}}\epsilon\sqrt{\frac{D'}{n}}\left[c(h) + \sqrt{\log\left(\frac{6q(\beta)}{\epsilon}\right)} + \log^{\frac{h}{2}}\left(\frac{2q(\beta)R}{\epsilon}\right)\right] + \epsilon_N,
\end{aligned}
\end{equation}
where $D' = \max(4,D)$, and $c(h)$ is a constant dependent on $h$. The third and forth line follow from the result that $(a+b)^{h/2}\leq (2\max(a,b))^{h/2}\leq 2^{h/2}(a^{h/2}+b^{h/2})$. The last line is due the fact that the infinite series $\sum_{k=1}^{\infty}2^{-k}((k+1)\log2)^{h/2}$ converges.

Substituting this result back into Eq.\eqref{eq:rad_cover} and let $N\to\infty$, we obtain that
\begin{equation}
\begin{aligned}
    \Re_r(\mathcal{G}_{\beta}) \leq \inf_{0<\epsilon\leq 2\tilde{R}}& \left[2^{\frac{9+h}{2}}\epsilon\sqrt{\frac{D'}{n}}\left(c(h) + \sqrt{\log\left(\frac{6q(\beta)}{\epsilon}\right)} + \log^{\frac{h}{2}}\left(\frac{2q(\beta)R}{\epsilon}\right)\right) \right. \\
    & + \left. \frac{32\left(4\log\left(\frac{3q(\beta)}{\epsilon}\right) + D\log^h\left(\frac{q(\beta)R}{\epsilon}\right)\right)}{n} + \sqrt{\frac{2r\left(4\log\left(\frac{3q(\beta)}{\epsilon}\right) + D\log^h\left(\frac{q(\beta)R}{\epsilon}\right)\right)}{n}}\right].
\end{aligned}
\end{equation}

As $n\geq R^{-2}$, by setting $\epsilon = \frac{q(\beta)}{\sqrt{n}}$, we could define the sub-root function in the following form
\begin{equation}
\begin{aligned}
    \psi(r) := & 2^{\frac{9+h}{2}}\frac{q(\beta)\sqrt{D'}}{n}\left(c(h) + \sqrt{\log(6\sqrt{n})} + \log^{\frac{h}{2}}(2R\sqrt{n})\right) \\
    & + \frac{32\left(4\log(3\sqrt{n}) + D\log^h(R\sqrt{n})\right)}{n} + \sqrt{\frac{2r\left(4\log(3\sqrt{n}) + D\log^h(R\sqrt{n})\right)}{n}}.
\end{aligned}
\end{equation}

Notice that the formula $r = \psi(r)$ is in the form of $x = a + \sqrt{bx}$, the solution of which is $\frac{2a + b \pm \sqrt{b^2 + 4ab}}{2} = O(a+b)$. Therefore, for $r^*$ satisfying $r = \psi(r)$, we have that
\begin{equation}
\begin{aligned}
    r^* \lesssim & \frac{4\log(3\sqrt{n}) + 4D\log^{h}(R\sqrt{n})}{n} + \frac{q(\beta)\left(\sqrt{\log(6\sqrt{n})} + \log^{h/2}(2R\sqrt{n})\right)}{n} \\
    \lesssim & \frac{4\log(3\sqrt{n}) + 4D\log^{h}(R\sqrt{n})}{n} + \left(\sqrt{\log(6\sqrt{n})} + \log^{h/2}(2R\sqrt{n})\right)\left(\frac{10y}{np} + \frac{14(1-y)}{\beta(1-p)n} + \frac{2}{n}\right)\\
    \lesssim & \frac{4\log(3\sqrt{n}) + 4D\log^{h}(R\sqrt{n})}{n_+} + \left(\sqrt{\log(6\sqrt{n})} + \log^{h/2}(2R\sqrt{n})\right)\left(\frac{10y}{n_+} + \frac{14(1-y)}{\beta n_-} + \frac{2}{n_+}\right).\\
\end{aligned}
\end{equation}
Substituting it into \Eqref{eq:local_bound} and using the fact $x \le \sup x$ then finish the proof.
\end{proof}

\subsection{TPAUC}\label{sec:proof_gene_tp}
\begin{thm}
Assume there exist three positive constants $R, D$ and $h$ such that the following bound holds for the covering number of $\setF$ \wrt\ $\norm{\cdot}_2$ norm:
\begin{equation}
    \log \setN(\epsilon, \setF, \norm{\cdot}_2) \leq D\log^h(R/\epsilon).
\end{equation}
Then for any $\delta>0$, with probability at least $1-\delta$ over the draw of an i.i.d. sample set $S$ of size $n$ ($n \geq R^{-2}$), for all $f\in\setF$ and $K>1$ we have:
\begin{equation*}
\begin{aligned}
\underset{\cmins}{\min}\
\underset{\cmax }{\max}  \underset{s\in\Omega_{s},s'\in\Omega_{s'}}{\min}
\ \colblue{\ezdz}[G_{tp}(f,a,b,\gamma,\bm{z},s,s')]
\leq& \frac{K}{K-1} \underset{\cmins}{\min}\underset{\cmax }{\max}  \underset{s\in\Omega_{s},s'\in\Omega_{s'}}{\min}
\ \colbit{\hezds}[G_{tp}(f,a,b,\gamma,\bm{z},s,s')] \\
&+ \tilde{O}(\alpha^{-1}n_+^{-1}+\beta^{-1}n_-^{-1}).
\end{aligned}
\end{equation*}
\end{thm}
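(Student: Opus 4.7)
The plan is to mirror the proof of Thm.~\ref{thm:4} for OPAUC, with modifications to accommodate the additional TPR constraint $\alpha$ that enters through the extra truncated term $[P(f,a,\gamma,\bm{x})-s]_+ \cdot y/(\alpha p)$ in $G_{tp}$. First, I would apply Lem.~\ref{lem:4} twice to collapse the three-level $\min\max\min$ structure on both the population and empirical sides, reducing the gap to
$$\sup_{f\in\setF}\left(\underset{\bm{z}\sim\setD_\setZ}{\E}[G_f^{*}] - \tfrac{K}{K-1}\,\underset{\bm{z}\sim S}{\hat\E}[G_f^{*}]\right),$$
where $G_f^{*} = G_{tp}(f,a^*,b^*,\gamma^*,\bm{z},s^*,s'^*)$ with $(a^*,b^*,\gamma^*,s^*,s'^*)$ chosen to realize the saddle value. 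Then, since $G_f^{*}$ is nonnegative and bounded, the variance-moment inequality $\mathrm{Var}(G_f^{*}) \le \E[(G_f^{*})^2] \le B\cdot\E[G_f^{*}]$ holds for a suitable constant $B$, which enables a direct application of Thm.~3.3 of Bartlett et~al.~\cite{bartlett2005local} to yield a bound in terms of the fixed point $r^{*}$ of the sub-root function $\psi(r) = \Re_r(\{G_f^{*}: f\in\setF\})$.

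The main work is then to bound $r^{*}$ via covering numbers, which is where the $\alpha^{-1}$ factor appears. Following the decomposition in Eq.~\eqref{eq:f_decomp}, for any $f,\tilde f \in \setF$ I would expand $|G_f^{*} - G_{\tilde f}^{*}|$ using Lem.~\ref{lem:abs_hinge} on both truncated terms to obtain an inequality of the form
$$\bigl|G_f^{*} - G_{\tilde f}^{*}\bigr| \;\le\; \tfrac{C_1}{\alpha p}\,|a^{*}-\tilde a^{*}| + \tfrac{C_2}{\beta(1-p)}\,|b^{*}-\tilde b^{*}| + q(\alpha,\beta)\bigl(|\gamma^{*}-\tilde\gamma^{*}| + |s^{*}-\tilde s^{*}| + |s'^{*}-\tilde s'^{*}| + \|f-\tilde f\|_\infty\bigr),$$
where the Lipschitz constant $q(\alpha,\beta)$ is linear in $\alpha^{-1}$ and $\beta^{-1}$ (up to the constants $1/p$, $1/(1-p)$). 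An $\epsilon$-covering of $\mathcal{G}_{\alpha,\beta}$ can then be assembled from coverings of the three scalar intervals $[0,1]^2, [-1,1], \Omega_s, \Omega_{s'}$ and of $\setF$ at scale $\epsilon/q(\alpha,\beta)$, giving
$$\log\setN(\epsilon,\mathcal{G}_{\alpha,\beta},\|\cdot\|_2) \;\lesssim\; \log\bigl(q(\alpha,\beta)/\epsilon\bigr) + D\log^h\bigl(q(\alpha,\beta)R/\epsilon\bigr).$$

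With this covering estimate, I would then invoke Thm.~2 of Lei~et~al.~\cite{lei2016local} together with Lem.~A.5 therein to bound $\Re_r(\mathcal{G}_{\alpha,\beta})$ by a sub-root function of the form $\psi(r) = a + \sqrt{br}$, with $a = O\bigl(q(\alpha,\beta)\cdot \mathrm{polylog}(n)/n\bigr)$ and $b = O(\mathrm{polylog}(n)/n)$. Setting $\epsilon = q(\alpha,\beta)/\sqrt{n}$ as in the OPAUC case (which is where $n\ge R^{-2}$ is used) and solving $r = \psi(r)$ via the quadratic formula yields $r^{*} = O(a+b)$. Since $q(\alpha,\beta) \cdot (y/n) \lesssim (\alpha n_+)^{-1}$ and $q(\alpha,\beta)\cdot((1-y)/n) \lesssim (\beta n_-)^{-1}$ after splitting by the label, substituting back gives the claimed $\tilde O(\alpha^{-1} n_+^{-1} + \beta^{-1} n_-^{-1})$ rate.

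The step I expect to be the main obstacle is the covering-number decomposition: because $\Omega_\gamma = [\max\{-a,b-1\},1]$ couples $\gamma$ with $(a,b)$, the Lipschitz argument must be carried out carefully to avoid inflating $q(\alpha,\beta)$ beyond a linear dependence on $\alpha^{-1}$ and $\beta^{-1}$. My plan is to work in the unconstrained $[-1,1]$ range for $\gamma$ in the covering step and absorb the feasibility into the choice of $(a^{*},b^{*},\gamma^{*})$, which decouples the variables during the metric-entropy bound while leaving the minimax value unchanged by Thm.~\ref{thm:A}. Everything else then follows the OPAUC template line by line, with the only quantitative changes being the appearance of $\alpha^{-1}$ wherever $\beta^{-1}$ appeared previously in the positive-sample contribution.
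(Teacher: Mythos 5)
Your proposal matches the paper's own proof essentially step for step: Lem.~\ref{lem:4} plus the variance--mean bound to invoke Thm.~3.3 of \cite{bartlett2005local}, the Lipschitz decomposition of $|G_f^*-G_{\tilde f}^*|$ via Lem.~\ref{lem:abs_hinge} with a constant $q(\alpha,\beta)$ linear in $\alpha^{-1}$ and $\beta^{-1}$, the covering-number assembly over $[0,1]^2$, $[-1,1]$, $\Omega_s$, $\Omega_{s'}$ and $\setF$ at scale $\epsilon/q(\alpha,\beta)$, the local-Rademacher-to-entropy bounds from \cite{lei2016local}, the choice $\epsilon=q(\alpha,\beta)/\sqrt{n}$, and the quadratic-formula fixed-point estimate followed by the label split. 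Your handling of the $\gamma$-domain coupling (covering the full $[-1,1]$ and relying on Thm.~\ref{thm:A}) is also what the paper implicitly does, so the argument is correct and not materially different.
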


\begin{proof}
Firstly it is easy to see that for any $f\in\setF$, $\text{Var}(\tp(f))\leq B\cdot\E[\tp(f)], \forall B\geq 4$. Thus, for any $a^*, b^*, \gamma^*, s^*$ and $s'^*$ recovering the TPAUC objective, we should also have that $\text{Var}(G_{tp}(f, a^*, b^*, \gamma^*, \bm{z}, s^*, s'^*))\leq B\cdot\E[G_{tp}(f,a^*, b^*, \gamma^*, \bm{z}, s^*, s'^*)]$. Then letting $G_f^*$ denote $G_{tp}(f, a^*, b^*, \gamma^*, \bm{z}, s^*, s'^*)$, according to Lem.\ref{lem:4}, for any $K>1$ we have that
\begin{equation}
\begin{aligned}
    &\sup_{f\in\setF}\bigg(\underset{(a,b)\in[0,1]^2}{\min}\
    \underset{\cmax }{\max} \underset{s\in\Omega_s, s'\in\Omega_{s'}}{\min}
    \ \colblue{\ezdz}\left[G_{tp}(f,a,b,\gamma,\bm{z},s,s')\right] \\
    &~~~~~~~~~ -\frac{K}{K-1}\ \underset{\cmins}{\min}\underset{\cmax }{\max}  \underset{s\in\Omega_s, s'\in\Omega_{s'}}{\min} \ \colbit{\hezds} \left[G_{tp}(f,a,b,\gamma,\bm{z},s,s')\right]\bigg) \\
    \leq& \sup_{f\in\setF,\text{Var}(G_f^*)\leq B\E[G_f^*]}\left(\colblue{\ezdz}\left[G_f^*\right] - \frac{K}{K-1}\ \colbit{\hezds}\left[G_f^*\right]\right). \\
\end{aligned}
\end{equation}

Moreover, it naturally holds that $\text{Var}(G_f^*)\leq\E[G_f^{*^2}]\leq B\cdot\E[G_f^*]$. Applying Thm.3.3 in \cite{bartlett2005local}, for any $K>1$ and $\delta>0$, with probability at least $1-\delta$, it holds that
\begin{equation}
\begin{aligned}
    \underset{\bm{z}\sim\setD_\setZ}{\E}\left[G_f^*\right] - \frac{K}{K-1}\underset{\bm{z}\sim S}{\hat{\E}}\left[G_f^*\right] \leq \frac{c_1K}{B}r^*+\frac{(44+c_2BK)\log\frac{1}{\delta}}{n},
\end{aligned}
\end{equation}
where $c_1=704,c_2=26$, $r^*$ is the fixed point of the sub-root function $\psi(r) = \Re_r(\{G^*_f: f\in\setF \})$\footnote{That is, $r^*$ is the solution of $r=\psi(r)$.} where $\Re_r(\cdot)$ denotes the local Rademacher complexity.

Now we bound the fixed point $r^*$ based on covering numbers.

For any $f, \tilde{f}\in\setF$, we have the following decomposition
\begin{equation}
\begin{aligned}
    & \lvert G^*_f - G^*_{\tilde{f}} \rvert = \abs{G_{tp}(f, a^*, b^*, \gamma^*, \bm{z}, s^*, s'^*) - G_{tp}(\tilde{f}, \tilde{a}^*, \tilde{b}^*, \tilde{\gamma}^*, \bm{z}, s^*, \tilde{s}'^*)} \\
    \leq & \left\lvert s^* - \tilde{s}^* \right\rvert\frac{y}{p} + \left\lvert \frac{[(f(\bm{x}) - a^*)^2 - 2(1+\gamma^*)f(\bm{x}) - s^*]_+}{\alpha} - \frac{[(\tilde{f}(\bm{x}) - \tilde{a}^*)^2 - 2(1+\tilde{\gamma}^*)\tilde{f}(\bm{x}) - \tilde{s}^*]_+}{\alpha} \right\rvert \frac{y}{p} + \lvert \gamma^{*2} - \tilde{\gamma}^{*2} \rvert \\
    & + \left\lvert s'^* - \tilde{s}'^* \right\rvert \frac{1-y}{1-p} + \left\lvert \frac{[(f(\bm{x}) - b^*)^2 + 2(1+\gamma^*)f(\bm{x}) - s'^*]_+}{\beta} - \frac{[(\tilde{f}(\bm{x}) - \tilde{b}^*)^2 + 2(1+\tilde{\gamma}^*)\tilde{f}(\bm{x}) - \tilde{s}'^*]_+}{\beta} \right\rvert \frac{1-y}{1-p} \\
    \leq & \left(1+\frac{1}{\alpha}\right)\frac{y}{p}\lvert s^* - \tilde{s}^*\rvert + \left(1+\frac{1}{\beta}\right)\frac{1-y}{1-p}\lvert s'^* - \tilde{s}'^*\rvert + \frac{2y}{\alpha p}\lvert a^* - \tilde{a}^*\rvert + \frac{2(1-y)}{\beta(1-p)}\lvert b^* - \tilde{b}^* \rvert \\
    & + 2\left(1 + \frac{y}{\alpha p} + \frac{1-y}{\beta(1-p)}\right)\lvert \gamma^* - \tilde{\gamma}^*\rvert + \left(\frac{6y}{\alpha p} + \frac{8(1-y)}{\beta(1-p)}\right)\left\lvert f(\bm{x}) - \tilde{f}(\bm{x})\right\rvert.
\end{aligned}
\end{equation}

Let $q(\alpha, \beta) := \left(1+\frac{11}{\alpha}\right)\frac{y}{p} + \left(1 + \frac{13}{\beta}\right)\frac{1-y}{1-p} + 2$ and $\mathcal{G}_{\alpha, \beta} := \{G^*_f: f\in\setF\}$. We could decompose an $\epsilon$-covering set of $\mathcal{G}_{\alpha, \beta}$ according to Eq.\eqref{eq:f_decomp}, and then bound its covering number as follows:
\begin{equation}\label{eq:cover_decomp_tp}
\begin{aligned}
    \log \setN(\epsilon, \mathcal{G}_{\alpha, \beta}, \lVert\cdot\rVert_2)\leq & 2\log\setN\left(\frac{\epsilon}{q(\alpha, \beta)}, [0,1], \lvert\cdot\rvert\right) + \log\setN\left(\frac{\epsilon}{q(\alpha, \beta)}, [-1,1], \lvert\cdot\rvert\right) \\
    & + \log\setN\left(\frac{\epsilon}{q(\alpha, \beta)}, [-4,1], \lvert\cdot\rvert\right) + \log\setN\left(\frac{\epsilon}{q(\alpha, \beta)}, [0,5], \lvert\cdot\rvert\right) + \log\setN\left(\frac{\epsilon}{q(\alpha, \beta)}, \setF, \lVert\cdot\rVert_2\right) \\
    \leq & 5\log\left(\frac{3q(\alpha, \beta)}{\epsilon}\right) + D\log^h\left(\frac{q(\alpha, \beta)R}{\epsilon}\right).
\end{aligned}
\end{equation}

Then following the same steps in the proof for the OPAUC generalization bound, we can obtain
\begin{equation}
\begin{aligned}
    \Re_r(\mathcal{G}_{\alpha, \beta}) \leq \inf_{0<\epsilon\leq 2\tilde{R}}& \left[2^{\frac{9+h}{2}}\epsilon\sqrt{\frac{D'}{n}}\left(c(h) + \sqrt{\log\left(\frac{6q(\alpha, \beta)}{\epsilon}\right)} + \log^{\frac{h}{2}}\left(\frac{2q(\alpha, \beta)R}{\epsilon}\right)\right) \right. \\
    & + \left. \frac{32\left(5\log\left(\frac{3q(\alpha, \beta)}{\epsilon}\right) + D\log^h\left(\frac{q(\alpha, \beta)R}{\epsilon}\right)\right)}{n} + \sqrt{\frac{2r\left(5\log\left(\frac{3q(\alpha, \beta)}{\epsilon}\right) + D\log^h\left(\frac{q(\alpha, \beta)R}{\epsilon}\right)\right)}{n}}\right],
\end{aligned}
\end{equation}
where $D' = \max(5,D)$, and $\tilde{R} := \min\left(3q(\alpha, \beta), R q(\alpha, \beta)\right)$.

As $n\geq R^{-2}$, by setting $\epsilon = \frac{q(\alpha, \beta)}{\sqrt{n}}$, we could define the sub-root function in the following form
\begin{equation}
\begin{aligned}
    \psi(r) := & 2^{\frac{9+h}{2}}\frac{q(\alpha, \beta)\sqrt{D'}}{n}\left(c(h) + \sqrt{\log(6\sqrt{n})} + \log^{\frac{h}{2}}(2R\sqrt{n})\right) \\
    & + \frac{32\left(5\log(3\sqrt{n}) + D\log^h(R\sqrt{n})\right)}{n} + \sqrt{\frac{2r\left(5\log(3\sqrt{n}) + D\log^h(R\sqrt{n})\right)}{n}}.
\end{aligned}
\end{equation}

Notice that the formula $r = \psi(r)$ is in the form of $x = a + \sqrt{bx}$, the solution of which is $\frac{2a + b \pm \sqrt{b^2 + 4ab}}{2} = O(a+b)$. Therefore, for $r^*$ satisfying $r = \psi(r)$, we have that
\begin{equation}
\begin{aligned}
    r^* \lesssim & \frac{4\log(3\sqrt{n}) + 4D\log^{h}(R\sqrt{n})}{n} + \frac{q(\alpha, \beta)\left(\sqrt{\log(6\sqrt{n})} + \log^{h/2}(2R\sqrt{n})\right)}{n} \\
    \lesssim & \frac{4\log(3\sqrt{n}) + 4D\log^{h}(R\sqrt{n})}{n} \\
    & + \left(\sqrt{\log(6\sqrt{n})} + \log^{h/2}(2R\sqrt{n})\right)\left(\left(1+\frac{11}{\alpha}\right)\frac{y}{np} + \left(1 + \frac{13}{\beta}\right)\frac{1-y}{n(1-p)} + \frac{2}{n}\right)\\
    \lesssim & \frac{4\log(3\sqrt{n}) + 4D\log^{h}(R\sqrt{n})}{n_+} + \left(\sqrt{\log(6\sqrt{n})} + \log^{h/2}(2R\sqrt{n})\right)\left(\frac{12y}{\alpha n_+} + \frac{14(1-y)}{\beta n_-} + \frac{2}{n_+}\right).\\
\end{aligned}
\end{equation}
Substituting it into \Eqref{eq:local_bound} and using the fact $x \le \sup x$ then finish the proof.

\end{proof}

\section{Experiment Details}
\label{section:experiment_details}

\subsection{Parameter Tuning}
The learning rate of all methods is tuned in $[10^{-2},10^{-5}]$. Weight decay
is tuned in $[10^{-3},10^{-5}]$. Specifically, $E_{k}$ for AUC-poly and AUC-exp is searched in $\{3, 5, 8, 10, 12, 15, 18, 20\}$. For AUC-poly, $\gamma$ is searched in $\{0.03, 0.05, 0.08, 0.1, 1, 3, 5\}$.
For AUC-exp, $\gamma$ is searched in $\{8, 10, 15, 20, 25, 30\}$. For SOPA-S, we tune the KL-regularization parameter $\lambda$ in $\{0.1, 1.0, 10\}$, and we fix $\beta_0 = \beta_1 = 0.9$. For PAUCI and UPAUCI, $k$ is tuned in $[1, 10]$, $\nu$, $\lambda$, $\iota_1$, $\iota_2$ are tuned in $[0, 1]$, $m$ is tuned in $[10, 100]$, $\kappa$ is tuned in $[2, 6]$ and $\omega$ is tuned in $[0, 4]$.

\subsection{Per-iteration Acceleration}
\label{section:convergence}

We conduct some experiments for per-iteration complexity with a fixed epoch with varying $n_+^B$ and $n_-^B$. All experiments are conducted on an Ubuntu 16.04.1 server with an Intel(R) Xeon(R) Silver 4110 CPU. For every method, we repeat running 10000 times and record the average running time. We only record the loss calculation time and use the python package time.time() to calculate the running time. Methods with * stand for the pair-wise estimator, while methods with ** stand for the instance-wise estimator. Here is the result of the experiment. We see the acceleration is significant when the data is large.

\begin{table}[!ht]
    \centering
    \caption{
    Pre-Iteration time complexity experiments for OPAUC ($\mathrm{FPR}\leq0.3$):
    }
    \begin{tabular}{ccccccc}
    \toprule
        unit:ms & $\begin{aligned}n_+^B=64\\n_-^B=64\end{aligned}$ & $\begin{aligned}n_+^B=128\\n_-^B=128\end{aligned}$ & $\begin{aligned}n_+^B=256\\n_-^B=256\end{aligned}$ & $\begin{aligned}n_+^B=512\\n_-^B=512\end{aligned}$ & $\begin{aligned}n_+^B=1024\\n_-^B=1024\end{aligned}$ & $\begin{aligned}n_+^B=2048\\n_-^B=2048\end{aligned}$ \\ \midrule
        SOPA* & 0.075 & 0.205 & 1.427 & 5.053 & 20.132 & 86.779 \\
        SOPA-S* & 0.063 & 0.165 & 0.946 & 4.003 & 15.815 & 62.031 \\
        AUC-poly* & 0.062 & 0.178 & 1.086 & 3.553 & 14.266 & 56.637 \\
        AUC-exp* & 0.063 & 0.182 & 0.985 & 3.513 & 14.155 & 55.689 \\
        AGD-SBCD* & 0.061 & 0.145 & 1.040 & 3.413 & 13.273 & 54.954 \\
        MB* & 0.121 & 0.174 & 0.468 & 1.713 & 6.393 & 25.663 \\
        PAUCI** & 0.026 & 0.029 & 0.033 & 0.043 & 0.072 & 0.107 \\
        AUC-M** & 0.025 & 0.028 & 0.031 & 0.040 & 0.059 & 0.104 \\
        CE** & 0.018 & 0.020 & 0.026 & 0.036 & 0.055 & 0.096 \\
    \bottomrule
    \end{tabular}
\end{table}

\begin{table}[!ht]
    \centering
    \caption{
    Pre-Iteration time complexity experiments for TPAUC ($\mathrm{FPR}\leq0.5,\mathrm{TPR}\geq0.5$):
    }
    \begin{tabular}{ccccccc}
    \toprule
        unit:ms & $\begin{aligned}n_+^B=64\\n_-^B=64\end{aligned}$ & $\begin{aligned}n_+^B=128\\n_-^B=128\end{aligned}$ & $\begin{aligned}n_+^B=256\\n_-^B=256\end{aligned}$ & $\begin{aligned}n_+^B=512\\n_-^B=512\end{aligned}$ & $\begin{aligned}n_+^B=1024\\n_-^B=1024\end{aligned}$ & $\begin{aligned}n_+^B=2048\\n_-^B=2048\end{aligned}$ \\ \midrule
        SOPA* & 0.079 & 0.206 & 1.439 & 5.197 & 20.556 & 88.314 \\
        SOPA-S* & 0.065 & 0.153 & 0.947 & 3.940 & 15.388 & 62.541 \\
        AUC-poly* & 0.062 & 0.180 & 1.175 & 3.573 & 14.440 & 56.469 \\
        AUC-exp* & 0.059 & 0.206 & 1.154 & 3.558 & 14.080 & 56.566 \\
        MB* & 0.173 & 0.198 & 0.491 & 1.955 & 6.554 & 29.369 \\
        PAUCI** & 0.030 & 0.030 & 0.038 & 0.045 & 0.071 & 0.109 \\
        AUC-M** & 0.025 & 0.027 & 0.033 & 0.043 & 0.059 & 0.104 \\
        CE** & 0.018 & 0.021 & 0.026 & 0.037 & 0.0535 & 0.096 \\
    \bottomrule
    \end{tabular}
\end{table}

\subsection{Sensitivity Analysis}\label{sec:add_sensi}
\begin{figure*}[t]
	\centering
    \subfloat[$a$]{\includegraphics[width=0.33\linewidth]{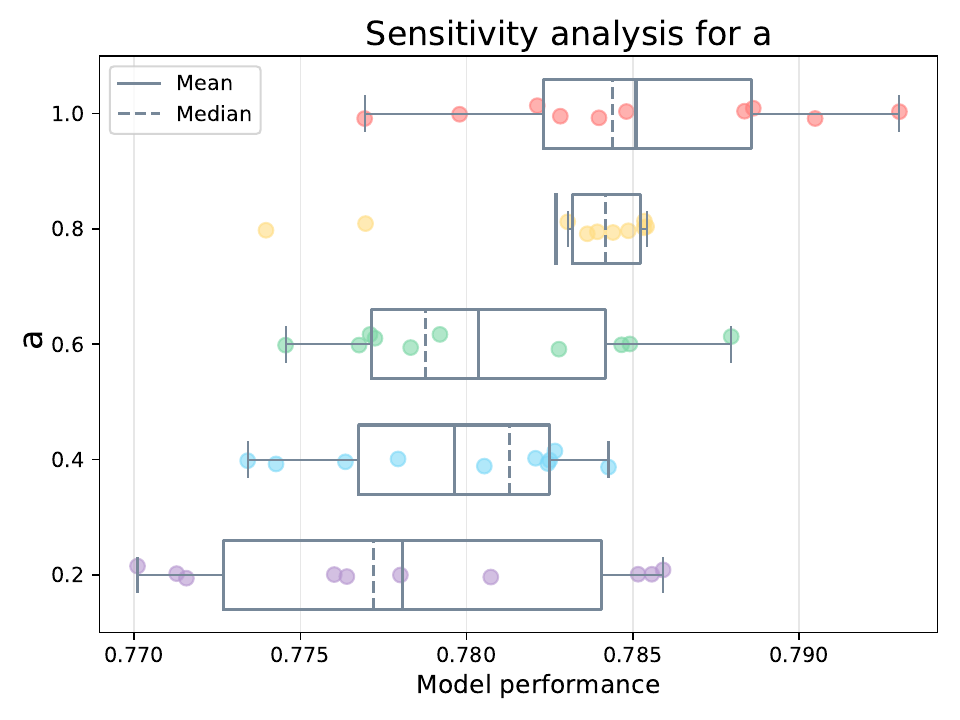}%
    }
    \subfloat[$b$]{\includegraphics[width=0.33\linewidth]{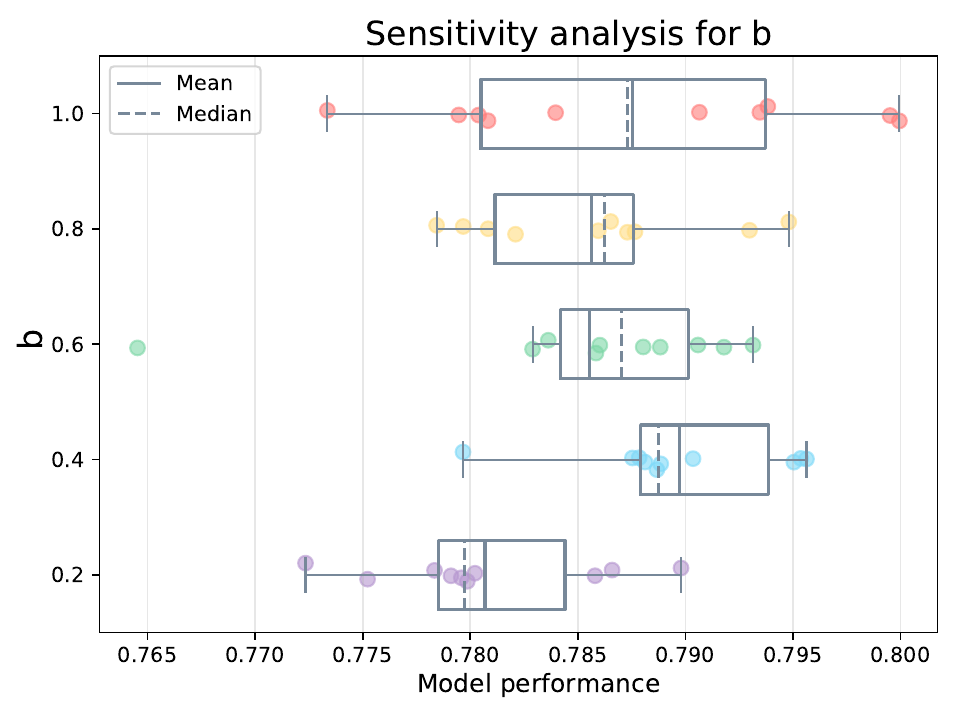}%
    }
    \subfloat[$\gamma$]{\includegraphics[width=0.33\linewidth]{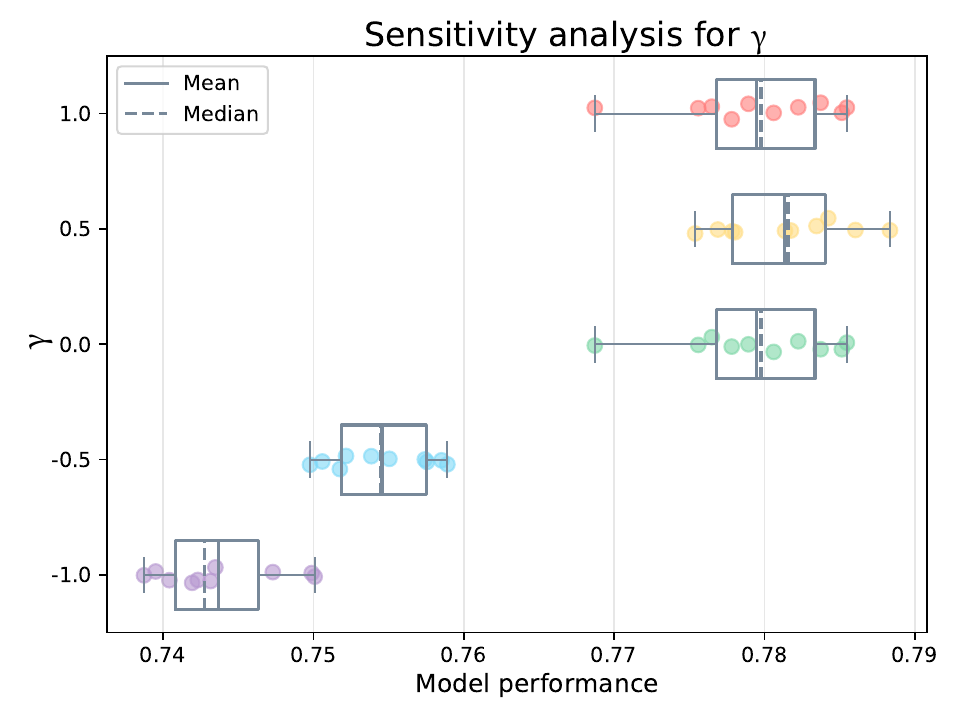}%
    }
    \\
    \subfloat[$s'$]{\includegraphics[width=0.33\linewidth]{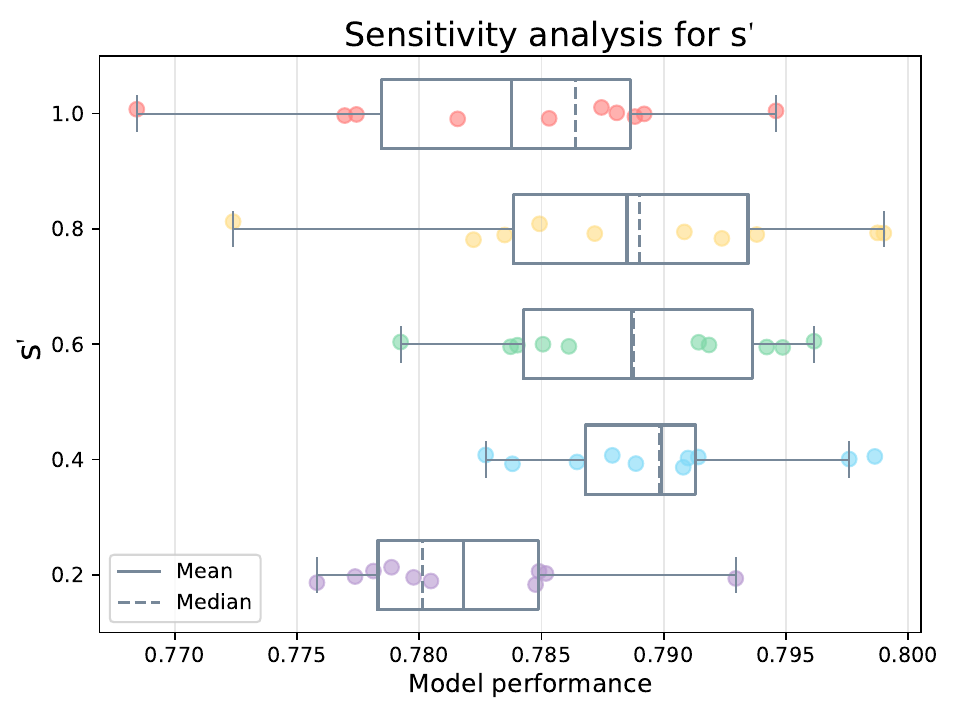}%
    }
    \subfloat[$c$]{\includegraphics[width=0.33\linewidth]{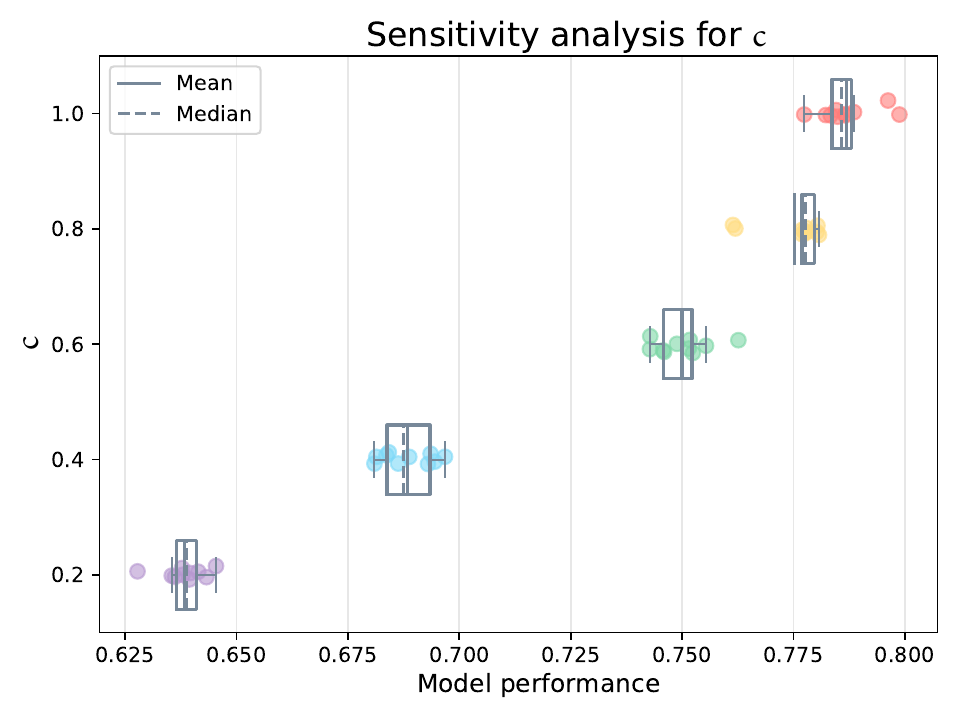}%
    }
	\caption{Sensitivity towards the variable initialization of UPAUCI over OPAUC ($\mathrm{FPR}\leq 0.3$) on CIFAR-10-LT-1.}
	\label{fig:sensi_init_op_0.3_cifar10-1}
\end{figure*}

\begin{figure*}[t]
	\centering
    \subfloat[$a$]{\includegraphics[width=0.33\linewidth]{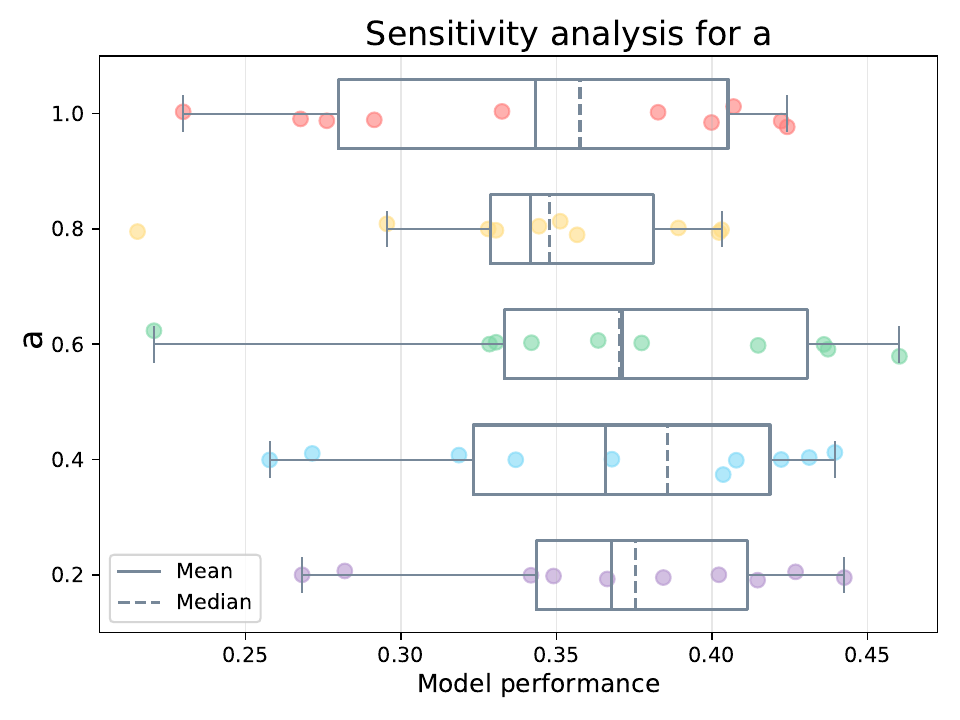}%
    }
    \subfloat[$b$]{\includegraphics[width=0.33\linewidth]{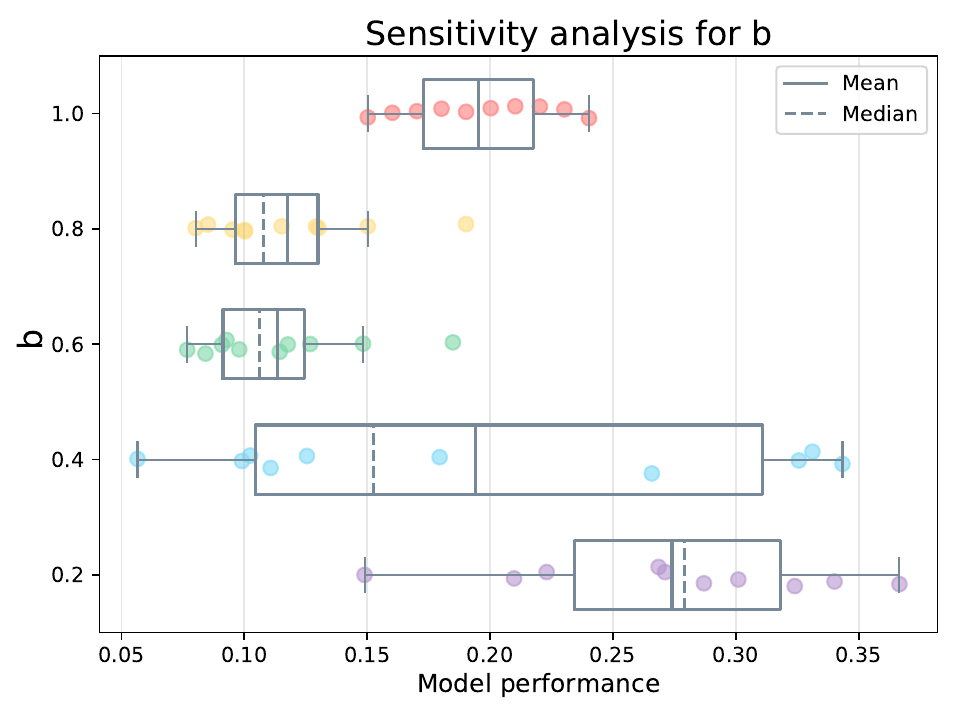}%
    }
    \subfloat[$\gamma$]{\includegraphics[width=0.33\linewidth]{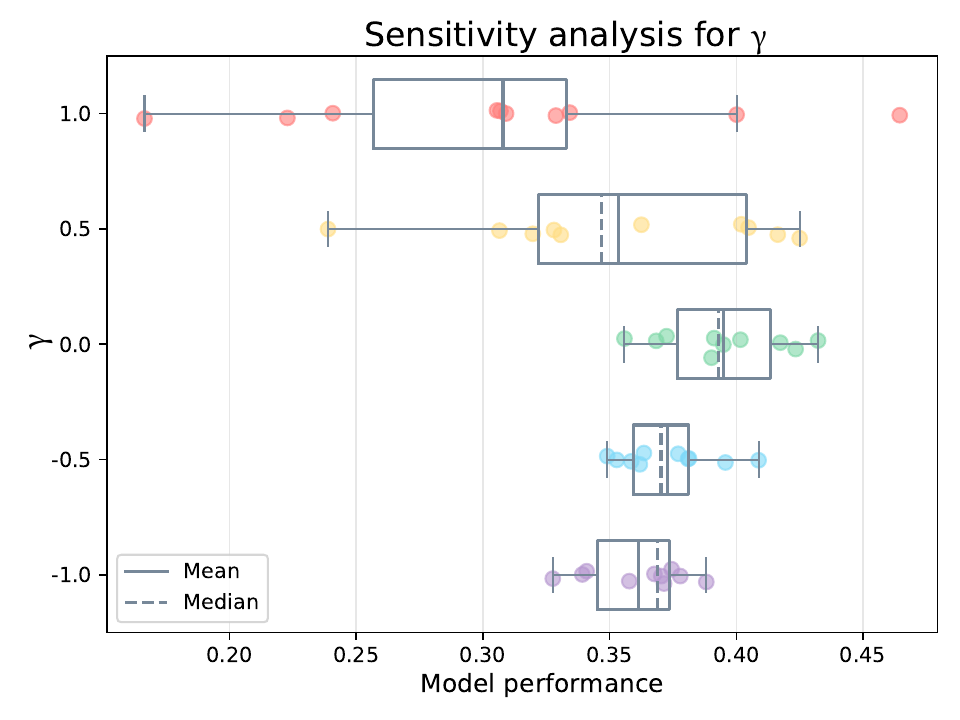}%
    }
    \\
    \subfloat[$s$]{\includegraphics[width=0.33\linewidth]{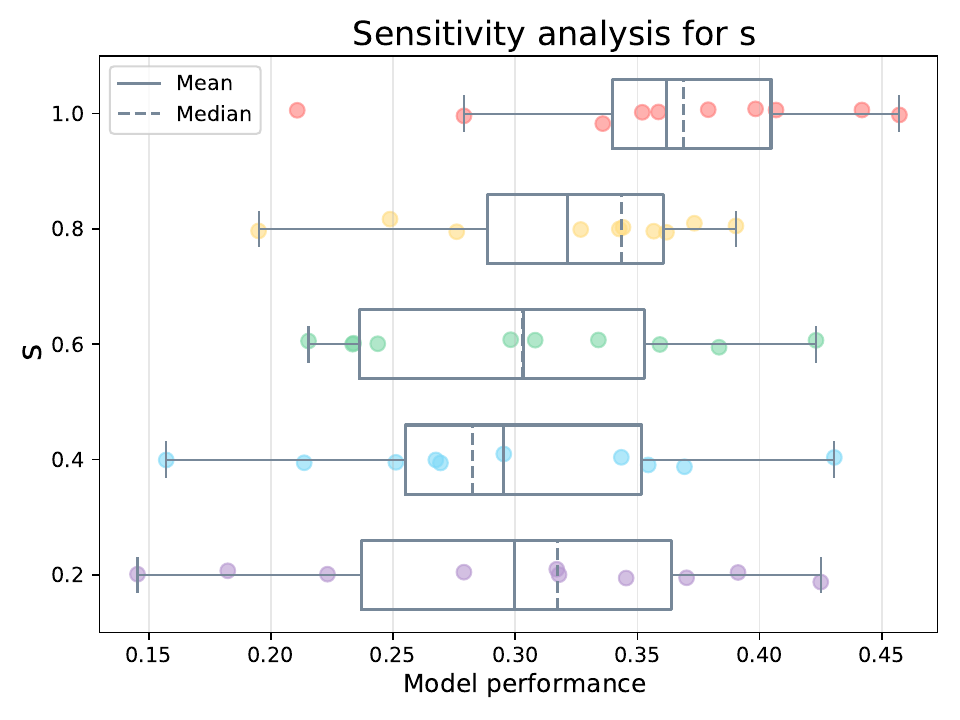}%
    }
    \subfloat[$s'$]{\includegraphics[width=0.33\linewidth]{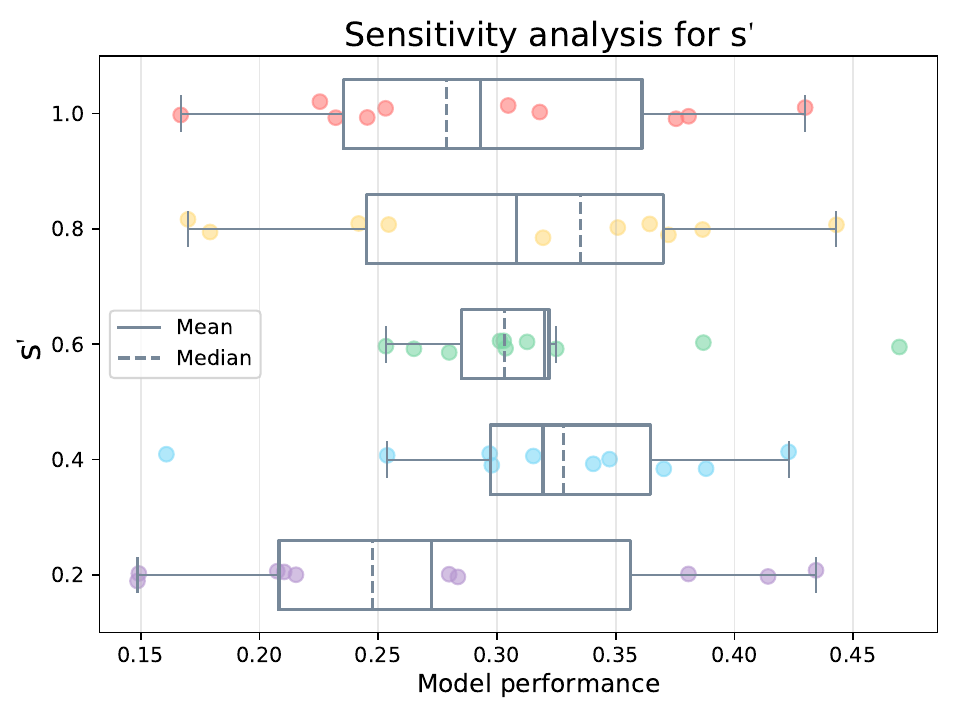}%
    }
    \subfloat[$c$]{\includegraphics[width=0.33\linewidth]{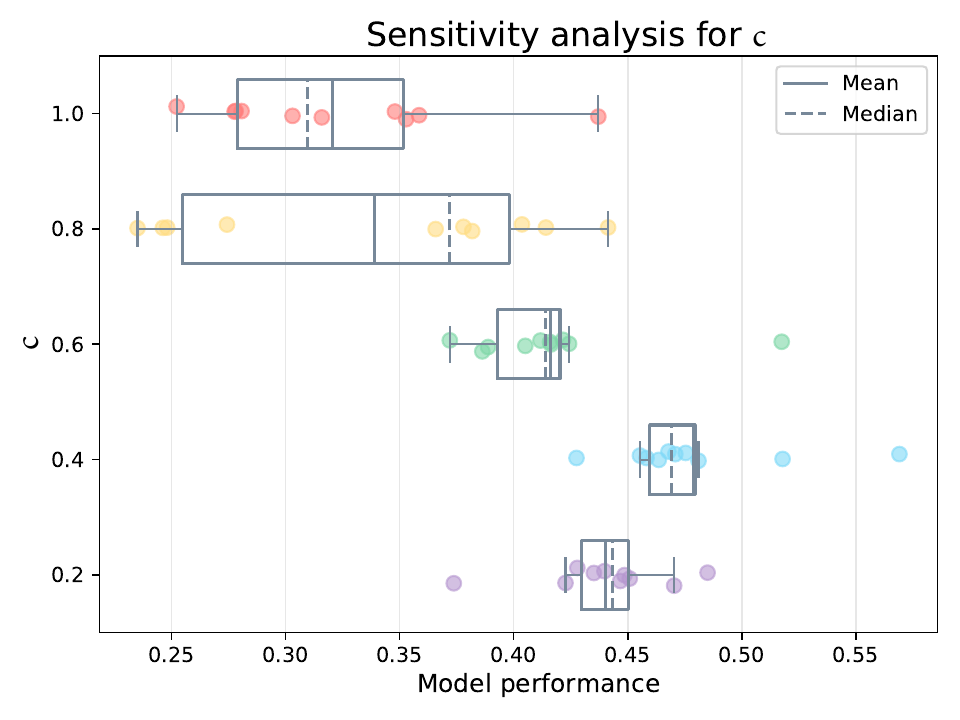}%
    }
	\caption{Sensitivity towards the variable initialization of UPAUCI over TPAUC ($\mathrm{TPR} \geq 0.3$, $\mathrm{FPR}\leq 0.3$) on CIFAR-10-LT-1.}
	\label{fig:sensi_init_tp_0.3_cifar10-1}
\end{figure*}

\end{document}